\newcommand{\captionfonts}{\normalsize}
\long\def\@makecaption#1#2{%
  \vskip\abovecaptionskip
  \sbox\@tempboxa{{\captionfonts #1: #2}}%
  \ifdim \wd\@tempboxa >\hsize
    {\captionfonts #1: #2\par}
  \else
    \hbox to\hsize{\hfil\box\@tempboxa\hfil}%
  \fi
  \vskip\belowcaptionskip}
\renewcommand{\fnum@algorithm}{\fname@algorithm}
\newcommand*{\rom}[1]{\expandafter\@slowromancap\romannumeral #1@}
\newtheorem{thm}{Theorem}
\newtheorem{lem}{Lemma}
\newtheorem{dfn}{Definition}
\newtheorem{prp}{Proposition}
\newtheorem*{rmk}{Remark}
\newtheorem{rmk-2}{Remark}
\newtheorem{rmk-3}{Remark}
\newtheorem{rmk-4}{Remark}
\newtheorem{rmk-5}{Remark}
\newtheorem{rmk-6}{Remark}
\newtheorem{rmk-7}{Remark}
\newtheorem{rmk-8}{Remark}
\newtheorem{cl}{Corollary}
\begin{document}

\ \vspace{20mm}\\

{\LARGE \flushleft Understanding Two-Layer Neural Networks with Smooth Activation Functions}

\ \\
{\bf \large Changcun Huang}\\
{cchuang@mail.ustc.edu.cn}\\
{Shuitu Institute of Applied Mathematics, Chongqing 400700, P.R.C}\\
%


\thispagestyle{empty}
\markboth{}{NC instructions}
\ \vspace{-0mm}\\
%
\begin{center} {\bf Abstract} \end{center}
This paper aims to understand the training solution, which is obtained by the back-propagation algorithm, of two-layer neural networks whose hidden layer is composed of the units with smooth activation functions, including the usual sigmoid type most commonly used before the advent of ReLUs. The mechanism contains four main principles: construction of Taylor series expansions, strict partial order of knots, smooth-spline implementation and smooth-continuity restriction. The universal approximation for arbitrary input dimensionality is proved and the explanation of training solutions is given. Through the principles proposed, the mystery of ``black box'' of the solution space is largely revealed. The new proofs employed also enrich approximation theory.


\ \\[-2mm]
{\bf Keywords:} two-layer network, training solution, universal approximation, smooth activation function, smooth spline.

\section{Introduction}
Smooth activation functions (e.g., logistical sigmoid and tanh cases) were the most common choice in neural networks \citep*{Bishop1995,Hinton2006, Haykin2009} before the popularity of rectified linear units (ReLUs) \citep*{Nair2010} since 2010. In engineering, the mechanism of two-layer neural networks with smooth activation functions is elusive and referred to as a ``black box''. We study this problem due to the following reasons. First, it is of the simplest network architecture leading to a deeper one and the associated principle may be the foundation of the latter. Second, the activity of neurons \citep*{Kandel2021} could be complex and the logistic sigmoid function $\sigma(x)=1/(1+e^{-x})$ had been employed to model their behavior (e.g., \citet*{Arbib2016}); so the principle is possibly useful in understanding neuroscience.

\subsection{Problem Description}
Given a set $D$ of data points $(\boldsymbol{x}_i, y_i)$ with $\boldsymbol{x}_i \in \mathbb{R}^n$ and $y_i \in \mathbb{R}$, a two-layer neural network $\mathfrak{N}$ can interpolate them by an output function
\begin{equation}
g(\boldsymbol{x})=\sum_j\lambda_j\sigma(\boldsymbol{w}_j^T\boldsymbol{x}+b_j)
\end{equation}
with $g(\boldsymbol{x}_i) = y_i$, whose parameters are derived from the back-propagation algorithm \citep*{Rumelhart1986}, where $\sigma(x)$ is a smooth activation function of hidden-layer units.

The mechanism of $g(\boldsymbol{x})$ interpolating $D$ is unknown and is part of the ``black box'' of neural networks. We want to solve this problem by constructing the solution of universal approximation. The interpolation process can be interpreted as realizing a function that passes through the given data; thus if any function can be constructed in the way associated with training solutions, the black box could be uncovered.

In theoretical physics, the capability of explaining or predicting natural phenomena is a criterion for judging whether a proposed theory is successful. We also use this criterion to test the effectiveness of our theory. More specifically, first, given a training solution, we will explain how it is formulated and what the meaning of the obtained parameters is; second, we will check whether the solution predicted by the theory would occur in experiments. Especially, in the realm of artificial neural networks, the first rule above means that we can manually construct a solution in a deterministic way, which is originally obtained by gradient-descent method; and this is one of the ingredients of black box of neural networks.

Hereafter, we will call $g(\boldsymbol{x})$ of equation 1.1 derived from the back-propagation algorithm a \textsl{training solution} of two-layer neural networks.

\subsection{Main Background}
The universal approximation of two-layer neural networks with ``sigmoidal unit'' \citep*{Cybenko1989}, whose activation function is, for example, $\sigma(x)=1/(1+e^{-x})$, had been proved for more than thirty years in several ways, such as \citet*{Hecht-Nielsen1989}, \citet*{Hornik1989} and \citet*{Chen1992}. The case of more general smooth activation functions was also intensively studied (e.g., \citet*{Stinchcombe1989, Leshno1993, Hornik1993, Pinkus1999, Ismailov2012, Costarelli2013, Guliyev2018, Almira2021, Cantarini2025}). However, those works are mainly of pure mathematical research and have no connection with engineering applications. Thus, so far little was known about the mechanism of training solutions. We try to bridge this gap in this paper.

\subsection{Organization of the Paper}
The paper is mainly classified into three parts. The first that includes sections from 2 to 4 is for univariate-function approximation and the second (sections 5 and 6) is for the multivariate case. The third (section 7) is about experimental verification or explanation. The detailed description is as follows.

\textbf{Section 2} constructs local approximation through Taylor series expansions and the result is applicable to any smooth activation function. The construction method will be generalized to the multivariate case in later section 5. The result of this section has its own meaning and is also the foundation of global solutions of subsequent sections.

\textbf{Section 3} turns to global approximation, in the sense that a two-layer neural network $\mathfrak{N}$ with generalized sigmoidal units (definition 3) can approximate a given function $f(x)$ through a smooth spline $s(x)$, not necessarily within a sufficiently short interval as the method of Taylor series expansions. The spline $s(x)$ is constructed by integrating a piecewise linear approximation to the $k$th derivative of $f(x)$ (theorem 4); $s(x)$ is then implemented by $\mathfrak{N}$ (theorem 6), with the approximation error controlled by theorem 5; and this leads to universal approximation for univariate functions (theorem 7). Corollary 1 investigates the tanh-unit case. We give another solution in proposition 2 that could be observed in experiments.

\textbf{Section 4} reduces the spline implementation to the solution of a system of linear equations, by which a theoretical framework containing all the possible solutions is presented, including the special ones of section 3.

\textbf{Section 5} constructs the Taylor series expansion of a multivariate function $f(\boldsymbol{x})$ for $\boldsymbol{x} \in \mathbb{R}^n$ and $n \ge 2$ via neural networks and is the generalization of section 2. \textbf{Section 6} deals with the global approximation to $f(\boldsymbol{x})$ as section 3. A multivariate spline (theorem 11) approximating $f(\boldsymbol{x})$ is obtained by the integral of a piecewise linear approximation to a directional-derivative hypersurface (definition 14) of $f(\boldsymbol{x})$. A basic property of two smoothly connected polynomial pieces is given in theorem 12. The principle of smooth-continuity restriction is proposed in theorem 15. Theorem 17 constructs a desired spline on a certain type of partition through neural networks. Theorem 18 proves the universal approximation for arbitrary higher-dimensional input. Theorem 19 and proposition 8 provide some rules for two-sided solutions ubiquitous in experiments.

\textbf{Section 7} verifies the theory by some typical experimental examples or uses the preceding results to explain experimental phenomena. \textbf{Section 8} summarizes the main principles of the black box of two-layer neural networks. \textbf{Section 9} ends this paper by a discussion.

\section{Univariate Local Approximation}
Throughout this paper, the ``smooth function'' $f: [0, 1] \to \mathbb{R}$ means that the derivatives of $f(x)$ up to some order required are continuous, and if its $k$th-order derivative satisfies the continuous property, we sometimes say that it is smooth with order $k$ or $f(x) \in C^k([0, 1])$, where $C^k([0, 1])$ denotes the set of functions on $[0, 1]$ whose $k$th derivative is continuous; $C^k(\mathbb{R})$ represents the set of functions defined on $(-\infty, +\infty)$ with smoothness order $k$. The multivariate case is similar and the rigorous definition will be given in section 5.

The term ``local approximation'' is in the sense that the approximation is based on Taylor series expansion whose approximation error can only be ensured within a sufficiently small neighborhood of a point, and so is the multivariate case of section 5. Note that the results of this section are applicable to any smooth activation function, including the generalized sigmoidal type to be discussed later.

\subsection{Polynomial Construction}
The main idea is to use weighed activation functions of the hidden-layer units of a two-layer neural network to construct a desired Taylor-series expansion approximating a given function; theorem 1 below is a foundation.

In this paper, the approximation error is measured by the $L^2$-norm distance, that is, $\lVert f(\boldsymbol{x})\rVert_2=(\int_{U} |f(\boldsymbol{x})|^2 d\boldsymbol{x})^{1/2}$ for $\boldsymbol{x}\in \mathbb{R}^n$, including the special case $n=1$ of this section.
\begin{thm}
Let $T(x) = \sum_{\nu = 0}^{m}a_{\nu}/\nu !(x - x_0)^{\nu}$ for $x_0 \in [0, 1]$ be an arbitrary polynomial of degree $m$. Write
\begin{equation}
\phi_i(x) = \sigma(w_ix + b_i).
\end{equation}
Each $\phi_i(x)$ for $i = 1, 2, \dots, k$ is the smooth activation function of unit $u_i$ whose Taylor series expansion at $x_0$ is
\begin{equation}
\phi_i(x) = t_i(x) + r_i(x),
\end{equation}
where $t_i(x) = \sum_{\nu = 0}^{m}c_{i\nu}/\nu !(x - x_0)^{\nu}$ with $c_{i\nu} = \phi_i^{(\nu)}(x_0)$ and
\begin{equation}
r_i(x) = \frac{1}{(m+1)!}\phi_i^{(m+1)}(\xi_i)(x-x_0)^{m+1}
\end{equation}
is the Lagrange remainder with $\xi_i$ being in a neighborhood $\delta(x_0)$ of $x_0$.

Suppose that $k \ge m+1$ and the rank of
\begin{equation}
A(\phi_i(x_0)) = \begin{pmat}({})
\phi_1(x_0) & \phi_2(x_0) & \cdots & \phi_k(x_0) \cr
\phi_1'(x_0) & \phi_2'(x_0) & \cdots & \phi_k'(x_0) \cr
\vdots & \vdots & \ddots & \vdots \cr
\phi_1^{(m)}(x_0) & \phi_2^{(m)}(x_0) & \cdots & \phi_k^{(m)}(x_0) \cr
\end{pmat}
\end{equation}
is $m+1$, and that
\begin{equation}
\lVert \phi_i(x) - t_i(x) \rVert_2 = \lVert r_i(x) \rVert_2 = \varepsilon_i.
\end{equation}
Then we can find a solution of $\boldsymbol{\lambda} = [\lambda_1, \lambda_2, \dots, \lambda_k]^T$ for
\begin{equation}
T(x) = \sum_{i}\lambda_it_i(x),
\end{equation}
such that
\begin{equation}
\lVert T(x) - \sum_{i}\lambda_i\phi_i(x) \rVert_2  = \lVert \sum_i\lambda_ir_i(x)\rVert_2 \le \sum_i|\lambda_i|\varepsilon_i.
\end{equation}
\end{thm}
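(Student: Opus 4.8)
The plan is to reduce the identity $T(x)=\sum_{i}\lambda_{i}t_{i}(x)$ to a finite linear system in the unknowns $\lambda_{i}$, solve it using the rank hypothesis, and then read off the error bound by direct substitution. First I would note that $T(x)$ and every $t_{i}(x)$ are polynomials in $(x-x_{0})$ of degree at most $m$, so the requested identity holds as a polynomial identity if and only if the coefficients of $(x-x_{0})^{\nu}$ agree for each $\nu=0,1,\dots,m$. Since $t_{i}(x)$ contributes the coefficient $c_{i\nu}/\nu!=\phi_{i}^{(\nu)}(x_{0})/\nu!$ and $T(x)$ the coefficient $a_{\nu}/\nu!$, cancelling the common factor $1/\nu!$ turns the identity into
\begin{equation}
\sum_{i=1}^{k}\lambda_{i}\phi_{i}^{(\nu)}(x_{0})=a_{\nu},\qquad \nu=0,1,\dots,m,
\end{equation}
which in matrix form is exactly $A(\phi_{i}(x_{0}))\,\boldsymbol{\lambda}=\boldsymbol{a}$ with $\boldsymbol{a}=[a_{0},a_{1},\dots,a_{m}]^{T}$.

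Second, I would invoke the rank hypothesis. The matrix $A(\phi_{i}(x_{0}))$ has $m+1$ rows and $k\ge m+1$ columns, and it is assumed to have rank $m+1$, i.e.\ full row rank. Hence the linear map $\boldsymbol{\lambda}\mapsto A(\phi_{i}(x_{0}))\,\boldsymbol{\lambda}$ is surjective onto $\mathbb{R}^{m+1}$, so the system is consistent for the prescribed right-hand side $\boldsymbol{a}$ (indeed for every right-hand side). This produces at least one solution vector $\boldsymbol{\lambda}$ realising $T(x)=\sum_{i}\lambda_{i}t_{i}(x)$, which is the first assertion; when $k>m+1$ the solution is non-unique, but only existence is needed here.

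Finally, for the error estimate I would substitute the decomposition $\phi_{i}(x)=t_{i}(x)+r_{i}(x)$. Using $\sum_{i}\lambda_{i}t_{i}(x)=T(x)$ gives $\sum_{i}\lambda_{i}\phi_{i}(x)=T(x)+\sum_{i}\lambda_{i}r_{i}(x)$, so that $T(x)-\sum_{i}\lambda_{i}\phi_{i}(x)=-\sum_{i}\lambda_{i}r_{i}(x)$ and the two norms in the conclusion coincide exactly. The stated bound is then immediate from the triangle inequality and the homogeneity of the $L^{2}$-norm, $\lVert\sum_{i}\lambda_{i}r_{i}\rVert_{2}\le\sum_{i}|\lambda_{i}|\,\lVert r_{i}\rVert_{2}=\sum_{i}|\lambda_{i}|\,\varepsilon_{i}$, where the last equality uses the hypothesis $\lVert r_{i}\rVert_{2}=\varepsilon_{i}$.

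No step is genuinely hard: the content is linear-algebraic, and the hypotheses enter in exactly one essential place, namely the solvability of the coefficient system, which the full-row-rank condition guarantees. The only points that deserve care are that coefficient matching is both necessary and sufficient for the polynomial identity (so the reduction to a linear system loses nothing), and that the error relation is an exact equality before the triangle inequality is applied.
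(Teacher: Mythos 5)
Your proof is correct and follows essentially the same route as the paper: reduce $T(x)=\sum_i\lambda_i t_i(x)$ to the linear system $A\boldsymbol{\lambda}=\boldsymbol{a}$ by matching coefficients, use the full-row-rank hypothesis to guarantee a solution, and then obtain the bound by substituting $\phi_i=t_i+r_i$ and applying the triangle inequality. You are in fact slightly more explicit than the paper, which leaves the surjectivity argument from the rank condition implicit.
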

\begin{proof}
To satisfy equation 2.6, we obtain a system of linear equations, with each being
\begin{equation}
c_{1\nu}\lambda_1 + c_{2\nu}\lambda_2 + \dots + c_{k\nu}\lambda_k = a_{\nu}
\end{equation}
for $\nu = 0, 1, \dots, m$, which can be expressed as
\begin{equation}
A\boldsymbol{\lambda} = \boldsymbol{a},
\end{equation}
where $A$ is from equation 2.4 and $\boldsymbol{a} = [a_0, a_1, \dots, a_m]^T$. So
\begin{equation}
\begin{aligned}
\lVert T(x) - \sum_{i}&\lambda_i\phi_i(x) \rVert_2 = \lVert \sum_{i}\lambda_it_i(x) - \sum_{i}\lambda_i\phi_i(x) \rVert_2 \\ &= \lVert \sum_i\lambda_ir_i(x)\rVert_2\le \sum_i|\lambda_i|\varepsilon_i,
\end{aligned}
\end{equation}
which is equation 2.7.
\end{proof}

\begin{dfn}[Weighted error]
Under theorem 1, the term
\begin{equation}
\epsilon(m) := \lVert \sum_i\lambda_ir_i(x)\rVert_2 = \lVert \sum_i \frac{\lambda_i\phi_i^{(m+1)}(\xi_i)}{(m+1)!}(x-x_0)^{m+1}\rVert_2
\end{equation}
from equation 2.7 is called the weighted error of the approximation of $\phi_i(x)$'s to $T(x)$.
\end{dfn}

\subsection{Generalized Wronskian Matrix}
Wronskian determinant was studied in the theory of total positivity \citep*{Karlin1966} and the associated term ``Wronskian matrix'' in \citet*{Schumaker2007} is a square matrix. We borrow that terminology with slight modifications.
\begin{dfn}[Univariate generalized Wronskian matrix]
Given the functions $u_i: [0,1] \to \mathbb{R}$ for $i = 1, 2, \dots, k$, each of which is a smooth function, the matrix \begin{equation}
\mathcal{W}(u_i(x_0)) =
\begin{pmat}({})
u_1(x_0) & u_1'(x_0) & \cdots & u_1^{(m)}(x_0) \cr
u_2(x_0) & u_2'(x_0) & \cdots & u_2^{(m)}(x_0) \cr
\vdots & \vdots & \ddots & \vdots \cr
u_k(x_0) & u_k'(x_0) & \cdots & u_k^{(m)}(x_0) \cr
\end{pmat}
\end{equation}
for $x_0 \in [0, 1]$ and $k \ge m+1$ is called a \textsl{generalized Wronskian matrix} of $u_i(x_0)$'s with respect to order $m$.
\end{dfn}

\begin{rmk}
By definition 2, equation 2.9 can be written as
\begin{equation}
\mathcal{W}(\phi_i(x_0))^T\boldsymbol{\lambda} = \boldsymbol{a}.
\end{equation}
\end{rmk}

\begin{thm}
Let $f: [0, 1] \to \mathbb{R}$ be a smooth function whose Taylor series expansion at $x_0 \in [0, 1]$ is $f(x) =  T(x) + R(x)$, where $T(x) = \sum_{\nu = 0}^{m}a_{\nu}/\nu!(x - x_0)^{\nu}$ is a polynomial of degree $m$ and $R(x)$ is the Lagrange remainder. Under the notations of theorem 1, suppose that the generalized Wronskian matrix $\mathcal{W}(\phi_i(x_0))$ for $k=m+1$ and $i=1,2,\dots,m+1$ is nonsingular. Then
\begin{equation}
\lVert f(x) - \sum_{i}\lambda_i\phi_i(x) \rVert_2  \le \lVert R(x)\rVert_2 + \epsilon(m) = \varepsilon,
\end{equation}
where $\epsilon(k)$ is from equation 2.11. Let $\delta(x_0)$ be a neighbourhood of point $x_0$ such that $|x - x_0| < \delta$. Fixing degree $m$ and the derived $\lambda_i$'s, if $\delta$ is small enough, the approximation error $\varepsilon$ of equation 2.14 over $\delta(x_0)$ can be arbitrarily small.
\end{thm}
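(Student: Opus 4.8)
The plan is to first pin down the coefficient vector $\boldsymbol{\lambda}$ and then split the total error into a genuine Taylor remainder and the weighted remainder already isolated in theorem 1. Since the generalized Wronskian matrix $\mathcal{W}(\phi_i(x_0))$ is nonsingular for $k=m+1$, the linear system $\mathcal{W}(\phi_i(x_0))^T\boldsymbol{\lambda}=\boldsymbol{a}$ of equation 2.13 has a unique solution $\boldsymbol{\lambda}=[\lambda_1,\dots,\lambda_{m+1}]^T$, which depends only on the values $\phi_i^{(\nu)}(x_0)$ and the coefficients $a_\nu$ of $T(x)$; in particular it is independent of the radius $\delta$. By the argument of theorem 1 this choice forces $T(x)=\sum_i\lambda_i t_i(x)$ exactly.

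Next I would substitute $f(x)=T(x)+R(x)$ and $\phi_i(x)=t_i(x)+r_i(x)$ and cancel the polynomial parts. Writing
\begin{equation}
f(x)-\sum_i\lambda_i\phi_i(x)=\Big(T(x)-\sum_i\lambda_i t_i(x)\Big)+\Big(R(x)-\sum_i\lambda_i r_i(x)\Big)=R(x)-\sum_i\lambda_i r_i(x),
\end{equation}
since the first bracket vanishes. The triangle inequality for the $L^2$-norm then gives
\begin{equation}
\lVert f(x)-\sum_i\lambda_i\phi_i(x)\rVert_2\le\lVert R(x)\rVert_2+\lVert\sum_i\lambda_i r_i(x)\rVert_2=\lVert R(x)\rVert_2+\epsilon(m),
\end{equation}
which is exactly the bound 2.14, with $\epsilon(m)$ the weighted error of equation 2.11.

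For the final claim, the idea is that both contributions are controlled by the common factor $\lVert(x-x_0)^{m+1}\rVert_2$, which vanishes as $\delta\to0$. I would fix a neighbourhood of $x_0$ inside $[0,1]$ on which $f^{(m+1)}$ and every $\phi_i^{(m+1)}$ are continuous, hence bounded, say by $M$ and $M_i$ respectively; this is legitimate because $f$ is smooth of the required order and $\sigma$ is smooth. Using the Lagrange forms of $R(x)$ and of each $r_i(x)$ from equation 2.3, the pointwise integrands are bounded by $\tfrac{M}{(m+1)!}|x-x_0|^{m+1}$ and $\tfrac{1}{(m+1)!}\big(\sum_i|\lambda_i|M_i\big)|x-x_0|^{m+1}$. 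A direct computation gives $\lVert(x-x_0)^{m+1}\rVert_2^2=\int_{|x-x_0|<\delta}|x-x_0|^{2m+2}\,dx=O(\delta^{2m+3})$, so $\lVert(x-x_0)^{m+1}\rVert_2=O(\delta^{m+3/2})$. Since $\boldsymbol{\lambda}$, $M$, and the $M_i$ are all fixed, both $\lVert R(x)\rVert_2$ and $\epsilon(m)$ are $O(\delta^{m+3/2})$, whence $\varepsilon\to0$ as $\delta\to0$.

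The argument is essentially routine once the cancellation of the polynomial parts is observed; the only point requiring care is the uniformity in the last paragraph. The evaluation points $\xi_i$ appearing in $r_i(x)$, as well as the remainder point of $R(x)$, depend on $x$, so I must work on a single neighbourhood on which all the $(m+1)$th derivatives are uniformly bounded and confirm that these intermediate points stay inside it for the relevant $\delta$. This is precisely what lets me pull the constants $M$ and $M_i$ outside the integral and conclude that the decay rate $\delta^{m+3/2}$ is genuine rather than merely pointwise.
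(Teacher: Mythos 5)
Your proposal is correct and follows essentially the same route as the paper: solve the linear system from the nonsingular Wronskian (theorem 1) so the polynomial parts cancel, bound the error by $\lVert R(x)\rVert_2 + \epsilon(m)$ via the triangle inequality, and let both terms vanish as $\delta \to 0$ using boundedness of the $(m+1)$th derivatives. Your explicit decay rate $O(\delta^{m+3/2})$ and the uniformity remark about the intermediate points $\xi_i$ merely sharpen the paper's briefer qualitative argument.
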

\begin{proof}
By theorem 1, we have
\begin{equation}
\begin{aligned}
\lVert f(x) - \sum_{i}&\lambda_i\phi_i(x) \rVert_2 = \lVert f(x) - T(x) + T(x) - \sum_{i }\lambda_i\phi_i(x) \rVert_2 \\
&\le \lVert f(x) - T(x)\rVert_2 + \lVert T(x) - \sum_{i}\lambda_i\phi_i(x) \rVert_2 \\
&= \lVert R(x)\rVert_2 + \epsilon(m) = \varepsilon.
\end{aligned}
\end{equation}
Since $\phi_i(x)$'s are smooth, $\phi_i^{(m+1)}(\xi_i)$ of equation 2.11 is bounded. If $m$ and $\lambda_i$'s are fixed, when $|x-x_0|$ is small enough, the weighted error $\epsilon(k)$ could approach zero with arbitrary precision, because $\varepsilon_i$'s of equation 2.7 tend to be zero; and so is $\lVert R(x)\rVert_2$.
\end{proof}

\begin{rmk}
The ``generalized Taylor Expansion'' \citep*{Schumaker2007} gave a formula like equation 2.14. However, both the remainder expression and the associated proof are different; the approximation error is not analyzed because their goal was not function approximation.
\end{rmk}

\subsection{Construction of Local Approximation}
\begin{lem}
If $\psi(x): [0, 1] \to \mathbb{R}$ is a smooth function, then
\begin{equation}
\lim_{w \to 0}\psi^{(k)}(wx) = \lim_{w \to 0}\frac{d^k\psi(wx)}{dx^k} = 0
\end{equation}
for $k \ge 1$.
\end{lem}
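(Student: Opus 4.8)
The plan is to reduce the claim to the chain rule together with a boundedness argument. The key observation is that the quantity $\psi^{(k)}(wx)$ appearing in equation 2.16 is, by the equality written there, the $k$th derivative \emph{with respect to $x$} of the composite map $x \mapsto \psi(wx)$; so the first step is to compute this derivative explicitly and thereby make the two sides of equation 2.16 literally identical.

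First I would prove by induction on $k$ that
\[
\frac{d^k}{dx^k}\psi(wx) = w^k\,\psi^{(k)}(wx),
\]
where on the right $\psi^{(k)}$ now denotes the ordinary $k$th derivative of $\psi$ evaluated at the point $wx$. The base case $k=1$ is just the chain rule, $\frac{d}{dx}\psi(wx) = w\,\psi'(wx)$. For the inductive step, differentiating $w^k\,\psi^{(k)}(wx)$ once more and applying the chain rule again produces one additional factor of $w$, yielding $w^{k+1}\,\psi^{(k+1)}(wx)$. This identity both justifies the equality asserted in the lemma and isolates the decisive factor $w^k$.

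Next I would control the remaining factor. Since $\psi$ is smooth to the required order, $\psi^{(k)}$ is continuous; for $x \in [0,1]$ and $w$ close to $0$ the argument $wx$ stays in a compact neighbourhood of $0$ on which $\psi^{(k)}$ attains a finite bound $M$, so $|\psi^{(k)}(wx)| \le M$. Combining this with the identity gives $\bigl|\frac{d^k}{dx^k}\psi(wx)\bigr| = |w|^k\,|\psi^{(k)}(wx)| \le M\,|w|^k$; since $k \ge 1$, the right-hand side tends to $0$ as $w \to 0$, and therefore the limit in equation 2.16 equals $0$.

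I do not expect a genuine obstacle, as the result rests only on the chain rule and the boundedness of a continuous function on a compact set. The single point needing care is the domain: I must ensure that $wx$ remains inside the interval on which $\psi$ and its derivatives are defined and continuous as $w \to 0$, which holds because $x$ ranges over the bounded set $[0,1]$ and $wx \to 0$. If a two-sided limit in $w$ is intended, one simply works on a small symmetric neighbourhood of $0$ where $\psi$ is assumed smooth, so no complication arises.
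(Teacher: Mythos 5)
Your proposal is correct and follows essentially the same route as the paper: apply the chain rule to factor out $w^k$ (the paper states the identity $\frac{d^k}{dx^k}\psi(wx) = w^k\psi^{(k)}(wx)$ directly, where you prove it by induction), bound $\psi^{(k)}$ using smoothness on a compact set, and conclude that the product tends to $0$ since $k \ge 1$. Your extra care about the domain of $\psi$ (keeping $wx$ in $[0,1]$, or a symmetric neighbourhood for two-sided limits) is a minor refinement of the paper's remark that $y = wx \in [0,1]$ for $w$ sufficiently small, not a different argument.
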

\begin{proof}
We know that
\begin{equation}
\psi^{(k)}(wx) = w^k\frac{d^k\psi(y)}{dx^k} = w^k\psi^{(k)}(y),
\end{equation}
where $y = wx$. When $w$ is sufficiently small, we have $y \in [0, 1]$. Then $\psi^{(k)}(y)$ is bounded, that is, $|\psi^{(k)}(y)| \le M$, owing to the smoothness property of $\psi(x)$; the conclusion is obvious.
\end{proof}

\begin{lem}
Notations being from theorem 2, to arbitrary $x_0 \in [0, 1]$, the parameters $w_i$ and $b_i$ of $\phi_i(x) = \sigma(w_ix + b_i)$ for all $i = 1, 2, \dots, m+1$ can be adjusted to yield a nonsingular generalized Wronskian matrix $\mathcal{W}(\phi_i(x_0))$ with arbitrary precision.
\end{lem}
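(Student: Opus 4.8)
The plan is to exploit the chain rule to collapse the generalized Wronskian into a Vandermonde matrix. Writing $\phi_i(x) = \sigma(w_ix+b_i)$, the $\nu$-th derivative is $\phi_i^{(\nu)}(x_0) = w_i^{\nu}\,\sigma^{(\nu)}(w_ix_0+b_i)$, so the entry of $\mathcal{W}(\phi_i(x_0))$ in row $i$ and derivative-column $\nu$ (for $\nu = 0,1,\dots,m$ and $i=1,\dots,m+1$) is $w_i^{\nu}\sigma^{(\nu)}(w_ix_0+b_i)$. The key observation is that the weight $w_i$ appears only through the pure power $w_i^{\nu}$, while the argument of $\sigma$ can be frozen by a choice of $b_i$; this is exactly the structure that produces a Vandermonde factor.

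Concretely, I would first fix a real number $z_0$ at which $\sigma^{(\nu)}(z_0)\neq 0$ for every $\nu=0,1,\dots,m$, and then, for each $i$, set $b_i = z_0 - w_ix_0$. This is available for any $x_0\in[0,1]$ and forces $w_ix_0+b_i = z_0$ for all $i$, so that $\phi_i^{(\nu)}(x_0) = w_i^{\nu}\sigma^{(\nu)}(z_0)$. Factoring the common scalar $\sigma^{(\nu)}(z_0)$ out of the $\nu$-th column yields
\[
\det\mathcal{W}(\phi_i(x_0)) = \Big(\prod_{\nu=0}^{m}\sigma^{(\nu)}(z_0)\Big)\det\big(w_i^{\nu}\big),
\]
and the remaining determinant of $\big(w_i^{\nu}\big)$, with $1\le i\le m+1$ and $0\le\nu\le m$, is a Vandermonde determinant equal up to sign to $\prod_{1\le i<j\le m+1}(w_j-w_i)$. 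Taking the $w_i$ pairwise distinct makes both factors nonzero, so $\mathcal{W}(\phi_i(x_0))$ is nonsingular.

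The clause ``with arbitrary precision'' I would read as the assertion that nonsingularity survives when the $w_i$ are driven small, which is what Theorem 2 needs: the $w_i$ can be chosen pairwise distinct yet arbitrarily small, and by Lemma 1 this simultaneously forces the derivative terms $\phi_i^{(\nu)}$ to shrink, keeping the construction compatible with the vanishing of the approximation error, while the Vandermonde factor $\prod_{i<j}(w_j-w_i)$ stays strictly nonzero. Thus the matrix remains genuinely nonsingular throughout any such refinement, and the free parameters $w_i$ allow the off-diagonal structure to be tuned as finely as desired.

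The main obstacle is justifying the anchor point $z_0$ at which $\sigma$ and all its derivatives up to order $m$ are simultaneously nonvanishing. For an analytic sigmoidal activation such as the logistic function or $\tanh$ this is immediate: each $\sigma^{(\nu)}$ is real-analytic and not identically zero, hence has only isolated zeros, and the finite union of the zero sets over $\nu=0,\dots,m$ is still a discrete set, so almost every $z_0$ works. For a merely smooth $\sigma$ the right substitute is the non-degeneracy condition that $\sigma,\sigma',\dots,\sigma^{(m)}$ be linearly independent; in that case, rather than freezing a single argument, I would spread the arguments to distinct values $z_i$ and invoke the standard fact that linearly independent smooth functions admit a collocation point set making $\det\big(\sigma^{(\nu)}(z_i)\big)$ nonzero, recovering nonsingularity without the clean single-point factorization. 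Verifying this non-degeneracy for the admissible activation class is the delicate part, whereas the Vandermonde computation itself is routine.
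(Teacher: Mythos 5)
Your construction is correct for the activations this paper actually cares about, but it follows a genuinely different route from the paper's proof. The paper never freezes the argument of $\sigma$ at a common point: it scales the weights at separated orders of magnitude, $w_1=\Delta t^{1+c}$, $w_j=\Delta t^{1/(j-1)}$ for $2\le j\le m$, $w_{m+1}=\Delta t^{-(m-1)}$, so that as $\Delta t\to 0$ every entry above the diagonal of $\mathcal{W}(\sigma(y_i),\Delta t)$ becomes negligible, the matrix is asymptotically lower triangular, and the limiting determinant is $\prod_{\nu=0}^{m}\sigma^{(\nu)}(y_{\nu+1})$, each factor made nonzero by tuning the bias of that one unit. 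The trade-offs are as follows. (i) Your argument yields \emph{exact} nonsingularity with the closed-form determinant $\bigl(\prod_{\nu=0}^{m}\sigma^{(\nu)}(z_0)\bigr)\prod_{1\le i<j\le m+1}(w_j-w_i)$, whereas the paper's conclusion is only asymptotic (which is why the lemma says ``with arbitrary precision''); this is cleaner and shorter. (ii) The paper's implicit hypothesis is weaker: it needs each $\sigma^{(\nu)}$, $\nu\le m$, to be nonvanishing at \emph{some} point of its own choosing (i.e., $\sigma$ is not a polynomial of degree less than $m$), while you need a single point $z_0$ where $\sigma,\sigma',\dots,\sigma^{(m)}$ are \emph{simultaneously} nonzero --- immediate for the analytic sigmoids (logistic, tanh) as you note, and harmless for the smooth non-polynomial activations of interest, but pointwise stronger in the merely finitely-smooth setting the paper nominally allows. (iii) The paper's multi-scale mechanism is not an accident of taste: it is exactly the template reused in the multivariate generalization (lemma 6), where no Vandermonde factorization of the generalized Wronskian is available; your trick does not transport there as directly.

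The one genuine defect is your fallback for merely smooth $\sigma$. Linear independence of $\sigma,\sigma',\dots,\sigma^{(m)}$ is neither implied by the lemma's hypotheses nor necessary for its conclusion, so it is not the right substitute condition. Concretely, $\sigma(x)=e^{x}$ is a legitimate smooth activation (it even satisfies definition 3 of the paper), yet all of its derivatives coincide: the collocation matrix $\bigl(\sigma^{(\nu)}(z_i)\bigr)$ is singular for \emph{every} choice of points $z_i$, so the standard fact you invoke cannot produce nonsingularity --- while your own main construction succeeds for this $\sigma$ with any $z_0$ whatsoever. Moreover, the collocation matrix coincides with the generalized Wronskian matrix only if all weights $w_i$ are taken equal (so that the Vandermonde factor is trivial); with distinct $w_i$ and distinct $z_i$ the matrix is a Hadamard product of a Vandermonde matrix and a collocation matrix, whose determinant is controlled by neither factor, and your proposal does not address this. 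So the ``delicate part'' you flag is real, but the proposed repair does not settle it; the honest scope of your proof is the class of $\sigma$ admitting a common nonvanishing point for the first $m+1$ derivatives, which your first argument alone already covers and which contains every activation the paper targets.
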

\begin{proof}
The main idea is to control the degree of the Taylor series expansion of each $\phi_i(x)$ by scaling the parameter $w_i$ at different levels, as adjusting the scaling parameter of wavelets. To each $\phi_i(x) = \sigma(w_ix + b_i)$, let $y = w_ix + b_i$ and $y_i = w_ix_0 + b_i$. By equation 2.12, the corresponding generalized Wronskian matrix is
\begin{equation}
\mathcal{W}(\sigma(y_i)) =
\begin{pmat}({})
\sigma(y_1) & \sigma'(y_1)w_1  & \cdots & \sigma^{(m)}(y_1)w_1^{m} \cr
\sigma(y_2) & \sigma'(y_2)w_2  & \cdots & \sigma^{(m)}(y_2)w_2^{m} \cr
\vdots & \vdots & \ddots & \vdots \cr
\sigma(y_{m+1}) & \sigma'(y_{m+1})w_{m+1} & \cdots & \sigma^{(m)}(y_{m+1})w_{m+1}^{m} \cr
\end{pmat}.
\end{equation}
Let $w_1 = \Delta t^{(1+c)}$ for $c > 0$, $w_j = \Delta t^{1/(j-1)}$ for $j = 2, 3, \dots, m$, where $0 < \Delta t < 1$, and $w_{m+1} =\Delta t^{-(m-1)}$; then equation 2.24 becomes
\begin{equation}
\begin{aligned}
&\ \ \ \ \ \ \ \ \ \ \ \ \ \ \ \ \ \ \ \ \ \ \ \ \ \ \ \ \ \ \ \ \ \ \ \ \ \ \ \mathcal{W}(\sigma(y_i), \Delta t) = \\
&\begin{pmat}({})
\sigma(y_1) & \sigma'(y_1)\Delta t^{1+c} & \sigma''(y_1)\Delta t^{2(1+c)} & \cdots & \sigma^{(m)}(y_1)\Delta t^{m(1+c)} \cr
\sigma(y_2) & \sigma'(y_2)\Delta t^{1/1} & \sigma''(y_2)\Delta t^{2/1} & \cdots & \sigma^{(m)}(y_2)\Delta t^{m/1} \cr
\sigma(y_3) & \sigma'(y_3)\Delta t^{1/2} & \sigma''(y_3)\Delta t^{2/2} & \cdots & \sigma^{(m)}(y_3)\Delta t^{m/2} \cr
\vdots & \vdots & \vdots & \ddots & \vdots \cr
\sigma(y_{m+1}) & \sigma'(y_{m+1})\Delta t^{1/m-1} & \sigma''(y_{m+1})\Delta t^{2(1/m-1)} & \cdots & \sigma^{(m)}(y_{k})\Delta t^{-(m-1)} \cr
\end{pmat},
\end{aligned}
\end{equation}
which can be expressed as
\begin{equation}
\mathcal{W}(\sigma(y_i), \Delta t) = \begin{pmat}({})
\sigma(y_1) & o(\Delta t) & \cdots & o(\Delta t) \cr
\sigma(y_2) & \sigma'(y_2)\Delta t & \cdots & o(\Delta t) \cr
\vdots & \vdots & \ddots & \vdots \cr
\sigma(y_{m+1}) & \sigma'(y_{m+1})\Delta t^{1/m-1} & \cdots & \sigma^{(m)}(y_{m+1})\Delta t^{-(m-1)} \cr
\end{pmat},
\end{equation}
where ``$o$'' represents the little-oh notation. Among the entries of $\mathcal{W}(\sigma(y_i), \Delta t)$, neglecting those infinitesimal ones relative to $\Delta t$, we have

\begin{equation}
\mathcal{W}'(\Delta t) = \lim_{\Delta t \to 0}\mathcal{W}(\sigma(y_i), \Delta t) = \begin{pmat}({})
\sigma(y_1) & 0 & \cdots & 0 \cr
\sigma(y_2) & \sigma'(y_2)\Delta t & \cdots & 0 \cr
\vdots & \vdots & \ddots & \vdots \cr
\sigma(y_{m+1}) & \sigma'(y_{m+1})\Delta t & \cdots & \sigma^{(m)}(y_{m+1})\Delta t^{-(m-1)} \cr
\end{pmat},
\end{equation}
whose determinant
\begin{equation}
\det \mathcal{W}'(\Delta t)) = \prod_{i=0}^{m}\sigma^{(i)}(y_{i+1}) \ne 0
\end{equation}
when each bias $b_i$ in $y_i = w_ix_0 + b_i$ is properly set to make $\sigma^{(i)}(y_{i+1}) \ne 0$. Thus, $\mathcal{W}'(\Delta t)$ is nonsingular.
\end{proof}

\begin{rmk-2}
We explain the meaning of the proof of lemma 2. By equation 2.21, write
\begin{equation}
\mathcal{W}'(\Delta t)^T =
\begin{pmat}({})
\sigma(y_1) & \sigma(y_2) & \cdots & \sigma(y_{m+1}) \cr
0 & \sigma'(y_2)\Delta t & \cdots & \sigma'(y_{m+1})\Delta t \cr
\vdots & \vdots & \ddots & \vdots \cr
0 & 0 & \cdots & \sigma^{(m)}(y_{m+1})\Delta t^{-(m-1)} \cr
\end{pmat}.
\end{equation}
We know that $\mathcal{W}'(\Delta t)^T\boldsymbol{\lambda} = \boldsymbol{b}$ (equation 2.13) produces the coefficients of the desired polynomial $p(x) = a_0 + a_1x + \dots +a_{m}x^{m}$. Note that the $i$th column for $i=1,2,\dots,m+1$ of $\mathcal{W}'(\Delta t)^T$ represents the function $\phi_i(x) = \sigma(w_ix + b_i)$. To $\phi_1(x)$, only the constant term of its Taylor series expansion is valid via scaling $w_1$. The case of $\phi_2(x)$, as indicated in the second column of $\mathcal{W}'(\Delta t)^T$, includes the linear portion of the Taylor series expansion of $\phi_2(x)$, also by scaling $w_2$. The remaining columns can be analogously interpreted. Through the parameter setting, the coefficient $a_{m}$ of $p(x)$ is formed by only $\phi_{m+1}(x)$ and the output weight $\lambda_{m+1}$ can be easily determined. By recursively operation from the last one, we can obtain all of $a_i$'s.
\end{rmk-2}

\begin{rmk-2}
Although lemma 2 is general for arbitrary smooth activation functions, its mechanism may not be ubiquitous in nonsingular generalized Wronskian matrices. One reason is that the weight parameters must be in (0, 1). So lemma 2 can only be regarded as a special solution or an existence proof.
\end{rmk-2}

\begin{rmk-2}
We can see the usefulness of the bias parameter in equation 2.22, that is, finding a suitable position of an activation function to produce nonzero $\sigma^{(i)}(y_{i+1})$'s.
\end{rmk-2}

\begin{thm}[Local approximation]
Notations being from theorem 2, to any smooth function $f: [0, 1] \to \mathbb{R}$, within a small enough neighbourhood $\delta(x_0)$ of arbitrary point $x_0 \in [0, 1]$, a two-layer network $\mathfrak{N}$, whose hidden layer is composed of the units with smooth activation functions, can approximate it with arbitrary precision, in terms of realizing its Taylor series expansion $p(x)$ at $x_0$. If the degree of $p(x)$ is $m$, the number $\Theta$ of the units of the hidden layer of $\mathfrak{N}$ required is $\Theta \ge m+1$.
\end{thm}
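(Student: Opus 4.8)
The plan is to obtain this statement as a synthesis of the results already in hand: Lemma 2 supplies hidden-unit parameters making the generalized Wronskian matrix nonsingular, and Theorem 2 converts that nonsingularity into an approximation bound whose error vanishes as the neighbourhood shrinks. Concretely, I would first fix $x_0$ and write the Taylor series expansion of $f$ at $x_0$ as $f(x) = p(x) + R(x)$, where $p(x) = \sum_{\nu=0}^{m} a_{\nu}/\nu!(x-x_0)^{\nu}$ is the degree-$m$ polynomial to be realized and $R(x)$ is the Lagrange remainder. I would then place exactly $k = m+1$ units $\phi_i(x) = \sigma(w_i x + b_i)$ in the hidden layer and invoke Lemma 2 to choose the weights $w_i$ and biases $b_i$ so that $\mathcal{W}(\phi_i(x_0))$ is nonsingular.

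With nonsingularity secured, Theorem 2 applies directly. Solving the linear system $\mathcal{W}(\phi_i(x_0))^{T}\boldsymbol{\lambda} = \boldsymbol{a}$ (equation 2.13) produces output weights $\lambda_i$ for which $\sum_i \lambda_i t_i(x) = p(x)$, and yields the bound $\lVert f(x) - \sum_i \lambda_i \phi_i(x)\rVert_2 \le \lVert R(x)\rVert_2 + \epsilon(m) = \varepsilon$. The concluding part of Theorem 2 then furnishes the key limiting step: with $m$ and the $\lambda_i$ held fixed, shrinking $\delta$ drives both the remainder $\lVert R(x)\rVert_2$ and the weighted error $\epsilon(m)$ to zero, so the network $\mathfrak{N}$ realizes $p(x)$ and thereby approximates $f$ over $\delta(x_0)$ with arbitrary precision.

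Finally, for the unit count I would argue both directions. Nonsingularity of the $(m+1)\times(m+1)$ Wronskian matrix — equivalently, rank $m+1$ of the matrix $A(\phi_i(x_0))$ of Theorem 1, which carries $m+1$ rows — cannot hold with fewer than $m+1$ columns, so at least $m+1$ units are necessary; Lemma 2 exhibits a construction with precisely $m+1$ units, establishing sufficiency and hence $\Theta \ge m+1$. I do not expect a genuine obstacle, as the statement is essentially a corollary of Lemma 2 together with Theorem 2; the one point deserving care is to confirm that the $\lambda_i$ delivered by the construction remain fixed as $\delta \to 0$, so that the error estimate of Theorem 2 is legitimately applied at the limit rather than to a moving target.
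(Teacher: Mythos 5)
Your proposal is correct and follows essentially the same route as the paper, whose proof is a one-line appeal to Theorems 1 and 2 together with Lemma 2 for the nonsingular generalized Wronskian matrix. Your elaboration—solving $\mathcal{W}(\phi_i(x_0))^{T}\boldsymbol{\lambda} = \boldsymbol{a}$ with exactly $m+1$ units, shrinking $\delta(x_0)$ with the $\lambda_i$ held fixed, and deriving $\Theta \ge m+1$ from the rank requirement—is just a fully spelled-out version of that same argument.
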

\begin{proof}
According to theorems 1 and 2, the key point is to construct a nonsingular Wronskian matrix, for which lemma 2 provided a solution.
\end{proof}

\subsection{Related Work}
\citet*{Leshno1993} and \citet*{Pinkus1999} constructed $x^k$ through smooth activation functions; \citet*{Xu2005} also realized Taylor series expansions via a two-layer neural network. Our difference lies in three aspects: (a) The framework of theorem 2 is general and can include their solutions as special cases; (b) Our construction method is novel and can be directly generalized to the multivariate case without requiring additional ridge functions.

\section{Univariate Global Approximation}

The approximation via Taylor series expansion is local and the approximation error can only be assured within a sufficiently small neighborhood of a point, for which a new mechanism related to splines is introduced and the associated solution is called a global one. Both theoretical and experimental results will demonstrate the necessity of this introduction.

\textbf{Section 3.1} provides several definitions, notations and some knowledge of splines. \textbf{Section 3.2} proposes a spline-construction method for the convenience of  network implementation. \textbf{Section 3.3} paves the way for approximation-error control. \textbf{Section 3.4} defines local and global units for different solution patterns. \textbf{Section 3.5} constructs a concrete solution (one-sided case) for universal approximation. \textbf{Section 3.6} gives another solution (two-sided case) to explain experiments.

\subsection{Preliminaries}

\begin{dfn}[Generalized sigmoidal unit]
A \textsl{generalized sigmoidal} unit is one whose activation function $\sigma(x)\in C^k(\mathbb{R})$ for $k \ge 1$ satisfies the following conditions: (a) increasing monotonically on $(-\infty, 0]$;  (b) $\lim_{\boldsymbol{x} \to -\infty}\sigma(x) = 0$. We also call $\sigma(x)$ a generalized sigmoidal function.
\end{dfn}
\begin{rmk}
This is the generalization of logistical sigmoid function $\sigma(x)=1/(1+e^{-x})$. Compared to \citet*{Cybenko1989}'s definition, ours has no restriction $\lim_{\boldsymbol{x} \to +\infty}\sigma(x) = 1$ but requires the monotonically increasing condition. Our monotonic condition differs from \citet*{Hornik1989}'s in that it is needed only on $(-\infty, 0]$, part of the domain on which the activation function is defined.
\end{rmk}

\begin{dfn}[Generalized tanh unit]
If the condition $(b)$ of definition 3 is modified to $\lim_{\boldsymbol{x} \to -\infty}\sigma(x) = \mathcal{C}\ne0$, the corresponding unit is called a generalized tanh unit and the activation function is called a generalized tanh function.
\end{dfn}
\begin{rmk}
This definition is generalized from hyperbolic tangent function $\sigma(x)=(e^{x}-e^{-x})/(e^{x}+e^{-x})$ of tanh units. Because this type of unit can be easily reduced to the case of generalized sigmoidal units (see later corollary 1), we will mainly investigate the latter.
\end{rmk}

We develop the theory on the basis of splines and first introduce some related concepts, notations and elementary results. Write
\begin{equation}
\begin{cases}
\ x_{+} := \max(0, x)
\\
\ \ \ \ x_{+}^m := (x_+)^m
\end{cases}
\end{equation}
\citep*{Chui1992}. The space of the order-$m$ splines discussed in this paper is
\begin{equation}
\begin{aligned}
\mathfrak{S}_1^m(\Delta)  =  \{s: s(x)=s_i(x) \in \mathcal{P}_m \ &\text{for} \ x \in I_i,  D^{\mu}s_{i}(x_{i})=D^{\mu}s_{i+1}(x_{i}) \ \text{for} \ i \ne \zeta, \\ \ &i = 1, 2, \dots, \zeta\},
\end{aligned}
\end{equation}
where $\mathcal{P}_m$ is the set of polynomials of degree $m$, $I_1 = [x_0, x_1]$ and $I_j = (x_{j-1}, x_{j}]$ for $j = 2, 3, \dots, \zeta$ are the intervals formed by $0 = x_0 < x_1 < x_2 < \cdots < x_{\zeta-1} < x_{\zeta} = 1$, $D^{\mu}$ is the $\mu$th derivative operator for $\mu = 0, 1, \dots, m-1$, and
\begin{equation}
\Delta = \{x_{\nu}: \nu = 1, 2, \dots, \zeta-1\}
\end{equation}
is the set of the knots. Throughout this paper, we call $I_1$ and $I_j$'s the subintervals of $\Delta$ or we say that they are derived from $\Delta$. The notation of equation 3.2 is of \citet*{Schumaker2007}'s modified version and $\mathfrak{S}_1^m(\Delta)$ is equivalent to \citet*{Powell1981}'s $\mathscr{S}(m, x_0, x_1, \dots, x_{\zeta})$.

Note that each $s(x) \in \mathfrak{S}_m(\Delta)$ belongs to $C^{m-1}([0, 1])$. By the property of smooth splines, two adjacent polynomials $s_{\nu}(x)$ and $s_{\nu-1}(x)$ of $s(x)$ for $\nu = 2, 3, \dots, \zeta$ satisfy the recurrence formula
\begin{equation}
s_{\nu}(x) = s_{\nu-1}(x) + \alpha_{\nu-1}(x - x_{\nu-1})^m_+
\end{equation}
for $x\in I_{\nu}$, which ensures $D^{\mu}s_{\nu}(x_{\nu-1})=D^{\mu}s_{\nu-1}(x_{\nu-1})$ of equation 3.2 and is the foundation of spline construction in this paper.

\subsection{Splines via Local Linear Approximation}
The approximation capability of a spline $s(x) \in \mathfrak{S}_1^m(\Delta)$ to continuous functions had been proved (e.g., \citet*{Powell1981}). In order to realize a spline through neural networks, we give a new construction method.

\begin{thm}[Univariate-spline construction]
Let $f(x) \in C^{m}([0, 1])$ and $\Delta $ of equation 3.3 be the set of the knots. Based on a piecewise linear approximation to the $m-1$th derivative $f^{(m-1)}(x)$ of $f(x)$, a spline $s(x) \in \mathfrak{S}_1^m(\Delta)$ can be constructed to approximate $f(x)$ with arbitrary precision, provided that the length of each subinterval $I_i$ for $1\le i \le \zeta$ derived from $\Delta$ is sufficiently small.
\end{thm}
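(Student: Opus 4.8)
The plan is to reduce the claim to an approximation problem on the single top derivative and then recover $s(x)$ by repeated integration. First I would let $L(x)$ be the continuous piecewise linear function that interpolates $f^{(m-1)}(x)$ at the knots of $\Delta$, so that $L$ is itself a degree-one spline with $C^0$ joins at every $x_\nu$. Since $f \in C^m([0,1])$ forces $f^{(m-1)}$ to be continuously differentiable, hence uniformly continuous (indeed Lipschitz) on the compact $[0,1]$, the standard error estimate for linear interpolation gives $\lVert f^{(m-1)} - L\rVert_\infty \le C h^2$ with $h = \max_i |I_i|$; uniform continuity alone already yields $\lVert f^{(m-1)} - L\rVert_\infty \to 0$ as $h \to 0$, which is all that is needed.

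Next I would build $s(x)$ from $L(x)$ by integrating $m-1$ times, fixing the constants of integration so the lower-order data match $f$ at the left endpoint. Concretely I define $s$ by $s^{(m-1)} = L$ together with the initial conditions $s^{(j)}(x_0) = f^{(j)}(x_0)$ for $j = 0, 1, \dots, m-2$, where $x_0 = 0$. Each integration raises the polynomial degree by one and the smoothness order by one; starting from the piecewise-linear, $C^0$ function $L$, after $m-1$ integrations $s$ is a piecewise polynomial of degree $m$ lying in $C^{m-1}([0,1])$, i.e.\ $s \in \mathfrak{S}_1^m(\Delta)$. More precisely, a jump of size $\beta_{\nu-1}$ in the slope of $L$ at a knot $x_{\nu-1}$ integrates to a term $\tfrac{\beta_{\nu-1}}{m!}(x - x_{\nu-1})^m_+$, which is exactly the recurrence formula 3.5 with $\alpha_{\nu-1} = \beta_{\nu-1}/m!$; this is what certifies membership in the spline space with the correct $C^{m-1}$ joins.

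Finally I would control the error. Setting $e(x) = f(x) - s(x)$, the construction gives $e^{(m-1)}(x) = f^{(m-1)}(x) - L(x)$ and $e^{(j)}(x_0) = 0$ for $0 \le j \le m-2$. Integrating the identity $e^{(j)}(x) = \int_{x_0}^x e^{(j+1)}(t)\,dt$ downward from $j = m-2$ to $j = 0$ expresses $e(x)$ as an $(m-1)$-fold iterated integral of $e^{(m-1)}$, whence $\lVert e\rVert_\infty \le \tfrac{1}{(m-1)!}\lVert f^{(m-1)} - L\rVert_\infty$ because $x - x_0 \le 1$ on $[0,1]$. Since the right-hand side tends to zero with $h$, and $\lVert f - s\rVert_2 \le \lVert f - s\rVert_\infty$ on a domain of measure one, the spline approximates $f$ to arbitrary precision once every subinterval is short enough.

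I expect the main obstacle to be the bookkeeping in the middle step: verifying that the iterated integral of a $C^0$ piecewise-linear function lands precisely in $\mathfrak{S}_1^m(\Delta)$ with the right $C^{m-1}$ continuity and reproduces the truncated-power recurrence 3.5, and choosing the integration constants at $x_0$ so that the error collapses to a clean multiple of $\lVert f^{(m-1)} - L\rVert_\infty$. The error propagation itself is then routine Taylor-type estimation once those initial conditions are pinned down.
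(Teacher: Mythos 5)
Your proposal is correct, and while it shares the theorem's prescribed skeleton (approximate $f^{(m-1)}$ by a piecewise linear function, then integrate up to a degree-$m$ spline), your execution differs from the paper's in two substantive ways. The paper does not use the continuous knot interpolant: on each subinterval $I_i$ it takes an \emph{independent} tangent-line (first-order Taylor) approximation $a_i x + b_i$ to $f^{(m-1)}$ at an interior point $x_{i0}$, integrates each piece separately, and then fixes the constants of integration by enforcing only value agreement $p_{j-1}(x_{j-1}) = p_j(x_{j-1})$ at the knots, controlling the error through the per-interval remainders $o\big((x-x_{i0})^m\big)$. You instead integrate a single continuous function $L$ (the interpolant of $f^{(m-1)}$ at the knots) a total of $m-1$ times with initial conditions at $x=0$. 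What your route buys is that membership in $\mathfrak{S}_1^m(\Delta)$ is automatic: since $L$ is continuous, the iterated integral is $C^{m-1}$ by construction, and the slope jumps of $L$ produce exactly the truncated-power recurrence (equation 3.4 of the paper, which you cite as 3.5) with $\alpha_{\nu-1} = \beta_{\nu-1}/m!$. This is precisely the point the paper's argument leaves delicate: two tangent lines taken at different interior points generally disagree at the shared knot, so adjacent pieces patched only by $C^0$ matching need not have matching derivatives there, and the paper's claim $\mathcal{S}(x)\in\mathfrak{S}_1^2(\Delta)$ is asserted rather than verified; your interpolant-based choice repairs exactly that gap. What the paper's route buys is coherence with its larger program: the Taylor-based local linearization ties this theorem to the local-approximation machinery of section 2 and keeps the knots and per-piece remainders explicit for the subsequent network implementation (theorems 5 and 6). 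Two further points in your favor: your global bound $\lVert f-s\rVert_\infty \le \tfrac{1}{(m-1)!}\lVert f^{(m-1)}-L\rVert_\infty$ via iterated integration is cleaner than the paper's interval-by-interval estimate, and your hedge that uniform continuity of $f^{(m-1)}$ suffices (the $O(h^2)$ interpolation rate would require $f\in C^{m+1}$, which is not assumed) is the right call.
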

\begin{proof}
We first give an example. To any point $x_0 \in (0, 1)$, a linear function $l(x) = ax + b$ can be found to approximate $f'(x)$ within a neighborhood $\delta(x_0)$ of $x_0$, where parameters $a$ and $b$ can be obtained by the first-order Taylor series expansion of $f(x)$ at $x_0$. Then we have
\begin{equation}
f'(x) = ax + b + o(x-x_0).
\end{equation}
The original function can be represented as
\begin{equation}
f(x) = F(x) := \int_{x_0}^xf'(x)dx + f(x_0),
\end{equation}
since $F(x_0) = f(x_0)$ and $F'(x) = f'(x)$; so
\begin{equation}
f(x) = \frac{ax^2}{2} + bx + c + \int_{x_0}^xr(x)dx,
\end{equation}
where $c = -a^2x_0^2/2 - bx_0 + f(x_0)$ and $r(x) = o(x-x_0)$. Because $r(x)$ is continuous and bounded in $\delta(x_0)$, it has a maximum $r(x_{\tau})$ at some point $x_{\tau} \in \delta(x_0)$, which is also $o(x-x_0)$. Thus,
\begin{equation}
\int_{x_0}^xr(x)dx < r(x_{\tau})(x-x_0) = o\big((x-x_0)^2\big).
\end{equation}
That is,
\begin{equation}
f(x) = \frac{ax^2}{2} + bx + c + o\big((x-x_0)^2\big),
\end{equation}

Divide $[0, 1]$ into $\zeta$ intervals $I_i$'s. If the length of each $I_i$ is sufficiently small, the linear approximation above can be done for all of $I_i$'s, yielding
\begin{equation}
\begin{aligned}
f(x) &= \frac{a_ix^2}{2} + b_ix + c_i + r_i(x) \\
&= \frac{a_ix^2}{2} + b_ix + c_i + o\big((x-x_{i0})^2\big)
\end{aligned}
\end{equation}
for $x, x_{i0} \in I_i$. Let $p_{i}(x) = a_ix^2/2 + b_ix + c_i$. When $i = 1$, the parameter $c_1$ of $p_1(x)$ is obtained by fulfilling a selected point of $f(x)$ on $[0, x_1]$. To ensure the continuity between $p_{j}(x)$ and $p_{j-1}(x)$ for $j = 2, 3, \dots, \zeta$, use $p_{j-1}(x_{j-1}) = p_{j}(x_{j-1})$ to determine $c_j$. Then all of $p_{i}(x)$ 's comprise a smooth spline $\mathcal{S}(x) \in \mathfrak{S}_1^2(\Delta)$, which can approximate $f(x)$ with arbitrary precision, provided that the length of each $I_i$ is sufficiently small; the approximation error is
\begin{equation}
\begin{aligned}
E = \lVert \mathcal{S}(x) &- f(x) \rVert_2 =  \big(\sum_i\int_{x_{i-1}}^{x_i} (p_i(x) - f(x))^2dx\big)^{1/2} \\
& < \big(\sum_i\int_{x_{i-1}}^{x_i} r_i(x)^2dx\big)^{1/2} \\
& \le \big(\sum_i\int_{x_{i-1}}^{x_i} \varepsilon^2dx\big)^{1/2}
= \varepsilon,
\end{aligned}
\end{equation}
where
\begin{equation}
\varepsilon = \max_i\{\beta_i = \sup_{x \in I_i}|r_i(x)|: 1\le i\le \zeta\}
\end{equation}
can be arbitrarily small when $l = \max\{|x_{i+1} - x_{i}|: 0 \le  i \le \zeta-1\}$ is small enough.

The above procedure can be easily generalized to higher-degree polynomials. We use a linear function to approximate the $m-1$th derivative of $f(x)$ at each $I_i$, that is,
\begin{equation}
f^{(m-1)}(x) = a_ix + b_i + o(x-x_{i0}),
\end{equation}
for $x, x_{i0} \in I_i$, thereby
\begin{equation}
f(x) = p_{i}(x) + o\big((x-x_{i0})^{m}\big),
\end{equation}
where $p_{i}(x)$ is a polynomial of degree $m$ whose coefficients are derived from the repeated integral of $a_ix + b_i$ of equation 3.13. Polynomials $p_{i}(x)$'s for all $i$ comprise a spline $s(x) \in \mathfrak{S}_1^m(\Delta)$. The approximation error of $s(x)$ to $f(x)$ can be analysed analogously to equation 3.11.
\end{proof}

\begin{rmk}
The main purpose of constructing a spline from a piecewise linear approximation to the derivatives of $f(x)$ is to make the knots of smooth splines evident, providing convenience for neural-network implementation.
\end{rmk}

\subsection{Zero-Part Error of Units}
To activation function $\sigma(x)$ of a generalized sigmoidal unit $\mathcal{U}$, we call the interval $(-\infty, x_0)$ with $x_0 < 0$ the zero part of $\mathcal{U}$, if the contribution of $\mathcal{U}$ on $(-\infty, x_0)$ to function approximation can be ignored due to  $\lim_{x\to-\infty}\sigma(x)=0$. The case of activation function $\phi(x) = \sigma(wx + b)$ can be similarly defined. The zero part of units should be exclusively designed to control approximation error.

Note that if a function $f(x) \in C^m([0,1])$, its $m-1$th derivative $f^{(m-1)}(x)$ is smooth, and hence $f^{(m-1)}(x)$ can be approximated by a piecewise linear function that could lead to a smooth spline of order $m$ approximating $f(x)$. The multivariate case is similar. It's the reason that the functions to be discussed in the remaining part of this paper are often confined to the condition as $f(x) \in C^m([0,1])$.

\begin{figure}[t!]     
\captionsetup{justification=centering}
\subfloat[Parameter-adjusting procedure.]{\includegraphics[width=2.79in, trim = {1.3cm 0.5cm 1.0cm 0.5cm}, clip]{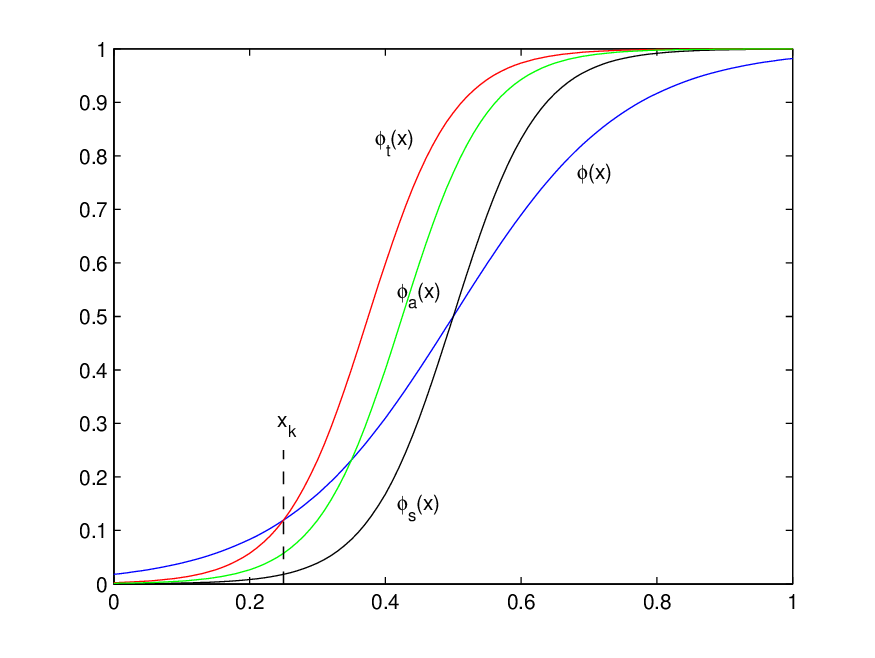}}
\subfloat[Effect of different scaling factors.]{\includegraphics[width=2.81in, trim = {1.3cm 0.5cm 1.0cm 0.5cm}, clip]{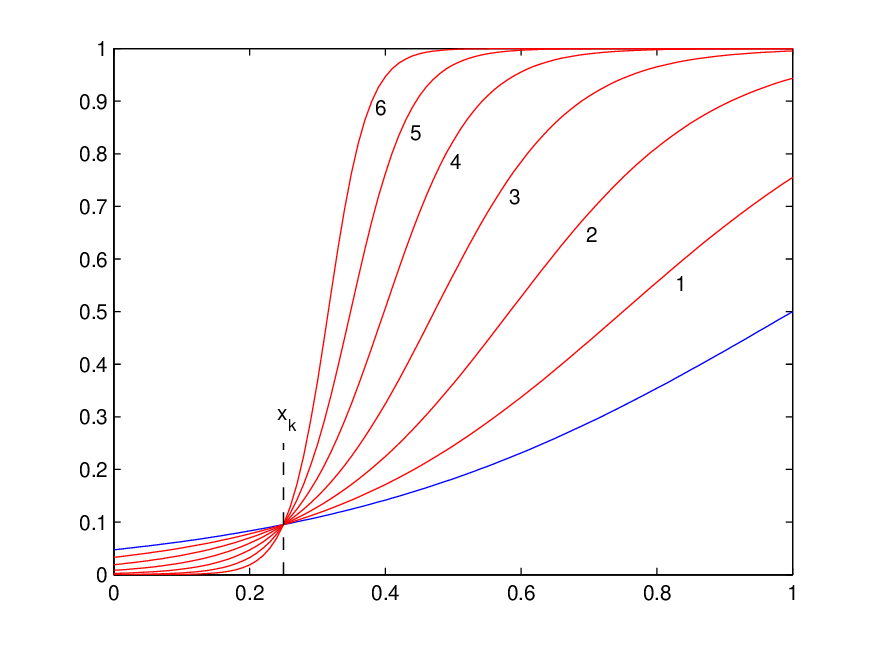}}
\caption{Reduction of zero-part errors.}
\label{Fig.1}
\end{figure}

\begin{lem}
Let $\phi(x) = \sigma(wx + b) \in C^{m}([0,1])$ be the activation function of a generalized sigmoid unit with $w > 0$ and $0<-b/w<1$, satisfying
\begin{equation}
\phi(x) \le \varepsilon
\end{equation}
for $x \le x_k$ and $\varepsilon \le \sigma(0)$, where $x_k \in (0, -b/w]$ and the equality holds when $x = x_k$. Then function $\phi(x)$ can be changed into
\begin{equation}
\phi_a(x) = \sigma(w'x + b') = \sigma(\rho(wx + b) + \gamma)
\end{equation}
by scaling the weight (via $\rho>1$) and adjusting the bias (via $\gamma$), such that the following conclusions hold:
\begin{itemize}
\item[\rm{\Romannum{1}}.] Suppose that $\phi_a(x)$ is approximated by a spline $\mathcal{S}(x) \in \mathfrak{S}_1^m(\Delta)$ to a desired accuracy, with $x_k$ above as one of the knots of $\Delta$. Let $s_i(x)$ for $i = 1, 2, \dots, \zeta$ be the polynomial on the $i$th subinterval derived from $\Delta$; the recurrence formula
\begin{equation}
s_{\nu}(x) = s_{\nu-1}(x) + c_{\nu-1}(x-x_{\nu-1})_+^m
\end{equation}
for $\nu = 2, 3, \dots, \zeta$ holds according to equation 3.4. Then
\begin{equation}
\lim_{\rho(\gamma) \to +\infty}s_{\mu}(x) = 0
\end{equation}
for $\mu = 1, 2, \dots, k$, where $\rho(\gamma)$ means that the change of $\rho$ is followed by the bias alteration $\gamma$ as in equation 3.16. Especially, when $\nu = k+ 1$ in equation 3.17,
\begin{equation}
\lim_{\rho(\gamma) \to +\infty}s_{k+1}(x) = 0 + c_{k}(x-x_{k})_+^m.
\end{equation}

\item[\rm{\Romannum{2}}.]
In equation 3.19,
\begin{equation}
\lim_{\rho(\gamma) \to +\infty}c_{k} = +\infty
\end{equation}
and $c_k$ monotonically increases with $\rho(\gamma)$.

\end{itemize}

\end{lem}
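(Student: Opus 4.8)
The plan is to first nail down how the bias $\gamma$ must be coupled to the scaling $\rho$, so that $x_k$ stays pinned as the left edge of the transition front while the whole graph on $[0,x_k]$ is driven down to zero. Writing $u_k = wx_k+b$ (so that $\sigma(u_k)=\phi(x_k)=\varepsilon$ and $u_k\le 0$ since $x_k\le -b/w$), I would take $x_k$ in the interior $(0,-b/w)$, so $u_k<0$, and set $\gamma=\gamma(\rho)=-\rho u_k+\beta(\rho)$ with $\beta(\rho)\to-\infty$ but $\beta(\rho)/\rho\to 0$ (for instance $\beta(\rho)=-\sqrt{\rho}$). This yields the transparent form $\phi_a(x)=\sigma(\rho w(x-x_k)+\beta(\rho))$ and, because $u_k<0$, makes $\gamma\to+\infty$, matching the notation $\rho(\gamma)\to+\infty$. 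With this choice the argument at $x_k$ equals $\beta(\rho)\to-\infty$, so $\phi_a(x_k)\to 0$; for any fixed $x<x_k$ the argument is $\le\beta(\rho)\to-\infty$ as well; and the point where the argument vanishes sits at $x_k-\beta(\rho)/(\rho w)\to x_k^{+}$, i.e.\ the steepening front collapses onto $x_k$ from the right, leaving the subinterval $I_k=(x_{k-1},x_k]$ entirely on the flat, zero side. This is precisely the ``reduction of zero-part error'' of Figure 1: the admissible bound $\varepsilon$ on $[0,x_k]$ is pushed to $0$.

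For part \Romannum{1}, by condition (b) of definition 3 and the smoothness of $\sigma$, on every closed subinterval contained in $[0,x_k]$ both $\phi_a$ and the derivatives $\phi_a^{(j)}(x)=(\rho w)^j\sigma^{(j)}(\rho w(x-x_k)+\beta(\rho))$ for $1\le j\le m$ tend to $0$ uniformly, provided the decay of $\sigma^{(j)}$ at $-\infty$ dominates the polynomial growth $(\rho w)^j$ (an estimate in the spirit of lemma 1). Since theorem 4 builds $\mathcal{S}$ by a piecewise-linear fit to $\phi_a^{(m-1)}$ followed by $m-1$ integrations, and the fit on each $I_\mu\subset[0,x_k]$ together with its slope is governed by $\phi_a^{(m-1)}|_{I_\mu}$ and $\phi_a^{(m)}|_{I_\mu}$, the constructed pieces and the integration constants (fixed recursively from $s_1$ by continuity) all vanish; hence $s_\mu(x)\to 0$ for $\mu=1,\dots,k$. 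Feeding this into the recurrence 3.17 at $\nu=k+1$ gives the ``especially'' identity $\lim s_{k+1}(x)=c_k(x-x_k)_+^m$.

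For part \Romannum{2}, the same recurrence gives $c_k\,m!=s_{k+1}^{(m)}-s_k^{(m)}$, which by the construction of theorem 4 equals the difference $a_{k+1}-a_k$ of the slopes of the piecewise-linear approximation of $\phi_a^{(m-1)}$ across $x_k$, i.e.\ essentially $\phi_a^{(m)}$ just to the right of $x_k$ minus its value just to the left. On the left $\phi_a^{(m)}\to 0$ by the estimate above, whereas just to the right the front contributes $\phi_a^{(m)}=(\rho w)^m\sigma^{(m)}(\,\cdot\,)$ with argument near $0$; thus $c_k$ scales like $(\rho w)^m\sigma^{(m)}(\,\cdot\,)/m!\to+\infty$. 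Monotonicity then follows because increasing $\rho$ only steepens the front and amplifies this slope jump without changing its sign on the relevant region, so $c_k$ rises with $\rho(\gamma)$.

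The hard part will be the behaviour immediately to the right of $x_k$. The hypothesis asks $\mathcal{S}$ to approximate $\phi_a$ ``to a desired accuracy'' while $\phi_a$ develops an arbitrarily steep front, which a fixed-degree spline on a fixed mesh cannot track; the partition must therefore be refined there, $|I_{k+1}|\to 0$, as $\rho$ grows. The crux is to run the two limits compatibly — scaling $\rho\to\infty$ together with mesh refinement — so that every piece $s_\mu$ with $\mu\le k$ genuinely vanishes while the single coefficient $c_k$ absorbs the entire diverging jump, with $c_k\,|I_{k+1}|^m$ staying bounded to match the finite total rise of $\phi_a$. I would handle this in the $L^2$ norm adopted by the paper, where the shrinking neighbourhood of $x_k$ is negligible, and by keeping $x_k$ a fixed knot so that 3.17 isolates the blow-up into $c_k$ alone; the decay estimate for $\sigma^{(j)}$ on the zero part is the one analytic input that must be stated explicitly for a general generalized-sigmoidal $\sigma$.
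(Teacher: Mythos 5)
Your parametrization of the limit (coupling $\gamma$ to $\rho$ so that the front collapses onto $x_k^{+}$ and $\phi_a(x_k)\to 0$) is a legitimate one-step version of the paper's two-stage construction (the paper first pins $\phi_t(x_k)=\varepsilon$ via $\gamma'$, and only at the very end shifts by $\gamma''$ to push the front just past $x_k$). The genuine gap is in how you derive conclusions I and II from it. You route everything through the derivative-based spline construction of theorem 4, so you need $\phi_a^{(j)}(x)=(\rho w)^j\sigma^{(j)}(\rho w(x-x_k)+\beta(\rho))$ to vanish on $[0,x_k]$, i.e.\ decay of $\sigma^{(j)}$ at $-\infty$ fast enough to beat the factor $(\rho w)^j$. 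Definition 3 supplies none of that: a generalized sigmoidal function is only required to be monotone on $(-\infty,0]$ with $\sigma\to 0$; its derivatives may spike without decaying (an integrable nonnegative $\sigma'$ need not tend to zero at $-\infty$, and higher derivatives are completely unconstrained). You flag this yourself as ``the one analytic input that must be stated explicitly,'' but it cannot be added as a side condition: the lemma is asserted for every generalized sigmoidal unit, and for \emph{any} sufficiently accurate spline approximant, not only splines built by theorem 4's recipe. The paper's proof needs no derivative information at all: since $\phi_t\to 0$ uniformly on compact subsets of $[0,x_k)$, any accurate spline has uniformly small pieces there; a degree-$m$ polynomial that is small on a whole subinterval has small coefficients, and the recurrence 3.17 then kills $c_1,\dots,c_{k-2}$ successively, with an auxiliary knot $x_k'$ inserted to handle $c_{k-1}$ at the pinned endpoint.

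Part II is where your route actually breaks. The claim $c_k\,m!\approx(\rho w)^m\sigma^{(m)}(\text{argument near }0)\to+\infty$ needs $\sigma^{(m)}$ to be positive and bounded away from zero near the front, and nothing guarantees that: for the logistic sigmoid itself with $m=2$ one has $\sigma''(0)=0$ with a sign change at $0$, so the claimed scaling does not produce a divergence of definite sign; moreover the secant slopes of the piecewise-linear fit on $(x_k,x_{k+1}]$ involve $\sigma^{(m-1)}$ at arguments tending to $+\infty$, about which definition 3 says nothing whatsoever. The paper's argument for II is a pure function-value pinching: with $\phi_t(x_k)=\varepsilon$ held fixed and $\phi_t(x_k')=\epsilon$ at an auxiliary knot $x_k'\uparrow x_k$, the degree-$m$ piece on $(x_k',x_k]$ must climb the fixed gap $\varepsilon-\epsilon$ over an interval of vanishing length, so its leading coefficient (renamed $c_k$ after the final bias shift) diverges and increases monotonically with $\rho$. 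You in fact wrote down exactly the right constraint yourself --- ``$c_k\,|I_{k+1}|^m$ staying bounded to match the finite total rise'' --- and making \emph{that} the proof (using the fixed rise to $\sigma(0)$, attained at $x_k+|\beta(\rho)|/(\rho w)$, as the anchor) would repair part II without any derivative hypotheses; as written, the derivative-scaling argument is not valid.
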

\begin{proof}
Figure \ref{Fig.1}a gives an example intuitively demonstrating the main procedures of the proof. We first modify $\phi(x)$ to be
\begin{equation}
\phi_t(x) = \sigma(w''x + b'') = \sigma(\rho(wx + b) + \gamma')
\end{equation}
where $\rho > 1$, satisfying
\begin{equation}
\phi_t(x_k) = \phi(x_k) = \varepsilon
\end{equation}
and
\begin{equation}
\phi_t(x) < \phi(x) < \varepsilon
\end{equation}
for $x < x_k$, for which two steps are required. First, scale the variable as $\phi_s(x) =  \sigma(\rho(wx + b))$. As shown in Figure \ref{Fig.1}a, the black cure is the scaled function $\phi_s(x)$ and the blue curve is the original $\phi(x)$. Because $\sigma(x)$ monotonically increases on $(-\infty, 0]$ and $\rho(wx_k + b) < wx_k + b < 0$, $\phi_s(x) < \phi(x)$ for $x < x_k$. The second step is to translate $\phi_s(x)$ to $\phi_t(x)$ by adjusting the bias of $\phi_s(x)$, subject to $\phi_t(x_k) = \phi(x_k)$ of equation 3.22; by
\begin{equation}
w''x_k + b'' = \rho(wx_k + b) + \gamma' = wx_k + b,
\end{equation}
we obtain the bias modification value $\gamma' = (wx_k + b)(1-\rho)$ and thus $w'' = \rho w$ and $b'' = (1-\rho)wx_k + b$ in equation 3.21.

See the red cure of Figure \ref{Fig.1}a, illustrating the difference between $\phi_t(x)$ and $\phi(x)$. Since
\begin{equation}
(w''x + b'') - (wx + b) = (\rho-1)w(x-x_k) < 0
\end{equation}
and $wx + b < 0$ when $x < x_k$, we have
\begin{equation}
\phi_t(x) = \sigma(w''x + b'') < \phi(x) = \sigma(wx + b)
\end{equation}
for $x < x_k$, which proves inequality 3.23.

By inequality 3.23, it is obvious that:
\begin{equation}
\lim_{\rho(\gamma') \to +\infty}s_{k}(x) = 0\ \text{for}\ x \in (x_{k-1}, x_{k}),
\end{equation}
without including the right endpoint $x_k$ of $(x_{k-1}, x_{k})$, where $\rho(\gamma')$ is similar to $\rho(\gamma)$ of equation 3.18,
\begin{equation}
\lim_{\rho(\gamma') \to +\infty}s_{\tau}(x) = 0\ \text{for}\ x \in (x_{\tau-1}, x_{\tau}],
\end{equation}
where $\tau = 2, 3, \dots, k-1$, and
\begin{equation}
\lim_{\rho(\gamma') \to +\infty}s_{1}(x) = 0\ \text{for}\ x \in [0, x_1].
\end{equation}
A difference between equations 3.27 and 3.28 is that the interval of the former doesn't contain the right endpoint, because $\phi_t(x_k) = \varepsilon$ (equation 3.22) is fixed by the previous parameter setting; this problem is to be dealt with later. By the recurrence formula of equation 3.17, we have
\begin{equation}
s_{\tau}(x) =  s_1(x) + \sum_{j = 1}^{\tau-1}c_{j}(x-x_j)^m_+
\end{equation}
with $\tau$ from equation 3.28. Equations from 3.28 to 3.30 yield the following equations
\begin{equation}
s_2(x) = s_1(x) + c_1(x-x_1)^m_+ \approx 0
\end{equation}
for $x \in (x_1, x_2]$, where ``$\approx$'' means ``equal as $\rho(\gamma') \to +\infty$'' (similarly for the remaining ones in this proof),
\begin{equation}
s_3(x) = s_1(x) + c_1(x-x_1)^m_+ + c_2(x-x_2)^m_+ \approx 0
\end{equation}
for $x \in (x_2, x_3]$, $\dots$,
\begin{equation}
s_{k-1}(x) = s_1(x) + \sum_{j = 1}^{k-2}c_{j}(x-x_j)^m_+ \approx 0
\end{equation}
for $x \in (x_{k-2}, x_{k-1}]$. In Equation 3.31, because there are infinitely many $x \in (x_1, x_2]$ fulfilling this formula, we can regard $s_1(x)$ and $-c_1(x-x_1)^m_+$ as almost the same polynomial; so $s_1(x) \approx -c_1(x-x_1)^m$ also holds on $[0, x_1]$; we know $s_1(x) \approx 0$ on $[0, x_1]$ (equation 3.29) and $c_1 \approx 0$ follows. Similarly, in equation 3.32, $s_1(x) + c_1(x-x_1)^m_+ \approx 0$ implies $c_2 \approx 0$. This process can be repeatedly done until
\begin{equation}
c_{\kappa} \approx 0
\end{equation}
for all $\kappa = 1, 2, \dots, k-2$.

To the case of $c_{k-1}$, we should solve the aforementioned problem that equation 3.27 is not applicable to the right endpoint $x_k$ of $(x_{k-1}, x_k)$. First add a knot $x_k'$ with
\begin{equation}
x_{k-1} < x_k' < x_k;
\end{equation}
correspondingly, the new set of the knots is denoted by $\Delta'$. Then construct a spline $\mathcal{S}'(x) \in \mathfrak{S}^m_1(\Delta')$ approximating $\phi_t(x)$. Let $s_{k}'(x)$ be the polynomial of $\mathcal{S}'(x)$ on $(x_{k}', x_k]$ and the notations of other polynomials on the rest of the subintervals remain the same as those of $\mathcal{S}(x)$. When $\rho(\gamma')$ is sufficiently large, we have
\begin{equation}
\lim_{\rho(\gamma') \to +\infty}s_{k}(x) = 0\ \text{for}\ x \in (x_{k-1}, x_{k}']
\end{equation}
as the counterpart of equation 3.27, with the subinterval including its right end point $x_k'$. Then
\begin{equation}
\lim_{\rho(\gamma') \to +\infty}c_{k-1} = 0
\end{equation}
holds by the method of equation 3.34, where $c_{k-1}$ is from
\begin{equation}
s_k(x) = s_{k-1}(x) + c_{k-1}(x-x_{k-1})^m_+
\end{equation}
for $x \in (x_{k-1}, x_k']$.

To a small enough positive real number
\begin{equation}
\phi_t(x_k') = \epsilon < \varepsilon,
\end{equation}
equation 3.36 can be expressed as
\begin{equation}
|s_{k}(x)| \le \epsilon.
\end{equation}
Note that to a fixed $\epsilon$, equation 3.40 holds for arbitrary large $x_k' \in (x_{k-1}, x_k)$ as $\rho(\gamma')$ increases; that is, $\sup{x_k'} = x_k$ but $x_k' \ne x_k$, or equivalently,
\begin{equation}
\lim_{\rho(\gamma') \to +\infty}{\{x_k - x_k'\}} = 0.
\end{equation}

We also have
\begin{equation}
s_k'(x) = s_k(x) + c_{k-1}'(x-x_k')^m_+
\end{equation}
for $ x \in (x_{k}', x_k]$. In equation 3.42, because $s_k(x)$ approaches zero in terms of a polynomial of order $m$ (equations 3.36 and 3.38), we can regarded it as constant zero; so equation 3.42 can be written as
\begin{equation}
s_k'(x) \approx 0 + c_{k-1}'(x-x_k')^m_+.
\end{equation}

Given a small positive real number $\epsilon < \varepsilon$, suppose that $\rho(\gamma')$ has been big enough to make $\phi_t(x) \le \epsilon$ on $[0, x_k']$. Then due to the constraint of both $\phi_t(x_k') = \epsilon$ (equation 3.39) and $\phi_t(x_k) = \varepsilon$ (equation 3.22), $s_k'(x)$ on $(x_k', x_{k}]$ should increase from $\epsilon$ to $\varepsilon$, that is,
\begin{equation}
s_k'(x_k) - s_k'(x_k') = \varepsilon - \epsilon.
\end{equation}
When the length of interval $(x_k', x_{k}]$ becomes smaller as $\rho(\gamma') \to +\infty$, under the fixed polynomial degree $m$, $c_{k-1}'$ of equation 3.43 should become larger to achieve the goal of equation 3.44, resulting in
\begin{equation}
\lim_{\rho(\gamma') \to +\infty}c'_{k-1} = +\infty
\end{equation}
and the monotonically increasing of $c_{k-1}'$ with $\rho$.

In Figure \ref{Fig.1}b, the red cures with labels from 1 to 6 are six instantiations of $\phi_t(x)$ whose parameter $\rho$ monotonically increases, from which we can see the behavior of $\phi_t(x)$ with respect to $\rho(\gamma')$ and obtain an intuitive understanding of equation 3.45.

To the last step, we translate $\phi_t(x)$ to $\phi_a(x)$ by adjusting the bias in terms of
\begin{equation}
\phi_a(x) = \sigma(w'x + b') = \sigma(w''x + b'' + \gamma''),
\end{equation}
where $w''$ and $b''$ is from equation 3.21, to satisfy
\begin{equation}
\phi_a(x_k) = \phi_t(x'_k) = \epsilon,
\end{equation}
which implies
\begin{equation}
w'x_k + b' = w''x_k' + b''
\end{equation}
or
\begin{equation}
\rho(wx_k + b) + \gamma' + \gamma'' = \rho(wx_k' + b) + \gamma',
\end{equation}
leading to
\begin{equation}
\gamma'' = \rho w(x_k' - x_k) < 0;
\end{equation}
thus, in equation 3.16,
\begin{equation}
\gamma = \gamma'' + \gamma'.
\end{equation}
We next delete the knot $x_k'$ and reconstruct the polynomials on the subintervals of $(x_{k}, 1]$; adjusting the positions of the knots bigger than $x_k$ or adding new knots on $(x_{k}, 1]$ if necessary. Then the new spline $\mathcal{S}(x) \in \mathfrak{S}^m_1(\Delta)$ can be regarded as zero on $[0, x_k]$ and $s_{k+1}(x) \approx c_{k-1}'(x-x_k')^m_+$; let $c_k = c_{k-1}'$ and this proves equation 3.20 by equation 3.45. In Figure \ref{Fig.1}a, the green curve is $\phi_a(x)$ obtained by slightly translating $\phi_t(x)$. This completes the proof.

\end{proof}

\begin{rmk-3}
Although the selection of $\varepsilon$ in equation 3.15 doesn't affect the conclusion of this lemma, under a fixed scaling parameter $\rho$, it may influence the degree $m$ of polynomials approximating an activation function and is possibly useful in parameters adjusting.
\end{rmk-3}

\begin{rmk-3}
In equation 3.16, when $w < 0$, conclusions analogous to this lemma can be similarly obtained, with the zero-error part on the right of knot $x_k$.
\end{rmk-3}

\begin{thm}
Let $\mathscr{S}(x) \in \mathfrak{S}^m_1(\Delta)$ be a spline with $p_i(x)$ for $i = 1, 2, \dots, \zeta$ being the polynomial on the $i$th subinterval of $\Delta$. Suppose that $p_j(x) = 0$ for $x \in [0, x_{k}]$ and $j = 1, 2, \dots, k$; so
\begin{equation}
p_{k+1} = 0 + d_{k}(x - x_k)^m_+
\end{equation}
for $x \in (x_k, x_{k+1}]$. Then a two-layer neural network with one generalized-sigmoidal unit $\mathcal{U}$ can approximate $\mathscr{S}(x)$ on $[0, x_{k+1}]$ with arbitrary precision, that is,
\begin{equation}
 \big(\int_{0}^{x_{k+1}}(\lambda\phi(x) -  \mathscr{S}(x))^2dx\big)^{1/2} < \epsilon,
\end{equation}
where $\phi(x) = \sigma(wx + b)$ is the activation function of $\mathcal{U}$, $\lambda$ is its output weight and $\epsilon$ is an arbitrarily small positive real number.
\end{thm}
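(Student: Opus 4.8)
The plan is to read this statement as an almost immediate consequence of Lemma 3: the target $\mathscr{S}(x)$ on $[0,x_{k+1}]$ is exactly the profile --- identically zero on $[0,x_k]$ and a single monomial spike $d_k(x-x_k)_+^m$ on $(x_k,x_{k+1}]$ --- that Lemma 3 manufactures as the order-$m$ spline approximation of one adjusted generalized-sigmoidal unit. So instead of approximating $\mathscr{S}$ directly, I would exhibit a unit whose spline surrogate is proportional to $\mathscr{S}$ and then pass the approximation through the output weight.

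First I would set up the unit. Choose $\sigma$ a generalized-sigmoidal activation and pick $w>0$, $b$ with $0<-b/w<1$ so that $x_k$ sits in the zero part, i.e. $\phi(x)=\sigma(wx+b)\le\varepsilon$ for $x\le x_k$ with equality at $x_k$, meeting the hypotheses of Lemma 3. Applying Lemma 3, I would scale the weight by $\rho$ and shift the bias by $\gamma$ to obtain $\phi_a(x)=\sigma(\rho(wx+b)+\gamma)$, whose order-$m$ spline approximation $\mathcal{S}(x)\in\mathfrak{S}_1^m(\Delta)$, on a partition retaining $x_k$ and $x_{k+1}$ as knots, obeys $s_\mu(x)\to 0$ for $\mu=1,\dots,k$ and $s_{k+1}(x)\to c_k(x-x_k)_+^m$ with $c_k\to+\infty$ monotonically as $\rho(\gamma)\to+\infty$ (the two conclusions of Lemma 3). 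Because $c_k>0$, I would then fix the output weight $\lambda=d_k/c_k$, letting the sign of $\lambda$ absorb the sign of $d_k$. With this choice $\lambda\mathcal{S}(x)\to\mathscr{S}(x)$ on all of $[0,x_{k+1}]$: on $[0,x_k]$ both sides tend to $0$, while on $(x_k,x_{k+1}]$ the recurrence gives $\lambda s_{k+1}=d_k(x-x_k)^m+\lambda s_k$, and $\lambda s_k\to 0$.

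The remaining work is the $L^2$ estimate, which I would obtain from the triangle inequality
\[
\Big(\int_0^{x_{k+1}}(\lambda\phi_a-\mathscr{S})^2\,dx\Big)^{1/2}\le |\lambda|\,\lVert\phi_a-\mathcal{S}\rVert_2+\lVert\lambda\mathcal{S}-\mathscr{S}\rVert_2 .
\]
The first term is driven below any threshold by Theorem 4, which lets $\mathcal{S}$ approximate $\phi_a$ to arbitrary accuracy, multiplied by the bounded factor $|\lambda|=|d_k|/c_k$; the second term is small by the matching established above. Hence the sum can be made smaller than the prescribed $\epsilon$, which is the desired inequality with $\phi=\phi_a$.

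The step I expect to be the real obstacle is the clean coordination of the two limits rather than any single estimate. Increasing $\rho(\gamma)$ is what simultaneously flattens the zero-part pieces $s_\mu$ ($\mu\le k$), forces $c_k\to+\infty$ (so $\lambda\to 0$, which is exactly what keeps $|\lambda|\,\lVert\phi_a-\mathcal{S}\rVert_2$ and the scaled zero part under control), and sharpens $\phi_a$ near $x_k$ so that a single degree-$m$ piece $c_k(x-x_k)^m$ genuinely captures it on the short subinterval $(x_k,x_{k+1}]$. I would therefore fix $\rho(\gamma)$ large enough to control both the zero part and the magnitude of $\lambda$, and only afterwards invoke the spline-accuracy of Theorem 4 on the fixed knots, taking care not to subdivide $(x_k,x_{k+1}]$ so that $\mathcal{S}$ keeps the single-piece form $c_k(x-x_k)^m$ needed to match $\mathscr{S}$ there.
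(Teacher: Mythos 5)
Your proposal is correct and takes essentially the same route as the paper: invoke Lemma 3 to obtain the adjusted unit whose order-$m$ spline surrogate vanishes on $[0,x_k]$ and equals $c_k(x-x_k)_+^m$ on the next piece, set the output weight by $\lambda c_k = d_k$, and rely on the monotone growth of $c_k$ with $\rho(\gamma)$ (Lemma 3's second conclusion) so that $|\lambda|$ shrinks and the error on the zero part $[0,x_k]$ stays controlled. The paper's proof is terser---it omits your explicit triangle-inequality decomposition and the remark about not subdividing $(x_k,x_{k+1}]$---but the key ideas and the role assigned to each conclusion of Lemma 3 coincide with yours.
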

\begin{proof}
The proof is by lemma 3. Equation 3.52 of this theorem corresponds to equation 3.19 of lemma 3; the output weight $\lambda$ can be obtained by
\begin{equation}
\lambda c_k = d_k,
\end{equation}
through which $p_{k+1}(x)$ on $(x_k, x_{k+1}]$ is realized by $\lambda\phi(x)$. To the part $[0, x_k]$, the approximation of $\lambda\phi(x)$ to zero is via equations 3.18 and 3.20. A key point is that if $|\lambda|$ increases as $\rho(\gamma) \to +\infty$, a larger approximation error would occur on $[0, x_k]$; that's why we developed the second conclusion of lemma 3. According to lemma 3, $c_k > 0$ monotonically increases with $\rho(\gamma)$; then in equation 3.53, $|\lambda|$ decreases as $c_k$ becomes larger; in combination with the smaller $\phi(x)$, $|\lambda\phi(x)|$ is reduced as $\rho(\gamma)$ increases, resulting in a smaller approximation error on $[0, x_k]$.
\end{proof}

\subsection{Univariate Local (Global) Unit}
\begin{dfn}[Univariate local (global) unit]
Given a function $f(x) \in C^{(m)}([0, 1])$, suppose that it is approximated by a two-layer neural network $\mathfrak{N}$ with generalized sigmoidal units, in terms of
\begin{equation}
\lVert f(x) - \sum_i\lambda_i\phi_i(x) \rVert_2 < \varepsilon,
\end{equation}
where $\phi_i(x) = \sigma(w_ix + b_i)$ is the activation function of the $i$th unit $u_i$ of the hidden layer of $\mathfrak{N}$. To some $u_i$ with $w_i > 0$, suppose that there exists a point $x_0 \in (0, 1)$ such that if $\phi_i(x)$ is truncated at $x_0$, that is, set $\phi_i(x) = 0$ on $[0, x_0]$, inequality 3.55 still holds, for which we call $x_0$ a potential zero-error point with respect to $\varepsilon$. Write
\begin{equation}
z_i := \sup x_0.
\end{equation}
If $z_i \in (0, 1)$, $z_i(\varepsilon)$ is called a \textsl{zero-error point} of $u_i$ (or $\phi_i(x)$) with respect to $\varepsilon$; when $\varepsilon$ of inequality 3.55 becomes smaller (readjusting some parameters of $\mathfrak{N}$ if necessary as in lemma 3), if
\begin{equation}
\lim_{\varepsilon \to 0}z_i(\varepsilon) \ne 0,
\end{equation}
$u_i$ is said to be a local unit; otherwise, it is a global one. To the case of $w_i < 0$, $\phi(x)$ is set to be zero on $[x_0, 1]$, with equation 3.56 modified to $z_i := \inf{x_0}$.
\end{dfn}

\begin{rmk-4}
We use the existence or nonexistence of local units to determine the approximation mode. If there's at least one local unit, the solution is global and can be explained by splines; otherwise, it is a local solution.
\end{rmk-4}

\begin{rmk-4}
The zero-error point of unit $u_i$ is defined independently with other units; does this lead to accumulated error that cannot be controlled when all the units are simultaneously considered? The answer is not. The truncation operation on $u_i$ can be expressed as
\begin{equation}
\lVert f(x) - \big(\sum_{\nu}\lambda_{\nu}\phi_{\nu}(x) - \lambda_{i}\hat{\phi}_{i}(x)\big)\rVert_2 < \varepsilon,
\end{equation}
where $\hat{\phi}_{i}(x) = 0$ on $[0, z_i(\varepsilon)]$ and $\hat{\phi}_{i}(x) = \phi_{i}(x)$ on $(z_i(\varepsilon), 1]$. When there is more than one local unit, the counterpart of inequality 3.58 is
\begin{equation}
\lVert f(x) - \sum_{\nu}\lambda_{\nu}\phi_{\nu}(x) + \sum_{\mu = 1}^{\gamma}\lambda_{i_{\mu}}\hat{\phi}_{i_{\mu}}(x)\rVert_2 < \epsilon,
\end{equation}
where $\gamma$ is the number of local units and the approximation error $\epsilon$ needs to be evaluated. We show that inequality 3.58 for all $i_{\mu}$ (inequality 3.59) can imply an arbitrary small $\epsilon$ of inequality 3.59. Adding inequalities 3.55 and 3.58 yields
\begin{equation}
\lVert d(x) \rVert_2 + \lVert d(x) + \lambda_{i}\hat{\phi}_{i}(x)\rVert_2 < 2\varepsilon,
\end{equation}
where $d(x) = f(x) - \sum_{\nu}\lambda_{\nu}\phi_{\nu}(x)$. By triangle inequality $\lVert x - y \rVert_2 \le \lVert x \rVert_2 + \lVert y \rVert_2$, inequality 3.60 implies $\lVert\lambda_{i}\hat{\phi}_{i}(x)\rVert_2 < 2\varepsilon$, such that
\begin{equation}
\big\lVert \sum_{\mu = 1}^{\gamma}\lambda_{i_{\mu}}\hat{\phi}_{i_{\mu}}(x) \big\rVert_2 \le \sum_{\mu = 1}^{\gamma}\big\lVert \lambda_{i_{\mu}}\hat{\phi}_{i_{\mu}}(x) \big\rVert_2 < 2\gamma\varepsilon.
\end{equation}
So
\begin{equation}
\begin{aligned}
&\ \ \ \ \ \ \big\lVert f(x) - \sum_{\nu}\lambda_{\nu}\phi_{\nu}(x) + \sum_{\mu = 1}^{\gamma}\lambda_{i_{\mu}}\hat{\phi}_{i_{\mu}}(x) \big\rVert_2 \le \\
\big\lVert f(x) - &\sum_{\nu}\lambda_{\nu}\phi_{\nu}(x) \big\rVert_2 + \big\lVert\sum_{\mu = 1}^{\gamma}\lambda_{i_{\mu}}\hat{\phi}_{i_{\mu}}(x)\big\rVert_2 < (2\gamma+1)\varepsilon = \epsilon.
\end{aligned}
\end{equation}
Thus, if $\varepsilon$ is sufficiently mall, the accumulated error of all the local units can also be ignored.
\end{rmk-4}

\begin{prp}
The unit $\mathcal{U}$ of theorem 5 is a local unit.
\end{prp}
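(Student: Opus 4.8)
The plan is to show that the zero-error point $z_i(\varepsilon)$ of $\mathcal{U}$ stays pinned near the knot $x_k$ as $\varepsilon \to 0$, so that $\lim_{\varepsilon \to 0} z_i(\varepsilon) = x_k \ne 0$, which by definition 8 makes $\mathcal{U}$ a local unit. I would invoke definition 8 with the target function taken to be the spline $\mathscr{S}(x)$ of theorem 5 and the ``network'' being the single term $\lambda\phi(x)$, so that inequality 3.55 becomes precisely inequality 3.53. The two facts I would exploit are: (\rom{1}) $\mathscr{S}(x) = 0$ on $[0, x_k]$ together with $\lambda\phi(x) \to 0$ on $[0, x_k]$, both supplied by the construction of theorem 5 through lemma 3; and (\rom{2}) $\mathscr{S}(x) = d_k(x - x_k)^m$ is genuinely nonzero on $(x_k, x_{k+1}]$, where $d_k$ is a fixed coefficient of the target spline.

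First I would establish the lower bound $z_i \ge x_k$. Using lemma 3, I arrange the parameters so that $\lVert \mathscr{S}(x) - \lambda\phi(x)\rVert_2 < \varepsilon/2$ while simultaneously $\big(\int_0^{x_k}(\lambda\phi(x))^2\,dx\big)^{1/2} < \varepsilon/2$, which is possible precisely because $\lambda\phi(x) \to 0$ on $[0,x_k]$ as $\rho(\gamma)$ grows. Truncating $\phi(x)$ at $x_0 = x_k$ removes only the part of $\lambda\phi(x)$ living on $[0,x_k]$, so by the triangle inequality the truncated error is at most $\varepsilon/2 + \varepsilon/2 = \varepsilon$; hence $x_k$ is a potential zero-error point and $z_i \ge x_k$.

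Next I would establish the matching upper bound. For any $x_0 > x_k$, truncating $\phi(x)$ at $x_0$ forces the approximant to vanish on all of $[0,x_0]$, whereas $\mathscr{S}(x) = d_k(x-x_k)^m$ is nonzero there, so the truncated error is at least
\[
\Big(\int_{x_k}^{x_0} d_k^2(x-x_k)^{2m}\,dx\Big)^{1/2} = \frac{|d_k|}{\sqrt{2m+1}}(x_0 - x_k)^{m+1/2}.
\]
Requiring this to stay below $\varepsilon$ forces $x_0 - x_k < C\varepsilon^{2/(2m+1)}$ for a constant $C$ depending only on $d_k$ and $m$, giving $z_i \le x_k + C\varepsilon^{2/(2m+1)}$. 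Combining the two bounds yields $x_k \le z_i(\varepsilon) \le x_k + C\varepsilon^{2/(2m+1)}$, so $\lim_{\varepsilon \to 0} z_i(\varepsilon) = x_k \in (0,1)$, and $\mathcal{U}$ is a local unit.

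I expect the main obstacle to be the bookkeeping of the $\varepsilon$-dependence of the parameters. As $\varepsilon$ shrinks I readjust $\mathcal{U}$ through the scaling $\rho(\gamma)$ of lemma 3, and I must verify that this readjustment only sharpens $\phi$ on $[0,x_k]$ (which strengthens the lower-bound estimate) while leaving both the coefficient $d_k$ and the knot $x_k$ of the target spline fixed. Here I would use that the fixed quantity in theorem 5 is $d_k = \lambda c_k$, with $c_k \to +\infty$ and $\lambda \to 0$ by lemma 3, so that $d_k$ and hence the constant $C$ in the upper bound do not degrade as $\varepsilon \to 0$. Since $x_k$ is a fixed knot independent of $\varepsilon$, the limit is nonzero and the claim follows.
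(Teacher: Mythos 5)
Your proof is correct and takes essentially the same approach as the paper's: both use the fact, supplied by lemma 3 through theorem 5, that $\lambda\phi(x)\to 0$ on $[0,x_k]$ as $\rho(\gamma)\to+\infty$ to show that $x_k$ is a potential zero-error point, and then conclude $\lim_{\varepsilon\to 0}z_i(\varepsilon)=x_k\neq 0$. Your explicit quantitative upper bound $z_i(\varepsilon)\le x_k + C\varepsilon^{2/(2m+1)}$, obtained by integrating $d_k^2(x-x_k)^{2m}$ past the knot, makes rigorous the pinning of $z_i(\varepsilon)$ at $x_k$ that the paper simply asserts by reference to the proof of lemma 3.
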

\begin{proof}
By the proof of theorem 5, on $[0, x_k]$,
\begin{equation}
\lim_{\rho(\gamma) \to +\infty}\{\int_{0}^{x_k}(\lambda\phi(x))^2dx\}^{1/2} = 0.
\end{equation}
which means that truncating $\phi(x)$ on $[0, x_k]$ would not influence the establishment of inequality 3.55 when $\rho(\gamma)$ is sufficiently large. Due to the proof of lemma 3, $\lim_{\varepsilon \to 0}z_i(\varepsilon) = x_k \ne 0$ always holds, no matter how small $\varepsilon$. This completes the proof.
\end{proof}


\subsection{One-Sided Solutions}
To approximate univariate functions, the term ``one sided'' means that all $w_i$'s of activation functions $\phi_i(x) = \sigma(w_ix + b_i)$ are greater than (or less than) zero; otherwise, it is the ``two-sided'' case.

\begin{figure}[!t]
\captionsetup{justification=centering}
\centering
\includegraphics[width=3.6in, trim = {1.0cm 0.5cm 1.0cm 0.5cm}, clip]{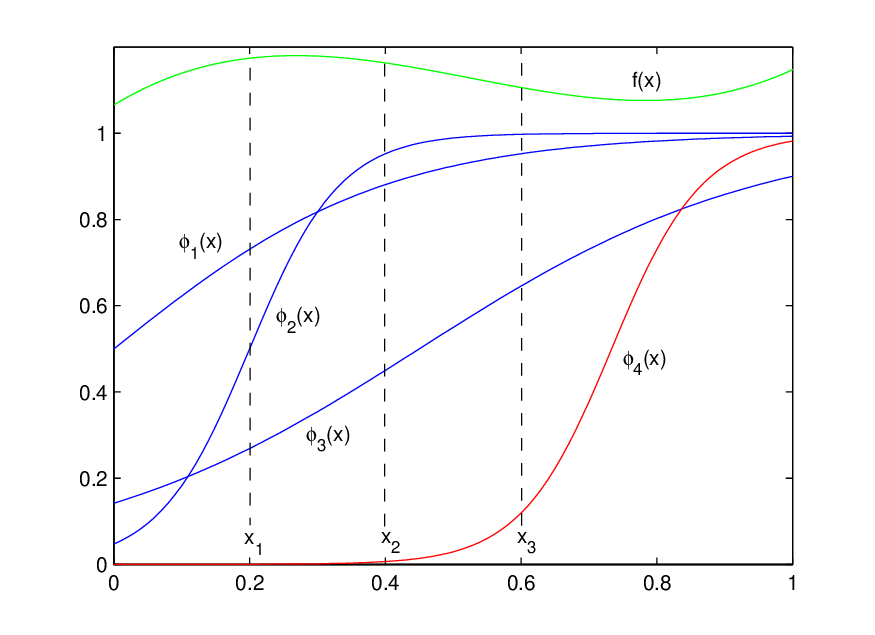}
\caption{Principle of global approximation.}
\label{Fig.2}
\end{figure}

\begin{lem}
Let $s(x) \in \mathfrak{S}^m_1( \Delta)$ be a spline with $\mathcal{P}_k(x)$ the polynomial on the $k$th subinterval derived from $\Delta$. Denote by $\phi_i(x) = \sigma(w_ix + b_i)$ for $i = 1, 2, \dots, \zeta$ and $w_i > 0$ the activation function of the $i$th unit $u_i$ of a two-layer neural network $\mathfrak{N}$, and by $g(x) = \sum_{i = 1}^{\zeta}\lambda_i\phi_i(x)$ the output of $\mathfrak{N}$. Suppose that $\phi_i(x) \in C^{m}([0, 1])$ and that $\mathcal{P}_{k-1}(x)$ on $(x_{k-2}, x_{k-1}]$ of $s(x)$ has been approximated by $\mathfrak{N}$ with a desired precision $\varepsilon$---that is,
\begin{equation}
\{\int_{x_{k-2}}^{x_{k-1}}(\mathcal{P}_{k-1}(x) - g(x))^2dx\}^{1/2} < \varepsilon.
\end{equation}
If the approximation error for the next subinterval $(x_{k-1}, x_k]$ is bigger than $\varepsilon$ (i.e.,
\begin{equation}
\{\int_{x_{k-1}}^{x_{k}}(\mathcal{P}_k(x) - g(x))^2dx\}^{1/2} \ge \varepsilon
\end{equation}
), a new unit $u_{\zeta+1}$ can be added to make
\begin{equation}
\{\int_{x_{k-1}}^{x_{k}}(\mathcal{P}_k(x) - g(x))^2dx\}^{1/2} < \varepsilon
\end{equation}
without influencing inequality 3.64, provided that $|x_k-x_{k-1}|$ and $|x_{k-1}-x_{k-2}|$ are sufficiently small.
\end{lem}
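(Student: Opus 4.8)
The plan is to realize the single new spline term $\alpha_{k-1}(x-x_{k-1})^m_+$ by the added unit $u_{\zeta+1}$ via theorem 5, and to argue that this both leaves the $(k-1)$th subinterval essentially untouched and corrects the $k$th one. First I would invoke the recurrence formula 3.4 to write, on $(x_{k-1}, x_k]$,
\begin{equation*}
\mathcal{P}_k(x) = \mathcal{P}_{k-1}(x) + \alpha_{k-1}(x-x_{k-1})^m_+,
\end{equation*}
so that the polynomial on the new subinterval differs from the previous one only by the single knot term $\alpha_{k-1}(x-x_{k-1})^m_+$, which vanishes identically on $[0, x_{k-1}]$.

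Next I would apply theorem 5 with its knot $x_k$ replaced by $x_{k-1}$, its endpoint $x_{k+1}$ replaced by $x_k$, and its coefficient $d_k$ replaced by $\alpha_{k-1}$: this yields a generalized sigmoidal unit $u_{\zeta+1}$ with activation $\phi_{\zeta+1}(x) = \sigma(w_{\zeta+1}x+b_{\zeta+1})$, $w_{\zeta+1}>0$, and an output weight $\lambda_{\zeta+1}$ such that $\lambda_{\zeta+1}\phi_{\zeta+1}(x)$ approximates $\alpha_{k-1}(x-x_{k-1})^m_+$ on $[0, x_k]$ to any desired accuracy, with $x_{k-1}$ as its zero-error point. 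By the zero-part error property of lemma 3 (and proposition 1, which identifies such a unit as local), the contribution of $\lambda_{\zeta+1}\phi_{\zeta+1}(x)$ on the previous subinterval $(x_{k-2}, x_{k-1}]$ can be driven arbitrarily small by scaling $\rho(\gamma)\to+\infty$. Hence adding $u_{\zeta+1}$ perturbs the left-hand side of inequality 3.64 by an amount tending to zero, so 3.64 is preserved; this discharges the non-disturbance requirement.

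It then remains to verify that the augmented output $g(x) + \lambda_{\zeta+1}\phi_{\zeta+1}(x)$ approximates $\mathcal{P}_k(x)$ on $(x_{k-1}, x_k]$ within $\varepsilon$. Since $\lambda_{\zeta+1}\phi_{\zeta+1}(x)\approx \alpha_{k-1}(x-x_{k-1})^m_+$ there, by the displayed recurrence it suffices to show that the original $g(x)$ already tracks $\mathcal{P}_{k-1}(x)$ on the new subinterval $(x_{k-1}, x_k]$, not merely on $(x_{k-2}, x_{k-1}]$. This is the main obstacle, and it is precisely where the hypotheses that $|x_k-x_{k-1}|$ and $|x_{k-1}-x_{k-2}|$ be sufficiently small enter: because $g$ and $\mathcal{P}_{k-1}$ are both smooth on all of $[0,1]$, close agreement of $g$ with the polynomial $\mathcal{P}_{k-1}$ across the short interval $(x_{k-2}, x_{k-1}]$, together with the matching of their low-order behavior at $x_{k-1}$ inherited from the one-sided construction, propagates by a Taylor estimate to the adjacent short interval $(x_{k-1}, x_k]$ with an error controlled by the interval lengths. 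Collecting the three contributions—the preserved term of 3.64, the theorem-5 approximation error of $\lambda_{\zeta+1}\phi_{\zeta+1}$, and the propagation error of $g$ against $\mathcal{P}_{k-1}$—and making each small by taking $\rho(\gamma)$ large and the intervals short yields inequality 3.66. The delicate point throughout is keeping $|\alpha_{k-1}|$ and hence $|\lambda_{\zeta+1}|$ under control so that the zero-part contribution on $(x_{k-2},x_{k-1}]$ stays within budget, which is exactly the role of the monotone growth of $c_k$ established in the second conclusion of lemma 3.
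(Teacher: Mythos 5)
Your overall strategy (add a unit whose zero-error point is $x_{k-1}$, invoke lemma 3/theorem 5 so that its contribution on $[0,x_{k-1}]$ is negligible, and set its output weight to fix the $k$th subinterval) matches the paper's, but there is a genuine gap in the step where you decide what the new unit must realize. You have it realize the full target jump $\alpha_{k-1}(x-x_{k-1})^m_+$ and then reduce the problem to showing that the original $g(x)$ ``already tracks $\mathcal{P}_{k-1}(x)$'' on $(x_{k-1},x_k]$. That claim is false in general, and the Taylor-propagation argument you offer for it is unsound: $L^2$-closeness of $g$ to $\mathcal{P}_{k-1}$ on $(x_{k-2},x_{k-1}]$ together with smoothness controls nothing on the adjacent interval (the smooth function equal to $0$ for $x\le x_{k-1}$ and to $e^{-1/(x-x_{k-1})}$ for $x>x_{k-1}$ vanishes to all orders at $x_{k-1}$, yet is of order one just to the right). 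In the present setting the deviation is structural, not incidental: expanding each existing $\phi_i(x)$ as a spline $s_i(x)$ with recurrence $p_{k,i}(x)=p_{k-1,i}(x)+\beta_{k-1,i}(x-x_{k-1})_+^m$ (the paper's equation 3.67), the existing output satisfies $g(x)\approx \mathcal{P}_{k-1}(x)+\gamma(x-x_{k-1})_+^m$ on $(x_{k-1},x_k]$ with $\gamma=\sum_{i=1}^{\zeta}\lambda_i\beta_{k-1,i}$ (equations 3.69--3.70), where $\gamma$ is a fixed, generically nonzero constant determined by the already-set units. With your choice of weight the augmented output is $\approx \mathcal{P}_k(x)+\gamma(x-x_{k-1})_+^m$, leaving a residual of exactly the same order as the error you set out to remove; if $|\gamma|\ge|\alpha_{k-1}-\gamma|$, the ``corrected'' error is no smaller than the original one, so inequality 3.66 need not hold. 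Taking the intervals shorter does not help, since the residual $\gamma(x-x_{k-1})_+^m$ and the correction $\alpha_{k-1}(x-x_{k-1})_+^m$ shrink at the same rate.

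The paper's proof closes precisely this hole. Since the new unit's spline also obeys $p_{k,\zeta+1}(x)=p_{k-1,\zeta+1}(x)+\beta_{k-1,\zeta+1}(x-x_{k-1})_+^m$ with $p_{k-1,\zeta+1}(x)\approx 0$ by lemma 3, the knot coefficient of the augmented network becomes $\gamma'=\gamma+\lambda_{\zeta+1}\beta_{k-1,\zeta+1}$, and the weight is chosen as $\lambda_{\zeta+1}=(c-\gamma)/\beta_{k-1,\zeta+1}$ (equation 3.77, where $c$ is your $\alpha_{k-1}$): the unit compensates only the mismatch $c-\gamma$, not the full coefficient $c$. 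To repair your argument, delete the tracking claim and the Taylor estimate, expand $g$ itself through the spline decompositions of its units as above, and change the target of your theorem-5 step from $\alpha_{k-1}(x-x_{k-1})_+^m$ to $(\alpha_{k-1}-\gamma)(x-x_{k-1})_+^m$; the rest of your proposal (non-disturbance of inequality 3.64 via the zero-part error, and the role of lemma 3's monotone growth of $c_k$ in keeping $|\lambda_{\zeta+1}|$ under control) then goes through as in the paper.
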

\begin{proof}
We first see an example of Figure \ref{Fig.2}. In this case, $f(x) = s(x)$ is a spline. Suppose that $\phi_4(x)$ of $u_4$ is initially absent, and then $g(x) = \sum_{i=1}^{3}\lambda_i\phi_i(x)$. Because $\phi_i(x) \in C^{m}([0, 1])$, it can be approximated by a spline $s_i(x) \in \mathfrak{S}^m_1(\Delta)$ as precisely as possible, with $p_{k, i}(x)$ the polynomial on the $k$th subinterval satisfying
\begin{equation}
p_{k, i}(x) = p_{k-1, i}(x) + \beta_{k-1,i}(x - x_{k-1})_+^{m}.
\end{equation}

The polynomial $\mathcal{P}_{k-1}(x)$ of inequality 3.64 is the linear combination of $p_{k-1, i}(x)$'s for all $i$, that is,
\begin{equation}
\mathcal{P}_{k-1}(x) = \sum_{i}\lambda_ip_{k-1, i}(x).
\end{equation}
Because $\sum_{i=1}^{3}\lambda_ip_{k, i}(x) \ne \mathcal{P}_k(x)$, inequality 3.66 doesn't hold, for which we add a new unit to change this situation. In Figure \ref{Fig.2}, $u_4$ is the added unit whose activation function is $\phi_4(x)$. There are two main steps to adjust the parameters of $\phi_4(x)$ to satisfy $\sum_{i=1}^{4}\lambda_ip_{k, i}(x) = \mathcal{P}_k(x)$. The first is by lemma 3 to reduce the influence of $\phi_4(x)$ on $[0, x_{k-1}]$ to a desired accuracy. The second is to set $\lambda_4$ such that $\sum_{i=1}^3\lambda_{i}p_{k, i}(x) + \lambda_4p_{k, 4}(x) = \mathcal{P}_k(x)$. Let $\zeta = 3$. The original polynomial $\mathcal{P}_k(x)$ before adding $u_{\zeta + 1}$ can be expressed as
\begin{equation}
\begin{aligned}
\sum_i^{\zeta} \lambda_{i} & p_{k,i}(x) = \sum_{i=1}^{\zeta} \lambda_i\big(p_{k-1,i}(x) + \beta_{k-1,i}(x-x_{k-1})_+^{m}\big) \\
&= \mathcal{P}_{k-1}(x) + \gamma(x - x_{k-1})_+^{m} \ne \mathcal{P}_{k}(x),
\end{aligned}
\end{equation}
where
\begin{equation}
\gamma = \sum_{i=1}^{\zeta}\lambda_i\beta_{k-1,i}.
\end{equation}

We have two additional conditions. One is
\begin{equation}
\mathcal{P}_{k}(x) = \mathcal{P}_{k-1}(x) + c(x - x_{k-1})_+^{m}.
\end{equation}
The other is that equation 3.67 is applicable to the the new unit $u_{4}$ as well, that is,
\begin{equation}
p_{k,\zeta+1}(x) = p_{k-1, \zeta+1}(x) + \beta_{k-1,\zeta+1}(x - x_{k-1})_+^{m}.
\end{equation}
Then the updated version of equation 3.69 is
\begin{equation}
\mathcal{P}'_{k}(x) = \sum_{\mu = 1}^{\zeta + 1} \alpha_{\mu}p_{k{\mu}}(x) = \mathcal{P}'_{k-1}(x) + \gamma'(x - x_{k-1})_+^{m},
\end{equation}
where
\begin{equation}
\mathcal{P}'_{k-1}(x) = \mathcal{P}_{k-1}(x) + \lambda_{\zeta+1}p_{k-1,\zeta+1}(x)
\end{equation}
and
\begin{equation}
\gamma' = \gamma + \lambda_{\zeta+1}\beta_{k-1,\zeta+1}.
\end{equation}
In equation 3.74, setting $p_{k-1,\zeta+1}(x)\approx 0$  by lemma 3, where ``$\approx$'' means ``approximately equal with arbitrary precision'', we have
\begin{equation}
\mathcal{P}'_{k-1}(x) \approx \mathcal{P}_{k-1}(x).
\end{equation}
Since $\lambda_{\zeta+1}$ of equation 3.75 can be arbitrarily set, we can find a solution of $\lambda_{\zeta+1}$ to satisfy $\gamma' = c$ of equation 3.71, which is
\begin{equation}
\lambda_{\zeta+1} = (c - \gamma)/(\beta_{k-1,\zeta + 1}).
\end{equation}
Equations from 3.73 to 3.77 realize $\mathcal{P}'_{k}(x) \approx \mathcal{P}_{k}(x)$. This completes the proof.
\end{proof}

\begin{thm}[Univariate-spline implementation]
Notations being as in lemma 4, any spline $s(x) \in \mathfrak{S}^m_1(\Delta)$ can be approximated by a two-layer network $\mathfrak{N}$ with arbitrary precision, provided that the lengths of the subintervals $I_i$'s for $i = 1, 2, \dots, \zeta$ of $\Delta$ are sufficiently small. If $s(x)$ has $\zeta$ polynomial pieces, the number of the units of $\mathfrak{N}$ satisfies $\Theta \ge \zeta + m$.
\end{thm}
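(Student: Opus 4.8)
The plan is to build $\mathfrak{N}$ piece by piece along the knots $0 = x_0 < x_1 < \cdots < x_\zeta = 1$, exploiting the recurrence of equation 3.4 that characterizes $\mathfrak{S}^m_1(\Delta)$: the first piece $s_1(x)$ on $I_1$ is an arbitrary polynomial of degree $m$, carrying $m+1$ free coefficients, while each interior knot $x_{\nu-1}$ contributes exactly one new coefficient $\alpha_{\nu-1}$ through the term $\alpha_{\nu-1}(x-x_{\nu-1})^m_+$. This is precisely the dimension $m+\zeta$ of the spline space, and the aim is to assign one hidden unit to each free parameter.

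For the base case I would realize $s_1(x)$ on the short interval $I_1 = [0, x_1]$ using $m+1$ units. By Lemma 2 the weights can be scaled so that the generalized Wronskian matrix $\mathcal{W}(\phi_i(x_0))$ is nonsingular; then Theorem 1 solves the linear system $\mathcal{W}(\phi_i(x_0))^T \boldsymbol{\lambda} = \boldsymbol{a}$ (equation 2.13) for output weights whose combination reproduces the coefficients of $s_1(x)$. Since $s_1(x)$ is its own Taylor polynomial, the function remainder vanishes and only the unit remainders contribute, so by Theorem 2 the residual over $I_1$ is arbitrarily small once $|I_1|$ is small enough. This produces an $\varepsilon$-approximation of $s(x)$ on $[0, x_1]$ with $m+1$ units.

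For the inductive step, assume $\mathfrak{N}$ already approximates $s(x)$ to accuracy $\varepsilon$ on $[0, x_{k-1}]$. On $(x_{k-1}, x_k]$ the correct polynomial is $s_k(x) = s_{k-1}(x) + \alpha_{k-1}(x-x_{k-1})^m_+$, differing from the already-realized $s_{k-1}(x)$ by a single knot term, so the current error on this subinterval (inequality 3.65) may exceed $\varepsilon$. I would then add one unit $u_{\zeta+1}$ exactly as in Lemma 4: by Lemma 3 its weight is scaled and its bias adjusted so that its zero-error point coincides with $x_{k-1}$, rendering its contribution negligible on $[0, x_{k-1}]$ while it reproduces a multiple of $(x-x_{k-1})^m_+$ beyond $x_{k-1}$ (the single-knot realization of Theorem 5); its output weight is fixed by equation 3.78 so that $\gamma' = c$, achieving inequality 3.66 without disturbing inequality 3.64. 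Since $(x-x_{k-1})^m_+$ vanishes for $x \le x_{k-1}$ and the unit is local (Proposition 1), the correction is confined to $x > x_{k-1}$ and leaves every earlier subinterval intact. Iterating over the $\zeta - 1$ interior knots consumes $\zeta - 1$ additional units, giving the total $\Theta = (m+1) + (\zeta-1) = m + \zeta$, hence $\Theta \ge \zeta + m$.

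The step I expect to be the main obstacle is not any individual correction but the control of the accumulated error: each added unit is only approximately zero before its knot and only approximately equal to the intended knot term afterward, so I must confirm that the $\zeta$ successive corrections do not compound into an uncontrolled total. I would settle this with the error-accumulation argument of the remark following Definition 5 (inequalities 3.58--3.62): truncating each unit at its zero-error point with individual error at most $\varepsilon$, the triangle inequality bounds the global error on $[0,1]$ by a fixed multiple of $\varepsilon$, of the form $(2\gamma+1)\varepsilon$, which is arbitrarily small provided every subinterval is short enough and every scaling $\rho(\gamma)$ is large enough.
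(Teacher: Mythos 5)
Your proposal is correct and follows essentially the same route as the paper's own proof: realize the first polynomial piece on $[0,x_1]$ with $m+1$ units via Lemma 2 (and Theorems 1--2), then add one unit per interior knot via Lemma 4 (resting on Lemma 3 and Theorem 5), giving the count $(m+1)+(\zeta-1)=\zeta+m$. The paper's proof is terser --- it simply cites Lemma 2, Lemma 4, and Theorem 5 for error control --- whereas you additionally spell out the accumulated-error bound using the $(2\gamma+1)\varepsilon$ argument from the remark after Definition 5, which is a sensible elaboration of what the paper leaves implicit (note only that the output-weight formula you invoke is equation 3.77, not 3.78).
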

\begin{proof}
According to lemma 2, the first polynomial $\mathcal{P}_1(x)$ on $[0, x_1]$ of $s(x)$ can be realized by a set of $\phi_j(x)$'s for $j = -\mu, -\mu+1, \dots, 0$ with $\mu+1 \ge m+1$. The remaining polynomials are implemented by lemma 4, with each requiring one unit. So the total number of the units is at least
\begin{equation}
(m + 1) + (\zeta - 1) = \zeta + m.
\end{equation}
The approximation error is controlled by theorem 5 and lemma 4.
\end{proof}

\begin{thm}[Univariate universal approximation]
Any function $f(x) \in C^{m}([0, 1])$ can be approximated by a two-layer network $\mathfrak{N}$ with generalized sigmoidal units to arbitrary accuracy, in terms of realizing a spline $\hat{f}(x) \in \mathfrak{S}^m_1(\Delta)$ that could approximate $f(x)$ as precisely as possible. If $\hat{f}(x)$ has $\zeta$ polynomial pieces, the hidden layer of $\mathfrak{N}$ needs at least $\zeta + m$ units, that is,
\begin{equation}
\Theta \ge \zeta + m,
\end{equation}
where $\Theta$ is the number of the units.
\end{thm}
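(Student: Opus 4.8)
The plan is to prove this as a direct two-step composition of the results already established for the univariate case: Theorem 4 (spline construction) first approximates $f$ by a spline, Theorem 6 (spline implementation) then realizes that spline by a network, and the two approximation errors are combined by the triangle inequality. The unit count is inherited verbatim from Theorem 6, so no new counting argument is needed.

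Concretely, given an arbitrary tolerance $\epsilon > 0$, I would first invoke Theorem 4. Since $f(x) \in C^m([0,1])$, its $(m-1)$th derivative is smooth, so a piecewise linear approximation to $f^{(m-1)}(x)$ integrates up to a spline $\hat{f}(x) \in \mathfrak{S}^m_1(\Delta)$ with $\lVert f(x) - \hat{f}(x) \rVert_2 < \epsilon/2$, provided every subinterval $I_i$ derived from $\Delta$ is short enough. This fixes the knot set $\Delta$ and hence the number $\zeta$ of polynomial pieces of $\hat{f}$. With $\Delta$ (and thus $\hat{f}$ and $\zeta$) now fixed, I would apply Theorem 6 to this particular spline: it furnishes a two-layer network $\mathfrak{N}$ with generalized sigmoidal units whose output $g(x) = \sum_i \lambda_i \phi_i(x)$ satisfies $\lVert \hat{f}(x) - g(x) \rVert_2 < \epsilon/2$, using a hidden layer of $\Theta \ge \zeta + m$ units. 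The triangle inequality then gives
\[
\lVert f(x) - g(x) \rVert_2 \le \lVert f(x) - \hat{f}(x) \rVert_2 + \lVert \hat{f}(x) - g(x) \rVert_2 < \epsilon,
\]
which is exactly the desired universal approximation, with the unit bound $\Theta \ge \zeta + m$ carried over from Theorem 6.

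The only real subtlety — the one point I would actually verify rather than merely quote — is the consistency of the partition across the two steps: both Theorem 4 and the error control internal to Theorem 6 (through Theorem 5 and Lemma 4) require the subintervals to be sufficiently small, so I would choose $\Delta$ fine enough to meet both requirements simultaneously before counting pieces. Since refining $\Delta$ only increases $\zeta$, the stated bound $\Theta \ge \zeta + m$ remains valid for whichever final partition is used, and the fact that the activation functions are specifically generalized sigmoidal (rather than arbitrary smooth functions) is already accommodated inside Theorem 6. Everything beyond this consistency check is a mechanical concatenation of the two quoted theorems.
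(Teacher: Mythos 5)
Your proposal is correct and follows exactly the paper's own route: the paper proves Theorem 7 in one line by composing Theorem 4 (spline construction) with Theorem 6 (spline implementation), which is precisely your two-step argument. Your added details---the explicit $\epsilon/2$ split via the triangle inequality and the check that a single partition $\Delta$ can be chosen fine enough for both theorems simultaneously---are sound refinements of the same approach, not a different one.
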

\begin{proof}
Theorem 4 gave the method of constructing spline $\hat{f}(x)$ approximating $f(x)$ and theorem 6 realized it via two-layer neural networks.
\end{proof}

\begin{cl}[Generalized tanh-unit case]
In theorem 7, if the hidden-layer units of network $\mathfrak{N}$ are replaced by the generalized tanh units, the conclusion still holds except that equation 3.79 is modified to
\begin{equation}
\Theta \ge \zeta + m + 1.
\end{equation}
\end{cl}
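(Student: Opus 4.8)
The plan is to reduce the generalized tanh case to the already-established generalized sigmoidal case of theorem 7 by an additive shift, and then to pay for that shift with exactly one extra unit. Let $\sigma$ be a generalized tanh function with $\lim_{x\to-\infty}\sigma(x)=\mathcal{C}\neq0$, and set $\tilde{\sigma}(x):=\sigma(x)-\mathcal{C}$. First I would check that $\tilde{\sigma}$ is a generalized sigmoidal function in the sense of definition 3: it is still in $C^k(\mathbb{R})$ (a constant shift changes no derivative of order $\ge 1$), it is still monotonically increasing on $(-\infty,0]$, and now $\lim_{x\to-\infty}\tilde{\sigma}(x)=0$. Hence $\tilde{\sigma}$ is eligible for theorem 7.

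Next, I would apply theorem 7 with the activation $\tilde{\sigma}$: there exist weights $\lambda_1,\dots,\lambda_{\zeta+m}$ and parameters $w_i,b_i$ such that $\lVert f(x)-\sum_{i=1}^{\zeta+m}\lambda_i\tilde{\sigma}(w_ix+b_i)\rVert_2<\varepsilon$, using $\zeta+m$ units. Substituting $\tilde{\sigma}(w_ix+b_i)=\sigma(w_ix+b_i)-\mathcal{C}$ rewrites this combination of sigmoidal units as a combination of genuine tanh units plus a constant,
\begin{equation}
\sum_{i=1}^{\zeta+m}\lambda_i\tilde{\sigma}(w_ix+b_i)=\sum_{i=1}^{\zeta+m}\lambda_i\sigma(w_ix+b_i)-\mathcal{C}\sum_{i=1}^{\zeta+m}\lambda_i .
\end{equation}
So the $\zeta+m$ tanh units already reproduce $f$ up to the additive constant $\mathcal{C}\sum_{i=1}^{\zeta+m}\lambda_i$, which I must absorb using the tanh units available.

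The key observation is that, because $\mathcal{C}\neq0$, a single tanh unit can approximate any constant on $[0,1]$ to arbitrary precision. I would add one more unit $u_{\zeta+m+1}$ whose argument $w_{\zeta+m+1}x+b_{\zeta+m+1}$ is driven strongly negative for all $x\in[0,1]$ (with $w_{\zeta+m+1}$ fixed and $b_{\zeta+m+1}$ taken sufficiently negative); then $\sigma(w_{\zeta+m+1}x+b_{\zeta+m+1})\to\mathcal{C}$ uniformly on $[0,1]$, so this unit realizes the constant $\mathcal{C}$ as precisely as desired while its own $\tilde{\sigma}$-part becomes negligible. Setting $\lambda_{\zeta+m+1}=-\sum_{i=1}^{\zeta+m}\lambda_i$ makes its contribution $\lambda_{\zeta+m+1}\mathcal{C}$ cancel the unwanted constant, so that the full network output $\sum_{i=1}^{\zeta+m+1}\lambda_i\sigma(w_ix+b_i)$ again approximates $f(x)$, now with $\zeta+m+1$ tanh units, which yields equation 3.81.

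The only point requiring care---and the step I expect to be the main obstacle---is the error bookkeeping for the extra unit: one must verify that as $b_{\zeta+m+1}\to-\infty$ the residual $\lambda_{\zeta+m+1}\bigl(\sigma(w_{\zeta+m+1}x+b_{\zeta+m+1})-\mathcal{C}\bigr)=\lambda_{\zeta+m+1}\tilde{\sigma}(w_{\zeta+m+1}x+b_{\zeta+m+1})$ tends to zero in $L^2([0,1])$, so that the total error stays within a constant multiple of $\varepsilon$ by the triangle inequality. This follows from $\tilde{\sigma}(x)\to0$ as $x\to-\infty$ together with the fixed coefficient $\lambda_{\zeta+m+1}$, and is exactly the reduction anticipated in the remark after definition 4.
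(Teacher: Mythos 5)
Your proof is correct, and it rests on the same two pillars as the paper's: the observation that $\tau(x)-\mathcal{C}$ is a generalized sigmoidal function, and the use of a single extra near-constant tanh unit to account for the leftover constant (the paper's first suggested device for producing its constant $\mathfrak{C}$ --- ``add a unit whose activation function over $[0,1]$ is nearly a constant $\beta\ne 0$'' --- is exactly your extra unit with a strongly negative argument). The difference is organizational. The paper re-opens the proof of theorem 7: it keeps the global units as genuine tanh units (legitimate because the local-approximation machinery of theorem 1 and lemma 2 works for any smooth activation) and substitutes $\psi_j(x)=\tau_j(x)-\mathcal{C}$ only for the local units, so its constant is $\mathfrak{C}=-\sum_{j=1}^{\zeta-1}\alpha_j\mathcal{C}$, summed over local units only, and it must then argue that this constant stabilizes at $i=\zeta$ and that its updates for $i<\zeta$ do not disturb polynomials already built on earlier subintervals. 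You instead shift every unit, which lets you invoke theorem 7 wholesale as a black box for $\tilde{\sigma}$ and convert back afterwards; the unwanted constant becomes $\mathcal{C}\sum_{i=1}^{\zeta+m}\lambda_i$ over all units, and the cancellation together with your $L^2$ bookkeeping is a clean one-step triangle-inequality argument (your uniform bound $0\le\tilde{\sigma}(w x+b)\le\tilde{\sigma}(w+b)\to 0$ on $[0,1]$ is sound, by monotonicity on $(-\infty,0]$). Your version is more modular and avoids re-deriving the global/local split inside theorem 7; the paper's version retains slightly more structural information --- which units remain honest tanh units and which are effectively shifted --- which it later uses to interpret the constant-producing units observed experimentally in section 7.4. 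Both routes yield $\Theta\ge\zeta+m+1$.
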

\begin{proof}
Denote the new neural network by $\mathfrak{N}'$. Under theorem 7, the polynomial $s_i(x)$ on the $i$th subinterval for $\hat{f}(x)$ can be expressed as
\begin{equation}
s_i(x)=\sum_{j=-\mu}^{0}\lambda_j\phi_j(x)+\sum_{j=1}^{i-1}\lambda_j\phi_j(x),
\end{equation}
where $\phi_j(x)$'s are the activation function of the hidden-layer units of $\mathfrak{N}$, among which $\mu+1$ ones for $\mu+1\ge m+1$ and $-\mu\le j\le0$ are of global unit and the remaining ones are of local unit. Letting $\tau_j(x)$ be the activation function of a generalized tanh unit corresponding to $\phi_j(x)$, then
\begin{equation}
\psi_j(x) = \tau_j(x)-\mathcal{C}
\end{equation}
is of a generalized sigmoidal unit, where $\mathcal{C}$ is a constant from definition 4. So $s_i(x)$ of equation 3.81 can also be represented as
\begin{equation}
\begin{aligned}
s_i(x)=\sum_{j=-\mu}^{0}\alpha_j\tau_j(x&)+\sum_{j=1}^{i-1}\alpha_j\psi_j(x) = \sum_{j=-\mu}^{0}\alpha_j\tau_j(x)+\sum_{j=1}^{i-1}\alpha_j(\tau_j(x)-\mathcal{C})\\
&=\sum_{j=-\mu}^{0}\alpha_j\tau_j(x) + \sum_{j=1}^{i-1}\alpha_j\tau_j(x) -\sum_{j=1}^{i-1}\alpha_j\mathcal{C}.
\end{aligned}
\end{equation}

Especially, when $i=\zeta$, we have
\begin{equation}
s_{\zeta}(x)= \sum_{j=-\mu}^{0}\alpha_j\tau_j(x) + \sum_{j=1}^{\zeta-1}\alpha_j\tau_j(x) + \mathfrak{C},
\end{equation}
where
\begin{equation}
\mathfrak{C}=-\sum_{j=1}^{\zeta-1}\alpha_j\mathcal{C}
\end{equation}
is a constant when $\alpha_j$'s for $1\le j\le \zeta-1$ are fixed. If $i < \zeta$, the new added $-\alpha_{i+1}\mathcal{C}$ would not influence $s_{i}(x)$ because $\tau_{i+1}(x)-\mathcal{C}\approx 0$; so $\mathfrak{C}$ stop change when $i=\zeta$, and its update when $i < \zeta$ cannot affect the previously established result. The polynomial $s_1(x)$ of $\hat{f}(x)$ on $[0, x_1]$ can be implemented by a set of any type of smooth activation (theorem 1), for which we can replace $\phi_i(x)$ of equation 3.81 by $\tau_j(x)$ of equation 3.83 when $-\mu\le j\le0$.

Therefore, if the constant $\mathfrak{C}$ of equation 3.85 can be produced by generalized tanh units, the universal approximation of $\mathfrak{N}'$ could be ensured. We provide two solutions. One is that add a unit whose activation function over $[0, 1]$ is nearly a constant $\beta \ne 0$ and set its output weight to be $\mathfrak{C}/\beta$. The other is by adding some global units whose linear combination is $\mathfrak{C}$. This completes the proof.
\end{proof}

\begin{rmk-5}
The general principle underlying this corollary applies to the case of higher-dimensional input.
\end{rmk-5}

\begin{rmk-5}
This solution mode is only for global approximation related to splines and is meaningless to local approximation of section 2.
\end{rmk-5}

\begin{rmk-5}
The constant $\mathfrak{C}$ predicted by this corollary is observed in experiments (see later section 7.4).
\end{rmk-5}

\subsection{A Two-Sided Solution}

\begin{figure}[!t]
\captionsetup{justification=centering}
\centering
\includegraphics[width=3.6in, trim = {1.0cm 0.5cm 1.0cm 0.5cm}, clip]{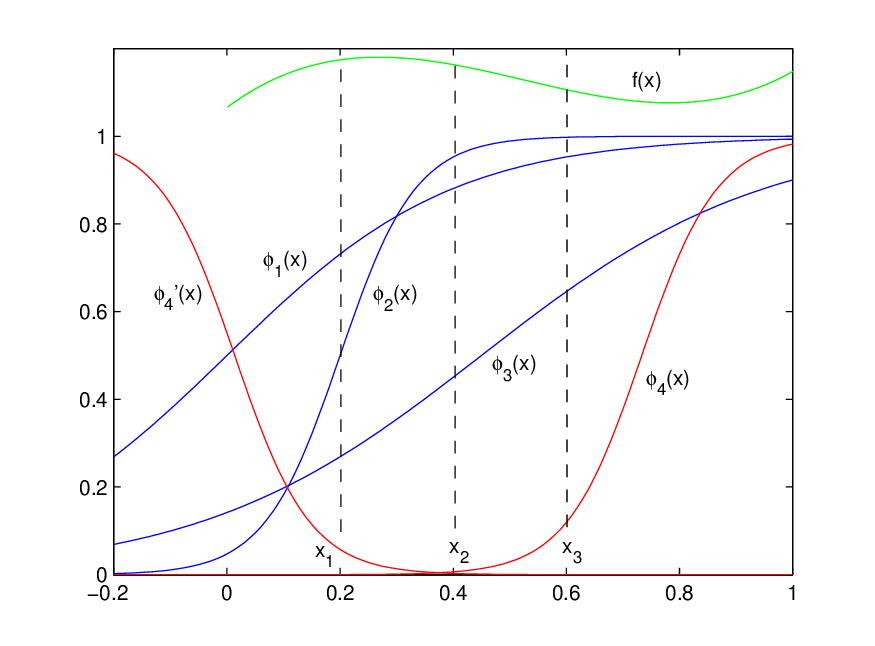}
\caption{A two-sided solution.}
\label{Fig.3}
\end{figure}

\begin{dfn}[Negative unit]
Let $\phi(x) = \sigma(wx + b)$ be the activation function of a generalized sigmoidal unit $\mathcal{U}$. If $w > 0$, $\mathcal{U}$ is called a positive unit and otherwise a negative unit.
\end{dfn}

\begin{lem}
Under theorem 6, to a local unit $u_k$ whose zero-error point is knot $x_k$, suppose that a negative unit $u'_k$ is added whose activation function is $\phi_k'(x) = \sigma(w_k'x + b_k')$ with $w_k' < 0$ and whose zero-error point is also $x_k$. Given any output weight $\lambda'_{k}$ of $u_k'$, theorem 6 still holds by readjusting the remaining parameters of network $\mathfrak{N}$.
\end{lem}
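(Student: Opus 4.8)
The plan is to absorb the fixed contribution of the new negative unit into the target spline and then invoke theorem 6 for the positive units. Because the zero-error point of $u_k'$ is $x_k$ approached from the right (the $w_k'<0$ convention of definition 5), the term $\lambda_k'\phi_k'(x)$ is negligible on $[x_k,1]$ and is an ordinary $C^m$ function on $[0,x_k)$. Its only net effect is therefore a fixed smooth perturbation of the approximation on $[0,x_k]$, and such a perturbation can be cancelled by readjusting the positive units that already act there.

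First I would fix the shape of $u_k'$: for the given, \emph{arbitrary} $\lambda_k'$, the $w<0$ counterpart of lemma 3 (noted in its following remark) lets me steepen $w_k'$ and adjust $b_k'$ so that the zero-error point stays at $x_k$ while the right tail is pushed below any prescribed level, giving
\[
\Big(\int_{x_k}^{1}\big(\lambda_k'\phi_k'(x)\big)^2\,dx\Big)^{1/2}<\varepsilon.
\]
Second, since $\phi_k'(x)\in C^m([0,1])$, theorem 4 supplies a spline $\hat q(x)\in\mathfrak{S}_1^m(\Delta)$ (after refining $\Delta$ if necessary) with $\|\lambda_k'\phi_k'-\hat q\|_2<\varepsilon$. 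I then set $\tilde s(x):=s(x)-\hat q(x)$, which again lies in $\mathfrak{S}_1^m(\Delta)$ since the space is closed under linear combinations.

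The last step is routine. Applying theorem 6 to $\tilde s(x)$ yields positive units whose output $\tilde g(x)$ obeys $\|\tilde g-\tilde s\|_2<\varepsilon$; the augmented network output $\tilde g(x)+\lambda_k'\phi_k'(x)$ then satisfies
\[
\big\|\tilde g+\lambda_k'\phi_k'-s\big\|_2\le\|\tilde g-\tilde s\|_2+\|\lambda_k'\phi_k'-\hat q\|_2<2\varepsilon,
\]
so the conclusion of theorem 6 holds with $\lambda_k'$ held fixed and only the remaining parameters readjusted.

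The main obstacle is the interplay between the arbitrary magnitude of $\lambda_k'$ and the right tail of $\phi_k'$: since $|\lambda_k'|$ may be large, a crude bound on $\lambda_k'\phi_k'$ over $[x_k,1]$ need not be small, and this is precisely what the first step must overcome. Fixing the zero-error point at $x_k$ while steepening $w_k'$ (the $w<0$ form of lemma 3) drives the tail to zero faster than $|\lambda_k'|$ can inflate it, so the product stays controllable. A minor secondary point is bookkeeping on the knots: forming $\hat q$ and re-expressing $\tilde s$ may enlarge $\Delta$, which only raises the positive-unit count in theorem 6 and leaves its conclusion intact.
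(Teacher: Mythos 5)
Your proposal is correct, but it takes a genuinely different route from the paper. The paper's proof keeps the unit set of $\mathfrak{N}$ fixed and compensates in place: with $u_k'$ and $\lambda_k'$ frozen, it re-solves for the output weights of the units acting on $[0,x_k]$ (the Wronskian system of lemma 2 for the initial polynomial, together with the per-knot adjustments of lemma 4) so as to cancel the contribution of $\lambda_k'\phi_k'(x)$ there, and then propagates the resulting disturbance rightward, interval by interval, again by lemma 4; this is exactly the mechanism later encoded in the two-sided spline matrix of section 4.2 (proposition 4). You instead absorb the negative unit into the target: approximate $\lambda_k'\phi_k'$ by a spline $\hat q$, set $\tilde s = s - \hat q$, and re-invoke theorem 6 wholesale on $\tilde s$, closing the argument with a triangle inequality. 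Your route buys fully explicit error bookkeeping and avoids the paper's rather informal cascade of corrections. It gives up two things. First, when $|\lambda_k'|$ is large you must refine $\Delta$, i.e.\ add hidden units, so strictly you prove a slightly weaker statement than ``theorem 6 still holds by readjusting the remaining parameters of $\mathfrak{N}$''; the paper's in-place compensation is meant to show the existing units suffice. Second, your first step modifies $w_k'$ and $b_k'$ of the given negative unit, which a strict reading of the hypothesis (only $\lambda_k'$ is arbitrary, the other units' parameters are what get readjusted) would disallow; fortunately that step is also unnecessary, because the hypothesis that $x_k$ is the zero-error point of $u_k'$ already forces $\bigl(\int_{x_k}^{1}(\lambda_k'\phi_k'(x))^2\,dx\bigr)^{1/2}$ to be negligible relative to the working error $\varepsilon$ (this is the content of the remark following definition 5), so you can delete the steepening argument and keep the rest of your proof unchanged.
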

\begin{proof}
Figure \ref{Fig.3} gives an example of this lemma, in which $\phi'_4(x)$ is the activation function of new added unit $u'_{4}$ whose zero-error point is $x_2=0.4$. After $u'_{4}$ having been introduced, the output weights $\lambda_i$'s for $i = 1, 2, 3$ of $\phi_i(x)$'s should be altered to compensate the impact of $\phi'_4(x)$ on $[0, x_2]$; this is not difficult by lemmas 2 and 4. The parameter resetting could influence the polynomial on $(x_2, x_3]$, but the output weight $\lambda_4$ of $\phi_4(x)$ can be adjusted to compensate the disturbance (lemma 4), such that the original polynomial can still be generated. This example contains the general principle of this lemma.
\end{proof}

\begin{prp}
Any number of negative units whose zero-error points are knots could be added, without influencing the conclusion of theorem 6.
\end{prp}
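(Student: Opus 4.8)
The plan is to argue by induction on the number $N$ of negative units to be added, using lemma 5 as both the base case ($N=1$) and the engine of the inductive step. The starting point is the network $\mathfrak{N}$ supplied by theorem 6, whose construction (lemma 2 for the first piece on $[0,x_1]$, lemma 4 for each later piece) endows every interior knot $x_k$ with a positive local unit whose zero-error point is exactly $x_k$. This is precisely the structural feature lemma 5 relies on: a negative unit with zero-error point at a knot can be paired with the positive local unit sitting at that same knot. Since, by hypothesis, every negative unit to be inserted has its zero-error point at a knot, such a partner always exists, so the first insertion is handled verbatim by lemma 5.

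For the inductive step I would suppose that, after incorporating $N$ negative units, the resulting network $\mathfrak{N}_N$ still realizes the target spline $s(x)\in\mathfrak{S}^m_1(\Delta)$ to arbitrary precision \emph{and} still carries the same invariant, namely global units realizing the first piece together with a positive local unit at each interior knot. Inserting the $(N+1)$-th negative unit $u'$ with zero-error knot $x_k$ then reproduces exactly the hypothesis of lemma 5 relative to $\mathfrak{N}_N$: by the $w<0$ clause of definition 6 the non-negligible contribution of $u'$ is confined to $[0,x_k]$, so I would first cancel that contribution by readjusting the output weights of the positive units active on $[0,x_k]$ (lemmas 2 and 4), and then absorb the residual disturbance on $(x_k,x_{k+1}]$ by retuning the output weight of the paired local unit (lemma 4), letting any forward ripple be mopped up piece-by-piece by the later local units as in lemma 4. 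The result is a network $\mathfrak{N}_{N+1}$ that again realizes $s(x)$; and since the whole procedure merely \emph{rescales output weights} while leaving every positive local unit and its zero-error knot untouched, the invariant is preserved and the induction proceeds.

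The point to guard is not any individual step --- each is a direct reuse of lemma 5 --- but the persistence of the invariant and the control of error over the $N$ insertions. The invariant survives because lemma 5's compensation never deletes or displaces a positive local unit; it only alters the input/output weights of units already present (together with the temporary-knot device internal to lemma 3), so $\mathfrak{N}_N$ is a legitimate input to lemma 5 at every stage. The error stays small because each compensation through lemmas 2 and 4 is only approximate yet can be pushed below any prescribed tolerance; as $N$ is finite, the triangle inequality bounds the total deviation by a fixed multiple of that tolerance, exactly in the spirit of the accumulated-error estimate recorded in the remark after definition 6. Hence theorem 6's conclusion withstands the addition of any finite number of such negative units, which is the claim.
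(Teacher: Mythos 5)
Your proposal is correct and follows essentially the paper's own route: the paper proves this proposition simply as "the repeated application of lemma 5," and your induction on the number of added units, with the invariant-preservation and finite-accumulation-of-error arguments, is just a careful formalization of what that repetition requires. Nothing in your write-up diverges from the paper's intent; it only makes the implicit bookkeeping explicit.
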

\begin{proof}
This is the repeated application of lemma 5.
\end{proof}

\section{General Solution Framework}
The solutions constructed in section 3 may not include all those obtained by the back-propagation algorithm. So we develop a general framework by reducing the spline implementation to a system of linear equations, through which the diversity of solutions can be attributed to the nonsingular property of matrices. We first show two examples derived from section 3 and then give the general form.

\subsection{One-Sided Example}
This section describes theorem 6 through matrix operations. Let $g(x) = \sum_{i = 1}^{\Theta}\lambda_i\phi_i(x)$ be the output of a two-layer network $\mathfrak{N}$ with generalized sigmoidal units, where $\phi_i(x) = \sigma_i(w_ix + b_i) \in C^{(m)}(\mathbb{R})$ is the activation function of the $i$th unit of the hidden layer. The set $\Delta = \{x_{j}: j = 1, 2, \dots, \zeta-1\}$ includes all the knots. Denote by $u_{k}$'s for $k = 1, 2, \dots, \mu$ the global units, with $\mu \ge m+1$, and by $u_{j}$'s for $j = \mu+1, \mu+2, \dots, \mu + \zeta-1$ the local units.

By theorem 4, each activation function $\phi_i(x)$ has an approximation via a spline $s_i(x) \in \mathfrak{S}^m_1(\Delta)$ whose polynomial on subinterval $I_{\nu}$ is denoted by $p_{i\nu}(x)$ for $\nu = 1, 2, \dots, \zeta$. Then
\begin{equation}
p_{ij}(x) = p_{i, j-1}(x) + \beta_{i,j-1}(x - x_{j-1})_+^{m}
\end{equation}
for $j = 2, 3, \dots, \zeta$. Let $\mathcal{S}(x) \in \mathfrak{S}^m_1(\Delta)$ be the spline constructed from $f(x)$, which is to be implemented by network $\mathfrak{N}$, and let $\mathcal{P}_{\nu}(x)$ be the polynomial of $\mathcal{S}(x)$ on $I_{\nu}$. We also have
\begin{equation}
\mathcal{P}_{j}(x) = \mathcal{P}_{j-1}(x) + c_{j-1}(x - x_{j-1})_+^{m}.
\end{equation}
The goal is to realize $\mathcal{P}_{j}(x)$'s by the weighted sum of  $p_{ij}(x)$'s.

According to section 1, the way of producing $\mathcal{P}_1(x)$ on $I_1 = [0, x_1]$ is by
\begin{equation}
\mathcal{P}_1(x) = \sum_{k \in G \ \text{or} \  k\le\mu}\lambda_{k}p_{k1}(x),
\end{equation}
where $G$ is the set of the subscripts of the global units, leading to
\begin{equation}
A_1\boldsymbol{\lambda}_1 = \boldsymbol{b}_1
\end{equation}
as equation 2.9, where $A_1$ is a $(m+1) \times \mu$ matrix, $\boldsymbol{\alpha}_1 = [\lambda_{1}, \lambda_{2}, \dots, \lambda_{\mu}]^T$ and $\boldsymbol{b}_1$ is a $(m+1) \times 1$ vector whose entries are the coefficients of the terms of $\mathcal{P}_1(x)$.

Polynomial $\mathcal{P}_2(x)$ is obtained by
\begin{equation}
\mathcal{P}_{2}(x) = \mathcal{P}_1(x) + \big(\sum_{k \in G}\lambda_{k}\beta_{k1} + \lambda_{\mu+1}\beta_{\mu+1, 1}\big)(x - x_{1})_+^{m}.
\end{equation}
We make
\begin{equation}
\sum_{k \in G}\lambda_{k}\beta_{k1} + \lambda_{\mu+1}\beta_{\mu+1, 1} = c_1
\end{equation}
of equation 4.2 by adjusting the free parameter $\lambda_{\mu+1}$. The remaining $\mathcal{P}_{j}(x)$'s are similar. Through
\begin{equation}
\mathcal{P}_{j}(x) = \mathcal{P}_{j-1}(x) + \big(\sum_{k < \mu + j-1}\lambda_{k}\beta_{k,j-1} + \lambda_{\mu + j-1}\beta_{\mu + j-1, j-1}\big)(x - x_{j-1})_+^{m},
\end{equation}
we obtain
\begin{equation}
\sum_{k < \mu + j-1}\lambda_{k}\beta_{k,j-1} + \lambda_{\mu + j-1}\beta_{\mu + j-1, j-1} = c_{j-1}
\end{equation}
and set $\lambda_{\mu + j-1}$ to generate $c_{j-1}$.

Equations 4.4 and 4.8 can be combined to a integrated system of linear equations
\begin{equation}
\mathcal{A}\boldsymbol{\lambda} = \boldsymbol{b},
\end{equation}
where


\begin{equation}
\mathcal{A} = \begin{pmat}({ccc|})
&A_1  & & & &\boldsymbol{0} \cr\-
\beta_{11} & \beta_{21} & \cdots & \beta_{\mu,1} &\beta_{\mu+1,1} &0 &\cdots &0\cr
\beta_{12} & \beta_{22} & \cdots & \beta_{\mu,2} &\beta_{\mu+1, 2} &\beta_{\mu+2, 2} &\cdots &0\cr
\vdots & \vdots & \ddots & \vdots &\vdots &\vdots &\ddots &\vdots \cr
\beta_{1, \zeta-1} & \beta_{2, \zeta-1} & \cdots & \beta_{\mu, \zeta-1} & \beta_{\mu + 1,\zeta-1} & \beta_{\mu + 2, \zeta-1} &\cdots &\beta_{\mu + \zeta-1,\zeta-1}\cr
\end{pmat}
\end{equation}
is a $(\zeta + m) \times (\mu + \zeta - 1)$ matrix and $\boldsymbol{b}$ is a $(\zeta + m) \times 1$ vector including the entries of $\boldsymbol{b}_1$ in equation 4.4 as well as $c_{j-1}$'s of equation 4.8. Equation 4.10 can be written as
\begin{equation}
\mathcal{A} = \begin{pmat}({})
A_1 & & & B_1 \cr
C_1 & & & D_1 \cr
\end{pmat} =
\begin{pmat}({})
A_1 & & & \boldsymbol{0} \cr
C_1 & & & D_1 \cr
\end{pmat},
\end{equation}
where $C_1$ and $D_1$ equals the corresponding blocks of $\mathcal{A}$, respectively.


\begin{prp}
In equation 4.10 or 4.11, if $\mu = m + 1$ and $\det A_1 \ne 0$, $\mathcal{A}$ is nonsingular.
\end{prp}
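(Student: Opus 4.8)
The plan is to exploit the block structure already exhibited in equation 4.11. First observe that when $\mu = m+1$ the matrix $\mathcal{A}$ is square: it has $\zeta + m$ rows and $\mu + \zeta - 1 = \zeta + m$ columns. Moreover equation 4.11 displays $\mathcal{A}$ as block lower triangular, since the upper-right block $B_1$ vanishes; here $A_1$ is the $(m+1)\times(m+1)$ square block of equation 4.4 and $D_1$ is the $(\zeta-1)\times(\zeta-1)$ block sitting in the columns of the local units. Hence
\[
\det \mathcal{A} = \det A_1 \cdot \det D_1,
\]
and it suffices to show that each factor is nonzero. The hypothesis gives $\det A_1 \ne 0$ outright, so the argument reduces entirely to proving $\det D_1 \ne 0$.

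To evaluate $\det D_1$ I would show that it is itself lower triangular. Its columns correspond to the local units $u_{\mu+1}, \dots, u_{\mu+\zeta-1}$ and its rows to the continuity equations 4.8 at the knots $x_1, \dots, x_{\zeta-1}$; the entry in the row indexed by $x_{j-1}$ and the column of $u_{\mu+\nu}$ is $\beta_{\mu+\nu, j-1}$. By construction the local unit $u_{\mu+\nu}$ has zero-error point $x_\nu$, so its spline approximation vanishes on $[0, x_\nu]$ and its first nonzero jump occurs exactly at $x_\nu$; consequently $\beta_{\mu+\nu, j-1} = 0$ whenever $j-1 < \nu$. This puts every nonzero entry on or below the diagonal and identifies the diagonal entries as the first jumps $\beta_{\mu+\nu, \nu}$, so that $\det D_1 = \prod_{\nu=1}^{\zeta-1}\beta_{\mu+\nu, \nu}$.

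The one point carrying real content is the nonvanishing of each diagonal jump $\beta_{\mu+\nu, \nu}$, and this is the step I expect to be the main obstacle. It is supplied by the construction of lemma 4: the local unit introduced to correct the polynomial on the subinterval immediately past $x_\nu$ is installed precisely so as to contribute a genuine, nonzero jump there, which is exactly what legitimizes the division in equation 3.77 and what the limit in equation 3.45 of lemma 3 guarantees. With every diagonal entry nonzero we obtain $\det D_1 \ne 0$, and together with $\det A_1 \ne 0$ this yields $\det \mathcal{A} = \det A_1 \cdot \det D_1 \ne 0$, so $\mathcal{A}$ is nonsingular. The only bookkeeping requiring care is aligning the index of each local unit with the knot at which its first jump appears, so that both the triangular pattern and the nonvanishing of the diagonal become transparent.
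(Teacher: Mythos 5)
Your proof is correct and follows exactly the paper's route: the paper's one-line argument is precisely the factorization $\det\mathcal{A} = \det A_1 \det D_1 = \det A_1 \prod_{\nu=1}^{\zeta-1}\beta_{\mu+\nu,\nu}$ arising from the block lower-triangular structure (vanishing $B_1$) and the lower-triangular form of $D_1$. You have merely made explicit the details the paper leaves implicit — the square dimensions, why $D_1$ is triangular, and why the diagonal jumps $\beta_{\mu+\nu,\nu}$ are nonzero (the paper defines them as the nonzero entries of equation 4.10, guaranteed by lemma 3).
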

\begin{proof}
Since $\det{A} = \det{A_1}\det{D_1} = \det{A_1}\prod_{\nu=1}^{\zeta-1}\beta_{\mu+\nu, \nu}$, the conclusion follows.
\end{proof}

\begin{rmk}
This proposition describes the mechanism of theorem 6 by matrix operations.
\end{rmk}

\subsection{A Two-Sided Example}
This section corresponds to the solution of section 3.6. Under the notations of section 3.6, to a unit $u_{k}$ , add a negative unit $u_{k}'$  whose zero-error point is knot $x_k$. Let $\phi_{k}(x) = \sigma(w'_{k}x + b_{k}')$ be the activation function of $u_{k}'$, and $p_{k\tau}'(x)$ for $\tau = 1, 2, \dots, \zeta$ be the $\tau$th polynomial of a spline approximating $\phi_{k}'(x)$ on subinterval $I_{\tau}$, satisfying
\begin{equation}
p_{k j}'(x) = p_{k,j-1}'(x) + \beta_{k,j-1}'(x - x_{j-1})_+^{m}.
\end{equation}
for $2 \le j \le k$. After introducing $u_k'$, matrix $\mathcal{A}$ of equation 4.11 is changed into
\begin{equation}
\mathcal{A}' = \begin{pmat}({})
A_1' & & & B_1' \cr
C_1' & & & D_1' \cr
\end{pmat},
\end{equation}
whose blocks correspond to those of equation 4.11, respectively, among which $A_1'=A_1$ and $C_1'=C_1$. Blocks $B_1'$ and $D'_1$ is obtained as follows. Equation 4.3 becomes
\begin{equation}
\mathcal{P}_1(x) = \sum_{\nu \in G}\lambda_{\nu}p_{\nu1}(x) + \lambda_{k}'p_{k1}'(x)
\end{equation}
and correspondingly, the submatrix $B_1$ of equation 4.11 should be modified to
\begin{equation}
B_1' = \begin{pmat}({cc||})
0 &\cdots &0 & \gamma_{k1} &0 &\cdots &0 \cr\
0 &\cdots &0 &\gamma_{k2} &0 &\cdots &0 \cr\
\vdots &\ddots &\vdots &\vdots &\vdots &\ddots &\vdots \cr\
0 &\cdots &0 &\gamma_{k,m+1} &0 &\cdots &0 \cr
\end{pmat},
\end{equation}
by adding a column between the original $k-1$th and $k$th columns, whose entries $\gamma_{ki}$'s for $1 \le i \le m+1$ are derived from the coefficients of the polynomial $p_{k1}'(x)$ of equation 4.14 weighted by $\lambda_{k}'$. The submatrix
\begin{equation}
\begin{aligned}
&\quad \quad\quad\quad\quad\quad\quad\quad\quad\quad\quad\quad\quad\quad\quad D_1' = \\
&\begin{pmat}({ccc||})
\beta_{\mu+1,1} &0 &0 &\cdots &\beta_{\mu+k,1}'&0 &0 &\cdots &0 \cr
\beta_{\mu+1,2} & \beta_{\mu+2,2} &0 &\cdots &\beta_{\mu+k,2}'&0 &0 &\cdots &0\cr
\vdots &\vdots &\vdots &\ddots &\vdots &\vdots &\ddots &\vdots \cr
\beta_{\mu+1,k} & \beta_{\mu+2,k} & \cdots &\beta_{\mu+k-1,k} &\beta'_{\mu+k,k} &0 &0 &\cdots &0\cr
\beta_{\mu+1,k+1} & \beta_{\mu+2,k+1} & \cdots &\beta_{\mu+k-1,k+1} &0 &\beta_{\mu+k,k+1} &0 &\cdots &0\cr
\vdots &\vdots &\vdots &\ddots &\vdots &\vdots &\ddots &\vdots \cr
\beta_{\mu + 1,\zeta-1} & \beta_{\mu + 2,\zeta-1} & \cdots &\beta_{\mu+k-1,\zeta-1} &0 &\beta_{\mu+k,\zeta-1} &\beta_{\mu+k+1,\zeta-1} &\cdots &\beta_{\psi,\zeta-1}\cr
\end{pmat},
\end{aligned}
\end{equation}
where $\psi = \mu+ \zeta-1$, which  is different from $D_1$ of equation 4.10 in adding a column
\begin{equation}
[\beta_{\mu+k,1}', \beta_{\mu+k,2}', \dots, \beta'_{\mu+k,k}, \dots,0,0]^T.
\end{equation}

\begin{prp}
In equation 4.13, suppose that $\mu = m+1$. Then $\mathcal{A}'$ is of size $(\zeta+m)\times(\zeta+m+1)$ and its block $A_1'$ is a square matrix. If block $A_1'$ is nonsingular, the rank of $\mathcal{A}'$ is $\zeta+m$.
\end{prp}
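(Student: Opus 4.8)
The plan is to show that $\mathcal{A}'$ attains full row rank. First I would pin down the dimensions. With $\mu=m+1$ the block $A_1'=A_1$ is $(m+1)\times(m+1)$, while the lower strip $C_1'\,|\,D_1'$ has $\zeta-1$ rows, and the columns split into $m+1$ global columns, $\zeta-1$ local columns and the one inserted negative-unit column; hence $\mathcal{A}'$ is $((m+1)+(\zeta-1))\times((m+1)+\zeta)=(\zeta+m)\times(\zeta+m+1)$, as asserted, and $A_1'$ is indeed square. Because a matrix cannot have rank exceeding its row count, it suffices to produce a nonsingular order-$(\zeta+m)$ submatrix, i.e. to delete one column and show that the determinant survives.

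Since $A_1'$ is nonsingular I would eliminate with it as the pivot block, giving $\operatorname{rank}\mathcal{A}'=(m+1)+\operatorname{rank}S$ where $S=D_1'-C_1'(A_1')^{-1}B_1'$ has size $(\zeta-1)\times\zeta$; thus the target $\operatorname{rank}\mathcal{A}'=\zeta+m$ is equivalent to $S$ having full row rank $\zeta-1$. By equation 4.16 the block $B_1'$ is supported on the single negative column (its entries are the weighted coefficients $\gamma_{ki}$), so $S$ agrees with $D_1'$ on every local column and only the negative column is altered.

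The decisive step is the pivot pattern. On the local columns $D_1'$ is a staircase with pivots on the knot rows $1,\dots,k-1$ and $k+1,\dots,\zeta-1$, skipping row $k$ (the positive unit that would sit there has been displaced, as visible in equation 4.17); discarding the last local column $\psi=\mu+\zeta-1$ retains $\zeta-2$ of these as a triangular core with nonzero leading jump coefficients (equation 4.1). The negative column, whose lowest nonzero entry $\beta'_{\mu+k,k}$ lies exactly in the skipped row $k$ (its turn-off knot $x_k$), supplies the missing pivot, so the remaining $\zeta-1$ columns of $S$ carry a pivot in every row. Splitting the determinant at row $k$, it factors as the product of the diagonal jump coefficients of rows $k+1,\dots,\zeta-1$ times the determinant of a $k\times k$ corner that couples the negative unit with the first $k-1$ local units.

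I expect that $k\times k$ corner to be the only real obstacle. Unlike proposition 3, where removing the lone extra column restores the triangular $D_1$ and the determinant collapses to a bare product of diagonal $\beta$'s, here the negative column carries nonzero entries above its pivot (rows $1,\dots,k-1$) together with the $\gamma$-fill-in folded in through $S$, so the minor does not automatically reduce to a product. What remains to be verified is that this corner is nonsingular, equivalently that the negative unit's jump at $x_k$ is linearly independent of the positive units acting on $[0,x_k]$; this is precisely where $w_k'<0$ (a jump shape the positive units cannot reproduce) and the parameter freedom of lemma 5 are used to keep the coupling minor away from zero. Once it is nonzero, $\operatorname{rank}\mathcal{A}'=\zeta+m$ follows.
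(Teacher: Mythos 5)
Your dimension count is right, and the Schur-complement reduction $\operatorname{rank}\mathcal{A}'=\operatorname{rank}A_1'+\operatorname{rank}S$ with $S=D_1'-C_1'(A_1')^{-1}B_1'$ is a legitimate first step, but the proof has a genuine gap exactly where you flag it: the nonsingularity of the $k\times k$ corner coupling the negative-unit column with the first $k-1$ local columns is never established, and it does not follow from the stated hypotheses. The proposition assumes only $\mu=m+1$ and that $A_1'$ is nonsingular. For fixed entries $\beta'_{\mu+k,j}$, $\gamma_{ki}$ and the fill-in $-C_1'(A_1')^{-1}B_1'$, the cofactor expansion of that corner minor along the negative column is an alternating sum that can perfectly well vanish; nothing in the hypotheses forbids it. Nor can you invoke ``the parameter freedom of lemma 5'' here: in this proposition $\mathcal{A}'$ is a given matrix, not an object whose entries you are free to perturb. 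In effect you are trying to prove a stronger statement than the proposition --- that one \emph{specific} maximal square submatrix (the one that keeps the negative column and discards the last local column) is nonsingular --- and that stronger statement can be false for particular parameter values, while the rank claim remains true.

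The conclusion only requires exhibiting \emph{some} nonsingular $(\zeta+m)\times(\zeta+m)$ submatrix, and the hypotheses are tailored to one particular choice: delete the \emph{added} negative-unit column itself, i.e.\ the single column that runs through both $B_1'$ and $D_1'$. Since $A_1'=A_1$, $C_1'=C_1$, and $B_1'$, $D_1'$ differ from $B_1$, $D_1$ only by that inserted column, this deletion recovers exactly $\mathcal{A}$ of equation 4.11, and proposition 3 (with $\mu=m+1$ and $\det A_1\ne 0$) gives $\det\mathcal{A}=\det A_1\prod_{\nu=1}^{\zeta-1}\beta_{\mu+\nu,\nu}\ne 0$, whence the rank is $\zeta+m$. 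That is the paper's one-line proof. Ironically, your own remark that removing ``the lone extra column restores the triangular $D_1$ and the determinant collapses to a bare product of diagonal $\beta$'s'' is precisely the argument you should have run --- you simply deleted the wrong column.
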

\begin{proof}
The reason is that it has a nonsingular square submatrix $\mathcal{A}$ of equation 4.11 obtained by deleting the added column in $B_1'$ and $D_1'$.
\end{proof}

\begin{rmk}
This proposition indicates the existence of the solution of lemma 5 or proposition 2 as well.
\end{rmk}

\subsection{General Solution Form}
\begin{dfn}
Given a generalized sigmoidal unit $\mathscr{U}$ whose activation function is $\phi(x) = \sigma(wx+b)$, its indicator function $\mathbb{I}: N \to \{0, 1\}$ with respect to $\varepsilon$, where $N = \{1, 2, \dots, \zeta\}$, is defined as
\begin{equation}
\mathbb{I}(i) =
\begin{cases}
\ \ 0, \ \text{if} \ \{\int_{x_{i-1}}^{x_{i}}\phi(x)^2dx\}^{1/2} < \varepsilon \\
\ \ 1, \ \ \ \ \ \ \ \ \ \ \ \text{otherwise}
\end{cases}
\end{equation}
for $i = 1, 2, \dots, \zeta$, where $x_0 = 0$, $x_{\zeta} = 1$ and $\varepsilon$ is a sufficiently small positive real number as required.
\end{dfn}

\begin{thm}
Notations being from section 4.1, to each local unit $u_j$'s for $j = \mu+1, \mu+2, \dots, \psi$, let $\mathbb{I}_{j}$ be its indicator function. Write
\begin{equation}
\mathfrak{A}\boldsymbol{\lambda} = \boldsymbol{b},
\end{equation}
where
\begin{equation}
\mathfrak{A} = \begin{pmat}({})
A & & & B \cr
C & & & D \cr
\end{pmat},
\end{equation}
in which $A = A_1$, $C = C_1$, with $A_1$ and $C_1$ from equation 4.11,
\begin{equation}
B = \begin{pmat}({})
\mathbb{I}_{\mu+1}(1)\gamma_{\mu+1, 1} &\mathbb{I}_{\mu+2}(1)\gamma_{\mu+2, 1} &\cdots  &\mathbb{I}_{\psi}(1)\gamma_{\psi, 1} \cr
\mathbb{I}_{\mu+1}(1)\gamma_{\mu+1, 2} &\mathbb{I}_{\mu+2}(1)\gamma_{\mu+2, 2} &\cdots  &\mathbb{I}_{\psi}(1)\gamma_{\psi, 2} \cr
\vdots &\vdots  &\ddots &\vdots \cr\
\mathbb{I}_{\mu+1}(1)\gamma_{\mu+1, m+1} &\mathbb{I}_{\mu+2}(1)\gamma_{\mu+2, m+1} &\cdots &\mathbb{I}_{\psi}(1)\gamma_{\psi, m+1}  \cr
\end{pmat},
\end{equation}
where $\gamma_{j, \nu}$ for $j=\mu+1,\mu+2,\dots,\psi$ and $\nu=1,2,\dots,m+1$ is the weighted coefficient of the corresponding term of the polynomial on $[0, x_1]$ approximating $\phi_j(x)$ as in equation 4.15, and
\begin{equation}
D = \begin{pmat}({})
\mathbb{I}_{\mu+1}(2)\beta_{\mu+1,1} &\mathbb{I}_{\mu+2}(2)\beta_{\mu+2, 1} & \cdots &\mathbb{I}_{\psi}(2)\beta_{\psi,1} \cr
\mathbb{I}_{\mu+1}(3)\beta_{\mu+1,2} &\mathbb{I}_{\mu+2}(3)\beta_{\mu+2, 2} & \cdots &\mathbb{I}_{\psi}(3)\beta_{\psi,2} \cr
\vdots & \vdots & \ddots & \vdots  \cr\
\mathbb{I}_{\mu+1}(\zeta)\beta_{\mu+1,\zeta-1} &\mathbb{I}_{\mu+2}(\zeta)\beta_{\mu+2,\zeta-1} & \cdots &\mathbb{I}_{\psi}(\zeta)\beta_{\psi,\zeta-1} \cr
\end{pmat},
\end{equation}
where $\beta_{ji}$ for $i = 1, 2, \dots, \zeta-1$ is defined as the nonzero entry of equation 4.10. Then any spline $s(x) \in \mathfrak{S}_1^{m}( \Delta)$ can be realized by a two-layer network $\mathfrak{N}$ in terms of equation 4.19, provided that the rank of $\mathfrak{A}$ is $\zeta + m$.
\end{thm}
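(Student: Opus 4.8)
The plan is to reduce the realization of an arbitrary spline $s(x) \in \mathfrak{S}_1^m(\Delta)$ to the solvability of the linear system $\mathfrak{A}\boldsymbol{\lambda} = \boldsymbol{b}$, exactly as proposition 3 treated the one-sided case and proposition 4 the two-sided case, but now permitting each unit to contribute to an arbitrary collection of subintervals as recorded by its indicator function of definition 7. First I would note that $s(x)$ is completely determined by the $m+1$ coefficients of its polynomial $\mathcal{P}_1(x)$ on the first subinterval $I_1 = [0, x_1]$ together with the $\zeta-1$ jump coefficients $c_1, c_2, \dots, c_{\zeta-1}$ in the recurrence $\mathcal{P}_j(x) = \mathcal{P}_{j-1}(x) + c_{j-1}(x - x_{j-1})_+^m$ of equation 4.2; these $\zeta + m$ target values are precisely the entries collected in $\boldsymbol{b}$.

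Next I would express the spline parameters produced by the network output $g(x) = \sum_i \lambda_i \phi_i(x)$ as linear functions of $\boldsymbol{\lambda}$. By theorem 4 each smooth activation $\phi_i(x) \in C^m([0,1])$ admits a spline approximation $s_i(x) \in \mathfrak{S}_1^m(\Delta)$ whose first-subinterval coefficients are the $\gamma_{i,\nu}$ and whose jump coefficients are the $\beta_{i,j-1}$ of equation 4.1. The realized coefficient vector of $\mathcal{P}_1(x)$ is then a weighted sum of the $\gamma_{i,\nu}$, yielding the top block $[\,A \mid B\,]$, while the realized jump coefficient at knot $x_{j-1}$ is a weighted sum of the $\beta_{i,j-1}$, yielding the bottom block $[\,C \mid D\,]$. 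The crucial point is that a local or negative unit $u_j$ whose contribution on subinterval $I_i$ is negligible in the sense of definition 7 enters these sums with a vanishing factor; multiplying the relevant entry by the indicator $\mathbb{I}_j(i)$ therefore reproduces exactly the sparsity pattern of $B$ in equation 4.21 and of $D$ in equation 4.22, and recovers the block forms of equations 4.11 and 4.13 as particular instances.

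With this identification, realizing $s(x)$ becomes equivalent to solving $\mathfrak{A}\boldsymbol{\lambda} = \boldsymbol{b}$. Since $\mathfrak{A}$ has exactly $\zeta + m$ rows, the hypothesis that its rank equals $\zeta + m$ means it has full row rank, so the system is consistent for every right-hand side $\boldsymbol{b}$; hence a weight vector $\boldsymbol{\lambda}$ always exists and any target spline is realizable. Finally I would invoke theorem 5 and lemma 4 to guarantee that each unit can be configured with its prescribed zero part, and the error-accumulation estimate in the remark following definition 5 to confirm that the finitely many truncation errors sum to an arbitrarily small total, so that $\mathfrak{N}$ realizes $s(x)$ to arbitrary precision.

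The hard part will be verifying that the indicator-weighted block structure genuinely encodes every admissible configuration rather than only the two previously constructed families: I must check that, for an arbitrary assignment of active subintervals to units, the coefficient-matching conditions are faithfully captured by $\mathfrak{A}$, and that setting negligible contributions to zero through the $\mathbb{I}_j$ introduces no uncontrolled cross-subinterval coupling. Once this bookkeeping is secured, the remainder is the routine observation that full row rank yields solvability.
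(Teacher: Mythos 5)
Your proposal is correct and follows essentially the same route as the paper, whose own proof is just the one-line remark that the result is ``the generalization of sections 4.1 and 4.2'': you reduce realization of $s(x)$ to the linear system $\mathfrak{A}\boldsymbol{\lambda} = \boldsymbol{b}$ by matching the $m+1$ coefficients of $\mathcal{P}_1(x)$ and the $\zeta-1$ knot jumps exactly as propositions 3 and 4 do, with the indicator functions encoding each unit's active subintervals, and then observe that full row rank gives solvability for every target $\boldsymbol{b}$, with error control delegated to theorem 5, lemma 4, and the accumulated-error remark. Your write-up is in fact more explicit than the paper's, since it spells out the coefficient-matching and the rank argument that the paper leaves implicit.
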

\begin{proof}
The proof is the generalization of sections 4.1 and 4.2.
\end{proof}

\begin{dfn}[Spline matrix]
Matrix $\mathfrak{A}$ of equation 4.20 is called the \textsl{spline matrix} of a two-layer neural network $\mathfrak{N}$ with generalized sigmoidal units.
\end{dfn}

\section{Multivariate Local Approximation}
In lemma 2 of section 1, Taylor series expansions of a univariate function were realized by a two-layer neural network with smooth activation functions, and this section generalizes that idea to the multivariate case. The obtained result is the basis of multivariate-spline construction of section 6.

\subsection{Theoretical Framework}
We require the number of the coefficients of a multivariate polynomial. To an $n$-variate homogeneous polynomial of degree $m$, the number of its coefficients  is $\mathscr{N}(n, m) = \binom{n+m-1}{m}$, whose calculation can be reduced to the combination problem of multisets (\citet*{Brualdi2004}'s theorem 3.5.1). The number of the coefficients of an $n$-variate polynomial $p(\boldsymbol{x})$ of degree $m$ is
\begin{equation}
\mathcal{N}(n, m) = \binom{n+m}{m},
\end{equation}
because $p(\boldsymbol{x})$ comprises $m+1$ sets of homogeneous polynomials whose degrees are $m, m-1, \dots, 0$, respectively, such that
\begin{equation*}
\mathcal{N}(n, m) = \binom{n+m-1}{m} + \binom{n+m-2}{m-1} + \dots + \binom{n}{1} + \binom{n-1}{0} = \binom{n+m}{m},
\end{equation*}
by the identity $\binom{n-1}{0} = \binom{n}{0}$ and by repeatedly applying Pascal's formula from the right side of the middle term.

\begin{dfn}[Multivariate smooth function]
A function $f: [0, 1]^n \to \mathbb{R}$ is said to be smooth with order $m$, if to each $\boldsymbol{x} \in [0, 1]^n$,
\begin{equation}
\frac{\partial^{k} f(\boldsymbol{x})}{\partial x_1^{\alpha_1}\partial x_2^{\alpha_2}\dots\partial x_n^{\alpha_n}}
\end{equation}
is continuous, where $k = 0, 1, \dots, m$, $\boldsymbol{x} = [x_1, x_2, \dots, x_n]^T$ and
\begin{equation}
\alpha_1 + \alpha_2 + \dots + \alpha_n = k
\end{equation}
with each $\alpha_i$ for $i = 1, 2, \dots, n$ being a nonnegative integer. The set of this type of function is denoted by $C^m([0,1]^n)$.
\end{dfn}

\begin{prp}
To $f(\boldsymbol{x}) \in C^{m}([0, 1]^n)$, the number of the elements of
\begin{equation}
D_k = \{\frac{\partial^{k} f(\boldsymbol{x})}{\partial x_1^{\alpha_1}\partial x_2^{\alpha_2}\dots\partial x_n^{\alpha_n}}: \sum_{i=1}^{n}\alpha_i = k, \alpha_i \ge 0, \alpha_i \in \mathbb{Z}\}
\end{equation}
is $|D_k| = \binom{n+k-1}{k}$; and the total number of the terms of the partial derivatives of $f(\boldsymbol{x})$ up to order $m$ is $\sum_{k = 0}^{m} |D_k| = \binom{n+m}{m}$.
\end{prp}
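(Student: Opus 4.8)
The plan is to reduce the statement to a pure counting of multi-indices and then to reuse the combinatorial facts already established immediately before the proposition. First I would observe that, because $f \in C^m([0,1]^n)$, each mixed partial derivative of order $k \le m$ is independent of the order in which the differentiations are performed (equality of mixed partials). Consequently the operator $\partial^k/\partial x_1^{\alpha_1}\cdots\partial x_n^{\alpha_n}$, and hence the corresponding element of $D_k$, is completely determined by the exponent tuple $\boldsymbol{\alpha} = (\alpha_1, \dots, \alpha_n)$ and not by any ordering of the variables. This is the one place where smoothness is genuinely used: without it one would be tempted to count ordered differentiation sequences $(i_1,\dots,i_k)$, of which there are $n^k$, whereas the symmetry collapses these to the set of multi-indices with $\sum_{i=1}^n \alpha_i = k$.

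Second, I would exhibit the bijection between $D_k$ and the multi-indices $\boldsymbol{\alpha}$ with $\alpha_i \ge 0$ and $\sum_{i=1}^n \alpha_i = k$. Counting such tuples is exactly the problem of selecting a multiset of size $k$ from the $n$ variables $\{x_1, \dots, x_n\}$, equivalently the nonnegative-integer-solution count, which is the quantity $\mathscr{N}(n, k) = \binom{n+k-1}{k}$ recorded just above the proposition via Brualdi's theorem 3.5.1. Equivalently, $D_k$ is in bijection with the monomials of degree $k$ in $n$ variables, i.e.\ the coefficients of a homogeneous polynomial of degree $k$; this makes $|D_k| = \binom{n+k-1}{k}$ immediate and also explains why the same number appears as a coefficient count.

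Third, for the cumulative total I would simply sum the first part over $k$,
\begin{equation}
\sum_{k=0}^{m} |D_k| = \sum_{k=0}^{m}\binom{n+k-1}{k} = \binom{n-1}{0} + \binom{n}{1} + \dots + \binom{n+m-1}{m},
\end{equation}
and then observe that this is verbatim the telescoping sum already evaluated in the derivation of $\mathcal{N}(n,m)$ preceding the proposition: repeated application of Pascal's formula from the leading term, together with $\binom{n-1}{0} = \binom{n}{0}$, collapses it to $\binom{n+m}{m}$.

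There is no serious computational obstacle here; the whole statement is combinatorial, and both ingredients---the multiset count and the Pascal telescoping---are already present in the surrounding text. The only point requiring care, and the one I would state explicitly, is the well-definedness step: I must justify that an element of $D_k$ depends only on the multi-index $\boldsymbol{\alpha}$ and not on the differentiation order, so that the count is over unordered exponent tuples rather than over ordered sequences. This is guaranteed precisely by the $C^m$ hypothesis.
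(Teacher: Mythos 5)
Your proposal is correct and follows essentially the same route as the paper: the paper's proof also rests on the one-to-one correspondence between the operators $\partial x_1^{\alpha_1}\partial x_2^{\alpha_2}\dots\partial x_n^{\alpha_n}$ and the monomials $x_1^{\alpha_1}x_2^{\alpha_2}\dots x_n^{\alpha_n}$, and then invokes the count $\binom{n+k-1}{k}$ and the Pascal-formula telescoping already established for equation 5.1. Your explicit remark on equality of mixed partials (justifying that the count is over unordered multi-indices) is a small point of extra care that the paper leaves implicit, but it does not change the argument.
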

\begin{proof}
Note that in equation 5.4, the term $\partial x_1^{\alpha_1}\partial x_2^{\alpha_2}\dots\partial x_n^{\alpha_n}$ has an one-to-one correspondence with $x_1^{\alpha_1}x_2^{\alpha_2}\dots x_n^{\alpha_n}$. So the conclusion is obvious by equation 5.1.
\end{proof}

\begin{dfn}[Order of polynomial terms]
Write
\begin{equation}
p(\boldsymbol{x}) = \sum_{0\le s(\boldsymbol{\alpha}) \le m} c_{\boldsymbol{\alpha}}x_1^{\alpha_1}x_2^{\alpha_2}\dots x_n^{\alpha_n},
\end{equation}
where $s(\boldsymbol{\alpha}) = \alpha_1 + \alpha_2 + \dots + \alpha_n$, an $n$-variate polynomial of degree $m$. We say
\begin{equation}
x_1^{\alpha_1}x_2^{\alpha_2}\dots x_n^{\alpha_n} \prec x_1^{\alpha_1'}x_2^{\alpha_2'}\dots x_n^{\alpha_n'},
\end{equation}
provided that: (a) $\sum_{i=1}^n\alpha_i < \sum_{i=1}^n\alpha_i'$; (b) $\sum_{i=1}^n\alpha_i = \sum_{i=1}^n\alpha_i'$ but $\alpha_{1} > \alpha'_{1}$; (c) $\sum_{i=1}^n\alpha_i = \sum_{i=1}^n\alpha_i'$, $\alpha_{j} = \alpha_{j}'$ for $j = 1, 2, \dots, \beta$ but $\alpha_{j+1} > \alpha'_{j+1}$, with $1 \le \beta \le n-1$, which is read as ``$x_1^{\alpha_1}x_2^{\alpha_2}\dots x_n^{\alpha_n}$ is \textsl{smaller} than $x_1^{\alpha_1'}x_2^{\alpha_2'}\dots x_n^{\alpha_n'}$, or the latter is \textsl{bigger} than the former''.
\end{dfn}

\begin{rmk}
The reason for introducing this definition will be explained in the proof of lemma 6.
\end{rmk}

It's easy to verify:
\begin{prp}
The relation ``$\prec$'' of definition 10 is a strict partial order.
\end{prp}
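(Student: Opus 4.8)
The plan is to recognize that the three clauses (a)--(c) of Definition 10 together define a single coherent comparison rule on the exponent vectors $\boldsymbol{\alpha} = (\alpha_1, \dots, \alpha_n)$, and then to verify the two defining properties of a strict partial order, namely irreflexivity and transitivity; asymmetry then comes for free, since if $\boldsymbol{\alpha} \prec \boldsymbol{\alpha}'$ and $\boldsymbol{\alpha}' \prec \boldsymbol{\alpha}$ held simultaneously, transitivity would force $\boldsymbol{\alpha} \prec \boldsymbol{\alpha}$, contradicting irreflexivity. Writing $|\boldsymbol{\alpha}| = \sum_i \alpha_i$, clauses (b) and (c) say precisely this: when $|\boldsymbol{\alpha}| = |\boldsymbol{\alpha}'|$, take the smallest index $j$ at which the two vectors differ and declare $\boldsymbol{\alpha} \prec \boldsymbol{\alpha}'$ exactly when $\alpha_j > \alpha_j'$, while clause (a) covers the case of unequal total degrees. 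Thus $\prec$ is ``compare total degree first, then break ties at the first differing coordinate, with the larger leading exponent counting as smaller.''

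The cleanest way to finish is to embed the comparison into an ordinary lexicographic order. Define $\Phi(\boldsymbol{\alpha}) = (|\boldsymbol{\alpha}|, -\alpha_1, -\alpha_2, \dots, -\alpha_n) \in \mathbb{Z}^{n+1}$ and let $<_{\mathrm{lex}}$ denote the standard lexicographic order on $\mathbb{Z}^{n+1}$. Checking against clauses (a)--(c) shows $\boldsymbol{\alpha} \prec \boldsymbol{\alpha}'$ if and only if $\Phi(\boldsymbol{\alpha}) <_{\mathrm{lex}} \Phi(\boldsymbol{\alpha}')$: clause (a) makes the first coordinate of $\Phi$ decide, while (b)--(c) make the sign-flipped coordinate at the first point of disagreement decide. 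Since $\Phi$ is injective and $<_{\mathrm{lex}}$ is a strict total order, the pullback $\prec$ inherits irreflexivity and transitivity automatically, which is exactly what is required. As a byproduct this shows $\prec$ is in fact a strict total order on distinct monomials, slightly stronger than the stated claim.

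If one prefers to avoid the embedding, irreflexivity is immediate, since $\boldsymbol{\alpha} \prec \boldsymbol{\alpha}$ would require either $|\boldsymbol{\alpha}| < |\boldsymbol{\alpha}|$ or a strict inequality $\alpha_j > \alpha_j$ at some coordinate, both impossible. Transitivity is the only part that needs care and is the main obstacle, though a mild one: given $\boldsymbol{\alpha} \prec \boldsymbol{\alpha}'$ and $\boldsymbol{\alpha}' \prec \boldsymbol{\alpha}''$, the total degrees satisfy $|\boldsymbol{\alpha}| \le |\boldsymbol{\alpha}'| \le |\boldsymbol{\alpha}''|$, and if either inequality is strict then clause (a) yields $\boldsymbol{\alpha} \prec \boldsymbol{\alpha}''$ at once. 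The remaining equal-degree case is the standard lexicographic tie-breaking argument: letting $i$ and $j$ be the first disagreement indices of the two given relations, one examines $\min(i,j)$ and checks in each of the subcases $i<j$, $i=j$, $i>j$ that the first coordinate $\ell$ at which $\boldsymbol{\alpha}$ and $\boldsymbol{\alpha}''$ differ carries the strict inequality $\alpha_\ell > \alpha_\ell''$, giving $\boldsymbol{\alpha} \prec \boldsymbol{\alpha}''$ through (b) or (c). The only delicate point is the bookkeeping of which coordinates are already known to agree before the first disagreement, and this is precisely what the embedding of the previous paragraph packages away cleanly.
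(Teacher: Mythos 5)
Your proof is correct. Note that the paper itself offers no proof at all for this proposition --- it is prefaced only by ``It's easy to verify:'' --- so any complete argument necessarily goes beyond what the paper records; yours supplies the missing verification cleanly. Your reading of clauses (a)--(c) as ``compare total degree first, then break ties at the first differing coordinate, with the larger exponent counting as smaller'' is the intended one, and the embedding $\Phi(\boldsymbol{\alpha}) = (|\boldsymbol{\alpha}|, -\alpha_1, \dots, -\alpha_n)$ into the lexicographic order on $\mathbb{Z}^{n+1}$ is an efficient way to package both irreflexivity and transitivity at once (one small nit: injectivity of $\Phi$ is not needed for those two properties, only for totality on distinct monomials). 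Your byproduct observation --- that $\prec$ is in fact a strict \emph{total} order on distinct monomials --- is worth keeping, since the paper implicitly relies on it: Definition 11 and Lemma 6 arrange the terms of a polynomial ``in the strict partial order,'' which only makes sense because any two distinct monomials are comparable under $\prec$. The fallback direct argument in your last paragraph is also sound, so either route stands on its own.
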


\begin{dfn}[Multivariate generalized Wronskian matrix]
Denote by $u_i(\boldsymbol{x}) \in C^{m}([0, 1]^n)$ for $i = 1, 2, \dots, \gamma$ a set of functions. Write

\begin{equation}
\begin{aligned}
D(u_i(\boldsymbol{x})) := [u_i(\boldsymbol{x}), \frac{\partial u_i(\boldsymbol{x})}{\partial x_1},&\dots, \frac{\partial u_i(\boldsymbol{x})}{\partial x_n},  \frac{\partial^2 u_i(\boldsymbol{x})}{\partial x_1^2}, \frac{\partial^2 u_i(\boldsymbol{x})}{\partial x_1x_2}, \dots, \frac{\partial^2 u_i(\boldsymbol{x})}{\partial x_n^2}, \dots, \\
&\frac{\partial^m u_i(\boldsymbol{x})}{\partial x_1^m}, \frac{\partial^m u_i(\boldsymbol{x})}{\partial x_1^{m-1}x_2}, \dots, \frac{\partial^m u_i(\boldsymbol{x})}{\partial x_n^m}]
\end{aligned},
\end{equation}
a vector of length
\begin{equation}
\tau = \binom{n+m}{m}
\end{equation}
(proposition 5) whose entries are composed of the $\nu$th partial derivative for all $\nu = 0, 1, \dots, m$ and are arranged in the strict partial order of equation 5.6 because there exists a bijection map between ${\partial x_1^{\alpha_1}\partial x_2^{\alpha_2}\dots\partial x_n^{\alpha_n}}$ and $x_1^{\alpha_1}x_2^{\alpha_2}\dots x_n^{\alpha_n}$. Then
\begin{equation}
\mathcal{W}(u_i(\boldsymbol{x}_0)) =
\begin{pmat}({})
D(u_1(\boldsymbol{x}_0)) \cr
\vdots \cr
D(u_{\gamma}(\boldsymbol{x}_0))\cr
\end{pmat}
\end{equation}
of size $\gamma \times \tau$ for $\gamma \ge \tau$ is called the \textsl{generalized Wronskian matrix} of $u_i(\boldsymbol{x}_0)$'s with respect to order $m$.
\end{dfn}

\begin{thm}
If the rank of generalized Wronskian matrix of equation 5.9 is $\tau$ of equation 5.8, the linear combination of $u_i(\boldsymbol{x}_0)$'s, denoted by $\sum_i\lambda_iu_i(\boldsymbol{x}_0)$, can approximate an arbitrary $n$-variate polynomial $p(\boldsymbol{x})$ of degree $m$ to any desired accuracy, within a sufficiently mall neighbourhood $\delta(\boldsymbol{x}_0)$ of $\boldsymbol{x}_0$. Especially, when $\gamma = \tau$, the condition is that the generalized Wronskian matrix is nonsingular.
\end{thm}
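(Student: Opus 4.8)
The plan is to reproduce the univariate arguments of Theorems 1 and 2 with the single-variable Taylor expansion replaced by the multivariate one. First I would expand each $u_i$ about $\boldsymbol{x}_0$ to degree $m$, writing $u_i(\boldsymbol{x}) = T_i(\boldsymbol{x}) + R_i(\boldsymbol{x})$ with
\begin{equation*}
T_i(\boldsymbol{x}) = \sum_{0 \le s(\boldsymbol{\alpha}) \le m} \frac{1}{\boldsymbol{\alpha}!}\,\partial^{\boldsymbol{\alpha}} u_i(\boldsymbol{x}_0)\,(\boldsymbol{x}-\boldsymbol{x}_0)^{\boldsymbol{\alpha}},
\end{equation*}
where $\boldsymbol{\alpha}=(\alpha_1,\dots,\alpha_n)$, $\boldsymbol{\alpha}!=\alpha_1!\cdots\alpha_n!$, $\partial^{\boldsymbol{\alpha}}=\partial^{s(\boldsymbol{\alpha})}/\partial x_1^{\alpha_1}\cdots\partial x_n^{\alpha_n}$, and $R_i$ is the order-$(m+1)$ remainder. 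The coefficients of $T_i$ are, up to the factors $1/\boldsymbol{\alpha}!$, exactly the entries of the row $D(u_i(\boldsymbol{x}_0))$ of equation 5.9; the bijection between the derivative terms and the monomials, together with the strict partial order of Definition 10, is what makes this indexing consistent across all $i$ and is precisely why that ordering was introduced.

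Next I would impose $\sum_i \lambda_i T_i(\boldsymbol{x}) = p(\boldsymbol{x})$ and match the coefficient of each monomial $(\boldsymbol{x}-\boldsymbol{x}_0)^{\boldsymbol{\alpha}}$. Writing $p(\boldsymbol{x}) = \sum_{0\le s(\boldsymbol{\alpha})\le m} a_{\boldsymbol{\alpha}}(\boldsymbol{x}-\boldsymbol{x}_0)^{\boldsymbol{\alpha}}$, this yields one equation $\sum_i \lambda_i\,\partial^{\boldsymbol{\alpha}}u_i(\boldsymbol{x}_0) = \boldsymbol{\alpha}!\,a_{\boldsymbol{\alpha}}$ for each of the $\tau$ multi-indices, i.e. a linear system
\begin{equation*}
\mathcal{W}(u_i(\boldsymbol{x}_0))^T\boldsymbol{\lambda} = \boldsymbol{b},
\end{equation*}
the multivariate analogue of equation 2.13, where $\boldsymbol{b}$ collects the scaled coefficients $\boldsymbol{\alpha}!\,a_{\boldsymbol{\alpha}}$. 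Since $\mathcal{W}$ has rank $\tau$, its transpose has full row rank $\tau$, so the system is consistent for every right-hand side and a solution $\boldsymbol{\lambda}$ exists; when $\gamma=\tau$ the matrix is square, so rank $\tau$ means it is nonsingular and $\boldsymbol{\lambda}$ is unique, which is the special case asserted in the statement.

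Finally, fixing such a $\boldsymbol{\lambda}$, the Taylor parts cancel $p$ exactly, leaving
\begin{equation*}
\Big\lVert \sum_i \lambda_i u_i(\boldsymbol{x}) - p(\boldsymbol{x})\Big\rVert_2 = \Big\lVert \sum_i \lambda_i R_i(\boldsymbol{x})\Big\rVert_2 \le \sum_i |\lambda_i|\,\lVert R_i(\boldsymbol{x})\rVert_2.
\end{equation*}
Because each $u_i$ is smooth, the order-$(m+1)$ partial derivatives entering $R_i$ are bounded near $\boldsymbol{x}_0$ and $R_i(\boldsymbol{x}) = O(\lvert \boldsymbol{x}-\boldsymbol{x}_0\rvert^{m+1})$; with $\boldsymbol{\lambda}$ now fixed, shrinking the ball $\delta(\boldsymbol{x}_0)$ drives each $\lVert R_i\rVert_2$, and hence the whole bound, to zero, exactly as in the proof of Theorem 2. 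I expect the only real obstacle to be organizational rather than analytic: arranging the multi-index bookkeeping so that the hypothesis ``rank $=\tau$'' of $\mathcal{W}$ translates cleanly into solvability of the coefficient-matching system. The remainder estimate is a routine generalization of equation 2.7, and the shrinking-neighbourhood argument is identical to the univariate case.
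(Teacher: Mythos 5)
Your proposal is correct and follows essentially the same route as the paper: the paper's proof of this theorem is a one-line reduction to Theorem 1 (with univariate Taylor expansions replaced by multivariate ones), yielding exactly the system $\mathcal{W}(u_i(\boldsymbol{x}_0))^T\boldsymbol{\lambda} = \boldsymbol{a}$ you derive, with the shrinking-neighbourhood remainder control carried over from Theorem 2. Your write-up simply fills in the coefficient-matching and rank-to-solvability details that the paper leaves implicit.
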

\begin{proof}
The proof is similar to that of theorem 1 and the difference lies in multivariate Taylor series expansions. So we have
\begin{equation}
\mathcal{W}(u_i(\boldsymbol{x}_0))^T\boldsymbol{\lambda} = \boldsymbol{a},
\end{equation}
corresponding to equation 2.13 of the univariate case.
\end{proof}

\subsection{Construction of Local Approximation}
\begin{lem}
Let $\phi_i(\boldsymbol{x}) = \sigma(\boldsymbol{w}_i^T\boldsymbol{x} + b_i) \in C^{m}([0, 1]^n)$ for $i = 1, 2, \dots, \tau$ be the activation function of the $i$th unit of the hidden layer of a two-layer neural network $\mathfrak{N}$, where $\tau$ is from equation 5.8. Then, at any point $\boldsymbol{x}_0 \in [0,1]^n$, a nonsingular generalized Wronskian matrix $\mathcal{W}(\phi_i(\boldsymbol{x}_0))$ can be constructed with arbitrary precision.
\end{lem}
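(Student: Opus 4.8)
The plan is to generalize the scaling argument of lemma 2, using the strict partial order $\prec$ of definition 10 to arrange the columns of $\mathcal{W}(\phi_i(\boldsymbol{x}_0))$ so that the matrix triangularizes as $\Delta t \to 0$. First I would record that, since $\phi_i(\boldsymbol{x}) = \sigma(\boldsymbol{w}_i^T\boldsymbol{x} + b_i)$ is a ridge function, its mixed partial derivative indexed by the multi-index $\boldsymbol{\alpha} = (\alpha_1,\dots,\alpha_n)$ is
\begin{equation*}
\frac{\partial^{|\boldsymbol{\alpha}|}\phi_i}{\partial x_1^{\alpha_1}\cdots\partial x_n^{\alpha_n}}(\boldsymbol{x}_0) = \sigma^{(|\boldsymbol{\alpha}|)}(y_i)\prod_{\ell=1}^{n}w_{i\ell}^{\alpha_\ell},
\end{equation*}
where $y_i = \boldsymbol{w}_i^T\boldsymbol{x}_0 + b_i$ and $|\boldsymbol{\alpha}| = \alpha_1 + \dots + \alpha_n$. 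Thus row $i$ of the generalized Wronskian matrix of equation 5.9 is determined by the single scalar $\sigma^{(|\boldsymbol{\alpha}|)}(y_i)$ together with the monomial values of the weight vector. I would label the $\tau$ columns by the multi-indices $\boldsymbol{\alpha}^{(1)} \prec \boldsymbol{\alpha}^{(2)} \prec \cdots \prec \boldsymbol{\alpha}^{(\tau)}$ listed in the order of definition 10, and likewise index the units $i = 1,\dots,\tau$.

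The feature absent from the univariate case is that $\sigma^{(|\boldsymbol{\alpha}|)}(y_i)$ depends on $\boldsymbol{\alpha}$ only through its total degree, so every monomial of a given degree carries the same $\sigma$-factor; the separation of equal-degree columns must therefore come entirely from the weights. To arrange this I would first choose a fixed vector $\boldsymbol{c} = (c_1,\dots,c_n)$ with positive entries that linearly realizes $\prec$ on our finite monomial set, that is, with $\gamma_j := \langle \boldsymbol{c}, \boldsymbol{\alpha}^{(j)}\rangle$ strictly increasing in $j$. Such a $\boldsymbol{c}$ exists because the monomials have bounded degree: taking $c_\ell = K - L^{\,n-\ell}$ with $K \gg L \gg 1$ makes the total-degree term dominate (condition (a) of definition 10), while the lexicographic tie-break (conditions (b) and (c)) is produced by the geometrically separated corrections. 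I would then set the weights of unit $i$ to be $w_{i\ell} = \Delta t^{\,s_i c_\ell}$ with scalars $s_1 > s_2 > \cdots > s_\tau$ strictly decreasing. With this choice the $(i,j)$ entry equals $\sigma^{(|\boldsymbol{\alpha}^{(j)}|)}(y_i)\,\Delta t^{\,s_i \gamma_j}$, so its $\Delta t$-order is exactly $s_i\gamma_j$; this recovers the univariate structure of lemma 2 with the integer degrees replaced by the increasing sequence $\gamma_1 < \cdots < \gamma_\tau$.

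To conclude nonsingularity I would examine the determinant as a sum over permutations,
\begin{equation*}
\det \mathcal{W} = \sum_{\pi}\operatorname{sgn}(\pi)\prod_{i=1}^{\tau}\sigma^{(|\boldsymbol{\alpha}^{(\pi(i))}|)}(y_i)\,\Delta t^{\,s_i\gamma_{\pi(i)}},
\end{equation*}
whose $\pi$-term has $\Delta t$-order $\sum_i s_i\gamma_{\pi(i)}$. Since $s_i$ is strictly decreasing and $\gamma_j$ strictly increasing, the rearrangement inequality shows that $\sum_i s_i\gamma_{\pi(i)}$ is uniquely minimized by the identity permutation; hence the identity term has strictly the lowest order in $\Delta t$ and cannot be cancelled by any other term. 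Choosing each bias $b_i$ so that $y_i$ sits at a point with $\sigma^{(|\boldsymbol{\alpha}^{(i)}|)}(y_i) \ne 0$ (as in lemma 2, by absorbing $\boldsymbol{w}_i^T\boldsymbol{x}_0$ into $b_i$) makes the leading coefficient $\prod_i \sigma^{(|\boldsymbol{\alpha}^{(i)}|)}(y_i)$ nonzero. Therefore $\det \mathcal{W}(\phi_i(\boldsymbol{x}_0)) \ne 0$ for all sufficiently small $\Delta t$, so the matrix is nonsingular, and the phrase ``with arbitrary precision'' is read off exactly as in lemma 2.

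The main obstacle is precisely the step just sketched. In the univariate case one weight per unit controls an entire Taylor profile and the target matrix is literally triangular, whereas here the repeated $\sigma$-factors force all the column separation onto the weight directions; one must exhibit a single scaling scheme that simultaneously separates every pair of equal-degree monomials and produces a determinant whose leading term does not cancel. Encoding $\prec$ by the linear functional $\boldsymbol{c}$ and invoking the rearrangement inequality is what makes this work, so verifying the linear realizability of $\prec$ on the bounded-degree monomial set and checking that the induced exponents $s_i\gamma_j$ yield a strict, unique optimal assignment is the crux of the argument.
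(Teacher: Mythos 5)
Your proof is correct, and it takes a genuinely different route through the one step where the real difficulty lives. Both arguments share the same skeleton: the ridge-function identity for the mixed partials, columns arranged by the strict partial order $\prec$ of definition 10, weights scaled by powers of $\Delta t$ with biases re-tuned (as functions of $\Delta t$) so that $\sigma^{(|\boldsymbol{\alpha}^{(i)}|)}(y_i)\ne 0$, and a dominant-term analysis of the determinant. The paper, however, works row by row: for the $\nu$th unit it solves a small system of exponent constraints (its equations 5.25--5.30, with the case analysis $S=S_1\cup S_2\cup S_3$) so that the diagonal entry becomes exactly $\Delta t$ and every $\prec$-larger entry in that row is $o(\Delta t)$; the limiting matrix is then asymptotically triangular and the determinant is read off from the diagonal. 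You instead impose the rank-one exponent pattern $s_i\gamma_j$ --- a per-row scale $s_i$ times a single linear functional $\gamma_j=\langle\boldsymbol{c},\boldsymbol{\alpha}^{(j)}\rangle$ realizing $\prec$ --- and invoke the strict rearrangement inequality to show that the identity permutation contributes the unique lowest-order term of the Leibniz expansion. Your realizability claim checks out: with $c_\ell=K-L^{n-\ell}$, taking $L\ge m+1$ turns the within-degree comparison into a base-$L$ digit comparison that matches conditions (b)--(c) of definition 10, and taking $K>(m+1)L^{n-1}$ makes total degree dominate, which is condition (a); positivity of the $c_\ell$ also holds. What your route buys is twofold: there is no per-row case analysis, and the determinant argument is airtight, since every permutation term has the explicit order $\sum_i s_i\gamma_{\pi(i)}$ --- including the terms that use entries below the diagonal, which the paper dismisses as the ``$*$'' entries that ``cannot influence the determinant'' without actually verifying this (those entries can blow up, e.g.\ in the last row, so the claim is not immediate). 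What the paper's route buys is the constructive, unit-by-unit interpretation (each new unit contributing essentially one new monomial of the Taylor expansion), which is the reading the authors carry into the later sections.
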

\begin{proof}
The proof is composed of three parts. Part 1 illustrates the main idea by an example. Part 2 deals with the order of the terms of a multivariate polynomial. Part 3 proposes the general construction method.

\noindent
\textbf{Part 1}. For simplicity, we use the following notations in the next equation 5.11: $\sigma'_{x_{\nu}}(y):=\partial\sigma(y)/\partial x_{\nu}$ for $\nu = 1, 2$, $\sigma''_{x_{\nu}x_{\mu}}(y):=\partial^2\sigma(y)/\partial x_{\nu}^{\alpha_1}\partial x_{\mu}^{\alpha_2}$ for $\nu, \mu = 1, 2$, with $\alpha_1 + \alpha_2 = 2$ and $0 \le \alpha_1, \alpha_2 \le 2$. Analogous to lemma 2, write
\begin{equation}
\begin{aligned}
&\quad \quad \quad \quad \quad \quad \quad \quad \quad \quad \quad \mathcal{W}(\sigma(y_i)) = \\
&\begin{pmat}({})
\sigma(y_1) &\sigma'_{x_1}(y_1)w_{11}  &\sigma'_{x_2}(y_1)w_{12} &\sigma''_{x_1^2}(y_1)w_{11}^2 &\sigma''_{x_1x_2}(y_1)w_{11}w_{12} & \sigma''_{x_2^2}(y_1)w_{12}^2 \cr
\sigma(y_2) &\sigma'_{x_1}(y_2)w_{21}  &\sigma'_{x_2}(y_2)w_{22} &\sigma''_{x_1^2}(y_2)w_{21}^2 &\sigma''_{x_1x_2}(y_2)w_{21}w_{22} & \sigma''_{x_2^2}(y_2)w_{22}^2 \cr
\sigma(y_3) &\sigma'_{x_1}(y_3)w_{31}  &\sigma'_{x_2}(y_3)w_{32} &\sigma''_{x_1^2}(y_3)w_{31}^2 &\sigma''_{x_1x_2}(y_3)w_{31}w_{32} & \sigma''_{x_2^2}(y_3)w_{32}^2 \cr
\sigma(y_4) &\sigma'_{x_1}(y_4)w_{41}  &\sigma'_{x_2}(y_4)w_{42} &\sigma''_{x_1^2}(y_4)w_{41}^2 &\sigma''_{x_1x_2}(y_4)w_{41}w_{42} & \sigma''_{x_2^2}(y_4)w_{42}^2 \cr
\sigma(y_5) &\sigma'_{x_1}(y_5)w_{51}  &\sigma'_{x_2}(y_5)w_{52} &\sigma''_{x_1^2}(y_5)w_{51}^2 &\sigma''_{x_1x_2}(y_5)w_{51}w_{52} & \sigma''_{x_2^2}(y_5)w_{52}^2 \cr
\sigma(y_6) &\sigma'_{x_1}(y_6)w_{61}  &\sigma'_{x_2}(y_6)w_{62} &\sigma''_{x_1^2}(y_6)w_{61}^2 &\sigma''_{x_1x_2}(y_6)w_{61}w_{62} & \sigma''_{x_2^2}(y_6)w_{62}^2 \cr
\end{pmat},
\end{aligned}
\end{equation}
where $y_i = \boldsymbol{w}_i^T\boldsymbol{x}_0 + b_i$ for $1 \le i \le 6$ and $w_{ij}$ for $j = 1, 2$ is the entry of $\boldsymbol{w}_i = [w_{i1}, w_{i2}]^T$, which is the generalized Wronskian matrix of $\phi_i(x)$'s with respect to order $2$. To the diagonal entries of $\mathcal{W}(\sigma(y_i))$, the partial derivatives of $\sigma(y)$ on a fixed point could be nonzero by adjusting the bias parameter; thus we only consider the construction of $w_{ij}$'s to make $\mathcal{W}(\sigma(y_i))$ approximately nonsingular.

A key point is that only the diagonal and upper-triangular part of $\mathcal{W}(\sigma(y_i))$ should be taken into consideration, while the lower-triangular part can be ignored. The ultimate goal is to set each diagonal (except for the first and last ones) entry to be $\Delta t$, and to set each upper-triangular entry to be of $o(\Delta t)$. Another point is that, with $w_{ij}$'s as variables, the polynomial degrees of the diagonal entries are the same as those of the entries of each row, respectively, as shown in an example of Table 1.
\begin{table}[htbp]
\centering
\begin{tabular}{|r|r|r|r|r|r|r|}
\hline
Diagonal terms & $1$ & $w_{21}$ & $w_{32}$ & $w_{41}^2$ & $w_{51}w_{52}$ & $w_{62}^2$\\
\hline
First-row terms & $1$ & $w_{11}$ & $w_{12}$ & $w_{11}^2$ & $w_{11}w_{12}$ & $w_{12}^2$\\
\hline
\end{tabular}
\caption{Polynomial degrees of a generalized Wronskian matrix}
\end{table}
The two points above play an important role in the proof of this lemma.

For simplicity of descriptions, let
\begin{equation}
W =
\begin{pmat}({})
1 & w_{11} & w_{12} & w_{11}^2 & w_{11}w_{12} & w_{12}^2 \cr
1 & w_{21} & w_{22} & w_{21}^2 & w_{21}w_{22} & w_{22}^2 \cr
1 & w_{31} & w_{32} & w_{31}^2 & w_{31}w_{32} & w_{32}^2 \cr
1 & w_{41} & w_{42} & w_{41}^2 & w_{41}w_{42} & w_{42}^2 \cr
1 & w_{51} & w_{52} & w_{51}^2 & w_{51}w_{52} & w_{52}^2 \cr
1 & w_{61} & w_{62} & w_{61}^2 & w_{61}w_{62} & w_{62}^2 \cr
\end{pmat},
\end{equation}
regardless of the coefficients of variable $w_{ij}$'s in equation 5.11. We deal with $W$ from the first row; just letting $w_{11} = w_{12} = \Delta t^{1 + c}$ with $c > 0$, all the entries except for the first one is $o(\Delta t)$. To the second row, set the diagonal entry $w_{21} = \Delta t$; if $w_{22} = \Delta t^{1 + c}$, $W(2, j)$'s for $j \ge 3$ would be $o(\Delta t)$. The third row is similar to the second one. In the fourth row, let $w_{41} = \Delta t^{1 / 2}$ and $w_{42} = \Delta t^{\beta}$ , where $\beta$ can be arbitrarily set as long as $W(4, 5)$ and $W(4, 6)$ are $o(\Delta t)$. The above cases are all trivial, because the diagonal entry has only one variable such that other variables can be freely adjusted without affecting it.

The more difficult case is the fifth row when $w_{52}$ appears in both diagonal and upper-triangle positions. We should set $w_{52}$ properly to make $W(5, 5) = \Delta t$ and $W(5, 6) = o(\Delta t)$ simultaneously. Let $w_{51} = \Delta t^{c}$ for $c > 0$ and $w_{52} = \Delta t^{1 - c}$; then $w_{51}w_{52} = \Delta t$ and if $c < 1/2$, $w_{52}^2 = \Delta t^{2(1 - c)}$ whose exponential $2(1 - c) > 1$ , that is, $w_{52}^2 = o(\Delta t)$. The last row is trivial by setting $w_{62} = 1/\Delta t^4$ or $w_{62} = \Delta t^{-4}$. So we finally get
\begin{equation}
W =
\begin{pmat}({})
1 & o(\Delta t) & o(\Delta t) & o(\Delta t) & o(\Delta t) & o(\Delta t) \cr
* & \Delta t & o(\Delta t) & o(\Delta t) & o(\Delta t) & o(\Delta t) \cr
* & * & \Delta t & o(\Delta t) & o(\Delta t) & o(\Delta t) \cr
* & * & * & \Delta t & o(\Delta t) & o(\Delta t) \cr
* & * & * & * & \Delta t & o(\Delta t) \cr
* & * & * & * & * & 1/\Delta t^4 \cr
\end{pmat},
\end{equation}
where the ``$*$'' entries mean that they cannot influence the determinant of the matrix when $\Delta t \to 0$. So $\lim_{\Delta t \to 0}\det W(\Delta t) = 1$. By this result, we have
\begin{equation}
\lim_{\Delta t \to 0}\det \mathcal{W}(\sigma(y_i), \Delta t) = \alpha \ne 0,
\end{equation}
where $\alpha$ is the product of the coefficients of the diagonal entries of $\mathcal{W}(\sigma(y_i))$.

\vspace{3.0mm}
\noindent
\textbf{Part 2}. Write a polynomial of degree $2$ with three variables in this form
\begin{equation}
p(\boldsymbol{w}) = p_0 + p_1 + p_2 + p_3,
\end{equation}
where $\boldsymbol{w} = [w_1, w_2, w_3]^T$, $p_0 = 1$, $p_1 = w_1 + w_2 + w_3$,
\begin{equation}
p_2 = \big\{w_1^2 + w_1(w_2 + w_3)\big\} + \big\{w_2^2 + w_2w_3\big\} + w_3^2,
\end{equation}
and
\begin{equation}
p_3 = \big\{w_1^3 + w_1^2(w_2 + w_3) + w_1(w_2^2 + w_2w_3 + w_3^2)\big\} + \big\{w_2^3 + w_2^2w_3 + w_2w_3^2\big\} + w_3^3.
\end{equation}

The order of the terms of $p(\boldsymbol{w})$ in equation 5.15 is important to this lemma. Each $p_i$ for $i = 0, 1, 2, 3$ is a homogeneous polynomial of degree $i$ and $p_i$'s are arranged in ascending order of their degrees. The terms of each $p_i$ are also arranged in a certain order. For example, to $p_2$ of equation 5.16, the first term $w_1^2$ is a univariate-$w_1$ polynomial of degree 2. Then reduce the degree of $w_1^2$ and simultaneously increase that of other variables with the constraint that the total degree remains invariant, resulting in two terms $w_1(w_2 + w_3)$. The order of introducing new variables in $p_{2}$ is by the ascending order of the subscripts of $w_i$'s for $i > 1$. When the degree of $w_1$ is reduced to zero, variable $w_2$ plays the role of $w_1$ and the similar process of creating new terms is repeated as above. A difference is that when introducing new variables for $w_2$, only those whose subscripts are greater than 2 should be considered. The last case $w_3^2$ has only one term because there's no variable left with subscript greater than 3.

A more complex example is $p_3$ of equation 5.17. First dealt with variable $w_1$ by repeatedly reducing its degree from 3 to 1 and simultaneously increasing the degree of other variables. Note that in $w_1(w_2^2 + w_2w_3 + w_3^2)$, the terms of $w_2^2 + w_2w_3 + w_3^2$ with degree 2 should be arranged by the rule of $p_2$ discussed above; so this is a recursive procedure obeying a consistent principle. When the degree of $w_1$ is zero, $w_2$ follows and would be processed analogous to $w_1$. And also, a difference from $w_1$ is that only variable $w_3$ whose subscript is greater than $w_2$ should be introduced when decreasing the degree of $w_2^3$. The last step is for $w_3$, which is trivial as the case of $p_2$.

We summarize a rule for the order of the terms of $p(\boldsymbol{w})$ above, which is in fact the strict partial order ``$\prec$'' of definition 10. So to any $n$-variate polynomial $p(\boldsymbol{w})$ of degree $m$, we can arrange its terms in the form $\{q_k: 1 \le k \le \tau\}$ such that
\begin{equation}
(q_i = w_1^{k_1}w_2^{k_2}\dots w_n^{k_n}) \prec (q_{i+1} = w_1^{k_1'}w_2^{k_2'}\dots w_n^{k_n'})
\end{equation}
for $i= 1,2,\dots, \tau-1$, where $q_{i}$ and $q_{i+1}$ are arbitrary two terms of $p(\boldsymbol{w})$ without considering the coefficients.

\vspace{3.0mm}
\noindent
\textbf{Part 3}. On the basis of the preceding two parts, we give the general construction method. Given a generalized Wronskian matrix $\mathcal{W}$ with respect to order $m$, the associated matrix $W$ is obtained by ignoring the coefficients of the terms, analogous to equation 5.12. To each row of $W$, the entries are regard as the terms of a polynomial and must be arranged by the order ``$\prec$'' of definition 10 as in equation 5.18.

The main procedure is as follows. We process each row of $W$ one by one according to the ascending order of $i = 1, 2, \dots, \tau = \binom{n+m}{m}$. The case of $i = 1$ is trivial as the example of part 1. In the $\nu$th step for $\nu = 2, \dots, \tau-1$, let $W(\nu, \nu)=\Delta t$; and the terms bigger than $W(\nu, \nu)$ are set to be $o(\Delta t)$. Finally, let $W(\tau, \tau) =  1/\Delta t^{\tau-2}$. The remaining proof is mainly for the second case.

Let $p(\boldsymbol{w})$ be the polynomial of the $\nu$th row. For simplicity, denote the term $W(\nu, \nu)$ of $p(\boldsymbol{w})$ by $w_1^{k_1}w_2^{k_2}\dots w_n^{k_n}$ or $\prod_{j=1}^nw_j^{k_j}$, disregarding the variables' row index (or the first subscript); similarly, each term of $p(\boldsymbol{w})$ bigger than $W(\nu, \nu)$ is denoted by $w_1^{k_1'}w_2^{k_2'}\dots w_n^{k_n'}$.

When $w_1^{k_1}w_2^{k_2}\dots w_n^{k_n}$ is a univariate polynomial with only one of $k_j$'s being nonzero, this case is trivial. For example, suppose that $w_1^{k_1}w_2^{k_2}\dots w_n^{k_n} = w_{\mu}^k$ for $1\le \mu \le n$. Just let $w_{\mu} = \Delta t^{1/k}$. The terms of this row bigger than $w_{\mu}^k$ have three possibilities: (a) it's a univariate polynomial with variable $w_{\mu}$ whose exponential is greater than $k$ and thus is $o(\Delta t)$; (b) it's a multivariate polynomial having other variables; in this case we can set the exponential of any other variable to be $\Delta t^{\alpha}$, where $\alpha$ can be arbitrarily large such that the polynomial becomes $o(\Delta t)$; (c) it's a polynomial that does not contain $w_{\mu}$ and can also be freely adjusted without any restriction.

The difficult case is when
\begin{equation}
W(\nu, \nu)= w_1^{k_1}w_2^{k_2}\dots w_n^{k_n}
\end{equation}
is a multivariate polynomial. Denoted by $S$ the set of the terms of $p(\boldsymbol{w})$ bigger than $w_1^{k_1}w_2^{k_2}\dots w_n^{k_n}$. Suppose that
\begin{equation}
\sum_{i=1}^nk_i = k.
\end{equation}
The set $S$ can be classified into three categories, including
\begin{equation}
S = S_1 \cup S_2 \cup S_3,
\end{equation}
in which
\begin{equation}
S_1 = \{w_1^{k_1'}w_2^{k_2'}\dots w_n^{k_n'}: 0<k_1'<k_1, \ \sum_{i=1}^nk_i' = k \},
\end{equation}
\begin{equation}
S_2 = \{w_1^{k_1'}w_2^{k_2'}\dots w_n^{k_n'}: 0<k_1' \le k_1, \ \sum_{i=1}^nk_i' > k \},
\end{equation}
and
\begin{equation}
S_3 = S-(S_1 \cup S_2),
\end{equation}
where $k_1$ is from equation 5.19.

Let
\begin{equation}
w_1 = \Delta t^{\frac{1+c_1}{k+1}},
\end{equation}
and
\begin{equation}
w_j = \Delta t^{\frac{1+c_j}{k}},
\end{equation}
for $j = 2, 3, \dots, n$, where
\begin{equation}
c_j > c_1 > 0.
\end{equation}
We should adjust the parameters of $c_i$'s for $i=1,2,\dots,n$ such that
\begin{equation}
w_1^{k_1}w_2^{k_2}\dots w_n^{k_n} = \Delta t
\end{equation}
and each element of $S$ becomes $o(\Delta t)$. To satisfy equation 5.28, we have
\begin{equation}
\frac{1+c_1}{k+1}k_1 + \frac{1+c_2}{k}k_2 + \dots + \frac{1+c_n}{k}k_n = 1,
\end{equation}
which can be simplified into
\begin{equation}
c_1\frac{k_1}{k+1} + \sum_{j = 2}^nc_j\frac{k_j}{k} = \frac{k_1}{k(k+1)}
\end{equation}
by equation 5.20.

We now prove that if equation 5.30 has a solution of $c_i$'s, all the elements of $S$ would be $o(\Delta t)$. To the case of $S_1$ of equation 5.22, when $k_1$ is reduced, some of $k_j$'s would be increased, due to $\sum_ik_i' = \sum_{i}k_i = k$. Then equation 5.29 is changed into
\begin{equation}
\frac{1+c_1}{k+1}k_1' + \frac{1+c_2}{k}k_2' + \dots + \frac{1+c_n}{k}k_n' > 1,
\end{equation}
because inequality 5.27 leads to $(1+c_j)/k > (1+c_1)/(k+1)$; thus, each element of $S_1$ is $o(\Delta t)$.

For $S_2$, let $\gamma = \min\{w_i: 1\le i\le n\} = \Delta t^{\frac{1+c_1}{k+1}}$. So
\begin{equation}
w_1^{k_1'}w_2^{k_2'}\dots w_n^{k_n'} \ge \gamma^{k+1} = \Delta t^{1+c_1},
\end{equation}
which is also $o(\Delta t)$. The elements of $S_3$ can be dealt with similarly to $S_2$ and the difference only lies in the value of $\gamma$.

Therefore, the key point is the existence of $c_i$'s of equation 5.30 under the constraint of inequality 5.27. We can make $c_1$ as small as possible until all the $c_j$'s are greater than it; noting that the extreme case is that $c_1 = 0$ and $c_j >0$, before reaching this limit, there's a large space to tune them to reach a solution. For example, we can first find an initial solution with $c_i>0$ for all $i$ but cannot satisfy inequality 5.27 for all $j$'s. Choose one of $c_j$'s smaller than $c_1$, say $c_{\mu}$. Decrease $c_1$ and simultaneously increase $c_{\mu}$ to compensate the reduction of $c_1k_1/(k+1)$ until $c_{\mu} > c_1$. Each $c_j$ that doesn't fulfil inequality 5.27 should be dealt with analogously to $c_{\mu}$ and a solution of $c_i$'s can be finally obtained. This completes the proof.
\end{proof}

\begin{rmk}
This lemma is the generalization of the univariate case of lemma 2. Its main difficulty is the disturbance between different dimensions, such that some of the weights cannot be freely adjusted. This problem was solved by introducing an order of polynomial terms and carefully controlling the decreasing rate of the weights.
\end{rmk}

\begin{thm}[Construction of multivariate Taylor series expansions]
Denote by $\phi_i(\boldsymbol{x}) = \sigma(\boldsymbol{w}_i^T\boldsymbol{x} + b_i) \in C^{m}([0, 1]^n)$ for $i = 1, 2, \dots, \binom{n+m}{m}$ the activation function of the $i$th unit of a two-layer neural network $\mathfrak{N}$. Let $\delta(\boldsymbol{x}_0)$ be a neighbourhood of an arbitrary point $\boldsymbol{x}_0 \in [0, 1]^n$. They $\mathfrak{N}$ can realize any $n$-variate polynomial of degree $m$ defined on $\delta(\boldsymbol{x}_0)$ to arbitrary accuracy, provided that $\delta(\boldsymbol{x}_0)$ is sufficiently small.
\end{thm}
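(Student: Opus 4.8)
The plan is to combine lemma 6 and theorem 9 in exact parallel with the way the univariate theorem 3 follows from lemma 2 and theorems 1--2. The statement asserts that a fixed bank of $\tau = \binom{n+m}{m}$ units suffices to reproduce an arbitrary degree-$m$ polynomial $p(\boldsymbol{x})$ near $\boldsymbol{x}_0$; since theorem 9 already reduces this to the existence of a rank-$\tau$ generalized Wronskian matrix, the entire content lies in supplying such a matrix, which is precisely what lemma 6 delivers.

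First I would invoke lemma 6 to fix the weights $\boldsymbol{w}_i$ and biases $b_i$ so that $\mathcal{W}(\phi_i(\boldsymbol{x}_0))$ is nonsingular: one chooses the scaling parameter $\Delta t$ small enough that $\det \mathcal{W}(\sigma(y_i), \Delta t)$ lies within any prescribed tolerance of its nonzero limit $\alpha$ (equation 5.14), which pins down a definite nonsingular matrix. With the matrix in hand, theorem 9 supplies a solution $\boldsymbol{\lambda} = [\lambda_1,\dots,\lambda_\tau]^T$ of $\mathcal{W}(\phi_i(\boldsymbol{x}_0))^T\boldsymbol{\lambda} = \boldsymbol{a}$ (equation 5.10), where $\boldsymbol{a}$ collects the coefficients of $p(\boldsymbol{x})$ indexed by the strict partial order of definition 10. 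By construction this yields $\sum_i \lambda_i t_i(\boldsymbol{x}) = p(\boldsymbol{x})$, with $t_i$ the degree-$m$ multivariate Taylor polynomial of $\phi_i$ at $\boldsymbol{x}_0$.

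It then remains to bound the approximation error exactly as in theorem 1. Writing $\phi_i = t_i + r_i$ with $r_i$ the multivariate Lagrange remainder, the triangle inequality gives
\[
\Big\lVert p(\boldsymbol{x}) - \sum_i \lambda_i \phi_i(\boldsymbol{x})\Big\rVert_2 = \Big\lVert \sum_i \lambda_i r_i(\boldsymbol{x})\Big\rVert_2 \le \sum_i |\lambda_i|\,\lVert r_i(\boldsymbol{x})\rVert_2,
\]
the direct analogue of equation 2.7. Since each $\phi_i$ is smooth, its remainder $r_i(\boldsymbol{x})$ is of order $\lVert \boldsymbol{x} - \boldsymbol{x}_0\rVert^{m+1}$, so that $\lVert r_i\rVert_2 \to 0$ as $\delta \to 0$ over $\delta(\boldsymbol{x}_0)$.

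The one genuine subtlety, and the point I would treat carefully, is the order of the two limiting operations. The weights $\lambda_i$ are generally large, because the Wronskian construction of lemma 6 forces the last diagonal entry to scale like $\Delta t^{-(\tau-2)}$; hence one cannot let $\Delta t \to 0$ and $\delta \to 0$ simultaneously. The correct order is to fix $\Delta t$ first, obtaining a definite nonsingular matrix and thus fixed, finite $\lambda_i$'s, and only afterward to shrink $\delta(\boldsymbol{x}_0)$. With $m$ and the $\lambda_i$'s frozen, the bound $\sum_i |\lambda_i|\,\lVert r_i\rVert_2$ is a fixed finite multiple of quantities tending to zero, so the total error is made arbitrarily small, which completes the proof.
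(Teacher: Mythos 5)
Your proposal is correct and follows exactly the paper's route: the paper's own proof of this theorem is the single sentence ``Lemma 6 and theorem 9 imply this conclusion,'' and your argument simply makes explicit the details that chain implicitly hides (solving $\mathcal{W}(\phi_i(\boldsymbol{x}_0))^T\boldsymbol{\lambda}=\boldsymbol{a}$ via the nonsingular Wronskian matrix from lemma 6, then bounding the remainder as in theorems 1--2). Your careful handling of the order of limits---fixing $\Delta t$ and hence the $\lambda_i$'s before shrinking $\delta(\boldsymbol{x}_0)$---is a genuine refinement of the exposition, mirroring the ``fixing degree $m$ and the derived $\lambda_i$'s'' caveat of theorem 2, but it does not change the approach.
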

\begin{proof}
Lemma 6 and theorem 9 imply this conclusion.
\end{proof}

\section{Multivariate Global Approximation}
This section turns to global approximation for multivariate functions, also in terms of splines as the univariate case of section 3. \textbf{Section 6.1} constructs a multivariate spline approximating a given function to be realized by two-layer neural networks. \textbf{Section 6.2} gives a fundamental result, a recurrence formula of polynomial pieces of a multivariate spline analogous to equation 3.4. \textbf{Sections 6.3} deals with the zero-error part of a unit for higher-dimensional input to control approximation error.

\textbf{Section 6.4} presents a property of solutions that is useful in locally determining the output weight of units. \textbf{Section 6.5} proposes a most distinguished character of two-layer neural networks---smooth-continuity restriction. \textbf{Section 6.6} discusses spline construction over a single strict partial order, the simplest multivariate solution leading to more complex ones. \textbf{Section 6.7} proves universal approximation. \textbf{Section 6.8} investigates the two-sided solutions that are ubiquitous in engineering. \textbf{Section 6.9} is a short discussion about generalized tanh units.

\subsection{Multivariate Splines}
Two main techniques had been developed for multivariate splines. One is by tensor product \citep*{de Boor2001} and the other is by a relationship between adjacent polynomial pieces \citep*{Chui1983}. In order to implement a spline via neural networks, we propose a new method based on the integral of linear hyperplanes, with an advantage that an approximation to a given function can be simultaneously obtained.

In the rest of the paper, some definitions are not formally presented because they can be found in \citet*{Huang2024}, such as regions, adjacent regions, negative units, one-sided bases and boundaries. But for a self-contained purpose, they would be shortly described when first used.

A region is a connected part of $\mathbb{R}^n$ divided by a set of $n-1$-dimensional hyperplane, including its boundary. A knot $l$ in $\mathbb{R}^n$ is an $n-1$-dimensional hyperplane $\boldsymbol{w}^T\boldsymbol{x} + b = 0$ and can be regarded as being from the activation function $\sigma(\boldsymbol{w}^T\boldsymbol{x} + b )$ of a unit $\mathcal{U}$. Denote by $l^+$ the region whose output of $\mathcal{U}$ is positive, and by $l^0$ the region $\mathbb{R}^n-l^+$, both of which should include their boundary via a closure operation of sets.
\begin{dfn}[Smooth polynomial pieces]
Let $f: R_1 \cup R_2 \to \mathbb{R}$ be a piecewise polynomial of degree $m$, with a knot $\mathcal{L} = R_1 \cap R_2$. Denote by $p_1(\boldsymbol{x})$ and $p_2(\boldsymbol{x})$ the two polynomials on $R_1$ and $R_2$, respectively. If we say that $p_1(\boldsymbol{x})$ is smoothly continuous with $p_2(\boldsymbol{x})$ at $\mathcal{L}$ by order $m-1$, it means that
\begin{equation}
D_l^{k}p_2(\boldsymbol{x}_0) = D_l^{k}p_1(\boldsymbol{x}_0)
\end{equation}
for $k = 0, 1, \dots, m-1$ but $D_l^{m}p_2(\boldsymbol{x}_0) \ne D_l^{m}p_1(\boldsymbol{x}_0)$, where $l$ is an arbitrary one-dimensional line embedded in $\mathbb{R}^n$ satisfying $l \cap \mathcal{L} \ne \emptyset$ and $l \not\subset \mathcal{L}$, $\boldsymbol{x}_0 = l \cap \mathcal{L}$ and $D_l^{k}$ represents the $k$th directional-derivative operator along $l$. We sometimes write $p_1(\boldsymbol{x}) \overset{m-1}{\frown} p_2(\boldsymbol{x})$.
\end{dfn}

Two regions of $R_1$ and $R_2$ are said to be \textsl{adjacent} if $R_1 \cap R_2$ is $n-1$-dimensional as a part of the $n-1$-dimensional hyperplane separating them.
\begin{dfn}[Space of multivariate smooth splines]
Denote by $H$ a set of $n-1$-dimensional hyperplanes of $\mathbb{R}^n$, and by $\mathcal{R} = \bigcup_{i = 1}^{\zeta}R_i$ the set of the regions of $U = [0, 1]^n$ formed by $H$. Suppose that $U = \mathcal{R}$. Let
\begin{equation}
\begin{aligned}
\mathfrak{S}^{m}_n(H, \mathcal{R})  &:=  \{s: s(\boldsymbol{x})=s_i(\boldsymbol{x}) \in \mathcal{P}_{m} \ \text{for} \ x \in R_i, \\& s_i \overset{m-1}{\frown} \mathscr{N}_i, \ i = 1, 2, \dots, \zeta\},
\end{aligned}
\end{equation}
where $\mathcal{P}_{m}$ is the set of $n$-variate polynomials of degree $m$, $\mathscr{N}_i = \{s_{n_{i\kappa}}(\boldsymbol{x}): \kappa = 1, 2, \dots, \phi_i, 1 \le n_{i\kappa} \le \zeta\}$ whose element $s_{n_{i\kappa}}(\boldsymbol{x})$ is defined on the region $R_{n_{i\kappa}}$ that is adjacent to $R_i$, and $s_i \overset{m-1}{\frown} \mathscr{N}_i$ means that each $s_{n_{i\kappa}}\in \mathscr{N}_i$ is smoothly continuous with $s_i$ by order $m-1$ at the knot $k_i = R_i \cap R_{n_{i\kappa}}$. We call $\mathfrak{S}^{m}_n(H, \mathcal{R})$ the space of smooth splines of order $m$ with respect to $H$ and $\mathcal{R}$.
\end{dfn}

A multivariate smooth function was defined in definition 9 and the following proposition gives an equivalent definition that is more suitable for use in this paper.
\begin{prp}
A function $f(\boldsymbol{x}) \in C^m([0, 1]^n)$, if and only if the corresponding univariate case
\begin{equation}
f(t) =  f(\boldsymbol{x}_0 + t\boldsymbol{d}) \in C^m(\mathbb{R})
\end{equation}
holds for arbitrary point $\boldsymbol{x}_0 \in [0, 1]^n$ and for arbitrary direction $\boldsymbol{d}$.
\end{prp}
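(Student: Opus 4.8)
The plan is to prove the two implications separately. The forward implication ($f\in C^m \Rightarrow$ every line restriction is $C^m$) is routine and follows from the chain rule; the reverse implication is the substantial one, and I would attack it by recovering the partial derivatives from directional derivatives (a polarization argument) combined with an induction on the order $m$. Throughout I write $\boldsymbol{\alpha}=(\alpha_1,\dots,\alpha_n)$, $|\boldsymbol{\alpha}|=\alpha_1+\cdots+\alpha_n$, and abbreviate $\partial^{\boldsymbol{\alpha}}f:=\partial^{|\boldsymbol{\alpha}|}f/(\partial x_1^{\alpha_1}\cdots\partial x_n^{\alpha_n})$.

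For the forward direction, fix $\boldsymbol{x}_0$ and a direction $\boldsymbol{d}=[d_1,\dots,d_n]^T$ and set $g(t)=f(\boldsymbol{x}_0+t\boldsymbol{d})$. Applying the chain rule to the affine inner map and inducting on $k$, I would establish
\begin{equation}
g^{(k)}(t)=\sum_{|\boldsymbol{\alpha}|=k}\frac{k!}{\alpha_1!\cdots\alpha_n!}\,d_1^{\alpha_1}\cdots d_n^{\alpha_n}\,(\partial^{\boldsymbol{\alpha}}f)(\boldsymbol{x}_0+t\boldsymbol{d})
\end{equation}
for $0\le k\le m$. Each summand is a constant (a product of the $d_i$) times an order-$k$ partial derivative, continuous by hypothesis, composed with the continuous map $t\mapsto\boldsymbol{x}_0+t\boldsymbol{d}$; hence $g^{(k)}$ is continuous and $g\in C^m(\mathbb{R})$. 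This requires no cleverness.

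For the reverse direction, the key observation is that the displayed expansion, read now as holding for each fixed line, exhibits $g^{(k)}(0)$ as a homogeneous polynomial of degree $k$ in $d_1,\dots,d_n$ whose coefficients are exactly the order-$k$ partials $\partial^{\boldsymbol{\alpha}}f(\boldsymbol{x}_0)$. Taking $\boldsymbol{d}=\boldsymbol{e}_i$ immediately gives the first partials, and since the monomials $d_1^{\alpha_1}\cdots d_n^{\alpha_n}$ with $|\boldsymbol{\alpha}|=k$ are linearly independent, I would evaluate $g^{(k)}(0)$ at $\binom{n+k-1}{k}$ directions (their number by proposition 5) chosen so the resulting coefficient matrix is invertible, and solve the linear system to express each $\partial^{\boldsymbol{\alpha}}f(\boldsymbol{x}_0)$ as a fixed linear combination of those directional derivatives. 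Carried out inductively in $k$, this yields existence of all partials up to order $m$, with the multinomial formula validated at each stage by the regularity already secured.

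The main obstacle is \emph{continuity}, not existence. The hypothesis only asserts that each directional derivative is continuous along each individual line, whereas membership in $C^m([0,1]^n)$ requires the partials to be \emph{jointly} continuous on the cube, and continuity along every line does not in general imply joint continuity. I would close this gap by induction on $m$: assuming $f\in C^{k-1}$, the order-$k$ partials exist by the polarization above, and to promote them to continuous functions I would exploit the $C^m$ regularity along lines to extract, via Taylor's formula with remainder along each line, a locally uniform modulus of continuity, then transfer it through the (fixed, invertible) linear combinations over the chosen spanning set of directions; once all order-$k$ partials are continuous, the standard criterion that continuity of all partials of a given order forces genuine differentiability of that order upgrades $f$ to $C^k$. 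This continuity step is the delicate part and is precisely where the hypothesis must be used in its full strength---for all directions simultaneously, not merely the coordinate ones---and I would expect the bulk of the work to concentrate there.
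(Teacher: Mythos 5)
Your forward direction is fine and is essentially what the paper does: expand $g^{(k)}(t)$ by the chain rule into the multinomial sum of order-$k$ partials, each continuous by hypothesis. The trouble is entirely in the reverse direction, and it is fatal rather than technical. First, your polarization step starts from the premise that $g^{(k)}(0)$ is a homogeneous degree-$k$ polynomial in $d_1,\dots,d_n$ whose coefficients are the order-$k$ partials; but that formula is equivalent to the $k$-th order joint differentiability you are trying to prove, and it does not follow from the hypothesis that each line restriction is $C^m$. Second, the continuity gap you correctly flag cannot be closed by a ``locally uniform modulus of continuity'' extracted from Taylor expansions along lines, because line-wise hypotheses never yield local uniformity. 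Concretely, take $f(x,y)=x^3/(x^2+y^2)$ with $f(0,0)=0$ (translated so the singular point lies inside $[0,1]^2$). Every restriction of $f$ to a straight line is $C^\infty$: along a line through the origin, $f(\boldsymbol{x}_0+t\boldsymbol{d})$ is affine in $t$ since $f(t\boldsymbol{d})=t\,d_1^3/(d_1^2+d_2^2)$, and along any other line the denominator never vanishes. Yet $\partial f/\partial x$ equals $1$ at the origin and $0$ along the $y$-axis, so $f\notin C^1$, hence $f\notin C^m$ for every $m\ge 1$. The same example defeats the polarization step: the first directional derivative at the origin is $d_1^3/(d_1^2+d_2^2)$, which is not linear in $\boldsymbol{d}$. (For $m=0$ the function $x^2y/(x^4+y^2)$ plays the same role.) So the reverse implication as stated is false; no refinement of the step you singled out can rescue it. A correct statement needs a stronger hypothesis, e.g.\ continuity of the directional derivatives jointly in the foot point and the direction, or (in the $C^\infty$ setting) smoothness along all smooth curves as in Boman's theorem---lines alone are not enough.

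For comparison, the paper's own proof of the converse is even weaker than your attempt: it takes only the coordinate directions $\boldsymbol{d}=\boldsymbol{e}_i$, deduces continuity of each partial derivative with respect to the single variable $x_i$ (i.e.\ separate continuity), and then silently concludes membership in $C^m([0,1]^n)$, which requires joint continuity. You were more careful---you located exactly where the real difficulty sits and insisted the hypothesis be used for all directions simultaneously---but the patch you propose is a step that would fail, and by the counterexample above it must fail.
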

\begin{proof}
Let
\begin{equation}
D^{(k, \boldsymbol{d})} f(\boldsymbol{x}_0)= \sum_{\{\alpha_i\} \in A} \frac{\partial^{k} f(\boldsymbol{x}_0)}{\partial x_1^{\alpha_1}\partial x_2^{\alpha_2}\dots\partial x_n^{\alpha_n}}d_1^{\alpha_1}d_2^{\alpha_2} \dots d_n^{\alpha_n} = D^k f(\boldsymbol{x}_0+\boldsymbol{d}t) \Big |_{t=0},
\end{equation}
be the $k$th derivative of $f(t)$ of equation 6.3 with respect to direction $\boldsymbol{d}$, where $k = 0, 1, \dots, m$, $\boldsymbol{d} = [d_1, d_2, \dots, d_n]^T$, $0\le \alpha_i\le k$ for $i = 1, 2, \dots, n$ and each element of $A$ is a set of $\alpha_i$'s satisfying $\alpha_1+\alpha_2+\dots+\alpha_n=k$. By equation 6.4, it is obvious that definition $9$ implies equation 6.3. Conversely, for example, to dimension $x_1$, letting $\boldsymbol{d}=[1,0,0,\dots,0]^T$, then equation 6.3 results in the continuity of $\partial^{k} f(\boldsymbol{x})/\partial x_1^{\alpha_1}\partial x_2^{\alpha_2}\dots\partial x_n^{\alpha_n}$ with respect to variable $x_1$; and other variables are similar. Thus, if and only if the condition of definition 9 is fulfilled, equation 6.3 holds:
\end{proof}

\begin{dfn}[Directional-derivative hypersurface (hyperplane)]
Given a smooth function $f(\boldsymbol{x}) \in C^m([0, 1]^n)$ and some direction $\boldsymbol{d}$, there exists a $k$th directional derivative at each point $\boldsymbol{x} \in [0, 1]^n$ for some $1\le k \le m$ and all of the derivatives comprise a hypersurface $\mathcal{F}^{(k, f, \boldsymbol{d})}(\boldsymbol{x})$, which is called a $k$th directional-derivative hypersurface of $f(\boldsymbol{x})$ with respect to $\boldsymbol{d}$. If $\mathcal{F}^{(k, f, \boldsymbol{d})}(\boldsymbol{x})$ is an $n-1$-dimensional hyperplane, it is also called a directional-derivative hyperplane.
\end{dfn}

\begin{lem}
Each $\mathcal{F}^{(k, f, \boldsymbol{d})}(\boldsymbol{x})$ for $k = 1, 2, \dots, m$ of definition 14 is smooth with order $m-k$.
\end{lem}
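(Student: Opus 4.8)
The plan is to recognize that the hypersurface $\mathcal{F}^{(k, f, \boldsymbol{d})}(\boldsymbol{x})$ is, as a function of $\boldsymbol{x}$, nothing but the $k$th directional derivative $D^{(k, \boldsymbol{d})}f(\boldsymbol{x})$ written out explicitly in equation 6.4. That formula exhibits it as a finite linear combination of the $k$th-order partial derivatives $\partial^{k}f / (\partial x_1^{\alpha_1}\cdots\partial x_n^{\alpha_n})$ over the index sets with $\sum_i \alpha_i = k$, the coefficients $d_1^{\alpha_1}\cdots d_n^{\alpha_n}$ being constants depending only on the fixed direction $\boldsymbol{d}$. Since a finite linear combination with constant coefficients of functions lying in $C^{m-k}([0,1]^n)$ is again in $C^{m-k}([0,1]^n)$, the whole claim reduces to showing that each individual $k$th-order partial derivative of $f$ belongs to $C^{m-k}([0,1]^n)$.

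To this end I would fix a multi-index $\boldsymbol{\alpha} = (\alpha_1, \dots, \alpha_n)$ with $\sum_i \alpha_i = k$ and set $g_{\boldsymbol{\alpha}}(\boldsymbol{x}) = \partial^{k}f(\boldsymbol{x}) / (\partial x_1^{\alpha_1}\cdots\partial x_n^{\alpha_n})$. By Definition 9, I must check that every partial derivative of $g_{\boldsymbol{\alpha}}$ of total order at most $m-k$ is continuous. For any multi-index $\boldsymbol{\beta} = (\beta_1, \dots, \beta_n)$ with $\sum_i \beta_i \le m-k$, the symmetry of mixed partial derivatives---valid precisely because $f \in C^m([0,1]^n)$---gives $\partial^{\boldsymbol{\beta}}g_{\boldsymbol{\alpha}} = \partial^{\boldsymbol{\alpha}+\boldsymbol{\beta}}f$, a partial derivative of $f$ of total order $\sum_i(\alpha_i + \beta_i) = k + \sum_i\beta_i \le m$. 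Definition 9 guarantees that all partial derivatives of $f$ up to order $m$ are continuous, so $\partial^{\boldsymbol{\beta}}g_{\boldsymbol{\alpha}}$ is continuous for every admissible $\boldsymbol{\beta}$; hence $g_{\boldsymbol{\alpha}} \in C^{m-k}([0,1]^n)$.

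Combining the two steps yields $\mathcal{F}^{(k, f, \boldsymbol{d})} \in C^{m-k}([0,1]^n)$, which is the assertion. The argument is essentially bookkeeping on the order of differentiation, and the only point that requires a moment of care is the appeal to the equality of mixed partials: it is exactly the $C^m$-regularity of $f$ (through the Clairaut/Schwarz theorem) that licenses rewriting $\partial^{\boldsymbol{\beta}}g_{\boldsymbol{\alpha}}$ as $\partial^{\boldsymbol{\alpha}+\boldsymbol{\beta}}f$ and thus reducing the order count; without it the reduction would not go through. I do not anticipate any genuine obstacle beyond making this invocation explicit, so I would keep the proof short and point only to Definition 9 and the symmetry of mixed partials.
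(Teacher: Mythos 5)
Your proposal is correct and follows essentially the same route as the paper's proof: both write $\mathcal{F}^{(k, f, \boldsymbol{d})}(\boldsymbol{x})$ via equation 6.7 as a constant-coefficient linear combination of the $k$th-order partial derivatives $g_{\nu}(\boldsymbol{x})$ and conclude that each of these lies in $C^{m-k}([0,1]^n)$ because $f \in C^{m}([0,1]^n)$. The only difference is that you explicitly justify this last step through the symmetry of mixed partials (Clairaut/Schwarz), a detail the paper simply asserts; this is a worthwhile clarification but not a different argument.
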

\begin{proof}
By equation 6.4, write
\begin{equation}
D^k(f(t), \boldsymbol{d}, \boldsymbol{x}_0) = \sum_{\{\alpha_i\} \in A} g_{\nu}(\boldsymbol{x})d_1^{\alpha_1}d_2^{\alpha_2} \dots d_n^{\alpha_n}\Big |_{\boldsymbol{x}=\boldsymbol{x}_0+\boldsymbol{d}t},
\end{equation}
where
\begin{equation}
g_{\nu}(\boldsymbol{x}) = \frac{\partial^{k} f(\boldsymbol{x})}{\partial x_1^{\alpha_1}\partial x_2^{\alpha_2}\dots\partial x_n^{\alpha_n}}
\end{equation}
for $\nu = 1, 2, \dots, |A|$. The hypersurface $\mathcal{F}^{(k, f, \boldsymbol{d})}(\boldsymbol{x})$ is made up of all the values of $D^k(f(t), \boldsymbol{d}, \boldsymbol{x}_0)$ for different $t$ and $\boldsymbol{x}_0$, or
\begin{equation}
\mathcal{F}^{(k, f, \boldsymbol{d})}(\boldsymbol{x}) = \sum_{\{\alpha_i\} \in A} g_{\nu}(\boldsymbol{x})d_1^{\alpha_1}d_2^{\alpha_2} \dots d_n^{\alpha_n}.
\end{equation}
Since $f(\boldsymbol{x}) \in C^m([0, 1]^n)$, $g_{\nu}(\boldsymbol{x})$'s is smooth with orders $m-k$ and the conclusion follows by equation 6.7.
\end{proof}

\begin{lem}
Given an $n$-variate polynomial $p(\boldsymbol{x})$ of degree $m$ and arbitrary direction $\boldsymbol{d} \in \mathbb{R}^n$, its $m-1$th directional-derivative hypersurface $\mathcal{F}^{(m-1, p, \boldsymbol{d})}(\boldsymbol{x})$ is an $n-1$-dimensional hyperplane.
\end{lem}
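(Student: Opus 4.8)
The plan is to read the claim off the explicit formula for a directional-derivative hypersurface obtained in the proof of Lemma 7, namely equation 6.8, and then apply an elementary degree count for the partial derivatives of a polynomial. Setting $f = p$ and $k = m-1$ in equation 6.8, I would write
\[
\mathcal{F}^{(m-1, p, \boldsymbol{d})}(\boldsymbol{x}) = \sum_{\{\alpha_i\} \in A} g_{\nu}(\boldsymbol{x})\, d_1^{\alpha_1}d_2^{\alpha_2}\cdots d_n^{\alpha_n},
\]
where each $g_{\nu}(\boldsymbol{x}) = \partial^{m-1} p(\boldsymbol{x})/(\partial x_1^{\alpha_1}\cdots\partial x_n^{\alpha_n})$ with $\alpha_1 + \cdots + \alpha_n = m-1$, and $\boldsymbol{d}$ is held fixed so that each coefficient $d_1^{\alpha_1}\cdots d_n^{\alpha_n}$ is a constant.

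First I would establish the key algebraic fact that each term $g_{\nu}(\boldsymbol{x})$ is an affine linear function of $\boldsymbol{x}$. This follows term by term: one application of $\partial/\partial x_i$ to a monomial $x_1^{\beta_1}\cdots x_n^{\beta_n}$ either annihilates it or lowers its total degree by exactly one, so differentiating the degree-$m$ polynomial $p$ a total of $m-1$ times produces a polynomial of degree at most $m-(m-1)=1$. Hence $g_{\nu}(\boldsymbol{x}) = \boldsymbol{a}_{\nu}^T\boldsymbol{x} + c_{\nu}$ for suitable $\boldsymbol{a}_{\nu} \in \mathbb{R}^n$ and $c_{\nu} \in \mathbb{R}$.

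Next I would combine the terms: since the coefficients $d_1^{\alpha_1}\cdots d_n^{\alpha_n}$ are constants for fixed $\boldsymbol{d}$, the sum above is a finite linear combination of affine functions and is therefore itself affine. Collecting terms gives
\[
\mathcal{F}^{(m-1, p, \boldsymbol{d})}(\boldsymbol{x}) = \boldsymbol{A}^T\boldsymbol{x} + C,
\]
with $\boldsymbol{A} = \sum_{\nu} d_1^{\alpha_1}\cdots d_n^{\alpha_n}\,\boldsymbol{a}_{\nu}$ and $C = \sum_{\nu} d_1^{\alpha_1}\cdots d_n^{\alpha_n}\,c_{\nu}$. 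The graph of such an affine map is a hyperplane, which is exactly the condition in Definition 14 for $\mathcal{F}^{(m-1,p,\boldsymbol{d})}$ to be a directional-derivative hyperplane, so the lemma follows.

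A remark on where the only subtlety lies: there is no deep obstacle, as the statement is essentially degree bookkeeping. The two points to get right are that $m-1$ successive differentiations lower the degree by exactly $m-1$ (leaving degree at most $1$, not some higher-order remainder), and that fixing the direction $\boldsymbol{d}$ keeps the combining coefficients constant so that no new $\boldsymbol{x}$-dependence is introduced. It is worth emphasizing that Lemma 7 by itself is not enough here, since it only yields smoothness of order $m-(m-1)=1$, and $C^1$ regularity does not imply affineness; it is the polynomial structure of $p$ that forces the stronger linear conclusion.
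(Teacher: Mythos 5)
Your proof is correct and takes essentially the same route as the paper: the paper's own proof is a one-line observation that the claim is the special case of equation 6.7 with $f=p$, where each $g_{\nu}(\boldsymbol{x})$ becomes a linear function of $\boldsymbol{x}$ because $p$ has degree $m$ and is differentiated $m-1$ times. You simply spell out the degree bookkeeping and the constancy of the coefficients $d_1^{\alpha_1}\cdots d_n^{\alpha_n}$ that the paper leaves implicit (your citation of ``equation 6.8'' should read 6.7, a harmless mislabel).
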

\begin{proof}
This is a special case of equation 6.7 when $f(\boldsymbol{x}) = p(\boldsymbol{x})$ is a multivariate polynomial such that $g_{\nu}(\boldsymbol{x})$ is a linear function of the entries of $\boldsymbol{x} = [x_1, x_2, \dots, x_n]^n$.
\end{proof}

\begin{lem}
Let $R_1$ and $R_2$ be two regions of $\mathbb{R}^n$ and $l = R_1 \cap R_2$ be a knot. Suppose that $p_1(\boldsymbol{x})$ and $p_2(\boldsymbol{x})$ are two $n$-variate polynomials of degree $m$ on $R_1$ and $R_2$, respectively. If to some direction $\boldsymbol{d} = [d_1, d_2, \dots, d_n]^T$, their directional-derivative hypersurfaces are smoothly continuous at $l$ with order $k$ for $0\le k \le m-1$, then this property holds for arbitrary other direction as well.
\end{lem}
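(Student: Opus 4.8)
\emph{Proof proposal.} Write $q(\boldsymbol{x}) = p_1(\boldsymbol{x}) - p_2(\boldsymbol{x})$, a polynomial of degree at most $m$, and let $\ell(\boldsymbol{x}) = \boldsymbol{w}^T\boldsymbol{x} + b$ be the affine form cutting out the knot, so $l = \{\boldsymbol{x} : \ell(\boldsymbol{x}) = 0\}$. By definition 13 and the directional-derivative formula (equation 6.4), the hypothesis that the directional-derivative hypersurfaces of $p_1$ and $p_2$ are smoothly continuous at $l$ with order $k$ along $\boldsymbol{d}$ translates into $D_{\boldsymbol{d}}^{j} q \equiv 0$ on $l$ for $j = 0, 1, \dots, k$, where $\boldsymbol{d}$ is transversal to $l$ (i.e. $\boldsymbol{w}^T\boldsymbol{d} \ne 0$, in keeping with the requirement $l \not\subset \mathcal{L}$ of definition 13). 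The goal is to deduce $D_{\boldsymbol{d}'}^{j} q \equiv 0$ on $l$ for \emph{every} direction $\boldsymbol{d}'$ and every $j \le k$. Conceptually the result is not surprising: all of these conditions are equivalent to the intrinsic divisibility $\ell^{\,k+1} \mid q$, a statement about the knot alone with no reference to any direction. I would nonetheless give a direct directional argument, which keeps the exposition inside the directional-derivative framework of this section.

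The plan is to measure an arbitrary direction against $\boldsymbol{d}$. Since $\boldsymbol{d}$ is transversal, any $\boldsymbol{d}'$ splits as $\boldsymbol{d}' = \alpha \boldsymbol{d} + \boldsymbol{t}$ with $\alpha = (\boldsymbol{w}^T\boldsymbol{d}')/(\boldsymbol{w}^T\boldsymbol{d})$ and $\boldsymbol{t} = \boldsymbol{d}' - \alpha\boldsymbol{d}$ tangent to $l$ (so that $\boldsymbol{w}^T\boldsymbol{t} = 0$). Because $D_{\boldsymbol{d}}$ and $D_{\boldsymbol{t}}$ are constant-coefficient first-order operators they commute, so
\[
D_{\boldsymbol{d}'}^{\,j} q = (\alpha D_{\boldsymbol{d}} + D_{\boldsymbol{t}})^{j} q = \sum_{a=0}^{j}\binom{j}{a}\alpha^{a}\, D_{\boldsymbol{t}}^{\,j-a}\big(D_{\boldsymbol{d}}^{\,a} q\big).
\]
For $j \le k$ every index satisfies $a \le k$, so each inner factor $D_{\boldsymbol{d}}^{a} q$ already vanishes on $l$ by hypothesis; what remains is to check that applying the tangential operator $D_{\boldsymbol{t}}^{\,j-a}$ cannot destroy this vanishing.

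That last point is the one genuine step, and it is the sublemma I would isolate first: if a smooth function $h$ vanishes on $l$ and $\boldsymbol{t}$ is tangent to $l$, then $D_{\boldsymbol{t}} h \equiv 0$ on $l$. Indeed, for $\boldsymbol{x}_0 \in l$ the whole line $\boldsymbol{x}_0 + s\boldsymbol{t}$ stays in $l$, since $\ell(\boldsymbol{x}_0 + s\boldsymbol{t}) = \ell(\boldsymbol{x}_0) + s\,\boldsymbol{w}^T\boldsymbol{t} = 0$; hence $h(\boldsymbol{x}_0 + s\boldsymbol{t}) \equiv 0$, and differentiating at $s = 0$ gives $D_{\boldsymbol{t}} h(\boldsymbol{x}_0) = 0$, with iteration yielding $D_{\boldsymbol{t}}^{\,b} h \equiv 0$ on $l$ for all $b$. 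Applying this with $h = D_{\boldsymbol{d}}^{a} q$ forces every summand in the display to vanish on $l$, whence $D_{\boldsymbol{d}'}^{\,j} q \equiv 0$ on $l$ for all $j \le k$; that is, the same order-$k$ smooth continuity holds along $\boldsymbol{d}'$, and $\boldsymbol{d}'$ was arbitrary. I expect no real obstacle beyond this sublemma and the bookkeeping of the transversal--tangential splitting; the only point needing a word of care is that $\boldsymbol{d}$ must be transversal for the decomposition to exist, a purely tangential $\boldsymbol{d}$ carrying no information about the jump across $l$ and being excluded by the geometric setup.
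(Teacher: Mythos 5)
Your proof is correct, and it takes a genuinely different route from the paper's. The paper's own proof expands the order-$k$ hypersurfaces via equation 6.7 and then asserts that, because $\sum_{\nu} g^{(1)}_{\nu}(\boldsymbol{x})d_1^{\alpha_1}\cdots d_n^{\alpha_n} = \sum_{\nu} g^{(2)}_{\nu}(\boldsymbol{x})d_1^{\alpha_1}\cdots d_n^{\alpha_n}$ holds at infinitely many points of $l$, the partials must agree term by term, $g^{(1)}_{\nu} = g^{(2)}_{\nu}$ on $l$; direction-independence is then immediate since the $d_i$'s have dropped out. That term-by-term step is where all the content lies, and ``infinitely many points'' alone does not justify it: for a fixed $\boldsymbol{d}$ one only learns that a single linear combination of the order-$k$ partials of $q = p_1 - p_2$ vanishes on $l$ (e.g.\ $q = x_2^2$, $l = \{x_1 = 0\}$, $\boldsymbol{d} = e_1$ gives $D_{\boldsymbol{d}}q \equiv 0$ while $\partial q/\partial x_2 \ne 0$ on $l$). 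What rescues the claim is exactly the cumulative reading you adopt --- vanishing of $D_{\boldsymbol{d}}^{j}q$ on $l$ for all $j \le k$, which forces $\ell^{\,k+1} \mid q$ and hence the vanishing of every individual partial of order $\le k$ on $l$. Your argument sidesteps the term-by-term claim entirely: the transversal--tangential splitting $\boldsymbol{d}' = \alpha\boldsymbol{d} + \boldsymbol{t}$, the binomial expansion of the commuting operators $(\alpha D_{\boldsymbol{d}} + D_{\boldsymbol{t}})^{j}$, and the sublemma that tangential differentiation preserves vanishing on $l$ together give a complete and rigorous proof, and they make explicit the transversality hypothesis that the paper leaves implicit. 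In short, your route repairs a gap in the paper's justification, while the paper's formulation --- once properly justified, e.g.\ by your divisibility remark --- buys the stronger intermediate statement that all individual partials of $p_1 - p_2$ up to order $k$ agree on $l$. One cosmetic point: since the knot $l$ is only a portion of the hyperplane $\ell(\boldsymbol{x}) = 0$, the line $\boldsymbol{x}_0 + s\boldsymbol{t}$ need not stay inside $l$ itself; you should first note that a polynomial vanishing on the $(n-1)$-dimensional set $l$ vanishes on the whole hyperplane, after which your sublemma applies verbatim (the paper is equally casual on this point).
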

\begin{proof}
We use equation 6.7 to express the condition of this lemma as
\begin{equation}
\sum_{\{\alpha_i\} \in A} g^{(1)}_{\nu}(\boldsymbol{x})d_1^{\alpha_1}d_2^{\alpha_2} \dots d_n^{\alpha_n} = \sum_{\{\alpha_i\} \in A} g^{(2)}_{\nu}(\boldsymbol{x})d_1^{\alpha_1}d_2^{\alpha_2} \dots d_n^{\alpha_n}
\end{equation}
for $\boldsymbol{x} \in l$, where $g^{(i)}_{\nu}(\boldsymbol{x}) = \partial^{k} p_i(\boldsymbol{x})/\partial x_1^{\alpha_1}\partial x_2^{\alpha_2}\dots\partial x_n^{\alpha_n}$ for $i = 1, 2$ is also a polynomial. Because equation 6.8 holds for infinitely many points on knot $l$, we have $g^{(1)}_{\nu}(\boldsymbol{x}) = g^{(2)}_{\nu}(\boldsymbol{x})$, regardless the entries $d_i$'s of $\boldsymbol{d}$, which implies this lemma.
\end{proof}

\begin{thm}[Multivariate-spline construction]
Let $H$ be a set of $n-1$-dimensional hyperplanes partitioning $U = [0, 1]^n$ into a set $\mathcal{R}$ of regions. To any function $f(\boldsymbol{x}) \in C^{m}([0,1]^n)$, a spline $s(\boldsymbol{x}) \in \mathfrak{S}_n^m(H; \mathcal{R})$ can be constructed from a piecewise linear approximation to a directional-derivative hypersurface of $f(\boldsymbol{x})$, and $s(\boldsymbol{x})$ can approximate $f(\boldsymbol{x})$ with arbitrary precision, provided that the volume of each region of $\mathcal{R}$ is sufficiently small.
\end{thm}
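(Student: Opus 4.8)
The plan is to mimic the univariate construction of theorem 4, replacing ``piecewise linear approximation to $f^{(m-1)}$ followed by repeated integration'' with ``piecewise-affine approximation to the $m-1$th directional-derivative hypersurface followed by repeated integration along $\boldsymbol{d}$''. First I would fix a direction $\boldsymbol{d}$ not parallel to any hyperplane of $H$ and consider $g(\boldsymbol{x}) = \mathcal{F}^{(m-1, f, \boldsymbol{d})}(\boldsymbol{x})$. By lemma 8 this hypersurface is smooth with order $m-(m-1)=1$, hence continuous, which is exactly the regularity needed for a controlled piecewise-linear approximation. On each region $R_i$ I would approximate $g$ by an $n-1$-dimensional hyperplane $L_i(\boldsymbol{x})$, namely the first-order multivariate Taylor expansion of $g$ at a reference point $\boldsymbol{x}_{i0}$ of $R_i$, so that $g(\boldsymbol{x}) = L_i(\boldsymbol{x}) + o(\lVert \boldsymbol{x}-\boldsymbol{x}_{i0}\rVert)$.

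Next I would recover the polynomial piece $s_i(\boldsymbol{x})$ on $R_i$ by integrating $L_i$ along $\boldsymbol{d}$ a total of $m-1$ times. Lemma 9 guarantees that an affine $L_i$ is precisely the $m-1$th directional-derivative hypersurface of some degree-$m$ polynomial, so this integration is consistent and yields a genuine $n$-variate polynomial $s_i \in \mathcal{P}_{m}$ rather than a mere line-by-line antiderivative; the $m-1$ integration constants (the lower-order-in-$\boldsymbol{d}$ parts) are fixed by matching the directional derivatives $D_{\boldsymbol{d}}^{k} f$ for $k=0,1,\dots,m-2$ of the true function $f$ at the boundary point where each $\boldsymbol{d}$-line enters $R_i$. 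Since $f$ is globally smooth of order $m$ (proposition 6), anchoring every piece to $f$'s own derivatives keeps the constants mutually compatible. Collecting the $s_i$'s gives the candidate spline $s(\boldsymbol{x})$.

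To show $s \in \mathfrak{S}_n^m(H,\mathcal{R})$ I would verify the smooth-continuity requirement $s_i \overset{m-1}{\frown} s_j$ at each shared knot. Arranging the piecewise-affine family $\{L_i\}$ to be continuous across region boundaries forces the $m-1$th $\boldsymbol{d}$-directional derivatives of adjacent pieces to agree on the knot, while anchoring the integration constants to $f$ forces the lower-order $\boldsymbol{d}$-directional derivatives to agree there as well; together these give smooth continuity of order $m-1$ in the single direction $\boldsymbol{d}$, and lemma 10 then promotes it to every direction, which is exactly the condition of definition 12. The error estimate then runs parallel to equation 3.11: on each $R_i$ one has $f(\boldsymbol{x}) = s_i(\boldsymbol{x}) + o(\lVert \boldsymbol{x}-\boldsymbol{x}_{i0}\rVert^{m})$, so with $\varepsilon = \max_i \sup_{R_i}\lvert s_i - f\rvert$,
\begin{equation*}
\lVert s(\boldsymbol{x}) - f(\boldsymbol{x})\rVert_2 = \Big(\sum_i \int_{R_i}(s_i-f)^2\, d\boldsymbol{x}\Big)^{1/2} \le \varepsilon\,\mathrm{vol}(U)^{1/2} = \varepsilon,
\end{equation*}
and $\varepsilon \to 0$ as the region volumes shrink.

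The step I expect to be the main obstacle is reconciling the two demands of the second paragraph: producing a piecewise-affine approximation that is simultaneously continuous across every $(n-1)$-dimensional interface and compatible with the $f$-anchored integration constants. Unlike the univariate case, where the intervals are linearly ordered and continuity propagates unambiguously from left to right, the regions here form a general adjacency graph that may contain cycles, so naively matching constants on one interface could clash with the matching imposed on another. I would resolve this by exploiting that a generic $\boldsymbol{d}$ induces a consistent sweep-order on the regions and that all matching data are read off from the single globally-defined function $f$, so no cyclic inconsistency can occur; reducing multi-directional smooth continuity to the one direction $\boldsymbol{d}$ via lemma 10 is precisely what makes this bookkeeping tractable.
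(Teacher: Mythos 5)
Your overall route---piecewise-linear approximation of $\mathcal{F}^{(m-1,f,\boldsymbol{d})}(\boldsymbol{x})$, repeated integration along $\boldsymbol{d}$, promotion of smooth continuity from the single direction $\boldsymbol{d}$ to all directions, and an error estimate parallel to equation 3.11---is the same as the paper's, which sweeps lines $\boldsymbol{x}_0+t\boldsymbol{d}$ from a hyperplane $l$ with $U \subset l^+$ and applies theorem 4 along each line. The genuine gap is in your treatment of the integration constants. You fix the constants of every piece $s_i$ by matching $D_{\boldsymbol{d}}^{k}f$, $k=0,\dots,m-2$, at the point where the $\boldsymbol{d}$-line enters $R_i$, and then claim this ``forces the lower-order $\boldsymbol{d}$-directional derivatives to agree'' at the shared knot. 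It does not. At the interface where the line leaves $R_i$ and enters $R_j$, the incoming piece satisfies $D_{\boldsymbol{d}}^{k}s_i = D_{\boldsymbol{d}}^{k}f + (\text{error of } L_i \text{ accumulated by integration across } R_i)$, while the outgoing piece satisfies $D_{\boldsymbol{d}}^{k}s_j = D_{\boldsymbol{d}}^{k}f$ exactly, since that point is $s_j$'s anchor. Adjacent pieces therefore agree only up to the approximation error, not exactly; but definitions 12 and 13 demand exact equality of the directional derivatives up to order $m-1$, so your $s(\boldsymbol{x})$ is not an element of $\mathfrak{S}_n^m(H,\mathcal{R})$, and the direction-independence lemma (the paper's lemma 9, your ``lemma 10'') cannot even be invoked, because its hypothesis is exact smooth continuity in the direction $\boldsymbol{d}$. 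Your closing ``no cyclic inconsistency'' argument conflates approximate consistency (all anchoring data being read from the one function $f$) with the exact consistency that membership in the spline space requires.

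The paper avoids this exactly as theorem 4 does in one dimension: only the first piece along each swept line is anchored to $f$; every subsequent piece has its constants determined by exact continuity matching with its predecessor at the crossing (the recurrence $s_{\nu}(x) = s_{\nu-1}(x)+\alpha_{\nu-1}(x-x_{\nu-1})_+^m$), so exact smooth continuity holds by construction, and the drift away from $f$ is controlled---not annihilated---in the error analysis. Relatedly, your per-region first-order Taylor expansions $L_i$ of $g$ are not continuous across interfaces, yet your argument needs the piecewise-affine family to be a continuous linear spline; the paper instead starts from a globally continuous $\mathscr{S}(\boldsymbol{x}) \in \mathfrak{S}_n^1(H,\mathcal{R})$ approximating $\mathcal{F}^{(m-1,f,\boldsymbol{d})}(\boldsymbol{x})$ (citing lemma 6 of Huang, 2024) and integrates that object. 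If you replace ``anchor every piece to $f$'' by ``anchor once per line, then propagate by exact matching,'' and replace the Taylor family $\{L_i\}$ by such a continuous linear spline, your proposal becomes the paper's proof.
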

\begin{proof}
The proof is based on the univariate case of theorem 4. By lemma 7, the directional-derivative hypersurface $\mathcal{F}^{(m-1, f, \boldsymbol{d})}(\boldsymbol{x})$  of equation 6.7 with respect to any direction $\boldsymbol{d}$ is smooth. Then a continuous linear spline $\mathscr{S}(\boldsymbol{x}) \in \mathfrak{S}_n^1(H, \mathcal{R})$ approximating $\mathcal{F}^{(m-1, f, \boldsymbol{d})}(\boldsymbol{x})$ can be constructed (lemma 6 of \citet*{Huang2024}) and the approximation error could be arbitrarily small, provided that the volume of each region of $\mathcal{R}$ is small enough; we use the formula
\begin{equation}
\mathscr{S}(\boldsymbol{x}) \approx \mathcal{F}^{(m-1, f, \boldsymbol{d})}(\boldsymbol{x})
\end{equation}
to represent this meaning.

We can find an $n-1$-dimensional hyperplane $l$ with $U \subset l^+$. To arbitrary point $\boldsymbol{x}_0 \in l$, the ray $\boldsymbol{x}_0 + t\boldsymbol{d}$ can run through $l^+$ as $\boldsymbol{x}_0$ and $t > 0$ both arbitrarily change, that is,
\begin{equation*}
l^+ = \{\boldsymbol{x}_0 + t\boldsymbol{d}: \boldsymbol{x}_0 \in l, t>0\},
\end{equation*}
which contains $U$ as a subset. To a fixed $\boldsymbol{x}_0$, write
\begin{equation*}
S_t := \{t: \big(U \cap (\boldsymbol{x}_0 + t\boldsymbol{d})\big) \ne \emptyset\},
\end{equation*}
whose minimum and maximum elements are denoted by $t_i$ and $t_a$, respectively. Let $\mathcal{L} = U \cap (\boldsymbol{x}_0 + t\boldsymbol{d})$ , which is a line segment; $\mathcal{L}$ passes though some regions of $U$ and a univariate linear spline $g(t)$ defined on $[t_i, t_a]$ can be derived from $\mathscr{S}(\boldsymbol{x})$ of equation 6.9, whose values are approximately the $m-1$th directional derivative of $f(\boldsymbol{x})$ with respect to $\boldsymbol{d}$.

Then we use a method analogous to theorem 4 to integrate $g(t)$ to obtain a smooth spline $G(t)$ approximating $f(\boldsymbol{x})$ on $\mathcal{L}$; and as $\boldsymbol{x}_0$ changes in $l$, $G(t)$ can form a multivariate smooth spline $s(\boldsymbol{x}) \in \mathfrak{S}_n^m(H; \mathcal{R})$ approximating $f(\boldsymbol{x})$. To this purpose, we first need a result that an integral of a directional-derivative hyperplane could be an algebraic hypersurface or a polynomial. We show an example of one region. On a certain region $r$ of $\mathcal{R}$, $g(t)$ could form an $n$-dimensional hyperplane $h(\boldsymbol{x})$ as $\boldsymbol{x}_0$ varies within the associated domain and is the linear function on $r$ approximating the directional-derivative of $f(\boldsymbol{x})$. Let $\mathscr{L}$ be one of the boundaries of $r$, on which the function values and the directional-derivative values of $f(\boldsymbol{x})$ or their approximations via polynomials are given; the two types of values are collectively called ``boundary condition'' in this proof. We want to obtain the expression of the integral of $h(\boldsymbol{x})$ along direction $\boldsymbol{d}$ under a boundary condition.

First see the case of $m=2$ and $n=2$ when $h(\boldsymbol{x})$ is a plane and $\mathscr{L}$ is a line segment. Let $\boldsymbol{y}_0 \in \mathscr{L}$ and write $\boldsymbol{y}_0=\boldsymbol{O}+t_1\boldsymbol{d}_1$, where $\boldsymbol{O}$ is an arbitrary fixed point of $\mathscr{L}$ and $\boldsymbol{d}_1$ is the direction of $\mathscr{L}$. Given a $\boldsymbol{y}_0$, we have
\begin{equation*}
h(\boldsymbol{x})=h(t)=at+b(\boldsymbol{y}_0),
\end{equation*}
where $b(\boldsymbol{y}_0)$ is the function value of $h(\boldsymbol{x})$ at $\boldsymbol{y}_0$. We can write $h(\boldsymbol{x})=at+b(t_1)=h(t,t_1)$, where $b(t_1)$ is a linear function of $t_1$. Thus, $h(t,t_1)=h(\boldsymbol{x})$ is a linear function of $t$ and $t_1$ and $\boldsymbol{t}=[t,t_1]^T$ together with origin point $\boldsymbol{O}$ can be regarded as a new coordinate system derived from an affine transformation of $\boldsymbol{x}$, namely $\boldsymbol{t}=A\boldsymbol{x}+\boldsymbol{O}$. Then we have
\begin{equation}
f(\boldsymbol{x}) = f(t, t_1) \approx p(t,t_1) = \int atdt + b(t_1)t + c(t_1),
\end{equation}
in which
\begin{equation}
c(t_1)= \int a_1t_1dt_1 + b_1t_1 + c_1,
\end{equation}
where the parameters $a_1$ and $b_1$ are from $h(\boldsymbol{O}+t_1\boldsymbol{d}_1)=a_1t_1+b_1$ and the constant $c_1=f(\boldsymbol{O})$ or $c_1 \approx f(\boldsymbol{O})$ when $c_1$ is of a polynomial approximation to $f(\boldsymbol{x})$; thus, $p(t, t_1)$ is a polynomial of variables $t$ and $t_1$. The algebraic expression $p(\boldsymbol{x})$ with respect to variable $\boldsymbol{x}$ can be obtained by the affine transformation from $\boldsymbol{x}$ to $\boldsymbol{t}$ as mentioned above.

The case of arbitrary $n \ge 3$ when $m=2$ is only different in the expressions $b(t_1)$ and $c(t_1)$ and they would be changed into $b(\boldsymbol{t}_r)$ and $c(\boldsymbol{t}_r)$, respectively, where $\boldsymbol{t}_r=[t_1, t_2, \dots, t_{n-1}]^T$. The term $b(\boldsymbol{t}_r)$ is a linear function of the variables in $\boldsymbol{t}_r$, while $c(\boldsymbol{t}_r)$ is a polynomial of degree 2 that can be obtained similarly to equation 6.10. The polynomial $p(\boldsymbol{x})$ can be constructed iteratively on the base of $n=2$ of equation 6.10. For instance, to $n=3$, we first use equation 6.10 to construct $c(\boldsymbol{t}_r)$, then integrate each $g(t)$ to form $p(\boldsymbol{t})$. Analogously, when $n=4$, $c(\boldsymbol{t}_r)$ is obtained by the method of $n=3$, after which the integral of $g(t)$ is operated on the remaining dimension. This process can be repeatedly done until the desired dimensionality is reached.

To $m \ge 3$, the difference lies in the number of integral operations. Based on the discussion above, iterative integrals as equation 6.10 always yield a polynomial, since the integrand is a polynomial of the previous step and the boundary condition newly introduced is also of a polynomial. Note that to a new integral or when $m$ is increased by one, a derivative value or function value of $f(\boldsymbol{x})$ should be used to determine the constant term of the associated polynomial, if necessary.

We now go back to the case of $m=2$ and $n=2$ to complete the spline construction. To produce a smooth spline $s(\boldsymbol{x}) \in \mathfrak{S}_2^2(H; \mathcal{R})$ on all the regions of $U$, use theorem 4 to integrate $g(t)$'s to yield smooth $G(t)$'s for all $\boldsymbol{x}_0 \in l$; to ensure the smooth continuity in the whole input space, the constant term of the first polynomial of each $G(t)$ (as $c_1$ of equation 6.11) should be determined by a polynomial approximating $f(\boldsymbol{x})$, except for one case that an initial value must be introduced from $f(\boldsymbol{x})$.

We can also regard the construction process as being composed of the steps, with each corresponding to a certain $h(\boldsymbol{x})$ that leads to different $h(t)$ and $c(t_1)$. In this view, the expression of each polynomial can be simultaneously obtained. The function value of $f(\boldsymbol{x})$ is used only once for determining the constant term of a polynomial, and other cases are through the polynomials approximating $f(\boldsymbol{x})$.

Generally, a spline $s(\boldsymbol{x})$ for $n \ge 3$ or $m \ge 3$ can also be iteratively constructed on the basis of $n=2$ and $m=2$, as the polynomial construction above. The constructed $s(\boldsymbol{x})$ is smooth with degree $m-1$ with respect to direction $\boldsymbol{d}$; and by lemma 9, $s(\boldsymbol{x})$ is also smooth at any other direction and so $s(\boldsymbol{x}) \in \mathfrak{S}_n^m(H; \mathcal{R})$.

To the approximation error, first note that by the construction method of $\mathscr{S}(\boldsymbol{x})$, $\varepsilon_1 = \max_{\boldsymbol{x}\in U}{|\mathscr{S}(\boldsymbol{x})-\mathcal{F}^{(m-1, f, \boldsymbol{d})}(\boldsymbol{x})|}$ can be arbitrarily small, and so is $\varepsilon_2=\max_{\boldsymbol{x}\in U}|s(\boldsymbol{x})-f(\boldsymbol{x})|$ by the proof of theorem 4. The final approximation error $\varepsilon=(\int_U(s(\boldsymbol{x})-f(\boldsymbol{x}))^2d\boldsymbol{x})^{1/2} < \varepsilon_2$. This completes the proof.
\end{proof}

\subsection{Relations between Adjacent Polynomial Pieces}
\begin{lem}
Let $p_1(\boldsymbol{x})$ and $p_2(\boldsymbol{x})$ be two $n$-variable polynomials of degree $m$ defined on two adjacent regions $R_1$ and $R_2$ of $\mathbb{R}^n$, respectively, with $\dim{\mathcal{L}} = n-1$, where $\mathcal{L} = R_1 \cap R_2$ is part of the $n-1$-dimensional hyperplane separating the two regions. Suppose that $p_1(\boldsymbol{x})$ and $p_2(\boldsymbol{x})$ are smoothly continuous at $\mathcal{L}$ with order $m-1$. Denote by $\boldsymbol{d}$ a direction that is not parallel to $\mathcal{L}$. Then the directional-derivative hyperplanes $h_1(\boldsymbol{x})$ and $h_2(\boldsymbol{x})$, also the $m-1$th directional-derivative hypersurfaces, of $p_1(\boldsymbol{x})$ and $p_2(\boldsymbol{x})$ with respect to $\boldsymbol{d}$ , respectively, are continuous at $\mathcal{L}$.
\end{lem}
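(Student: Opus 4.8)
The plan is to reduce the statement to a single application of the smooth-continuity hypothesis along the special direction $\boldsymbol{d}$, exploiting the fact that the $m-1$th directional-derivative hypersurface is, pointwise, nothing but the $m-1$th directional derivative regarded as a function of the base point. First I would recall from definition 12 that smooth continuity of $p_1(\boldsymbol{x})$ and $p_2(\boldsymbol{x})$ at $\mathcal{L}$ with order $m-1$ means $D_l^{k}p_1(\boldsymbol{x}_0) = D_l^{k}p_2(\boldsymbol{x}_0)$ for every $k = 0, 1, \dots, m-1$, every $\boldsymbol{x}_0 \in \mathcal{L}$, and every line $l$ through $\boldsymbol{x}_0$ with $l \not\subset \mathcal{L}$. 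The essential observation is that I am free to choose $l$; in particular I would take, for each $\boldsymbol{x}_0 \in \mathcal{L}$, the line $l_{\boldsymbol{x}_0}$ passing through $\boldsymbol{x}_0$ in the direction $\boldsymbol{d}$.

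Next I would verify that this choice of line is admissible in definition 12. Since $\boldsymbol{d}$ is assumed not parallel to $\mathcal{L}$, the line $l_{\boldsymbol{x}_0}$ meets the $n-1$-dimensional hyperplane containing $\mathcal{L}$ transversally, so $l_{\boldsymbol{x}_0} \cap \mathcal{L} = \{\boldsymbol{x}_0\} \ne \emptyset$ and $l_{\boldsymbol{x}_0} \not\subset \mathcal{L}$. Specializing the smooth-continuity equalities to $k = m-1$ and to these lines then yields $D_{\boldsymbol{d}}^{m-1}p_1(\boldsymbol{x}_0) = D_{\boldsymbol{d}}^{m-1}p_2(\boldsymbol{x}_0)$ for all $\boldsymbol{x}_0 \in \mathcal{L}$.

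Finally I would identify the two sides with the directional-derivative hypersurfaces. By equations 6.4 and 6.7, the value of $\mathcal{F}^{(m-1, p_i, \boldsymbol{d})}(\boldsymbol{x}_0)$ is exactly $D_{\boldsymbol{d}}^{m-1}p_i(\boldsymbol{x}_0)$, so $h_i(\boldsymbol{x}_0) = \mathcal{F}^{(m-1, p_i, \boldsymbol{d})}(\boldsymbol{x}_0)$ for $i = 1, 2$. The equality from the previous step therefore reads $h_1(\boldsymbol{x}_0) = h_2(\boldsymbol{x}_0)$ for every $\boldsymbol{x}_0 \in \mathcal{L}$, which is precisely the continuity of the two pieces across $\mathcal{L}$; that each $h_i$ is genuinely an $n-1$-dimensional hyperplane is already guaranteed by lemma 8.

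I do not expect a serious obstacle here, as the content is essentially bookkeeping. The one point requiring care is the admissibility check: the hypothesis that $\boldsymbol{d}$ is not parallel to $\mathcal{L}$ is exactly what lets me invoke definition 12 with $l$ in direction $\boldsymbol{d}$, since otherwise $l$ would lie inside $\mathcal{L}$ and the directional-derivative data along $\boldsymbol{d}$ would not be controlled by the hypothesis. A secondary subtlety worth a remark is the normalization convention for $D_l^{k}$ versus $D_{\boldsymbol{d}}^{k}$: choosing the direction vector of $l_{\boldsymbol{x}_0}$ to equal $\boldsymbol{d}$ itself, rather than merely a scalar multiple of it, makes the two operators literally coincide, so no stray scaling factor intervenes.
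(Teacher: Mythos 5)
Your proposal is correct and takes essentially the same route as the paper: the paper's entire proof of this lemma is the one-liner ``This conclusion is by lemma 8 and definition 12,'' and your argument is precisely the unpacking of that citation---definition 12 applied to lines in direction $\boldsymbol{d}$ (admissible exactly because $\boldsymbol{d}$ is not parallel to $\mathcal{L}$), specialized to $k = m-1$, with lemma 8 supplying the fact that each $h_i$ is an $n-1$-dimensional hyperplane. Your admissibility check and the remark on normalization of $D_l^k$ versus $D_{\boldsymbol{d}}^k$ are details the paper leaves implicit, but they introduce no deviation from its approach.
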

\begin{proof}
This conclusion is by lemma 8 and definition 12.
\end{proof}

\begin{thm}[Relations between polynomial pieces]
Given two $n$-variate polynomials $p_1(\boldsymbol{x})$ and $p_2(\boldsymbol{x})$ of order $m$, $p_2(\boldsymbol{x})$ is smoothly continuous with $p_1(\boldsymbol{x})$ at knot $\mathcal{L}$ by order $m-1$, if and only if
\begin{equation}
p_2(\boldsymbol{x}) = p_1(\boldsymbol{x}) + \lambda(\boldsymbol{w}^T\boldsymbol{x} + b)_+^m
\end{equation}
holds for $\boldsymbol{x} \in  \mathcal{L}^+$, where $\lambda$ is a constant and  $\boldsymbol{w}^T\boldsymbol{x} + b = 0$ is the equation of $\mathcal{L}$.
\end{thm}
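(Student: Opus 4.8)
The plan is to study the difference polynomial $q(\boldsymbol{x}) := p_2(\boldsymbol{x}) - p_1(\boldsymbol{x})$, which has degree at most $m$ since both $p_1$ and $p_2$ do. By definition 12, the assertion ``$p_2 \overset{m-1}{\frown} p_1$ at $\mathcal{L}$'' is equivalent to saying that for every line $l$ transversal to $\mathcal{L}$, with $\boldsymbol{x}_0 = l \cap \mathcal{L}$ and direction $\boldsymbol{d}$ (so that $\boldsymbol{w}^T\boldsymbol{d} \neq 0$), one has $D_l^{k} q(\boldsymbol{x}_0) = 0$ for $k = 0, 1, \dots, m-1$ while $D_l^{m} q(\boldsymbol{x}_0) \neq 0$. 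As $l$ ranges over all such lines, the point $\boldsymbol{x}_0$ sweeps out the whole knot $\mathcal{L}$, so the problem reduces to the purely algebraic claim that a polynomial $q$ of degree $\le m$ vanishing to transversal order $m-1$ along the hyperplane $\mathcal{L} = \{\boldsymbol{w}^T\boldsymbol{x}+b=0\}$ equals $\lambda(\boldsymbol{w}^T\boldsymbol{x}+b)^m$, and conversely. This is the multivariate generalization of the univariate recurrence formula (3.4).

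For the easy (``if'') direction, I would substitute $q = \lambda(\boldsymbol{w}^T\boldsymbol{x}+b)^m$ and differentiate along any direction $\boldsymbol{d}$, obtaining $D_{\boldsymbol{d}}^{k} q(\boldsymbol{x}) = \lambda\,\frac{m!}{(m-k)!}(\boldsymbol{w}^T\boldsymbol{x}+b)^{m-k}(\boldsymbol{w}^T\boldsymbol{d})^{k}$. On $\mathcal{L}$ the factor $(\boldsymbol{w}^T\boldsymbol{x}+b)^{m-k}$ annihilates this whenever $m-k \ge 1$, i.e. for $k \le m-1$, whereas for $k=m$ it reduces to $\lambda\,m!\,(\boldsymbol{w}^T\boldsymbol{d})^{m}$, which is nonzero precisely when $\lambda \neq 0$ and $l$ is transversal. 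Restricting to $\mathcal{L}^+$, where $\boldsymbol{w}^T\boldsymbol{x}+b \ge 0$ so that $(\boldsymbol{w}^T\boldsymbol{x}+b)^m = (\boldsymbol{w}^T\boldsymbol{x}+b)_+^m$, this recovers equation (6.12) and confirms smooth continuity of exactly order $m-1$.

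The substance lies in the ``only if'' direction. I would introduce the affine coordinate $u = \boldsymbol{w}^T\boldsymbol{x}+b$ together with $n-1$ coordinates tangent to $\mathcal{L}$, so that $\mathcal{L} = \{u=0\}$ and transversality reads $\boldsymbol{w}^T\boldsymbol{d} \neq 0$. The engine is a peeling argument. First, the case $k=0$ gives $q|_{\mathcal{L}} = 0$, so $q$ vanishes on the zero set of the linear form $u$; hence $u \mid q$ and we write $q = u\,q_1$. Proceeding inductively, suppose $q = u^{j} q_j$ with $j \le m-1$. Differentiating $j$ times along a transversal $\boldsymbol{d}$ and restricting to $u=0$, every term still carrying a factor of $u$ drops out, leaving only $D_{\boldsymbol{d}}^{j} q|_{\mathcal{L}} = j!\,(\boldsymbol{w}^T\boldsymbol{d})^{j}\,q_j|_{\mathcal{L}}$. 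The hypothesis that this vanishes, combined with $\boldsymbol{w}^T\boldsymbol{d}\neq 0$, forces $q_j|_{\mathcal{L}}=0$, whence $u\mid q_j$ and $q = u^{j+1}q_{j+1}$. Iterating up to $j=m$ yields $q = u^{m} q_m$; but $\deg q \le m$ forces $\deg q_m \le 0$, so $q_m$ is a constant $\lambda$, giving $q = \lambda(\boldsymbol{w}^T\boldsymbol{x}+b)^m$, and the strict inequality $D_l^{m} q \neq 0$ of definition 12 guarantees $\lambda \neq 0$.

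The main obstacle I anticipate is making the peeling argument airtight. I must (i) justify the divisibility step---that a polynomial vanishing on the zero set of the linear form $u$ is divisible by $u$---which I would handle either by the Nullstellensatz or, more elementarily, by Euclidean division in the variable $u$; and (ii) verify carefully that after writing $q = u^{j} q_j$, the only term of $D_{\boldsymbol{d}}^{j} q$ surviving restriction to $u=0$ is the one in which all $j$ derivatives act on the $u^{j}$ factor. The degree bound $\deg q \le m$, inherited from $p_1$ and $p_2$ both having degree $m$, is indispensable at the final step and is precisely where the hypothesis on the orders of the two polynomials is used.
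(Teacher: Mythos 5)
Your proof is correct, but your argument for the ``only if'' direction takes a genuinely different route from the paper's. The paper differentiates: it forms the $(m-1)$th directional-derivative hyperplanes $q_1,q_2$ of $p_1,p_2$ (its lemmas 8 and 10), invokes the ReLU-case result for continuous piecewise \emph{linear} functions (theorem 2 of Huang 2024) to get $q_2 = q_1 + \beta(\boldsymbol{w}^T\boldsymbol{x}+b)_+$, then integrates $m-1$ times along lines (the method of its theorem 4) to recover equation 6.12 with $\lambda = \beta/m!$, and finally needs a separate argument that $\lambda$ does not depend on the chosen direction $\boldsymbol{d}$. You instead work with the difference polynomial $q = p_2 - p_1$ and run a purely algebraic peeling argument: vanishing on the knot forces $u \mid q$ for the linear form $u = \boldsymbol{w}^T\boldsymbol{x}+b$, the Leibniz rule plus the transversal derivative conditions force $u^{j+1}\mid q$ inductively, and the degree bound $\deg q \le m$ caps the process at $q = \lambda u^m$. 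What each buys: the paper's route fits its program of reducing smooth-activation networks to the ReLU case and endows $\lambda$ with the geometric meaning inherited from the linear setting (see the remark following the theorem), but it leans on an external result, an integration step, and a separate uniqueness argument; your route is self-contained, needs only Euclidean division by a linear form, and gets existence, uniqueness, and direction-independence of $\lambda$ in one stroke, since $\lambda$ is just the leading coefficient of $q$ in the variable $u$. One small point to make airtight: definition 12 gives vanishing of the derivatives only on the knot $\mathcal{L} = R_1 \cap R_2$, which may be a proper $(n-1)$-dimensional portion of the hyperplane $\{u=0\}$, so before each divisibility step you should note that a polynomial vanishing on an $(n-1)$-dimensional piece of a hyperplane vanishes on the whole hyperplane; the same identification is implicit in the paper's proof, so this is a shared (and easily filled) convention rather than a gap.
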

\begin{proof}
We first prove that equation 6.12 implies the smoothness between $p_1(\boldsymbol{x})$ and $p_2(\boldsymbol{x})$. By equation 6.12, each line $l = \boldsymbol{x}_0 + t\boldsymbol{d}$ penetrating $\mathcal{L}$ yields two univariate polynomials $p_1(l)$ and $p_2(l)$ satisfying $p_2(l) = p_1(l) + \lambda(\boldsymbol{w}^Tl + b)_+^m$. If $\boldsymbol{x}_0 = l \cap \mathcal{L}$, the previous formula can be simplified into $p_2(t) = p_1(t) + \lambda't^m_+$, where $\lambda' = \lambda \boldsymbol{w}^T\boldsymbol{d}$, which means that $p_2(t)$ and $p_2(t)$ are smoothly continuous at $\boldsymbol{x}_0$ by order $m-1$, regardless of direction $\boldsymbol{d}$. Note that different direction $\boldsymbol{d}$ would result in distinct $\lambda'$, but the set of all $\boldsymbol{d}$ shares the same parameter $\lambda$; this fact would be helpful to understand the converse conclusion of this theorem.

The converse is also true. By \citet*{Huang2024}'s theorem 2, if $\mathscr{P}_2(\boldsymbol{x})$ and $\mathscr{P}_1(\boldsymbol{x})$ are two linear functions and continuous at $\mathcal{L}$, then
\begin{equation}
\mathscr{P}_2(\boldsymbol{x}) = \mathscr{P}_1(\boldsymbol{x}) + \alpha(\boldsymbol{w}^T\boldsymbol{x} + b)_+
\end{equation}
for $\boldsymbol{x} \in  \mathcal{L}^+$, where $\alpha$ is a constant. To equation 6.12, let $q_1(\boldsymbol{x})$ and $q_2(\boldsymbol{x})$ be the $m-1$th directional-derivative hyperplanes of $p_1(\boldsymbol{x})$ and $p_2(\boldsymbol{x})$ for some direction $\boldsymbol{d}$, respectively. By lemmas 8 and 10, both $q_1(\boldsymbol{x})$ and $q_2(\boldsymbol{x})$ are $n-1$-dimensional hyperplanes and are continuous at $\mathcal{L}$. Then by equation 6.13, we have
\begin{equation}
q_2(\boldsymbol{x}) = q_1(\boldsymbol{x}) + \beta(\boldsymbol{w}^T\boldsymbol{x} + b)_+
\end{equation}
for $\boldsymbol{x} \in  \mathcal{L}^+$. When $\boldsymbol{x} =  \boldsymbol{x}_0 + t\boldsymbol{d}$ and $\boldsymbol{x}_0 \in \mathcal{L}$, equation 6.14 becomes
\begin{equation}
q_2(t) = q_1(t) + \gamma t_+,
\end{equation}
where $\gamma=\beta \boldsymbol{w}^T\boldsymbol{d}$, in which $q_2(t)$ and $q_1(t)$ comprise a continuous linear spline.

Then by the method of theorem 4, integrate equation 6.15 repeatedly by $m-1$ times and obtain
\begin{equation}
p_2(\boldsymbol{x})=p_2(t) = p_1(t) + \frac{\gamma}{m!}t^{m}_+ = p_1(\boldsymbol{x}) + \frac{\beta}{m!(\boldsymbol{w}^T\boldsymbol{d})^{m-1}}(\boldsymbol{w}^T\boldsymbol{x}+b)^{m}_+
\end{equation}
for $\boldsymbol{x} = \boldsymbol{x}_0 + t\boldsymbol{d}$, which is equivalent to equation 6.12 on line $l=\boldsymbol{x}_0 + t\boldsymbol{d}$, with
\begin{equation}
\lambda = \frac{\beta}{m!(\boldsymbol{w}^T\boldsymbol{d})^{m-1}}.
\end{equation}

Note that equation 6.16 holds for arbitrary $\boldsymbol{x}_0 \in \mathcal{L}^+$ and thus we have proved equation 6.12 for a certain direction $\boldsymbol{d}$. Next step is to prove that the obtained parameter $\lambda$ is invariant with respect to $\boldsymbol{d}$. The parameter $\lambda$ is unique, because different $\lambda$ leads to different $p_2(\boldsymbol{x})$ and it's an one-to-one correspondence when $\boldsymbol{w}^T\boldsymbol{x} + b$ is fixed. This completes the proof.
\end{proof}

\begin{rmk-6}
The geometric meaning of $\lambda$ in equation 6.12 can be derived from $\beta$ of equation 6.14, which was explained in theorem 2 of \citet*{Huang2024}.
\end{rmk-6}

\begin{rmk-6}
In the two-dimensional case, \citet*{Chui1983} gave a more general result than equation 6.12, using Bezout's result (cf. \citet*{Walker1950}) of algebraic geometry. Our proof is different in two aspects. First, the result is regardless of input dimensionality and includes the two-dimensional input as a special case. Second, parameter $\lambda$ has a distinct geometric meaning derived from the construction process.
\end{rmk-6}

\subsection{Zero-Part Error of Units}
\begin{lem}
Denote by $\mathcal{U}$ a generalized sigmoid unit whose activation function is $\phi(\boldsymbol{x}) = \sigma(\boldsymbol{w}^T\boldsymbol{x} + b)$, and by $l_0$ the hyperplane $\boldsymbol{w}^T\boldsymbol{x} + b = 0$. Let $\mathcal{L}$ be a knot $\boldsymbol{w}^T\boldsymbol{x} + b_k = 0$ with $b_k < b$, which divides $U = [0, 1]^n$ into two regions $R_1 = \mathcal{L}^0 \cap U$ and $R_1'=\mathcal{L}^+ \cap U$. Suppose that $\phi(\boldsymbol{x}) < \varepsilon$ for $\boldsymbol{x} \in \mathcal{L}^0$, where $\varepsilon \le \sigma(0)$. Then by scaling $\boldsymbol{w}$ and adjusting $b$, $\phi(\boldsymbol{x})$ can be modified to
\begin{equation}
\phi_{a}(\boldsymbol{x}) = \sigma({\boldsymbol{w}'}^T\boldsymbol{x} + b')=\sigma(\rho(\boldsymbol{w}^T\boldsymbol{x} + b) + \gamma)
\end{equation}
such that the following conclusions hold.
\begin{itemize}
\item[(a)] To any point $\boldsymbol{x} \in \mathcal{L}^0$
\begin{equation}
\lim_{\rho(\gamma) \to +\infty}\phi_{a}(\boldsymbol{x}) = 0,
\end{equation}
where $\rho(\gamma)$ means the change of parameter $\rho$ followed by bias alteration $\gamma$ of equation 6.18, like equation 3.18 of the univariate case.

\item[(b)] Let $l_{k+1}$ be a knot whose equation is $\boldsymbol{w}^T\boldsymbol{x} + b_{k+1} = 0$, with $b_{k+1} > b_k$. If $b_{k+1} - b_k$ is sufficiently small,
\begin{equation}
    \phi_{a}(\boldsymbol{x}) \approx c(\boldsymbol{w}^T\boldsymbol{x} + b_k)_+^m
\end{equation}
for $\boldsymbol{x} \in R_2 = \mathcal{L}^+ \cap \l_{k+1}^0$, where ``$\approx$'' means that $c(\boldsymbol{w}^T\boldsymbol{x} + b_k)_+^m$ could approximate $\phi_a(\boldsymbol{x})$ with arbitrary precision on $R_2$.
\item[(c)] To the parameter $c$ of equation 6.20,
\begin{equation}
\lim_{\rho(\gamma) \to +\infty}c = +\infty,
\end{equation}
and $c$ monotonically increases with $\rho(\gamma)$.
\end{itemize}
\end{lem}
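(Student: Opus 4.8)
The plan is to reduce the entire statement to the univariate Lemma 3 by exploiting that $\phi(\boldsymbol{x}) = \sigma(\boldsymbol{w}^T\boldsymbol{x} + b)$ and its modification $\phi_a(\boldsymbol{x}) = \sigma(\rho(\boldsymbol{w}^T\boldsymbol{x}+b)+\gamma)$ depend on $\boldsymbol{x}$ only through the scalar $s = \boldsymbol{w}^T\boldsymbol{x}$, and that the three hyperplanes $l_0$, $\mathcal{L}$ and $l_{k+1}$ share the common normal $\boldsymbol{w}$, hence are parallel. First I would fix an arbitrary line $\ell = \boldsymbol{x}_0 + t\boldsymbol{d}$ with $\boldsymbol{w}^T\boldsymbol{d} \ne 0$ that crosses the slab, and observe that the restriction $\phi_a|_{\ell}(t) = \sigma\big(\rho((\boldsymbol{w}^T\boldsymbol{d})t + \boldsymbol{w}^T\boldsymbol{x}_0 + b) + \gamma\big)$ is exactly a univariate generalized sigmoid of the form treated in Lemma 3, with the scaling $\rho$ and the bias shift $\gamma$ carried over verbatim; the intersections $\ell \cap \mathcal{L}$ and $\ell \cap l_{k+1}$ play the roles of the knots $x_k$ and $x_{k+1}$ there.

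Conclusions (a) and (c) then follow almost immediately from this reduction. For (a), on the bounded set $\mathcal{L}^0 \cap U$ the argument $\boldsymbol{w}^T\boldsymbol{x}+b$ is bounded above by its value on $\mathcal{L}$; fixing $\phi_a$ on $\mathcal{L}$ by adjusting $\gamma$, exactly as in the bias-fixing construction in the proof of Lemma 3 (equations 3.21--3.26), drives $\rho(\boldsymbol{w}^T\boldsymbol{x}+b)+\gamma \to -\infty$ uniformly over $\mathcal{L}^0 \cap U$, so $\phi_a \to 0$; this is equation 6.19 and is the multivariate reading of equations 3.18--3.19. For (c), conclusion II (equation 3.20) of Lemma 3 gives that the coefficient of the univariate ridge $(t - t_k)_+^m$ along $\ell$ diverges to $+\infty$ and increases monotonically with $\rho(\gamma)$; since on $\ell$ one has $\boldsymbol{w}^T\boldsymbol{x}+b_k = (\boldsymbol{w}^T\boldsymbol{d})(t - t_k)$, dividing out the constant factor $(\boldsymbol{w}^T\boldsymbol{d})^m$ transfers both divergence and monotonicity to $c$, which is equation 6.21.

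The main work, and the principal obstacle, is conclusion (b), since Lemma 3 is intrinsically one-dimensional whereas the slab $R_2 = \mathcal{L}^+ \cap l_{k+1}^0$ has full $(n-1)$-dimensional extent along the hyperplane directions. Here I would invoke Theorem 12 together with (a): construct by Theorem 11 a spline $s(\boldsymbol{x}) \in \mathfrak{S}^m_n(H,\mathcal{R})$ approximating $\phi_a$, note that its polynomial piece on the region of $\mathcal{L}^0$ adjacent to $\mathcal{L}$ tends to $0$ by (a), and apply Theorem 12 to conclude that the piece on $R_2$ differs from this (approximately zero) neighbour only by a single ridge term $\lambda(\boldsymbol{w}^T\boldsymbol{x}+b_k)_+^m$, whence $s(\boldsymbol{x}) \approx c(\boldsymbol{w}^T\boldsymbol{x}+b_k)_+^m$ on $R_2$ with $c = \lambda$. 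The delicate point is that the line-by-line univariate coefficients must glue into one and the same ridge, valid uniformly across the slab: the translation invariance of $\phi_a$ transverse to $\boldsymbol{w}$ makes the coefficient the same on parallel lines, but the guarantee that the resulting constant $c$ is genuinely independent of the penetrating direction $\boldsymbol{d}$ is precisely the uniqueness of $\lambda$ furnished by Theorem 12. It is the interplay of these two ingredients --- the univariate coefficient behaviour from Lemma 3 and the direction-independent ridge form from Theorem 12 --- that I expect to carry the proof to completion.
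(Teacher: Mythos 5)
Your proposal is correct, and for conclusions (a) and (c) it coincides with the paper's own proof: the paper likewise restricts $\phi_a$ to lines $\boldsymbol{x}_0 + t\boldsymbol{d}$ (taking $\boldsymbol{x}_0 \in l_0$ so the restriction is exactly $\sigma(\rho w t + \gamma)$ with $w = \boldsymbol{w}^T\boldsymbol{d}$), identifies $t_k = (b-b_k)/w$ and $t_{k+1}$ as the univariate knots, and imports conclusions \Romannum{1} and \Romannum{2} of Lemma 3. Where you genuinely diverge is conclusion (b). The paper does not invoke Theorems 11 and 12 there: it observes that the line-wise construction (the coefficient $c'$, the knots, the limits) is independent of the choice of $\boldsymbol{x}_0 \in l_0$, so the identical univariate ridges $c'(t-t_k)_+^m$ on the parallel lines are assembled by hand into the single multivariate ridge $c(\boldsymbol{w}^T\boldsymbol{x}+b_k)_+^m$, and it settles direction-independence of $c$ by the uniqueness of the polynomial approximating $\phi_a$ on $R_2$ under arbitrarily small error. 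You instead build a spline approximating $\phi_a$ via Theorem 11, use (a) to make the piece adjacent to $\mathcal{L}$ on the $\mathcal{L}^0$ side vanish in the limit, and read both the ridge form and the uniqueness of its coefficient straight out of Theorem 12. Both routes are sound --- Theorems 11 and 12 precede this lemma in the paper, so there is no circularity --- and yours packages the gluing and direction-independence more systematically, at the cost of presupposing the full smooth-spline machinery where the paper's argument is elementary and self-contained. Two smaller remarks: your transfer factor $(\boldsymbol{w}^T\boldsymbol{d})^m$ relating $c'$ to $c$ is the correct one (the paper's equation 6.29 states $c = c'/w$, apparently a slip for $c = c'/w^m$); and your conclusion (c) logically presupposes (b), since the constant $c$ must exist before its divergence and monotonicity can be asserted, so a final write-up should order the argument (a), (b), (c) rather than dispatching (c) ``immediately.''
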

\begin{proof}
The proof is by reducing the multivariate input to the univariate case of lemma 3. Given a line $h = \boldsymbol{x}_0 + t\boldsymbol{d}$ with $\boldsymbol{x}_0 \in l_0$, the activation function $\phi(\boldsymbol{x})$ is reduced to
\begin{equation}
\phi(t) = \sigma(\boldsymbol{w}^T(\boldsymbol{x}_0 + t\boldsymbol{d}) + b) = \sigma(wt),
\end{equation}
where $w = \boldsymbol{w}^T\boldsymbol{d}$, which can be regarded as a univariate activation function of lemma 3. Correspondingly, $h \cap \mathcal{L}$ is a point knot
\begin{equation}
t_k = (b-b_k)/w
\end{equation}
obtained by $\boldsymbol{w}^T(\boldsymbol{x}_0+t\boldsymbol{d})+b_k = 0$, as the counterpart $x_k$ of lemma 3. Similarly, $t_{k+1} = (b-b_{k+1})/w$ determines the point $h \cap l_{k+1}$. Then we use the method of lemma 3 to change $\sigma(wt)$ into
\begin{equation}
\phi_{a}(t) = \sigma(\rho wt + \gamma),
\end{equation}
such that
\begin{equation}
\lim_{\rho(\gamma) \to +\infty}\phi_{a}(t) = 0
\end{equation}
for $t \le t_k$,
\begin{equation}
    \phi_{a}(t) \approx c'(t-t_k)_+^m
\end{equation}
for $t \in (t_k, t_{k+1}]$, and
\begin{equation}
\lim_{\rho(\gamma) \to +\infty}c' = +\infty.
\end{equation}

When $\boldsymbol{x} = \boldsymbol{x}_0 + t\boldsymbol{d}$, equations 6.24 and 6.25 can be expressed as
equations 6.18 and 6.19, respectively. Since $\boldsymbol{w}^T(\boldsymbol{x}_0 + t\boldsymbol{d}) + b_k = w(t-t_k)$, equation 6.26 is equivalent to
\begin{equation}
    \phi_{a}(\boldsymbol{x}) \approx c(\boldsymbol{w}^T\boldsymbol{x} + b_k)_+^m
\end{equation}
for $\boldsymbol{x} = \boldsymbol{x}_0 + t\boldsymbol{d}$ with $t \in (t_k, t_{k+1}]$, where
\begin{equation}
c = c' / w;
\end{equation}
thus,
\begin{equation}
\lim_{\rho(\gamma) \to +\infty}c = +\infty
\end{equation}
by equation 6.27. So we proved this lemma over the line $\boldsymbol{x} = \boldsymbol{x}_0 + t\boldsymbol{d}$.

Note that equations from 6.24 to 6.27 as well as equation 6.29 are all independent of the selection of $\boldsymbol{x}_0$. When $\boldsymbol{x}_0$ runs through $l_0$ and $t$ arbitrarily changes, $p_{k}(\boldsymbol{x}) = c(\boldsymbol{w}^T\boldsymbol{x} + b_k)_+^m$ of equation 6.28 forms an $n$-variate polynomial of degree $m$, which is composed of identical $q_{k}(t) = c'(t-t_k)^m_+$ of equation 6.26 at different $\boldsymbol{x}_0$. The formed $p_{k}(\boldsymbol{x})$ can approximate $\phi_{a}(\boldsymbol{x})$ as precisely as possible on $R_2$ according to equation 6.26, and the approximation-error analysis is similar to the proof of theorem 11. To arbitrary another direction $\boldsymbol{d}'$, the above process can also yield a multivariate polynomial $p_{k}'(\boldsymbol{x})$ and it would be equal to $p_{k}(\boldsymbol{x})$, due to the uniqueness of the polynomial approximating $\phi_{a}(\boldsymbol{x})$ on $R_2$ under arbitrary precision; so the parameter $c$ is unique with respect to different $\boldsymbol{d}$. This completes the proof.

\end{proof}

\begin{thm}
Notations being from lemma 11, suppose that on the two regions $R_1$ and $R_2$, a smooth spline $g(\boldsymbol{x})$ is defined as: $g(\boldsymbol{x}) = 0$ for $\boldsymbol{x} \in R_1$ and $g(\boldsymbol{x}) = a\sigma(\boldsymbol{w}^T\boldsymbol{x} + b_k)^m_+$. Then a two-layer neural network $\mathfrak{N}$ with only one generalized sigmoidal unit can approximate $g(\boldsymbol{x})$ with arbitrary precision.
\end{thm}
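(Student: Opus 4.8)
The plan is to mirror the univariate argument of theorem 5, with lemma 11 playing the role that lemma 3 did there. First I would take the single hidden unit to carry the reshaped activation $\phi_a(\boldsymbol{x}) = \sigma(\rho(\boldsymbol{w}^T\boldsymbol{x}+b)+\gamma)$ of equation 6.18, whose vanishing behaviour on $R_1 = \mathcal{L}^0 \cap U$ is governed by part (a) of lemma 11 and whose leading polynomial behaviour on $R_2$ is $\phi_a(\boldsymbol{x}) \approx c(\boldsymbol{w}^T\boldsymbol{x}+b_k)_+^m$ by part (b). I would then fix the output weight $\lambda$ by the single scalar relation $\lambda c = a$, i.e. $\lambda = a/c$, the exact analogue of equation 3.53; this immediately yields $\lambda\phi_a(\boldsymbol{x}) \approx a(\boldsymbol{w}^T\boldsymbol{x}+b_k)_+^m = g(\boldsymbol{x})$ on $R_2$ to arbitrary precision, with the residual controlled by shrinking $b_{k+1}-b_k$ so that the approximation of part (b) is as tight as desired.

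On $R_1$ the target is $g(\boldsymbol{x}) = 0$, so the contribution to the error is $(\int_{R_1}(\lambda\phi_a(\boldsymbol{x}))^2 d\boldsymbol{x})^{1/2}$. The delicate point, exactly as in the univariate case, is that pushing $\rho(\gamma)$ large to drive $\phi_a$ toward zero on $R_1$ could be defeated if the output weight $\lambda$ grew. This is precisely why conclusions (b) and (c) of lemma 11 were established: by part (c), $c \to +\infty$ and increases monotonically with $\rho(\gamma)$, so $\lambda = a/c \to 0$; combined with $\phi_a(\boldsymbol{x}) \to 0$ from part (a), the product $|\lambda\phi_a(\boldsymbol{x})|$ is driven to zero on $R_1$, and hence so is its $L^2$ norm there.

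Putting the two regions together, I would bound the total error over $R_1 \cup R_2$ by the triangle inequality as the root-sum-of-squares of the $R_1$-contribution and the $R_2$-contribution, each of which is made smaller than any prescribed tolerance by taking $\rho(\gamma)$ sufficiently large and $|b_{k+1}-b_k|$ sufficiently small. This delivers $(\int_{R_1\cup R_2}(\lambda\phi_a(\boldsymbol{x}) - g(\boldsymbol{x}))^2 d\boldsymbol{x})^{1/2} < \epsilon$ for any $\epsilon > 0$, with a single generalized sigmoidal unit.

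I expect the main obstacle to be the simultaneous control on $R_1$: one must confirm that the lone scaling knob $\rho(\gamma)$ can both suppress $\phi_a$ on $R_1$ and force $\lambda = a/c$ small without the two demands conflicting. The resolution rests entirely on the monotone divergence of $c$ proved in lemma 11(c), which guarantees that the two effects reinforce one another rather than cancel. Since lemma 11 has already reduced the multivariate geometry to the univariate situation along lines $\boldsymbol{x}_0 + t\boldsymbol{d}$ and has shown $c$ to be independent of direction, no genuinely new higher-dimensional difficulty arises, and the remaining argument is essentially a transcription of the proof of theorem 5.
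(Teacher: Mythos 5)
Your proposal is correct and is essentially the paper's own argument: the paper proves this theorem by simply invoking the proof of theorem 5, and your transcription---setting $\lambda = a/c$ as the analogue of equation 3.53, using lemma 11(a)--(b) for the behaviour on $R_1$ and $R_2$, and invoking the monotone divergence of $c$ in lemma 11(c) to guarantee that $|\lambda\phi_a|$ shrinks on $R_1$ rather than blowing up---is exactly the intended reduction. You also correctly identified why conclusion (c) of lemma 11 exists at all, which is the only subtle point in the argument.
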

\begin{proof}
The proof is similar to that of theorem 5.
\end{proof}

\begin{dfn}[Multivariate global (local) unit]
Given $f(\boldsymbol{x}) \in C^m([0, 1]^n)$, suppose that it is approximated by a two-layer neural network $\mathfrak{N}$ with generalized sigmoidal units, that is,
\begin{equation}
\lVert f(\boldsymbol{x}) - \sum_i\alpha_i\phi_i(\boldsymbol{x}) \rVert_2 < \varepsilon,
\end{equation}
where $\phi_i(\boldsymbol{x}) = \sigma(\boldsymbol{w}^T_i\boldsymbol{x} + b_i)$ is the activation function of the $i$th unit $u_i$ of the hidden layer of $\mathfrak{N}$.

Each unit $u_i$ determines an $n-1$-dimensional hyperplane $\boldsymbol{w}^T_i\boldsymbol{x} + b_i = 0$, denoted by $l_i$. To $u_i$, let $l_i'$ be a hyperplane obtained by reducing the parameter $b_i$ of $l_i$, satisfying $l_i' \parallel  l_i$ and $l_i' \subset l_i^0$. Suppose that the activation function $\phi(\boldsymbol{x})$ of $u_i$ is truncated at $l_i'$ in terms of $\phi(\boldsymbol{x}) = 0$ on $U \cap l_i'^0$; after that, if inequality 6.31 still holds, we call $l_i'$ a potential zero-error hyperplane with respect to $\varepsilon$. Under the truncation operation, to satisfy inequality 6.31, the supremum of $b_i$ leads to a hyperplane $z_i$, which is called a \textsl{zero-error hyperplane} of $u_i$ (or $\phi_i(\boldsymbol{x})$) with respect to $\varepsilon$.

To make the approximation error $\varepsilon$ of inequality 6.31 smaller by adjusting the parameters of the units, if
\begin{equation}
\lim_{\varepsilon \to 0}\dim \{z_i(\varepsilon)^0 \cap U\} = n,
\end{equation}
we say that $u_i$ is a \textsl{local unit}; otherwise, it is a global unit.
\end{dfn}

\subsection{Multiple Expressions of Polynomial Pieces}
Hereafter, to simplify the descriptions, given a partition of $U = [0, 1]^n$ by a set $H = \{l_i : 1\le i\le\Theta\}$ of $n-1$-dimensional hyperplanes, if we say that the units of the hidden layer of a two-layer neural network $\mathfrak{N}$ with generalized sigmoidal units are derived from $H$, it means that to each local unit $u_{\nu}$, $l_{\nu}$ is the zero-error hyperplane of $u_{\nu}$; and based on the results of sections 6.3, the activation function of $u_{\nu}$ on $l_{\nu}^0$ can be regarded as zero. To global unit $u_{\mu}$, $l_{\mu}$ is the original meaning.

The following results not only introduce a parameter-setting technique, but also involve a local property of parameters that is useful in understanding training solutions. In this section, ``$f(\boldsymbol{x})\approx g(\boldsymbol{x})$'' means that $f(\boldsymbol{x})$ can approximate $g(\boldsymbol{x})$ with arbitrary precision.

\begin{figure}[t!] 
\captionsetup{justification=centering}
\centering
\includegraphics[width=3.0in, trim = {3.0cm 2.5cm 2.0cm 1.5cm}, clip]{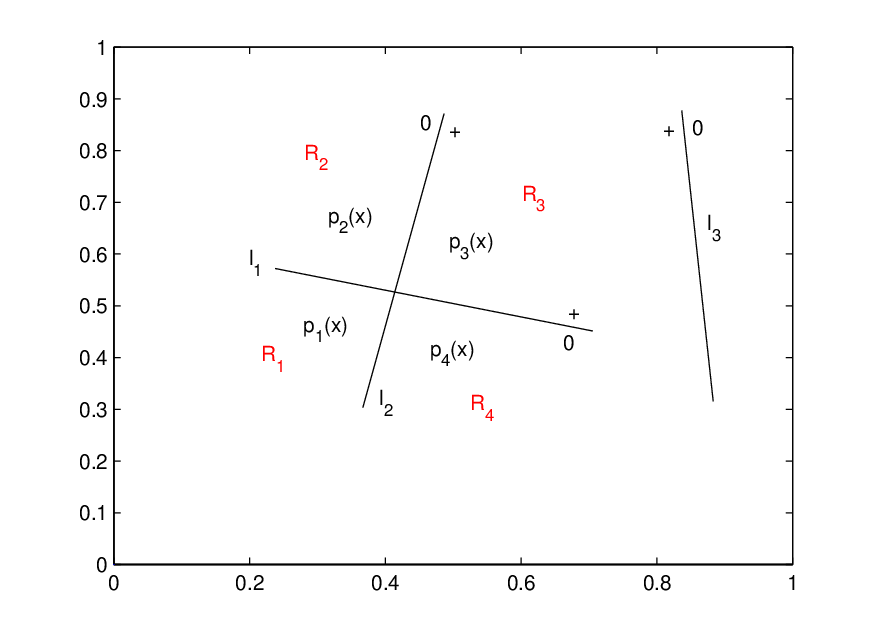}
\caption{Multiple expressions of a polynomial piece.}
\label{Fig.4}
\end{figure}

\begin{lem}
Let $H \subset \mathbb{R}^n$ be a set of $n-1$-dimensional hyperplanes and $\mathcal{R}$ be the regions formed by $H$. Denote by $R_{\nu}$'s for $1 \le \nu \le 4$ the four regions of $\mathcal{R}$, which are separated by $l_1$ and $l_2$ with $l_1 \cap l_2 \ne \emptyset$ and $l_1, l_2 \in H$.
Suppose that $R_i$ is adjacent to $R_{i+1}$ with $R_5 := R_1$, and that $R_1 \subset l_1^0 \cap l_2^0$, $R_2 \subset l_1^+ \cap l_2^0$, $R_3 \subset l_1^+ \cap l_2^+$, $R_4 \subset l_1^0 \cap l_2^+$ (e.g., Figure \ref{Fig.4}).

Suppose that a spline $s(\boldsymbol{x}) \in \mathfrak{S}_n^m(H, \mathcal{R})$ is approximately realized by a two-layer neural network $\mathfrak{N}$ with generalized sigmoidal units, whose hidden-layer units $u_j$'s for $j = 1,2,\dots,\Theta$ are from the hyperplanes of $H$, and that $u_1$ and $u_2$ are local unit, corresponding to $l_1$ and $l_2$ above, respectively. Let $\phi_j(\boldsymbol{x})$ be the activation function of $u_j$, $\lambda_j$ its output weight and $\boldsymbol{w}_j^T\boldsymbol{x} + b_j = 0$ the equation of $l_i$. Denote by $p_i(\boldsymbol{x})$ the polynomial of $s(\boldsymbol{x})$ on $R_i$. Each activation function  $\phi_j(\boldsymbol{x})$ is approximated by a spline $s_{j}(\boldsymbol{x}) \in \mathfrak{S}_n^m(H, \mathcal{R})$ and let $p_{ji}(\boldsymbol{x})$ be the polynomial of $s_j(\boldsymbol{x})$ on $R_i$.

Then $p_3(\boldsymbol{x})$ has two different expressions including
\begin{equation}
p_3(\boldsymbol{x}) \approx p_2(\boldsymbol{x}) + (\lambda_1\alpha_{13}+\lambda_3\alpha_{33})(\boldsymbol{w}_2^T\boldsymbol{x} + b_2)^m_+ + \lambda_2\phi_2(\boldsymbol{x})
\end{equation}
and
\begin{equation}
p_3(\boldsymbol{x}) \approx p_4(\boldsymbol{x}) + (\lambda_2\alpha_{23}+\lambda_3\alpha_{33})(\boldsymbol{w}_1^T\boldsymbol{x} + b_1)^m_+ + \lambda_1\phi_1(\boldsymbol{x}),
\end{equation}
where parameters $\alpha_{\nu\mu}$'s for $1\le \nu,\mu \le 3$ are derived from splines $s_j(\boldsymbol{x})$'s and are the parameter $\lambda$ of equation 6.12 (see the proof below). Equation 6.33 can also be written as
\begin{equation}
p_2(\boldsymbol{x}) \approx p_3(\boldsymbol{x}) - (\lambda_1\alpha_{13}+\lambda_3\alpha_{33})(\boldsymbol{w}_2^T\boldsymbol{x} + b_2)^m_+ - \lambda_2\phi_2(\boldsymbol{x})
\end{equation}
\end{lem}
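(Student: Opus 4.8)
The plan is to descend from the network's realization of the spline to its polynomial pieces, region by region. Since $s(\boldsymbol{x}) \approx \sum_j \lambda_j \phi_j(\boldsymbol{x})$ and each $\phi_j(\boldsymbol{x})$ is approximated on $\mathcal{R}$ by its spline $s_j(\boldsymbol{x})$ with piece $p_{ji}(\boldsymbol{x})$ on $R_i$, restricting to a fixed region gives $p_i(\boldsymbol{x}) \approx \sum_j \lambda_j p_{ji}(\boldsymbol{x})$ for $i = 1,2,3,4$, all approximations being controllable by the error machinery behind the symbol ``$\approx$''. Because of the prescribed layout, $R_2$ and $R_3$ are adjacent across $l_2$ (one passes from $l_2^0$ to $l_2^+$), so to reach equation 6.33 it suffices to track how each $p_{j2}$ turns into $p_{j3}$.

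First I would apply theorem 12 to every spline $s_j(\boldsymbol{x})$ at the knot $R_2 \cap R_3 \subset l_2$. Since $s_j \in \mathfrak{S}_n^m(H,\mathcal{R})$, its pieces satisfy $p_{j2} \overset{m-1}{\frown} p_{j3}$ there, so theorem 12 yields $p_{j3}(\boldsymbol{x}) = p_{j2}(\boldsymbol{x}) + \alpha_{j}(\boldsymbol{w}_2^T\boldsymbol{x} + b_2)^m_+$ on $l_2^+$, where each $\alpha_{j}$ is exactly the constant $\lambda$ of equation 6.12; these are the parameters $\alpha_{\nu\mu}$ named in the statement. Forming the $\lambda_j$-weighted sum gives $p_3(\boldsymbol{x}) \approx p_2(\boldsymbol{x}) + \big(\sum_j \lambda_j \alpha_{j}\big)(\boldsymbol{w}_2^T\boldsymbol{x} + b_2)^m_+$, so the entire difference $p_3 - p_2$ collapses to a single multiple of $(\boldsymbol{w}_2^T\boldsymbol{x}+b_2)^m_+$.

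The crucial step is to single out the local unit $u_2$. Because $u_2$ is local with zero-error hyperplane $l_2$, the results of section 6.3---in particular lemma 11 with $\mathcal{L}=l_2$---give $p_{22}(\boldsymbol{x}) \approx 0$ on $R_2 \subset l_2^0$ and $\phi_2(\boldsymbol{x}) \approx \alpha_{2}(\boldsymbol{w}_2^T\boldsymbol{x}+b_2)^m_+$ on $R_3 \subset l_2^+$. Hence the $j=2$ contribution $\lambda_2\alpha_{2}(\boldsymbol{w}_2^T\boldsymbol{x}+b_2)^m_+$ may be rewritten as $\lambda_2\phi_2(\boldsymbol{x})$ on $R_3$, while the remaining units $u_1$ and $u_3$ (which cross $l_2$ only as ordinary pieces, since their zero-error hyperplanes are elsewhere) contribute the polynomial jump $(\lambda_1\alpha_{13}+\lambda_3\alpha_{33})(\boldsymbol{w}_2^T\boldsymbol{x}+b_2)^m_+$. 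This is precisely equation 6.33, and equation 6.35 is its immediate rearrangement. Running the same argument along the other route into $R_3$---from $R_4$ across $l_1$, where now $u_1$ is the crossed local unit and inherits the role $u_2$ played---produces equation 6.34 by symmetry; the two expressions are automatically consistent because each approximates the single genuine piece $p_3$ of $s(\boldsymbol{x})$ on $R_3$.

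The main obstacle is the substitution inside the crucial step: I must verify that the jump manufactured by the crossed local unit genuinely coincides with $\lambda_2\phi_2$ (resp. $\lambda_1\phi_1$) on the target region, which rests on combining the zero-error structure of section 6.3 with the uniqueness of the coefficient in theorem 12, and on checking that the several ``$\approx$'' estimates---from $s\approx\sum_j\lambda_j\phi_j$, from $\phi_j\approx s_j$, and from the limits $\rho(\gamma)\to+\infty$ of lemma 11---can be made simultaneously small on $R_2\cup R_3$. The secondary bookkeeping, namely keeping straight which hyperplane each adjacency crosses and ensuring that only the crossed local unit receives the $\phi$-treatment while every other unit contributes an ordinary $(\cdot)^m_+$ term, is routine once the layout $R_1\subset l_1^0\cap l_2^0$, $R_2\subset l_1^+\cap l_2^0$, $R_3\subset l_1^+\cap l_2^+$, $R_4\subset l_1^0\cap l_2^+$ is held fixed.
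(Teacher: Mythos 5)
Your proposal is correct and follows essentially the same route as the paper's proof: apply theorem 12 to each unit's spline $s_j$ across the knot $R_2\cap R_3\subset l_2$ to get $p_{j3}=p_{j2}+\alpha_{j3}(\boldsymbol{w}_2^T\boldsymbol{x}+b_2)^m_+$, form the $\lambda_j$-weighted sums on $R_2$ and $R_3$, and use the zero-error property of the local unit $u_2$ ($p_{22}\approx 0$, hence $p_{23}=\alpha_{23}(\boldsymbol{w}_2^T\boldsymbol{x}+b_2)^m_+\approx\phi_2(\boldsymbol{x})$) to absorb its jump term into $\lambda_2\phi_2(\boldsymbol{x})$, with equation 6.34 by the symmetric argument across $l_1$ and equation 6.35 by rearrangement. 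Your added attention to making the several ``$\approx$'' estimates simultaneously small is a point the paper leaves implicit, but the substance of the argument is identical.
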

\begin{proof}
We first see an example of Figure \ref{Fig.4}. To the polynomials on region $R_3$, we have
\begin{equation}
p_{j3}(\boldsymbol{x}) = p_{j2}(\boldsymbol{x}) + \alpha_{j3}(\boldsymbol{w}_2^T\boldsymbol{x} + b_2)^m_+
\end{equation}
for $j = 1, 2, 3$. Since $R_2 \subset l_1^+ \cap l_3^+$,
\begin{equation}
p_2(\boldsymbol{x}) = \lambda_1p_{12}(\boldsymbol{x}) + \lambda_3p_{32}(\boldsymbol{x}).
\end{equation}
Similarly,
\begin{equation}
p_3(\boldsymbol{x}) = \lambda_1p_{13}(\boldsymbol{x}) + \lambda_3p_{33}(\boldsymbol{x}) + \lambda_2p_{23}(\boldsymbol{x})
\end{equation}
due to $R_3 \subset l_1^+ \cap l_3^+ \cap l_2^+$. Because $R_2 \subset l_2^0$ and $R_3 \subset l_2^+$, in equation 6.36,
\begin{equation}
p_{23}(\boldsymbol{x}) = \alpha_{23}(\boldsymbol{w}_2^T\boldsymbol{x}+b_2)^m_+ \approx \phi_2(\boldsymbol{x})
\end{equation}
since $p_{22}(\boldsymbol{x})\approx0$. Equations from 6.36 to 6.39 yield
\begin{equation}
\begin{aligned}
p_3(\boldsymbol{x}) &= p_2(\boldsymbol{x}) + (\lambda_1\alpha_{13}+\lambda_3\alpha_{33}+\lambda_2\alpha_{23})(\boldsymbol{w}_2^T\boldsymbol{x} + b_2)^m_+\\
&\approx p_2(\boldsymbol{x}) + (\lambda_1\alpha_{13}+\lambda_3\alpha_{33})(\boldsymbol{w}_2^T\boldsymbol{x} + b_2)^m_+ + \lambda_2\phi_2(\boldsymbol{x}).
\end{aligned}
\end{equation}

The case of equation 6.34 is similar. This example contains the general principle of this lemma.

\end{proof}

\begin{thm}[Multiple Expressions of a polynomial piece]
Let $g(\boldsymbol{x}) = \sum_{i = 1}^{\Theta}\lambda_i\phi_i(\boldsymbol{x})$ be the output of a two-layer neural network $\mathfrak{N}$ with generalized sigmoidal units, where $\phi_i(\boldsymbol{x})$ is the activation function of unit $u_i$ of the hidden layer; unit $u_i$'s are derived from the set $H = \{l_i: 1\le i\le\Theta\}$ of $n-1$-dimensional hyperplanes, with the equation of $l_i$ bing $\boldsymbol{w}_i^T\boldsymbol{x} + b_i=0$. Suppose that $u_{\tau}$'s for $\tau = 1, 2, \dots, \theta$ with $\theta\le \Theta$ are local unit and the remaining ones are global unit, and that $H'=\{l_{\tau}: 1\le \tau\le \theta\}$. Denote by $\mathcal{R} = \{r_j: 1\le j\le\zeta\}$ the set of the regions formed by $H'$. Each $\phi_i(\boldsymbol{x})$ is approximated by a spline $s_i(\boldsymbol{x}) \in \mathfrak{S}_n^m(H, \mathcal{R})$ with arbitrary precision. Let
\begin{equation}
\mathcal{S}(\boldsymbol{x}) = \sum_{i=1}^{\Theta}\lambda_is_i(\boldsymbol{x}) \approx g(\boldsymbol{x})
\end{equation}
and $p_j(\boldsymbol{x})$ be the polynomial of $\mathcal{S}(\boldsymbol{x})$ on region $r_j$.

Suppose that $r_j$ has $\rho_j$ adjacent regions including $r_{n_{j1}}, r_{n_{j2}}, \dots, r_{n_{j\rho_j}}$ with $\nu=1,2,\dots,\rho_j$ and $1 \le n_{j\nu} \le \zeta$. Each $r_{n_{j\nu}}$ shares a knot $l_{n_{j\nu}'}$ with $r_j$, where $1 \le n_{j\nu}' \le \theta$. Suppose that $r_j \subset \bigcap_{\mu=1}^{\theta_j}l_{k_{j\mu}}^+$, where $1\le k_{j\mu}, \theta_j\le \Theta$. Then based on equation 6.41, $p_j(\boldsymbol{x})$ of $\mathcal{S}(\boldsymbol{x})$ has $\rho_j$ different expressions, each of which is either of the following two possible forms: the first is
\begin{equation}
p_j(\boldsymbol{x})\approx p_{n_{j\nu}}(\boldsymbol{x})+ \sum_{\mu = 1}^{\theta_j}\lambda_{k_{j\mu}}\alpha_{k_{j\mu},j}(\boldsymbol{w}_{n_{j\nu}'}^T\boldsymbol{x}+b_{n_{j\nu}'})^m_+ + \lambda_{n_{j\nu}'}\phi_{n_{j\nu}'}(\boldsymbol{x}),
\end{equation}
where $\alpha_{k_{j\mu},j}$ is the parameter of $s_{k_{j\mu}}(\boldsymbol{x})$ as $\alpha_{13}$ and $\alpha_{33}$ of equation 6.33, if
\begin{equation}
r_j \subset l_{n_{j\nu}'}^+ \ \text{and} \ r_{n_{j\nu}} \subset l_{n_{j\nu}'}^0;
\end{equation}
the second is
\begin{equation}
p_j(\boldsymbol{x}) \approx p_{n_{j\nu}}(\boldsymbol{x}) - \sum_{\mu = 1}^{\theta_j}\lambda_{k_{j\mu}}\alpha_{k_{j\mu},j}(\boldsymbol{w}_{n_{j\nu}'}^T\boldsymbol{x}+b_{n_{j\nu}'})^m_+ - \lambda_{n_{j\nu}'}\phi_{n_{j\nu}'}(\boldsymbol{x}),
\end{equation}
provided that
\begin{equation}
r_j \subset l_{n_{j\nu}'}^0 \ \text{and} \ r_{n_{j\nu}} \subset l_{n_{j\nu}'}^+.
\end{equation}
\end{thm}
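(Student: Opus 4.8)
The plan is to prove the statement as the direct generalization of Lemma 12, whose four-region configuration already carries the entire mechanism; the one indispensable algebraic tool is Theorem 12, which states that two order-$m$ polynomial pieces that are smoothly continuous across a knot $\boldsymbol{w}^T\boldsymbol{x}+b=0$ differ by exactly one term $\lambda(\boldsymbol{w}^T\boldsymbol{x}+b)_+^m$, the bump sitting on the $\mathcal{L}^+$ side.

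First I would record how $p_j(\boldsymbol{x})$ is assembled. From $\mathcal{S}(\boldsymbol{x})=\sum_i\lambda_is_i(\boldsymbol{x})$ and the piecewise structure of each $s_i$ over $\mathcal{R}$, on the region $r_j$ we have $p_j(\boldsymbol{x})=\sum_i\lambda_ip_{ij}(\boldsymbol{x})$, where $p_{ij}$ is the polynomial piece of $s_i$ on $r_j$. By section 6.3 a local unit contributes $p_{ij}\approx 0$ whenever $r_j\subset l_i^0$ and is genuinely present when $r_j\subset l_i^+$, while a global unit contributes on every region; the active contributors on $r_j$ are therefore exactly the global units together with the local units $u_{k_{j\mu}}$ having $r_j\subset l_{k_{j\mu}}^+$, which is the index set $\{k_{j\mu}\}$ of the statement.

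Next, fixing one adjacent region $r_{n_{j\nu}}$, I would isolate the unique shared knot $l_{n_{j\nu}'}$. The structural fact on which the whole argument rests is that crossing this one hyperplane changes the active/inactive status of only a single unit, namely the local unit $u_{n_{j\nu}'}$ whose zero-error hyperplane is $l_{n_{j\nu}'}$; every other unit keeps its role, since no other hyperplane of $H'$ is traversed between $r_j$ and $r_{n_{j\nu}}$. In the orientation $r_j\subset l_{n_{j\nu}'}^+$, $r_{n_{j\nu}}\subset l_{n_{j\nu}'}^0$ of (6.43) I would apply Theorem 12 to each surviving spline $s_i$ to get $p_{ij}(\boldsymbol{x})=p_{i,n_{j\nu}}(\boldsymbol{x})+\alpha_{i,j}(\boldsymbol{w}_{n_{j\nu}'}^T\boldsymbol{x}+b_{n_{j\nu}'})_+^m$, then sum against $\lambda_i$. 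The unchanged pieces reassemble into $p_{n_{j\nu}}(\boldsymbol{x})$; the jump coefficients of the units that stay active across the knot collect into the sum $\sum_{\mu}\lambda_{k_{j\mu}}\alpha_{k_{j\mu},j}$; and the single flipped unit is carried separately as $\lambda_{n_{j\nu}'}p_{n_{j\nu}',j}(\boldsymbol{x})$, which is $\approx\lambda_{n_{j\nu}'}\phi_{n_{j\nu}'}(\boldsymbol{x})$ because its piece on the $\mathcal{L}^0$ side is $\approx 0$ (Lemma 11); this yields (6.42). The reversed orientation (6.45) is identical except that the $(\cdot)_+^m$ bump now sits on the $r_{n_{j\nu}}$ side, so solving the same identity for $p_j(\boldsymbol{x})$ introduces the two minus signs of (6.44), matching the rewritten form (6.35) of Lemma 12. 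Letting $\nu$ range over $1,\dots,\rho_j$ produces the $\rho_j$ expressions.

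The main obstacle I anticipate is bookkeeping rather than any new estimate: one must argue cleanly that exactly one unit flips as a single knot is crossed and that all remaining contributions split into the unchanged base polynomial $p_{n_{j\nu}}$ plus a single $(\cdot)_+^m$ jump, so that the collected coefficient agrees with the unique constant obtained by applying Theorem 12 directly to $\mathcal{S}$. In particular, care is needed not to double count the flipping unit, whose $(\cdot)_+^m$ contribution is already represented by the separate $\lambda_{n_{j\nu}'}\phi_{n_{j\nu}'}$ term, so it must be excluded from the jump sum. A secondary care point is propagating the ``$\approx$'' relations consistently, since the $s_i$ only approximate the $\phi_i$ and the zero-parts are only approximately zero; this is controlled exactly as in the accumulated-error discussion of section 3.4.
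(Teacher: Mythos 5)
Your proof is correct and takes essentially the same approach as the paper, whose entire proof reads ``This theorem is a more general description of lemma 12'': your argument is precisely that generalization, reproducing Lemma 12's mechanism (decompose $p_j=\sum_i\lambda_i p_{ij}$, apply Theorem 12 to each spline $s_i$ across the single shared knot, absorb the flipped local unit's piece as $\lambda_{n_{j\nu}'}\phi_{n_{j\nu}'}(\boldsymbol{x})$ via the zero-part result of Lemma 11) in the general setting. The two bookkeeping points you flag---that exactly one unit changes status per knot crossing, and that the flipped unit must be excluded from the jump sum to avoid double counting---are exactly the points handled implicitly in Lemma 12's computation (equations 6.36--6.40), so your write-up supplies detail the paper leaves to the reader rather than a different argument.
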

\begin{proof}
This theorem is a more general description of lemma 12.
\end{proof}

\subsection{Smooth-Continuity Restriction}
\begin{lem}
Let $H$ be a set of $n-1$-dimensional hyperplanes and $\mathcal{R}$ be the set of the regions of $U = [0, 1]^n$ divided by $H$. To any spline $s(\boldsymbol{x}) \in \mathfrak{S}_n^m(H, \mathcal{R})$, its $m-1$th directional-derivative hypersurface for any direction $\boldsymbol{d}$ is a continuous linear spline $\hat{s}(\boldsymbol{x}) \in \mathfrak{S}_n^1(H, \mathcal{R})$.
\end{lem}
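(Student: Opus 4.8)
The plan is to verify separately the two defining requirements of membership in $\mathfrak{S}_n^1(H, \mathcal{R})$: that $\hat{s}(\boldsymbol{x}) = \mathcal{F}^{(m-1, s, \boldsymbol{d})}(\boldsymbol{x})$ is an affine (degree-one) polynomial on each region $R_i \in \mathcal{R}$, and that these affine pieces join continuously (order-zero smoothness) across every knot of $H$. I would fix an arbitrary direction $\boldsymbol{d}$ throughout and argue region by region and knot by knot.

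For the piecewise-linearity part, I would restrict $s$ to a single region $R_i$, where by definition $s(\boldsymbol{x}) = s_i(\boldsymbol{x}) \in \mathcal{P}_m$. Applying lemma 8 to the polynomial $s_i$ shows that its $m-1$th directional-derivative hypersurface with respect to $\boldsymbol{d}$ is an $n-1$-dimensional hyperplane, i.e. $\hat{s}|_{R_i} \in \mathcal{P}_1$. Since this holds for every $R_i$, the surface $\hat{s}$ is piecewise affine on the partition $\mathcal{R}$, and its knot set is contained in $H$, as required.

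For the continuity part, I would take two adjacent regions $R_i, R_{i+1}$ with shared knot $\mathcal{L}$. Because $s \in \mathfrak{S}_n^m(H, \mathcal{R})$, the pieces $s_i$ and $s_{i+1}$ are smoothly continuous at $\mathcal{L}$ of order $m-1$; lemma 10 then gives exactly that the two directional-derivative hyperplanes agree on $\mathcal{L}$, which is the continuity of $\hat{s}$ across $\mathcal{L}$. Collecting this over all adjacent pairs would yield $\hat{s} \in \mathfrak{S}_n^1(H, \mathcal{R})$.

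The main obstacle is the hypothesis in lemma 10 that $\boldsymbol{d}$ be not parallel to $\mathcal{L}$: for a fixed $\boldsymbol{d}$ there may be knots whose separating hyperplane contains $\boldsymbol{d}$, and lemma 10 is silent there. I would close this gap in one of two ways. The cleaner route observes that order-$(m-1)$ smooth continuity across every knot makes $s$ globally of class $C^{m-1}$ on $U$ (by theorem 12 adjacent pieces differ by $\lambda(\boldsymbol{w}^T\boldsymbol{x}+b)_+^m$, whose derivatives up to order $m-1$ vanish on $\mathcal{L}$), so the directional derivative $D_{\boldsymbol{d}}^{m-1}s$ is continuous on all of $U$ for every $\boldsymbol{d}$, giving continuity of $\hat{s}$ across parallel and transversal knots alike. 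Alternatively, for a knot parallel to $\boldsymbol{d}$ one argues directly: since $s_i$ and $s_{i+1}$ already agree on $\mathcal{L}$ (the order-zero part of the smooth-continuity condition) and the line $\boldsymbol{x}_0 + t\boldsymbol{d}$ through any interior $\boldsymbol{x}_0 \in \mathcal{L}$ stays within $\mathcal{L}$, the tangential operator $D_{\boldsymbol{d}}^{m-1}$ sees only data on $\mathcal{L}$ and the two pieces match; one may also approach the parallel direction by transversal directions and pass to the limit, using the polynomial (hence continuous) dependence of $D_{\boldsymbol{d}}^{m-1}s_i(\boldsymbol{x}_0)$ on the entries of $\boldsymbol{d}$.
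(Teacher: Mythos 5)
Your proposal is correct and takes essentially the same approach as the paper: the paper's own proof is the one-line observation that lemma 8 gives piecewise linearity on each region while $s(\boldsymbol{x}) \in C^{m-1}([0,1]^n)$ gives continuity of the $m-1$th directional derivative across all knots, which is precisely your ``cleaner route.'' Your detour through lemma 10, your explicit handling of knots parallel to $\boldsymbol{d}$, and your justification of global $C^{m-1}$ smoothness via theorem 12 are details the paper leaves implicit, and they fill the argument in correctly.
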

\begin{proof}
Due to lemma 8 and $s(\boldsymbol{x}) \in C^{m-1}([0,1]^n)$, the conclusion follows.
\end{proof}

\begin{lem}
Let $H$ be a set of $n-1$-dimensional hyperplanes partitioning $U=[0,1]^n$ into a set $\mathcal{R} = \{R_j: 1\le j \le \zeta\}$ of regions, and let $\mathfrak{N}$ be a two-layer neural network composed of generalized sigmoidal units that are derived from $H$. Suppose that $R_{\nu}$'s for $1\le \nu\le 4$ satisfy the condition of the four regions of lemma 12. Let $\mathcal{S}(\boldsymbol{x}) \in \mathfrak{S}_n^m(H, \mathcal{R})$ be a smooth spline defined on $\mathcal{R}$, with $p_{\nu}(\boldsymbol{x})$ being the $n$-variate polynomial of degree $m$ on $R_{\nu}$. If $p_2(\boldsymbol{x})$ on $R_2$ and $p_4(\boldsymbol{x})$ on $R_4$ (e.g., Figure \ref{Fig.4}) have been realized by $\mathfrak{N}$, then $p_3(\boldsymbol{x})$ on $R_3$ adjacent to both $R_2$ and $R_4$ is simultaneously implemented.
\end{lem}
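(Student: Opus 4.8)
The plan is to show that the smooth-spline structure leaves no freedom in $p_3(\boldsymbol{x})$ once $p_2(\boldsymbol{x})$ and $p_4(\boldsymbol{x})$ are fixed, by combining the two representations of $p_3(\boldsymbol{x})$ from lemma 12 with the uniqueness of a decomposition into $m$th powers of the two defining linear forms. Write $\ell_i := \boldsymbol{w}_i^T\boldsymbol{x} + b_i$ for $i = 1, 2$. Since $\mathcal{S}(\boldsymbol{x}) \in \mathfrak{S}_n^m(H, \mathcal{R})$, theorem 12 applied across the knot $R_2 \cap R_3 \subset l_2$ and across the knot $R_4 \cap R_3 \subset l_1$ gives
\[
p_3(\boldsymbol{x}) = p_2(\boldsymbol{x}) + c_2\,\ell_2^m, \qquad p_3(\boldsymbol{x}) = p_4(\boldsymbol{x}) + c_1\,\ell_1^m
\]
on $R_3 \subset l_1^+ \cap l_2^+$, where the truncations $(\cdot)_+$ are inactive because $\ell_1, \ell_2 > 0$ there; subtracting the two identities yields $p_2(\boldsymbol{x}) - p_4(\boldsymbol{x}) = c_1\,\ell_1^m - c_2\,\ell_2^m$.

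Next I would note that the spline $\mathcal{S}'(\boldsymbol{x}) = \sum_i \lambda_i s_i(\boldsymbol{x})$ actually produced by $\mathfrak{N}$ is again an element of $\mathfrak{S}_n^m(H, \mathcal{R})$, being a linear combination of the smooth splines $s_i(\boldsymbol{x})$; hence its pieces $p_2'(\boldsymbol{x}), p_3'(\boldsymbol{x}), p_4'(\boldsymbol{x})$ obey the same relations
\[
p_3'(\boldsymbol{x}) = p_2'(\boldsymbol{x}) + c_2'\,\ell_2^m = p_4'(\boldsymbol{x}) + c_1'\,\ell_1^m,
\]
where by lemma 12 the coefficients $c_1', c_2'$ are the explicit combinations of output weights appearing in equations 6.33 and 6.34. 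The hypothesis that $p_2(\boldsymbol{x})$ and $p_4(\boldsymbol{x})$ have been realized means $p_2'(\boldsymbol{x}) \approx p_2(\boldsymbol{x})$ and $p_4'(\boldsymbol{x}) \approx p_4(\boldsymbol{x})$, so that $p_2'(\boldsymbol{x}) - p_4'(\boldsymbol{x}) = c_1'\,\ell_1^m - c_2'\,\ell_2^m \approx c_1\,\ell_1^m - c_2\,\ell_2^m$.

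The decisive step is linear independence: because $l_1$ and $l_2$ are distinct with $l_1 \cap l_2 \ne \emptyset$, they are not parallel, so the affine forms $\ell_1, \ell_2$ are not proportional and the polynomials $\ell_1^m, \ell_2^m$ are linearly independent (proportionality of $\ell_1^m$ and $\ell_2^m$ would force their zero hyperplanes, counted with multiplicity $m$, to coincide). Thus $(s, t) \mapsto s\,\ell_1^m - t\,\ell_2^m$ is an isomorphism onto its two-dimensional image, and matching the two expressions for $p_2'(\boldsymbol{x}) - p_4'(\boldsymbol{x})$ forces $c_1' \approx c_1$ and $c_2' \approx c_2$. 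Substituting back gives $p_3'(\boldsymbol{x}) = p_2'(\boldsymbol{x}) + c_2'\,\ell_2^m \approx p_2(\boldsymbol{x}) + c_2\,\ell_2^m = p_3(\boldsymbol{x})$, that is, $p_3(\boldsymbol{x})$ is automatically implemented. I expect the main obstacle to be exactly this uniqueness step: one must verify that $\{\ell_1^m, \ell_2^m\}$ is linearly independent so that no compensating adjustment of the output weights can alter $p_3(\boldsymbol{x})$ once $p_2(\boldsymbol{x})$ and $p_4(\boldsymbol{x})$ are pinned down, and that the resulting finite-dimensional inverse is stable enough to propagate the approximate equalities $p_2' \approx p_2$ and $p_4' \approx p_4$ to $p_3' \approx p_3$.
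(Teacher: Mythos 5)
Your proof is correct, but it takes a genuinely different route from the paper's. The paper proves this lemma by descending to piecewise-linear splines: it passes to the $(m-1)$th directional-derivative hyperplanes of each $s_i(\boldsymbol{x})$ (lemma 13), so that the network's output spline $\mathcal{S}(\boldsymbol{x})=\sum_i\lambda_i s_i(\boldsymbol{x})$ corresponds to a continuous linear spline $\hat{\mathcal{S}}(\boldsymbol{x})\in\mathfrak{S}_n^1(H,\mathcal{R})$; it then invokes the boundary-determination principle for continuous piecewise-linear functions (from the proof of \citet*{Huang2024}'s lemma 6), by which the linear pieces $q_2(\boldsymbol{x})$ on $R_2$ and $q_4(\boldsymbol{x})$ on $R_4$ uniquely determine $q_3(\boldsymbol{x})$ on $R_3$, and finally argues that this determination survives the integration back up to degree $m$. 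You instead stay at degree $m$ and argue algebraically: theorem 12 applied across the knots $R_2\cap R_3\subset l_2$ and $R_3\cap R_4\subset l_1$ gives $p_3=p_2+c_2\ell_2^m=p_4+c_1\ell_1^m$ on $R_3$ both for the target spline and (exactly, with coefficients $c_1',c_2'$) for the network's spline $\mathcal{S}'=\sum_i\lambda_i s_i$, and linear independence of $\{\ell_1^m,\ell_2^m\}$ --- which holds because $l_1,l_2$ are distinct intersecting hyperplanes --- pins down $c_1',c_2'$ from $p_2'-p_4'$ and hence pins down $p_3'$. What your route buys: it is self-contained within the paper's own results (no appeal to the external ReLU boundary-determination principle and no informal ``invariance under integration'' step), and the uniqueness is quantitatively stable, since the matching takes place in the two-dimensional span of $\ell_1^m,\ell_2^m$ where closeness in $L^2$ on a region with nonempty interior implies coefficientwise closeness by equivalence of norms on finite-dimensional spaces. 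What the paper's route buys: it exhibits the lemma as the degree-$m$ shadow of the continuity restriction already established for two-layer ReLU networks, which is the conceptual point the paper is building toward (theorem 15 and the remark following it), and the same reduction is reused there; note, however, that your two-knot recurrence argument extends to theorem 15's setting of two intersecting boundary elements just as well.
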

\begin{proof}
Denote by $\phi_i(\boldsymbol{x}) \in C^{m}([0, 1]^n)$ the activation function of the $i$th hidden-layer unit $u_i$ of $\mathfrak{N}$. By theorem 11, each $\phi_i(\boldsymbol{x})$ can be approximated by a spline $s_i(\boldsymbol{x}) \ \in \mathfrak{S}_n^{m}(H, \mathcal{R})$ to any desired accuracy. Let $g(\boldsymbol{x}) = \sum_{i = 1}^{\Theta}\lambda_i\phi_i(\boldsymbol{x})$ be the function output by $\mathfrak{N}$, which can be approximated by
\begin{equation}
\mathcal{S}(\boldsymbol{x}) = \sum_{i=1}^{\Theta}\lambda_is_i(\boldsymbol{x})
\end{equation}
as precisely as possible, with $\mathcal{S}(\boldsymbol{x}) \in \mathfrak{S}_n^{m}(H, \mathcal{R})$ as well. Write $\hat{s}_i(\boldsymbol{x})$, which belongs to $\mathfrak{S}_n^1(H, \mathcal{R})$ and is composed of the directional-derivative hyperplanes of $s_i(\boldsymbol{x})$ with respect to some direction $\boldsymbol{d}$. Letting
\begin{equation}
\hat{\mathcal{S}}(\boldsymbol{x}) = \sum_{i=1}^{\Theta}\lambda_i\hat{s_i}(\boldsymbol{x}),
\end{equation}
we have $\hat{\mathcal{S}}(\boldsymbol{x}) \in \mathfrak{S}_n^1(H, \mathcal{R})$. Since the integral of the right side of equation 6.47 with respect to $\boldsymbol{d}$ (as in theorem 12) gives $\mathcal{S}(\boldsymbol{x})$ of equation 6.46, the directional-derivative hyperplanes of $\mathcal{S}(\boldsymbol{x})$ (lemma 13) comprise $\hat{\mathcal{S}}(\boldsymbol{x})$ of equation 6.47.

Let $q_{i\nu}(\boldsymbol{x})$ be the linear function on $R_{\nu}$ of $\hat{s_i}(\boldsymbol{x})$, and $q_{\nu}(\boldsymbol{x})$ be the one of $\hat{\mathcal{S}}(\boldsymbol{x})$. According to the ``boundary-determination principle'' of splines in $\mathfrak{S}_n^1(H, \mathcal{R})$ (see the proof of \citet*{Huang2024}'s lemma 6), $q_{i2}(\boldsymbol{x})$ and $q_{i4}(\boldsymbol{x})$ for $\hat{s_i}(\boldsymbol{x})$ can uniquely determine $q_{i3}(\boldsymbol{x})$ due to the property of continuous piecewise linear functions, and so is the unique determination of $q_{3}(\boldsymbol{x})$ through $q_2(\boldsymbol{x})$ and $q_4(\boldsymbol{x})$ for $\hat{\mathcal{S}}(\boldsymbol{x})$. Thus, if the linear combination of equation 6.47 via weight parameters $\lambda_i$'s are set to produce $q_2(\boldsymbol{x})$ and $q_4(\boldsymbol{x})$, $q_3(\boldsymbol{x})$ is simultaneously implemented. This principle remains invariant under the integral operation of the two-sides of equation 6.47 and this lemma follows.
\end{proof}

The boundary of a region $R$ is a set whose each element is part of a hyperplane forming $R$. For example, in Figure \ref{Fig.4}, the boundary of $R_3$ is $\{l_1\cap R_3, l_2\cap R_3\}$ with two elements.
\begin{thm}[Smooth-continuity restriction]
Denote by $R_i$'s for $i = 1, 2, \dots, \zeta$ the regions of $U = [0, 1]^n$ formed by a set $H = \{l_j: j = 1, 2, \dots, \psi\}$ of $n-1$-dimensional hyperplanes, where $n \ge 2$. Let $\mathcal{R} = \bigcup_iR_i$ and let $\mathcal{S}(\boldsymbol{x}) \in \mathfrak{S}_n^m(H, \mathfrak{N})$ be a spline to be approximated by a two-layer neural network $\mathfrak{N}$ whose hidden-layer units are of generalized sigmoidal type and derived from $H$. To arbitrary region $R_{k} \in \mathcal{R}$ for $1 \le k \le \zeta$, let $\mathcal{L}_{\nu}$ and $\mathcal{L}_{\mu}$ for $1 \le \nu, \mu \le \psi$ be two elements of its boundary satisfying $l_{\nu} \cap l_{\mu} \ne \emptyset$, where $l_{\nu}$ and $l_{\mu}$ are the hyperplanes that $\mathcal{L}_{\nu}$ and $\mathcal{L}_{\mu}$ belong to, respectively. Then, if the part of $\mathcal{S}(\boldsymbol{x})$ on $\mathcal{L}_{\nu} \cup \mathcal{L}_{\mu}$ has been implemented by $\mathfrak{N}$, the polynomial on $R_{k}$ would be simultaneously produced.
\end{thm}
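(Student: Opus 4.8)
The plan is to follow the strategy of lemma 14, of which this theorem is the general form: reduce the degree-$m$ polynomial $p_k$ on $R_k$ to a linear function by taking an $(m-1)$th directional derivative, invoke the boundary-determination principle for continuous linear splines, and then integrate back. First I would write the network output as $g(\boldsymbol{x}) = \sum_{i=1}^{\Theta}\lambda_i\phi_i(\boldsymbol{x})$ and, by theorem 11, approximate it with arbitrary precision by a spline $\mathcal{S}(\boldsymbol{x}) = \sum_{i=1}^{\Theta}\lambda_i s_i(\boldsymbol{x}) \in \mathfrak{S}_n^m(H, \mathcal{R})$. Let $R_a$ and $R_b$ be the regions adjacent to $R_k$ across $\mathcal{L}_{\nu}$ and $\mathcal{L}_{\mu}$, respectively. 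The hypothesis that $\mathcal{S}(\boldsymbol{x})$ on $\mathcal{L}_{\nu} \cup \mathcal{L}_{\mu}$ has been implemented, together with $\mathcal{S}(\boldsymbol{x}) \in C^{m-1}([0,1]^n)$, means that the values of $\mathcal{S}$ and of all its directional derivatives up to order $m-1$ are known on both boundary pieces.

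Next I would fix an arbitrary direction $\boldsymbol{d}$ not parallel to $l_{\nu}$ or $l_{\mu}$ and pass to the $(m-1)$th directional-derivative hypersurface. By lemma 13 this is a continuous linear spline $\hat{\mathcal{S}}(\boldsymbol{x}) \in \mathfrak{S}_n^1(H, \mathcal{R})$; let $q_k(\boldsymbol{x})$ be its linear piece on $R_k$. Because the data of $\mathcal{S}$ up to order $m-1$ is known on $\mathcal{L}_{\nu} \cup \mathcal{L}_{\mu}$, and $\hat{\mathcal{S}}$ is continuous across the two knots, the restriction of $q_k$ to each of $\mathcal{L}_{\nu}$ and $\mathcal{L}_{\mu}$ is determined by the pieces of $\hat{\mathcal{S}}$ coming from $R_a$ and $R_b$.

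The crux is the boundary-determination principle (cf. lemma 6 of \citet*{Huang2024}, used in lemma 14): a linear (affine) function on $\mathbb{R}^n$ is uniquely fixed by its restriction to $\mathcal{L}_{\nu} \cup \mathcal{L}_{\mu}$. Here the condition $l_{\nu} \cap l_{\mu} \ne \emptyset$ with $l_{\nu} \ne l_{\mu}$ is essential: each $\mathcal{L}_{\rho}$ is a full $(n-1)$-dimensional piece and neither lies in the other's hyperplane, so $\mathcal{L}_{\nu} \cup \mathcal{L}_{\mu}$ contains $n+1$ affinely independent points and hence pins down $q_k$ uniquely. A single boundary piece would leave the slope of $q_k$ along its normal undetermined, which is precisely why two intersecting pieces are needed; this is the content of the smooth-continuity restriction.

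Finally, since $q_k$ is determined for every admissible $\boldsymbol{d}$, the entire $(m-1)$th directional-derivative data of $p_k$ is fixed; integrating back along $\boldsymbol{d}$ as in theorems 11 and 12 reconstructs $p_k$, with the lower-order constants of integration supplied by the already-known values of $\mathcal{S}$ on $\mathcal{L}_{\nu} \cup \mathcal{L}_{\mu}$. The reconstruction is direction-independent by the uniqueness of the polynomial piece in $\mathfrak{S}_n^m(H, \mathcal{R})$, exactly as in lemma 12 and theorem 12, so $p_k$ on $R_k$ is produced simultaneously with the boundary data. The main obstacle I anticipate is this boundary-determination step: verifying rigorously that two intersecting $(n-1)$-dimensional hyperplane pieces affinely span $\mathbb{R}^n$ and that the back-integration yields a single, direction-independent polynomial; both reduce to the linear-spline argument already established for lemma 14.
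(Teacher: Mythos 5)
Your proposal is correct and follows essentially the same route as the paper: the paper proves this theorem by appeal to its lemma 14, whose argument is exactly your reduction---approximate the network output by $\mathcal{S}(\boldsymbol{x})=\sum_i\lambda_i s_i(\boldsymbol{x})$ (theorem 11), pass to the $(m-1)$th directional-derivative linear spline (lemma 13), invoke the boundary-determination principle for $\mathfrak{S}_n^1(H,\mathcal{R})$ from \citet*{Huang2024}'s lemma 6, and integrate back as in theorem 12. Your explicit affine-span justification of the boundary-determination step and the direction-independence remark are details the paper delegates to the cited reference, not a different method.
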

\begin{proof}
This conclusion is due to the proof of lemma 14. To the example of Figure \ref{Fig.4}, this theorem means that if the polynomials on the boundary of $R_3$ formed by $l_1$ and $l_2$ have been realized by $\mathfrak{N}$, the one on $R_3$ is simultaneously implemented.
\end{proof}

\begin{rmk}
Let us compare the smooth-continuity restriction of this theorem with the continuity restriction of a two-layer ReLU network $\mathscr{N}$ ((\citet*{Huang2024}'s theorem 9).  The latter is due to two facts: one is the boundary-determination principle (BDP) mentioned in the proof of lemma 14 and the other is that the output of $\mathscr{N}$ is a continuous piecewise linear function. The former is established by BDP as well as the fact that the approximation of network $\mathfrak{N}$ with generalized sigmoidal units can be reduced to that of continuous piecewise linear functions. Both of the two cases are the combination of geometric property of linear splines with the property of functions output by neural networks.
\end{rmk}

\subsection{Single Strict Partial Order}
Let $H = \{l_1, l_2, \dots, l_\zeta\}$ be a set of $n-1$-dimensional hyperplane satisfying
\begin{equation}
l_1 \prec l_2 \prec \dots \prec l_{\zeta},
\end{equation}
where ``$\prec$'' represents a strict partial order \citep*{Huang2024} of higher-dimensional knots analogous to the ``$<$'' (less than) relation of real numbers. The region $R_i$'s for $i = 1, 2, \dots, \zeta$ are the corresponding ordered regions and form a set
\begin{equation}
\mathcal{R}:=\{R_i: 1\le i\le \zeta\}.
\end{equation}
To each $s(\boldsymbol{x}) \in \mathfrak{S}^m_n(H, \mathcal{R})$ over a single strict partial order, according to theorem 12. we get the recurrence formula
\begin{equation}
s_{\nu}(\boldsymbol{x}) = s_{\nu-1}(\boldsymbol{x}) + \alpha_{\nu-1}(\boldsymbol{w}_{\nu-1}^T\boldsymbol{x}+b_{\nu-1})^m_+
\end{equation}
for $\nu = 2, 3, \dots, \zeta-1$ analogously to the univariate case of equation 3.4, where $s_{\nu}(\boldsymbol{x})$ is the polynomial on $R_{\nu}$. Equation 6.50 is a foundation of solution construction for higher-dimensional input.

The definition of one-sided bases of $\mathfrak{S}^m_n(H, \mathcal{R})$ is similar to that of $\mathfrak{S}^1_n(H, \mathcal{R})$ for linear splines (\citet*{Huang2024}'s definition 10). Let
\begin{equation}
\mathfrak{B}_1 = \{\rho_{j}(\boldsymbol{x}) = \sigma(\boldsymbol{w}_{j}^T\boldsymbol{x} + b_{j}): -\xi \le j \le \zeta, \ \binom{n+m}{m} \le \xi+1\}
\end{equation}
be a set of one-sided bases of $\mathfrak{S}_n^m(H, \mathcal{R})$, in which $\rho_{\nu}(\boldsymbol{x})$'s for $\nu = -\xi, -\xi+1, \dots, 0$ are global bases and the remaining ones are local bases. Correspondingly, the associated units $u_j$'s are also classified into global and local units that are consistent with definition 15.

\begin{thm}[Splines over a single strict partial order]
Any spline $\mathcal{S}(\boldsymbol{x}) \in \mathfrak{S}_n^m(H, \mathcal{R})$ with $\zeta$ linear pieces, where $H$ is from equation 6.48 and $\mathcal{R}$ from equation 6.49, can be approximated as precisely as possible by a two-layer neural network $\mathfrak{N}$ composed of generalized sigmoidal units, in terms of the one-sided bases of equation 6.51, whose hidden layer has
\begin{equation}
\Theta \ge \zeta -1 + \binom{n+m}{m}
\end{equation}
units.
\end{thm}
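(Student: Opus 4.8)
The plan is to transcribe the univariate argument of theorem~6 into the multivariate setting, with the single strict partial order of equation~6.48 playing the role the ordered knots of $\Delta$ played on the line, and with the recurrence~6.50 replacing equation~3.4. First I would realize the first polynomial piece $s_1(\boldsymbol{x})$ of $\mathcal{S}(\boldsymbol{x})$ on the base region $R_1$. Since an $n$-variate polynomial of degree $m$ carries $\binom{n+m}{m}$ coefficients (proposition~5), I would take the $\binom{n+m}{m}$ global bases $\rho_{-\xi}(\boldsymbol{x}),\dots,\rho_0(\boldsymbol{x})$ of equation~6.51 and appeal to lemma~6 to make their generalized Wronskian matrix at an interior point of $R_1$ nonsingular; theorem~10 then lets a suitable linear combination of these units approximate $s_1(\boldsymbol{x})$ on $R_1$ to arbitrary precision once the volume of $R_1$ is small. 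This fixes the global output weights and supplies the $\binom{n+m}{m}$ summand in the unit count.

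I would then add local units one region at a time. Because the knots are strictly ordered, $R_\nu$ is separated from $R_{\nu-1}$ by the single hyperplane $l_{\nu-1}$, so theorem~12 specializes to
\[
s_\nu(\boldsymbol{x}) = s_{\nu-1}(\boldsymbol{x}) + \alpha_{\nu-1}(\boldsymbol{w}_{\nu-1}^T\boldsymbol{x}+b_{\nu-1})^m_+,
\]
which is exactly equation~6.50. For the $(\nu-1)$th transition I would introduce the local base $\rho_{\nu-1}(\boldsymbol{x})$ whose zero-error hyperplane is $l_{\nu-1}$; by lemma~11 and theorem~14 this unit is negligible on $l_{\nu-1}^0$ and reduces on the far side to a multiple of $(\boldsymbol{w}_{\nu-1}^T\boldsymbol{x}+b_{\nu-1})^m_+$. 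Its output weight is the one free parameter at this stage, so I would choose it so that the total coefficient of $(\boldsymbol{w}_{\nu-1}^T\boldsymbol{x}+b_{\nu-1})^m_+$---accumulated from the already-fixed global units and all previously added local units, as in equations~4.7 and 4.8---equals the target $\alpha_{\nu-1}$. Running $\nu$ from $2$ to $\zeta$ consumes $\zeta-1$ local units, giving $\Theta \ge \zeta-1+\binom{n+m}{m}$; the $L^2$ error stays arbitrarily small by the control of theorem~14 together with the local-unit accumulation estimate used earlier.

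The step I expect to be the genuine obstacle is showing that this sequential weight assignment is consistent, i.e.\ that fitting the transition at $l_{\nu-1}$ never disturbs the transitions already matched at $l_1,\dots,l_{\nu-2}$. This is exactly where the single strict partial order is indispensable: it guarantees $R_1\cup\dots\cup R_{\nu-1}\subset l_{\nu-1}^0$, so the added unit $\rho_{\nu-1}$ contributes essentially nothing across every earlier knot, and the governing linear system is lower triangular in the manner of proposition~3. I would close the argument by checking that the diagonal entries of this system---the leading coefficients with which each local unit meets its own knot $l_{\nu-1}$---are nonzero, which follows from the strictly increasing coefficient produced in part (c) of lemma~11; each free weight is then uniquely solvable and the construction terminates. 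Everything else is a routine copy of the univariate bookkeeping.
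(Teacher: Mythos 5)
Your proposal is correct and takes essentially the same route as the paper: the paper's own (very terse) proof simply notes that the strict partial order of equation 6.48 is equivalent to ``$<$'' and that theorems 10, 12 and 13 convert the problem to the univariate case of theorem 6, which is exactly the reduction you spell out in detail (global units realizing $s_1(\boldsymbol{x})$ via theorem 10, one local unit per knot via the recurrence of theorem 12, and zero-part error control making the assignment lower triangular and consistent). One trivial slip: the single-unit zero-part control you twice attribute to theorem 14 is the paper's theorem 13 (the multivariate analogue of theorem 5); theorem 14 is the multiple-expressions result.
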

\begin{proof}
On the basis of the strict partial order ``$\prec$'' of equation 6.48 that is equivalent to ``$<$'', theorems 10, 12 and 13 convert this problem to the univariate case of theorem 6.
\end{proof}

The solution of a spline $\mathcal{S}(\boldsymbol{x}) \in \mathfrak{S}_n^m(H, \mathcal{R})$ over a single strict partial order, including the one of theorem 16 as well as the two-sided case, also has a general framework analogous to the univariate case of theorem 8. To a generalized sigmoidal unit $\mathfrak{U}$ with activation function $\phi(\boldsymbol{x}) = \sigma(\boldsymbol{w}^T\boldsymbol{x}+b) \in C^{m}([0, 1]^n)$, the indicator function $\mathbb{I}: N \to \{0, 1\}$ for $\mathfrak{U}$, where $N = \{1, 2, \dots, \zeta\}$, is defined as
\begin{equation}
\mathbb{I}(i) =
\begin{cases}
\ \ 0, \ \text{if} \ \big(\int_{R_i}\phi(\boldsymbol{x})^2d\boldsymbol{x}\big)^{1/2} < \varepsilon \\
\ \ 1, \ \ \ \ \ \ \ \ \ \ \ \text{otherwise}
\end{cases}
\end{equation}
for $i = 1, 2, \dots, \zeta$. To each local unit $u_j$'s of equation 6.51, let $\mathbb{I}_{j}$ be its indicator function. As the counterpart of equation 4.20, write
\begin{equation}
\mathscr{A} = \begin{pmat}({})
\mathcal{A} & & & \mathcal{B} \cr
\mathcal{C} & & & \mathcal{D} \cr
\end{pmat},
\end{equation}
in which $\mathscr{A} = \mathcal{W}(u_i(\boldsymbol{x}_0))^T$ of equation 5.10, while $\mathcal{B}$, $\mathcal{C}$ and $\mathcal{D}$ can be obtained similarly to those of equation 4.20, respectively. A two-layer neural network with generalized sigmoidal units have a solution of $\mathcal{S}(\boldsymbol{x})$ if and only if the rank of
\begin{equation}
\mathscr{A}\boldsymbol{\lambda} = \boldsymbol{b}
\end{equation}
is
\begin{equation}
\gamma = \zeta-1+\binom{n+m}{m}.
\end{equation}

\subsection{Multivariate Universal Approximation}
The concepts of ``standard partition'' and ``universal global hyperplane'' were defined in \cite*{Huang2024}. Intuitively, when the input is two-dimensional, a standard partition is a set of regions that appear to be formed by horizontal and vertical lines as in Figure \ref{Fig.5}; a universal global hyperplane $l$ of $U=[0,1]^n$ means that $U \subset l^+$.

\begin{thm}[Multivariate-spline implementation]
Let $H$ be a set of $n-1$-dimensional hyperplanes of $\mathbb{R}^n$ for $n \ge 2$ and $H$ forms a standard partition of $U=[0,1]^n$ (e.g., Figure \ref{Fig.5}), with $\mathcal{R}$ the set of the divided regions. Denote by $\mathfrak{N}$ a two-layer neural network composed of generalized sigmoidal units whose activation function belongs to $C^{m}([0,1]^n)$. The local units of the hidden layer of $\mathfrak{N}$ are from $H$ and the global units are from at least $\binom{n+m}{m}$ universal global hyperplanes of $U$. Then any spline $\mathcal{S}(\boldsymbol{x}) \in \mathfrak{S}_n^m(H, \mathcal{R})$ can be approximated by $\mathfrak{N}$ with arbitrary precision. If $\mathcal{S}(\boldsymbol{x})$ has $\zeta$ polynomial pieces, the number of the units of the hidden layer of $\mathfrak{N}$ satisfies
\begin{equation}
\Theta \ge ({\zeta}^{1/n}-1)n + \binom{n+m}{m}.
\end{equation}
\end{thm}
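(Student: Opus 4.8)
The plan is to build the spline $\mathcal{S}(\boldsymbol{x})$ region by region, seeding the construction with the global units and then propagating through the standard partition so cheaply that the interior regions cost no units at all. First I would index the regions of the standard partition by their ``grid coordinates'' $(i_1,\dots,i_n)$ with $1\le i_d\le k_d$, where $k_d$ is the number of regions along the $d$th axis direction and $\zeta=\prod_{d=1}^n k_d$; the region $R_{(1,\dots,1)}$ is taken to be the one lying in $l^0$ for every partition hyperplane $l$, i.e.\ the corner closest to the universal global hyperplanes. On this origin region the polynomial piece is an arbitrary $n$-variate polynomial of degree $m$, and by theorem 10 it can be realized to arbitrary precision by the $\binom{n+m}{m}$ global units (equivalently, by producing a nonsingular generalized Wronskian matrix via lemma 6 and theorem 9).

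Next I would realize the spline along the $n$ ``axis chains'' emanating from the origin, namely the regions in which exactly one coordinate exceeds $1$. Along each such chain the regions are linearly ordered and consecutive pieces differ by crossing a single partition hyperplane, so the recurrence formula (equation 6.50) of theorem 12 applies and each additional region is produced by adding one local unit whose zero-error hyperplane is the crossed knot; this is exactly the single-strict-partial-order situation of theorem 16, and the $\binom{n+m}{m}$ global units seeding the origin are shared by all $n$ chains. This consumes $\sum_{d=1}^n (k_d-1)$ local units.

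The heart of the argument is that every remaining region is then obtained for free by the smooth-continuity restriction. I would induct on the index-sum $S(\boldsymbol{i})=\sum_{d} i_d$. The origin together with the axis-chain regions (those having at most one coordinate bigger than $1$) are the base cases, already handled. For any other region $R_{\boldsymbol{i}}$ at least two coordinates of $R_{\boldsymbol{i}}$, say $i_a$ and $i_b$, exceed $1$; the two bounding hyperplanes $l_a,l_b$ of $R_{\boldsymbol{i}}$ in those directions meet, and the two adjacent regions obtained by decrementing $i_a$ alone and $i_b$ alone have strictly smaller index-sum, hence are already realized by the inductive hypothesis. These are precisely the regions $R_2$ and $R_4$ in the configuration of lemma 14, so theorem 15 produces the polynomial on $R_{\boldsymbol{i}}$ simultaneously, with no new unit. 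Thus all $\zeta$ pieces are implemented using only the global and axis-chain units.

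Counting gives $\Theta = \binom{n+m}{m}+\sum_{d=1}^n(k_d-1)$, and by the arithmetic--geometric mean inequality $\sum_d k_d \ge n\big(\prod_d k_d\big)^{1/n}=n\zeta^{1/n}$, so $\sum_d(k_d-1)\ge n(\zeta^{1/n}-1)$ with equality for the cubic partition $k_d=\zeta^{1/n}$; this yields the bound of equation 6.58. The approximation error is controlled exactly as before: each local unit is truncated at its zero-error hyperplane using lemma 11 and theorem 14, and the finitely many such truncations accumulate to an error that stays below any prescribed tolerance by the same summation estimate used for the univariate local units. I expect the main obstacle to be the bookkeeping of the induction in general dimension---verifying that the two-dimensional ``corner'' mechanism of lemma 14 and theorem 15 really reaches every interior region of an $n$-dimensional grid (it does, because any region off the axis chains has two coordinates exceeding $1$ and therefore two meeting bounding hyperplanes), together with confirming that the zero-error truncations of all $\sum_d(k_d-1)$ local units can be controlled simultaneously rather than one at a time.
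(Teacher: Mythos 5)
Your proposal is correct and follows essentially the same route as the paper: the paper's own (very terse) proof simply invokes the smooth-continuity restriction of theorem 15 together with the analogous boundary-based construction of \citet{Huang2024}'s lemma 10, which is precisely your scheme of seeding the corner region with the $\binom{n+m}{m}$ global units, spending one local unit per knot along the $n$ axis chains, and propagating to all interior regions for free by induction via lemma 14/theorem 15. Your unit count $\binom{n+m}{m}+\sum_d(k_d-1)$ with the AM--GM step also matches the paper's corollary 2, whose boundary cardinality $(M-1)n+1$ is exactly your origin-plus-axis-chains set.
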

\begin{proof}
In combination with smooth-continuity restriction of theorem 15, the proof is analogous to \citet*{Huang2024}'s lemma 10.
\end{proof}

\begin{rmk}
When the input is two-dimensional, under a standard partition used in \citet*{Huang2024}'s theorem 10 or later proposition 8 of this paper, the polynomial $p_{ij}(x, y)$ on region $R_{ij}$ of $\mathcal{S}(\boldsymbol{x})$ can be expressed as
\begin{equation}
p_{ij}(x, y) = p_{11}(x, y) + \sum_{\nu = 1}^{i-1}\alpha_{\nu}(x - x_{\nu})^m_+ + \sum_{\mu = 1}^{j-1}\lambda_{\mu}(y - y_{\mu})^m_+,
\end{equation}
which is similar to \citet*{Chui1983}'s equation 3.1. Our difference is twofold. First, the construction method is applicable to arbitrary input dimensionality $n \ge 2$ and not restricted to the two-dimensional case of equation 6.58. Second, we revealed a fundamental principle (smooth-continuity restriction of theorem 15) underlying equation 6.58, also regardless of input dimensionality, which is useful in both finding a solution and interpreting its meaning.
\end{rmk}

The boundary of a partition can be found in \citet*{Huang2024}'s definition 23. As an example in Figure \ref{Fig.5}, the set of the red rectangles is the boundary of the partition of $U$.
\begin{cl}[Boundary-determination principle]
Notations from theorem 17, suppose that $H$ forms $n$ strict partial orders $\mathscr{P}_i = l_{i1} \prec l_{i2} \prec \dots \prec l_{i,M_i-1}$. Then, if the polynomial pieces of $\mathcal{S}(\boldsymbol{x})$ on the boundary $\mathcal{B}$ of $\mathcal{R}$ have been realized by the two-layer network $\mathfrak{N}$, the remaining ones on $\mathcal{R} - \mathcal{B}$ could be simultaneously produced by $\mathfrak{N}$. When $M_i = M$, $|\mathcal{R}| = M^n$ and $|\mathcal{B}| = (M-1)n + 1$, the solution for $(M-1)n + 1$ regions could completely determine $\mathcal{S}(\boldsymbol{x})$ for all of the $M^n$ regions.
\end{cl}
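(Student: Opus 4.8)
The plan is to realize the standard partition as a product grid, coordinatize its regions by multi-indices, read off $\mathcal{B}$ explicitly, and then propagate from $\mathcal{B}$ to the interior by an induction whose single step is nothing but the smooth-continuity restriction of theorem 15.

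First I would align coordinates so that the $i$th strict partial order $\mathscr{P}_i = l_{i1}\prec\cdots\prec l_{i,M_i-1}$ is the $i$th family of mutually parallel hyperplanes, which cuts direction $i$ into $M_i$ slabs. Each region of $\mathcal{R}$ then receives a unique multi-index $\boldsymbol{\kappa} = (\kappa_1,\dots,\kappa_n)$ with $1\le\kappa_i\le M_i$, where $\kappa_i$ is its rank along $\mathscr{P}_i$; I write $R_{\boldsymbol{\kappa}}$ for this region, so that $|\mathcal{R}| = \prod_{i=1}^{n}M_i$. Under this indexing the boundary $\mathcal{B}$ is exactly the set of regions with at most one coordinate exceeding $1$, i.e. the union of the $n$ coordinate chains of regions issuing from the minimal region $R_{(1,\dots,1)}$; hence $|\mathcal{B}| = 1 + \sum_{i=1}^{n}(M_i-1)$, which reduces to $(M-1)n+1$ when every $M_i = M$, while $|\mathcal{R}| = M^n$. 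This settles the counting assertion.

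Second, I would set the induction measure $S(R_{\boldsymbol{\kappa}}) := \sum_{i=1}^{n}(\kappa_i-1)$ and argue by strong induction on $S$. The base cases $S\le 1$ are precisely the regions with at most one advanced coordinate, all of which lie in $\mathcal{B}$ and are realized by hypothesis. For the inductive step, take $R_{\boldsymbol{\kappa}}\notin\mathcal{B}$, so at least two coordinates are advanced, say $\kappa_a,\kappa_b\ge 2$ with $a\ne b$. The hyperplanes $l_{a,\kappa_a-1}$ and $l_{b,\kappa_b-1}$ both bound $R_{\boldsymbol{\kappa}}$, and since they come from different families of a standard partition they are transversal, so $l_{a,\kappa_a-1}\cap l_{b,\kappa_b-1}\ne\emptyset$ and the two corresponding facets of $R_{\boldsymbol{\kappa}}$ meet at a corner --- exactly the configuration of theorem 15 (equivalently lemma 14, with $R_3 = R_{\boldsymbol{\kappa}}$). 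These facets are shared with the neighbors $R_{\boldsymbol{\kappa}-\boldsymbol{e}_a}$ and $R_{\boldsymbol{\kappa}-\boldsymbol{e}_b}$ obtained by decrementing one advanced coordinate, each having measure $S(R_{\boldsymbol{\kappa}})-1$ and therefore already realized by the induction hypothesis. Because $\mathcal{S}(\boldsymbol{x})\in C^{m-1}$ across each shared facet, realizing these two neighbors realizes the part of $\mathcal{S}(\boldsymbol{x})$ on $\mathcal{L}_{\nu}\cup\mathcal{L}_{\mu}$, whereupon theorem 15 forces the polynomial on $R_{\boldsymbol{\kappa}}$ to be produced simultaneously. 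Strong induction then shows that fixing $\mathcal{S}(\boldsymbol{x})$ on $\mathcal{B}$ determines it on all of $\mathcal{R}-\mathcal{B}$.

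The step I expect to be the main obstacle is the lifting of the genuinely planar mechanism of theorem 15 to the $n$-dimensional grid: theorem 15 speaks only of a single crossing of two hyperplanes, so the whole argument hinges on (i) choosing $S$ so that every interior region reduces to two strictly lower regions, and (ii) transversality --- that the two decrement directions $a\ne b$ furnish intersecting hyperplanes, which the product structure guarantees but which would fail for two hyperplanes drawn from the same family. Once these are secured the induction is routine, and the count $(M-1)n+1$ follows at once from the explicit description of $\mathcal{B}$.
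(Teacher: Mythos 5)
Your proposal is correct and follows essentially the same route as the paper: the paper disposes of this corollary with a one-line pointer to the proof of theorem 17, which itself rests on the smooth-continuity restriction of theorem 15 (equivalently lemma 14) propagating the solution inward from the boundary $\mathcal{B}$, and your multi-index coordinatization with strong induction on $S(R_{\boldsymbol{\kappa}})=\sum_i(\kappa_i-1)$ is precisely that propagation argument made explicit, together with the count $|\mathcal{B}|=1+\sum_i(M_i-1)=(M-1)n+1$. The only slip is your claim that the regions with $S\le 1$ are \emph{precisely} those with at most one advanced coordinate (a region with one coordinate equal to $3$ lies in $\mathcal{B}$ but has $S=2$); this is immaterial, since the direction you need ($S\le1$ implies membership in $\mathcal{B}$) holds and regions of $\mathcal{B}$ with larger $S$ are covered by hypothesis rather than by the inductive step.
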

\begin{proof}
This corollary is a direct consequence of the proof of theorem 17.
\end{proof}

\begin{thm}[Multivariate universal approximation]
Notations being from theorem 17, any function $f(\boldsymbol{x}) \in C^{m}([0,1]^n)$ can be approximated by $\mathfrak{N}$ to arbitrarily desired accuracy, in terms of the implementation of a spline $\mathcal{S}(\boldsymbol{x}) \in \mathfrak{S}_n^m(H, \mathcal{R})$ approximating $f(\boldsymbol{x})$. To achieve a certain approximation error, if $\zeta$ polynomial pieces of $\mathcal{S}(\boldsymbol{x})$ are required, the number of the units of the hidden layer of $\mathfrak{N}$ must fulfil inequality 6.57.
\end{thm}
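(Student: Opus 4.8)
The plan is to combine the spline-construction result of theorem 11 with the spline-implementation result of theorem 17, exactly mirroring the univariate argument of theorem 7, and to control the total error by a single application of the triangle inequality. First I would fix a desired accuracy $\varepsilon > 0$ and select a standard partition $H$ of $U = [0,1]^n$ fine enough that every region of the induced $\mathcal{R}$ has small volume; since refining a standard partition (by inserting further hyperplanes into each of the strict partial orders) preserves the standard structure, such a refinement is always available. By theorem 11 applied to this $H$, there exists a spline $\mathcal{S}(\boldsymbol{x}) \in \mathfrak{S}_n^m(H, \mathcal{R})$, built from the integral of a piecewise linear approximation to the $m-1$th directional-derivative hypersurface of $f(\boldsymbol{x})$, satisfying
\begin{equation}
\lVert \mathcal{S}(\boldsymbol{x}) - f(\boldsymbol{x}) \rVert_2 < \varepsilon/2,
\end{equation}
provided the region volumes are sufficiently small.

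Next I would invoke theorem 17 to realize this spline by the network. Because $H$ is a standard partition and the global units of $\mathfrak{N}$ are taken from at least $\binom{n+m}{m}$ universal global hyperplanes, theorem 17 yields a two-layer network $\mathfrak{N}$ whose output $g(\boldsymbol{x}) = \sum_i \lambda_i \phi_i(\boldsymbol{x})$ obeys
\begin{equation}
\lVert g(\boldsymbol{x}) - \mathcal{S}(\boldsymbol{x}) \rVert_2 < \varepsilon/2,
\end{equation}
with the number of hidden units satisfying inequality 6.57. Combining the two estimates through the triangle inequality gives
\begin{equation}
\lVert g(\boldsymbol{x}) - f(\boldsymbol{x}) \rVert_2 \le \lVert g(\boldsymbol{x}) - \mathcal{S}(\boldsymbol{x}) \rVert_2 + \lVert \mathcal{S}(\boldsymbol{x}) - f(\boldsymbol{x}) \rVert_2 < \varepsilon,
\end{equation}
which is the asserted universal approximation, with the unit count inherited directly from theorem 17.

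The main obstacle is not the error bookkeeping but the compatibility of the two theorems' hypotheses on the partition. Theorem 11 permits an arbitrary hyperplane partition, whereas theorem 17 insists on a standard partition together with a prescribed stock of universal global hyperplanes; I must therefore carry out the directional-derivative spline construction of theorem 11 specifically on a standard partition and check that shrinking the region volumes---needed to drive $\lVert \mathcal{S}(\boldsymbol{x}) - f(\boldsymbol{x}) \rVert_2$ below $\varepsilon/2$---can be done while keeping the partition standard. This is secured by refining each of the $n$ strict partial orders $\mathscr{P}_i$ independently, which only increases each $M_i$ and hence $\zeta$, leaving both the standard structure and the applicability of the smooth-continuity restriction of theorem 15 intact. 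Consequently the relation $\Theta \ge (\zeta^{1/n}-1)n + \binom{n+m}{m}$ of inequality 6.57 transfers verbatim to the realized network, completing the argument.
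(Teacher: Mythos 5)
Your proposal is correct and follows essentially the same route as the paper: the paper's (very terse) proof simply notes that the argument is formally identical to the univariate case and to the ReLU analogue, i.e., construct an approximating spline on a standard partition (theorem 11) and then implement it with the network (theorem 17), which is exactly your decomposition with the triangle-inequality bookkeeping and the partition-compatibility check made explicit. Your observation that theorem 11 must be applied specifically to a refinable standard partition is a worthwhile detail that the paper leaves implicit.
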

\begin{proof}
The proof is formally the same as that of \citet*{Huang2024}'s theorem 10. The preceding results of section 6 provided all the prerequisites to prove this theorem.
\end{proof}

\subsection{Two-Sided Bases}
To two-sided solutions over a single strict partial order, equation 6.55 of section 6.6 has provided a general principe; this section investigates the more complicated case for multiple strict partial orders. Given a unit $\mathcal{U}$ with activation function $\phi(\boldsymbol{x})=\sigma(\boldsymbol{w}^T\boldsymbol{x}+b)$, if $\phi(\boldsymbol{x})$ is changed into $\phi(\boldsymbol{x})=\sigma(-\boldsymbol{w}^T\boldsymbol{x}-b)$, we call the associated unit a negative form of $\mathcal{U}$, denoted by $-\mathcal{U}$. Correspondingly, the hyperplane $-l$ with equation $-\boldsymbol{w}^T\boldsymbol{x}-b=0$ is the negative form of $l$ whose equation is $\boldsymbol{w}^T\boldsymbol{x}+b=0$.

\begin{dfn}[Two-Sided Bases]
Under the notations of section 6.6, to arbitrary local unit $u_{j}$ for $j = 1, 2, \dots, \zeta$ from the one-sided bases $\mathfrak{B}_1$ of equation 6.51, add a negative unit $-u_i$ of $u_i$ or the original $u_i$ is substituted by $-u_i$; we call this process a negative-unit operation. After at least one negative-unit operation, if the rank of the corresponding matrix $\mathscr{A}$ of equation 6.54 is $\gamma$ of equation 6.56, all the activation functions are called a set of the two-sided bases of $\mathfrak{S}_n^m(H, \mathcal{R})$.
\end{dfn}

\begin{figure}[t!]
\captionsetup{justification=centering}
\centering
\includegraphics[width=2.3in, trim = {3.7cm 1.9cm 2.8cm 1.0cm}, clip]{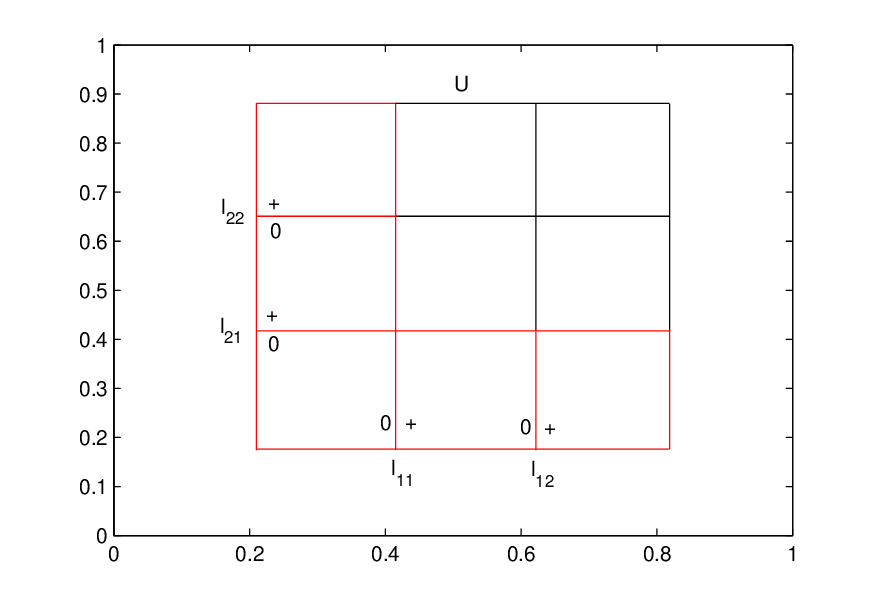}
\caption{A standard partition.}
\label{Fig.5}
\end{figure}

\begin{prp}
Denote by $H = \{l_{ij}: 1 \le i \le n, 1 \le  j \le M_i-1\}$ a set of $n-1$-dimensional hyperplanes of $\mathbb{R}^n$, with the equation of $l_{ij}$ being $x_i = j/M_i$. The set $H$ forms a standard partition of $U = [0, 1]^n$ and $n$ strict partial orders including $\mathscr{P}_i = l_{i1} \prec l_{i2} \prec \dots \prec l_{i,M_i-1}$ for all $i$. Let $\mathcal{R}$ be the set of all the ordered regions of $\mathscr{P}_i$'s, and $\mathfrak{N}$ be a two-layer neural network with generalized sigmoidal units, whose local and global units are from $H$ and at least $\binom{n+m}{m}$ universal global hyperplanes, respectively. Then by theorem 17, any $\mathcal{S}(\boldsymbol{x}) \in \mathfrak{S}_n^m(H, \mathcal{R})$ can be implemented by $\mathfrak{N}$ with arbitrary precision. If at least one of the units of some $\mathscr{P}_i$ is changed into its negative form and if each alteration results in two-sided bases of $\mathscr{P}_i$, a solution of $\mathcal{S}(\boldsymbol{x})$ via network $\mathfrak{N}$ still exists by resetting the output weights.
\end{prp}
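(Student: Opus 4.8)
The plan is to reduce this multi-order statement to the single-strict-partial-order machinery of section 6.6, using the boundary-determination principle to decouple the orders, and then to treat the negative-unit operations as a purely linear-algebraic perturbation of the spline matrix. First I would invoke theorem 17: the original all-positive network $\mathfrak{N}$ already implements $\mathcal{S}(\boldsymbol{x})$, so the target coefficient vector $\boldsymbol{b}$ is attainable and the relevant linear system is consistent before any negative-unit operation. By corollary 2 (the boundary-determination principle) it suffices to realize the polynomial pieces of $\mathcal{S}(\boldsymbol{x})$ on the boundary $\mathcal{B}$ of the standard partition, since theorem 15 (smooth-continuity restriction) then produces every interior polynomial automatically. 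The boundary $\mathcal{B}$ decomposes into a single shared corner region together with $n$ ``strips'', one per strict partial order $\mathscr{P}_i$, each strip being linearly ordered by $\prec$ exactly as in equation 6.48.

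Second, I would treat each strip as an instance of the single-order framework. Along $\mathscr{P}_i$ the recurrence formula 6.50 and the spline matrix $\mathscr{A}$ of equation 6.54 govern the output weights, and by theorem 16 the solvability of $\mathscr{A}\boldsymbol{\lambda} = \boldsymbol{b}$ (equation 6.55) is equivalent to the rank of $\mathscr{A}$ being $\gamma$ of equation 6.56. The negative-unit operations in the present statement are precisely the operations of definition 16 applied to the local units of $\mathscr{P}_i$; the hypothesis that each alteration yields two-sided bases is, by that definition, exactly the statement that the rank of the altered $\mathscr{A}$ remains $\gamma$. Hence the altered system stays solvable and the output weights along each strip can be reset to reproduce the prescribed boundary polynomials. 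This is the multivariate analogue of proposition 4 and lemma 5.

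Third, I would assemble the per-order solutions into a single global assignment of weights. I would fix the $\binom{n+m}{m}$ global output weights first, from the corner polynomial, via lemma 6 and theorem 10; because the $n$ strips meet only in that corner region, the local-weight solutions obtained independently on the separate strips share a common, already-determined corner and are therefore mutually compatible. Combining them yields a weight vector realizing $\mathcal{S}(\boldsymbol{x})$ on all of $\mathcal{B}$, and smooth-continuity restriction finishes the interior, establishing existence of the reset solution.

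The main obstacle is the cross-order interference introduced by a negative unit of $\mathscr{P}_i$: its hyperplane cuts across the entire cube $U$, so a priori its contribution could perturb the strips of the other orders and the interior. I would resolve this in two steps. On the zero side of the negative hyperplane the contribution vanishes to arbitrary precision by lemma 11 (zero-part error), so no disturbance reaches those regions; on the nonzero side the disturbance is a single truncated-power term $(\boldsymbol{w}^T\boldsymbol{x}+b)_+^m$, which by theorem 12 and the recurrence 6.50 can be absorbed into the corresponding local output weight of that same order, exactly as in the compensation of lemma 5. The two-sided-bases rank hypothesis is precisely the certificate that this absorption is always feasible, so the global system stays consistent and a solution via resetting the output weights exists.
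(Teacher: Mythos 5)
Your overall route parallels the paper's own proof: start from theorem 17 for the unmodified network, use the per-order two-sided-bases hypothesis (definition 16, equations 6.54--6.56) to keep each $\mathscr{P}_i$'s linear system solvable, and let smooth-continuity restriction (theorem 15, corollary 2) propagate a boundary solution to the interior. The paper argues exactly this, informally, through the Figure 5 example, describing a negative unit of one order as a ``global influence'' that the remaining parameters compensate.

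There is, however, a genuine gap in your assembly step, and it occurs precisely in the case the proposition is about. You fix the $\binom{n+m}{m}$ global output weights \emph{first}, to produce the corner polynomial, and then claim the strips can be solved independently, with any cross-order disturbance from a negative unit ``absorbed into the corresponding local output weight of that same order.'' This fails on the corner region. A negative unit $-u_{ij}$ must carry a nonzero output weight, because after the substitution it is the only unit that can produce the increment of $\mathcal{S}(\boldsymbol{x})$ at its knot $x_i = j/M_i$; consequently it contributes a nonzero polynomial, approximately $\lambda c\,(j/M_i - x_i)_+^m$, on the slab $x_i < j/M_i$, which contains the corner region. On that region no local output weight is available for absorption: every positive local unit of every order is approximately zero there, and the weights of any other negative units are themselves pinned down by the increments they must produce. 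The disturbance can only be cancelled by the global units, as the identity $(-z)_+^m = (-z)^m + (-1)^{m+1}z_+^m$ makes explicit: a negative unit's contribution is a full degree-$m$ polynomial (compensable only by globals) plus a one-sided increment at its own knot. Hence the global weights cannot be frozen to the target corner polynomial before the strips are handled; they must be set after, or jointly with, the negative units' weights, namely to the corner polynomial \emph{minus} the accumulated corner contributions of all negative units. With your ordering, every strip containing a negative unit yields an inconsistent system. The repair is to reverse the dependency---determine each order's increments (forcing the negative units' weights), then let the globals compensate---which is exactly what the paper's ``global influence'' compensation, and the rank-$\gamma$ condition on the full matrix $\mathscr{A}$ including its global block, encode.
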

\begin{proof}
Before modification, $\mathcal{S}(\boldsymbol{x})$ is constructed by theorem 17. The principle of this theorem can be illustrated by the example of Figure \ref{Fig.5}. In Figure \ref{Fig.5}, there are two strict partial orders $\mathscr{P}_1 = l_{11} \prec l_{12}$ and $\mathscr{P}_2 = l_{21} \prec l_{22}$; lines $l_{ij}$ for $i, j = 1, 2$ form a standard partition of $U = [0, 1]^2$.

A typical feature of $\mathscr{P}_1$ and $\mathscr{P}_2$ is that any line of $\mathscr{P}_1$ cannot influence the ordered regions of $\mathscr{P}_2$ in terms of positive outputs, and vice verse; on the other hand, this feature implies that any negative line of $\mathscr{P}_1$ (or $\mathscr{P}_2$) would contribute to a global influence on $\mathscr{P}_2$ (or $\mathscr{P}_1$), such that $\mathscr{P}_2$ (or $\mathscr{P}_1$) can reformulate its parameters to produce the original function without disturbing $\mathscr{P}_1$ (or $\mathscr{P}_2$).

Thus, if we change one of the lines, $l_{21}$, say, of $\mathscr{P}_2$ into its negative form, the parameter readjusting of $\mathscr{P}_1$ can compensate this disturbance and restore the original output, without influencing $\mathscr{P}_2$ simultaneously. So if the altered units of $\mathscr{P}_2$ forms a set of two-sided bases, the output of $\mathscr{P}_2$ can also remain the same. By smooth-continuity restriction, a solution of $\mathcal{S}(\boldsymbol{x})$ still exists. The principle underlying the above example is related to an order of multiple strict partial orders and can be founded in \citet*{Huang2024}'s theorem 7.
\end{proof}

\begin{thm}[Local solution for two-sided bases]
Let $H$ be a set of $n-1$-dimensional hyperplanes of $\mathbb{R}^n$ and $\mathfrak{N}$ a two-layer neural network composed of generalized sigmoidal units whose hidden-layer units are from $H$. Suppose that $H' \subset H$ doesn't form a strict partial order but can be modified to one (denoted by $\mathscr{P}$) by changing some of its hyperplanes into their negative form. Denote by $\mathcal{R}'$ the set of ordered regions of $\mathscr{P}$. Suppose that $\mathfrak{N}$ generates a spline $\mathcal{S}'(\boldsymbol{x}) \in \mathfrak{S}_n^m(H', \mathcal{R}')$. Then the output weights of the units of $H'$ for the production of $\mathcal{S}'(\boldsymbol{x})$ must obey the rule of two-sided bases over $\mathscr{P}$ in terms of equation 6.55.
\end{thm}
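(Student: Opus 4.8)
The plan is to pass to the single strict partial order framework of section 6.6 via the negation map and then match polynomial coefficients region by region along the order; the necessity assertion will then follow because the realized spline forces every matching equation to hold, so the output weights are by construction a solution of the system of equation 6.55.

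First I would apply the negation operation to exactly those hyperplanes of $H'$ whose reversal turns $H'$ into the strict partial order $\mathscr{P}$. Under this operation each negative unit becomes positively oriented with respect to $\mathscr{P}$, its nonzero (positive-output) support now agreeing with the ordering of the regions $\mathcal{R}'$. Since $\mathcal{R}'$ is precisely the set of ordered regions of $\mathscr{P}$, the recurrence formula of equation 6.50 (theorem 12) applies to the realized spline $\mathcal{S}'(\boldsymbol{x})$: writing $s_\nu(\boldsymbol{x})$ for its piece on the $\nu$th ordered region, consecutive pieces differ by a single truncated-power term across the separating knot, with jump coefficient $\alpha_{\nu-1}$.

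Next I would expand each activation function $\phi_i(\boldsymbol{x})$, in its original orientation, as a spline $s_i(\boldsymbol{x})\in\mathfrak{S}_n^m(H',\mathcal{R}')$ by theorem 11, so that $\mathcal{S}'(\boldsymbol{x})=\sum_i\lambda_i s_i(\boldsymbol{x})$ as in equation 6.41. On the minimal region of $\mathscr{P}$ only the global units contribute their local expansions, and equating coefficients with the target polynomial gives the top block $\mathcal{A}$ of equation 6.54, namely the transposed generalized Wronskian matrix $\mathcal{W}(u_i(\boldsymbol{x}_0))^T$ of equation 5.10, together with the adjacent block $\mathcal{B}$. Then, proceeding across each knot in the order $\prec$, the $m$th-order jump of $\mathcal{S}'(\boldsymbol{x})$ (the coefficient $\alpha_{\nu-1}$ of equation 6.50) must equal the weighted sum of the per-unit jump coefficients $\beta$; a negative unit here supplies the reversed-support, sign-adjusted coefficient exactly as in the univariate two-sided example of section 4.2, and the indicator functions $\mathbb{I}_j$ of equation 6.53 select which units are active on each region. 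Collecting these knot equations produces the blocks $\mathcal{C}$ and $\mathcal{D}$, so the whole matching is precisely the system $\mathscr{A}\boldsymbol{\lambda}=\boldsymbol{b}$ of equation 6.55. Because $\mathfrak{N}$ by hypothesis realizes $\mathcal{S}'(\boldsymbol{x})$, every such equation holds identically, whence the output weights $\boldsymbol{\lambda}$ solve equation 6.55, which is the two-sided basis rule over $\mathscr{P}$.

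The main obstacle I anticipate is the correct bookkeeping of the negative units inside $\mathscr{A}$: one must verify that their reversed support and signed recurrence coefficients slot into the matrix exactly as in the added column of the two-sided example (the block $D_1'$ of section 4.2), so that no spurious contribution leaks onto the wrong side of a knot. Here smooth-continuity restriction (theorem 15) does the essential work, guaranteeing that fixing the pieces consistently along the strict partial order determines all remaining pieces; this is what makes the finite system of knot-jump equations exhaustive and supports the rank count $\gamma=\zeta-1+\binom{n+m}{m}$ of equation 6.56, so that equation 6.55 genuinely characterizes the admissible weights rather than merely being satisfied by them.
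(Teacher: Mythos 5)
Your proposal elaborates, in detail, how the system of equation 6.55 arises over $\mathscr{P}$ --- negation map, recurrence 6.50, spline expansion of the activation functions, coefficient matching on the minimal region and across each knot --- and that part is consistent with the machinery of sections 4.2 and 6.6 that the paper itself invokes. But it omits the one point that constitutes the paper's entire proof: the theorem's hypothesis allows the hidden layer of $\mathfrak{N}$ to contain units from all of $H$, not merely from $H'$ plus designated global units, while the realized spline $\mathcal{S}'(\boldsymbol{x})$ lives in $\mathfrak{S}_n^m(H', \mathcal{R}')$. Your expansion step ``$\mathcal{S}'(\boldsymbol{x}) = \sum_i \lambda_i s_i(\boldsymbol{x})$ with $s_i(\boldsymbol{x}) \in \mathfrak{S}_n^m(H', \mathcal{R}')$'' is not available for a unit whose hyperplane lies in $H \setminus H'$ and cuts through the regions of $\mathcal{R}'$: the knot behavior of such an activation function occurs along a hyperplane that is not a knot of $\mathscr{P}$, so it admits no spline representation over $(H', \mathcal{R}')$, and your bookkeeping (the blocks $\mathcal{A}$, $\mathcal{B}$, $\mathcal{C}$, $\mathcal{D}$ and the indicator functions of equation 6.53) has no slot for it. You implicitly lump every unit outside $H'$ in with the ``global units,'' but that is precisely the assertion that must be made and justified before the reduction to $\mathscr{P}$ is legitimate.

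The paper's proof consists exactly of this missing step: the influence (if any) of the other hyperplanes on $\mathcal{R}'$ can be regarded as that of global ones \emph{for} $\mathcal{R}'$, after which, considering only $\mathscr{P}$, the two-sided bases principle of definition 16 and equation 6.55 applies by construction. So the repair is short but essential: first argue that, relative to the ordered regions $\mathcal{R}'$, every unit of $H \setminus H'$ contributes like a global unit (its contribution being absorbed into the global blocks and into the right-hand side $\boldsymbol{b}$), and only then run your region-by-region matching for the units of $H'$. A secondary point: your closing claim that smooth-continuity restriction makes equation 6.55 ``genuinely characterize'' the admissible weights asserts more than the theorem does --- the statement is a necessity claim only, and the rank condition 6.56 concerns existence of a solution, not a characterization of all of them.
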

\begin{proof}
The influence (if any) of other hyperplanes on $\mathcal{R}'$ can be regarded as that of global ones for $\mathcal{R}'$; and thus when only considering $\mathscr{P}$, the principle of two-sided bases must be satisfied.
\end{proof}

\subsection{Generalized Tanh-Unit Case}
Under the principle similar to corollary 1, when the input-dimensionality $n\ge2$, the preceding results of section 6 for two-layer neural network with generalized sigmoid units are all applicable to the generalized tanh-unit case, except that the number of the units required should be increased by at least 1.

\section{Explanation of Training Solutions}
This section uses experiments to demonstrate that our theory grasps the main principle of solutions of two-layer neural networks obtained by the back-propagation algorithm. The examples to be given can also be regarded as the phenomena predicted by the theory.

\subsection{Preliminaries}
Since the training process deals with discrete data points, we use the following method to check whether a unit is local or global, which can be regarded as a discrete version of definition 5. Suppose that
\begin{equation}
\big\{\sum_{i}\big(f(x_i) - \sum_{j}\lambda_j\phi_j(x_i)\big)^2\big\}^{1/2} = \varepsilon.
\end{equation}
To each unit $u_{\nu}$ whose activation function is $\phi_{\nu}(x)=\sigma(w_{\nu}x+b_{\nu})$, when $w_{\nu} > 0$, by truncation operation at $0< x_k<1$, equation 7.1 becomes
\begin{equation}
\big\{\sum_{i:x_i\le x_k}\big(f(x_i) - \sum_{j \ne \nu}\lambda_j\phi_j(x_i)\big)^2 + \sum_{i:x_i>x_k}\big(f(x_i) - \sum_{j}\lambda_j\phi_j(x_i)\big)^2\big\}^{1/2} = \varepsilon'.
\end{equation}
The zero-error point of $u_{\nu}$ is determined by 
\begin{equation}
z_{\nu0} = \sup\{x_k: |\varepsilon' - \varepsilon| < \gamma_1\varepsilon\},
\end{equation}
where $0 < \gamma_1 < 1$ is a threshold.

Corresponding to equation 3.57, we check if
\begin{equation}
|\alpha_{\nu}\phi_{\nu}(0)| > \gamma_2\varepsilon,
\end{equation}
where $0 < \gamma_2 < 1$, to exclude the case that as $\varepsilon \to 0$, a unit reduces its truncation error by decreasing $x_k$ until $[0, x_k]$ shrinks to a point. If inequality 7.4 is satisfied, it is not a local unit. Thus, if $0 < z_{\nu0} < 1$ and $|\alpha_{\nu}\phi_{\nu}(0)| \le \gamma_2\varepsilon$, $u_{\nu}$ is a local unit; if $z_{\nu0}\le 0$ or $z_{\nu0}\ge 1$ or $|\alpha_{\nu}\phi_{\nu}(0)|>\gamma_2\varepsilon$, it is a global unit.

If $w_{\nu} < 0$, equations from 7.2 to 7.4 are modified to
\begin{equation}
\big\{\sum_{i:x_i\ge x_k}\big(f(x_i) - \sum_{j \ne \nu}\lambda_j\phi_j(x_i)\big)^2 + \sum_{i:x_i< x_k}\big(f(x_i) - \sum_{j}\lambda_j\phi_j(x_i)\big)^2\big\}^{1/2} = \varepsilon',
\end{equation}
\begin{equation}
z_{\nu0} = \inf\{x_k: |\varepsilon' - \varepsilon| < \gamma_1\varepsilon\},
\end{equation}
and
\begin{equation}
|\alpha_{\nu}\phi_{\nu}(1)| > \gamma_2\varepsilon,
\end{equation}
respectively. Equation 7.1 is a local approximation provided that all $u_{j}$'s are global unit; otherwise, it is a global approximation with local units.

We give a criterion to determine whether a unit is inactivated. Let
\begin{equation}
\big\{\big(\sum_{i}\big(f(x_i)-\sum_{j \ne \nu}\lambda_j\phi_j(x_i)\big)^2\big\}^{1/2} = \varepsilon''.
\end{equation}
If
\begin{equation}
|\varepsilon'' - \varepsilon| < \gamma_3\varepsilon,
\end{equation}
where $0 < \gamma_3 < 1$, $u_{\nu}$ is inactivated.

When the input-dimensionality $n\ge2$, equation 7.1 becomes
\begin{equation}
\big\{\sum_{i}\big(f(\boldsymbol{x}_i)-\sum_{j}\lambda_j\phi_j(\boldsymbol{x}_i)\big)^2\big\}^{1/2} = \varepsilon.
\end{equation}
Each unit $u_j$ leads to an $n-1$-dimensional hyperplane $\boldsymbol{w}_j^Tx + b_j = 0$. To the truncation operation of $u_{\nu}$ on $L_{\nu}^0$ whose equation is $\boldsymbol{w}_{\nu}^Tx + b_{\nu}' = 0$, where $b_{\nu}'<b_{\nu}$, the higher-dimensional counterpart of equation 7.2 is
\begin{equation}
\big\{\sum_{i:\boldsymbol{x}_i \in L_{\nu}^0}\big(f(\boldsymbol{x}_i)-\sum_{j \ne \nu}\lambda_j\phi_j(\boldsymbol{x}_i)\big)^2 + \sum_{i:\boldsymbol{x}_i \in L_{\nu}^+}\big(f(\boldsymbol{x}_i)-\sum_{j}\lambda_j\phi_j(\boldsymbol{x}_i)\big)^2\big\}^{1/2} = \varepsilon'.
\end{equation}
The parameter $b_{\nu0}$ of the zero-error hyperplane is determined by
\begin{equation}
b_{0\nu} = \sup\{b_{\nu}': |\varepsilon' - \varepsilon| < \gamma_1\varepsilon\}.
\end{equation}

To the condition of equation 6.32, we make an analogy with inequality 7.4 as follows. Let
\begin{equation}
\mathcal{B}_{\nu} = \inf\{b_{\nu}': U \cap L_{\nu}^0 \ne \emptyset\}
\end{equation}
and $\mathscr{L}_{\nu}$ be the hyperplane of equation $\boldsymbol{w}_{\nu}^Tx + \mathcal{B}_{\nu} = 0$. Then use
\begin{equation}
\sum_{x_i \in S}|\alpha_{\nu}\phi_{\nu}(x_i)| > \gamma_2\varepsilon,
\end{equation}
where $S = U \cap \mathscr{L}_{\nu}$, to play the role as inequality 7.4.

The difference between
\begin{equation}
\big\{\sum_{i}\big(f(\boldsymbol{x}_i) - \sum_{j \ne \nu}\alpha_j\phi_j(\boldsymbol{x}_i)\big)^2\big\}^{1/2} = \varepsilon''.
\end{equation}
and $\varepsilon$ of equation 7.10 can determine whether a unit is inactivated as inequality 7.9.

\subsection{One-Dimensional Input}
\begin{figure}[htp]
\captionsetup{justification=centering}
\centering
\subfloat[Example 1]{\includegraphics[width=2.9in, trim = {1.1cm 0.7cm 0.9cm 0.1cm}, clip]{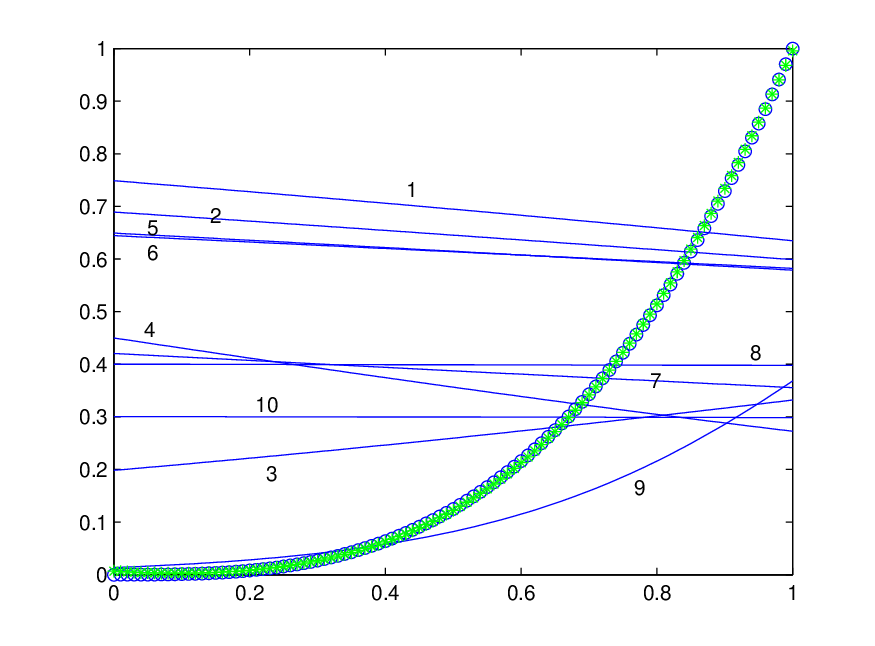}}
\subfloat[Example 2]{\includegraphics[width=2.9in, trim = {1.1cm 0.7cm 0.9cm 0.1cm}, clip]{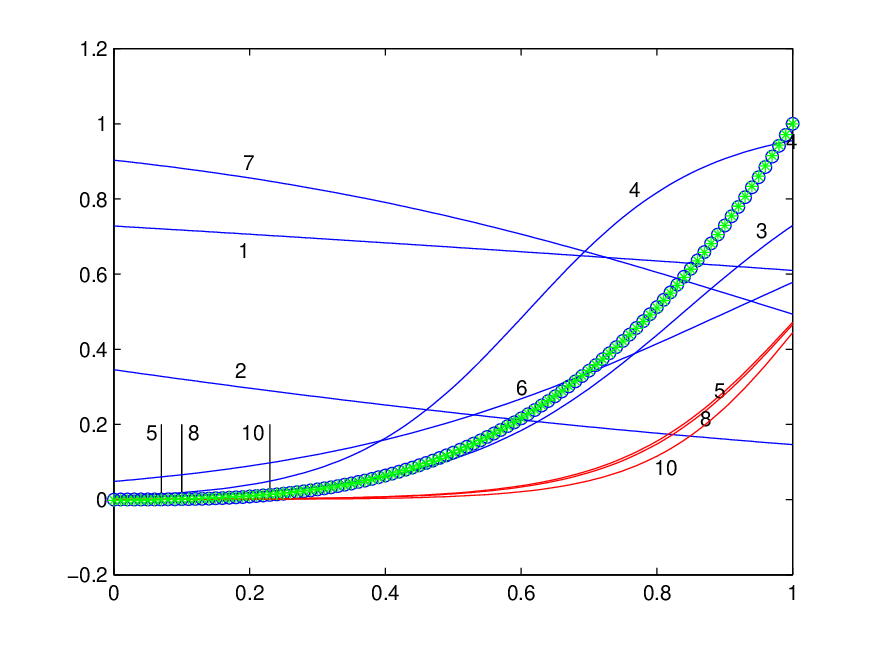}} \\
\subfloat[Example 3]{\includegraphics[width=2.9in, trim = {1.1cm 0.7cm 0.9cm 0.1cm}, clip]{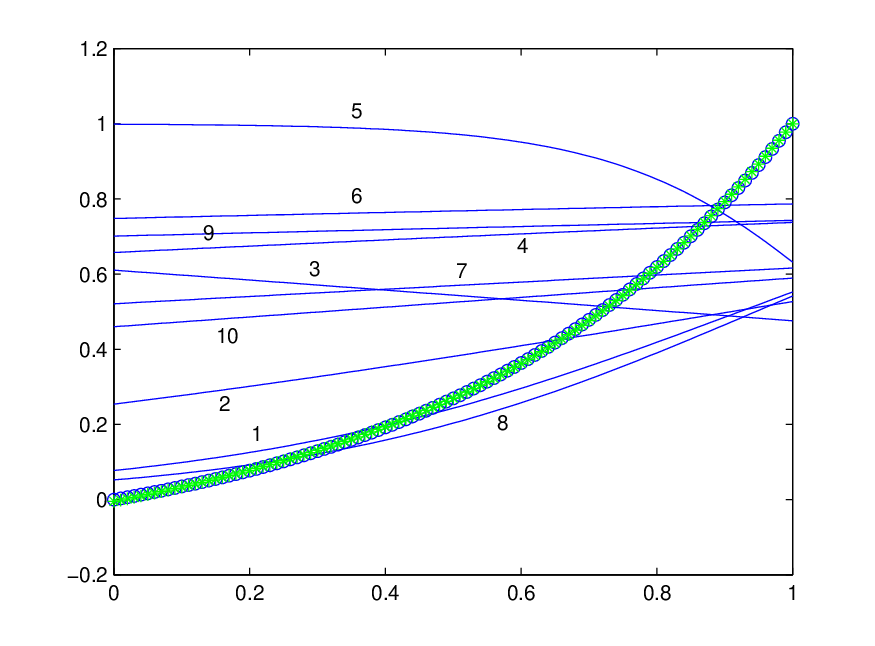}}
\subfloat[Example 4]{\includegraphics[width=2.9in, trim = {1.1cm 0.7cm 0.9cm 0.1cm}, clip]{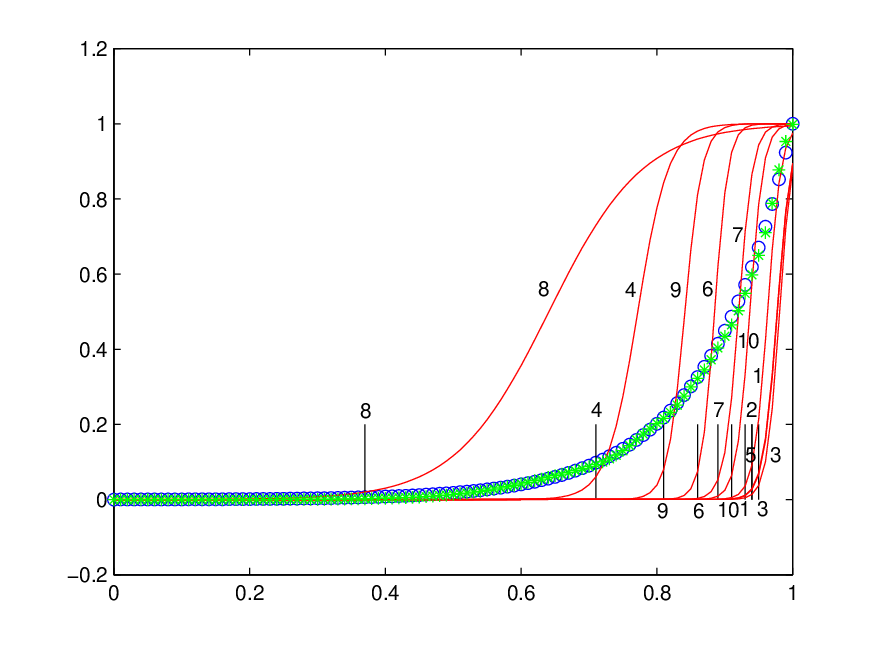}} \\
\subfloat[Example 5]{\includegraphics[width=2.9in, trim = {1.1cm 0.7cm 0.9cm 0.1cm}, clip]{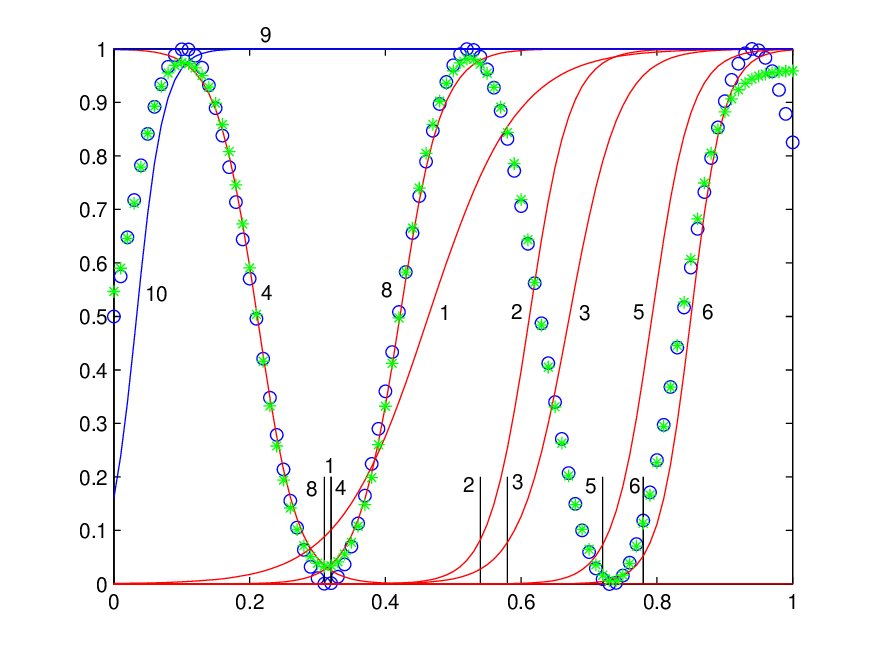}}
\subfloat[Example 6]{\includegraphics[width=2.9in, trim = {1.1cm 0.7cm 0.9cm 0.1cm}, clip]{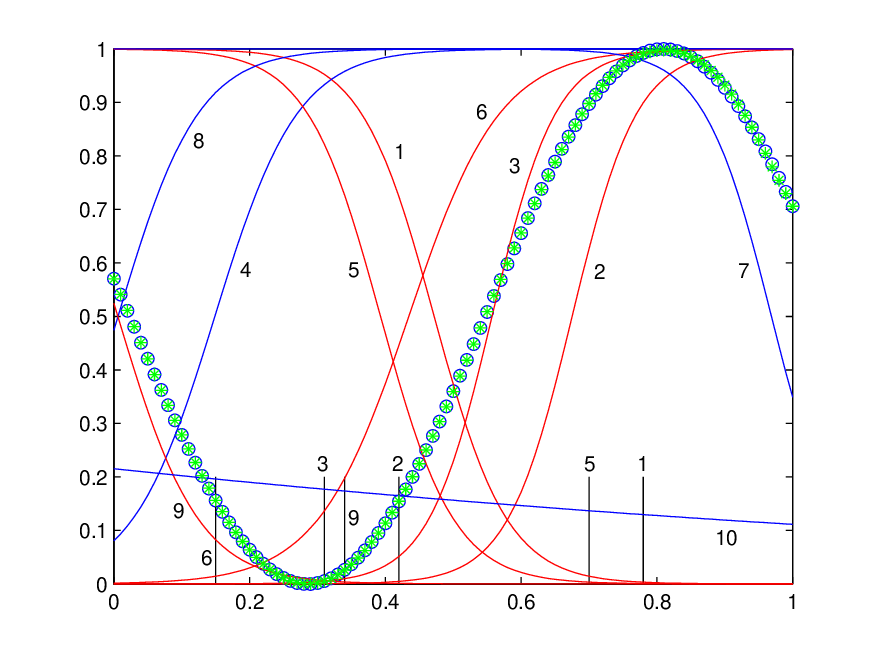}}
\caption{Solution explanation for one-dimensional input.}
\label{Fig.6}
\end{figure}

In practice, equation 7.3 is used in terms of $z_{\nu0} = \max\{x_k: |\varepsilon'-\varepsilon| < \gamma_1\varepsilon\}$ by discretizing $x_k \in [0, 1]$ in step length 0.01, and equation 7.6 is similar. The activation function of hidden-layer units of a two-layer neural network $\mathfrak{N}$ to be trained is the logistic sigmoid type $\sigma(x) = 1 / (1 + e^{-x})$; $\mathfrak{N}$ is sometimes called a logistical sigmoidal neural network in the rest of section 7.

In Figure \ref{Fig.6}a, the circle points are from the discretization of a continuous function $y = x^3 + 3$ on $[0, 1]$ with step length $0.01$, and are fitted by $\mathfrak{N}$ with $\Theta = 10$ units in the hidden layer. In order to show $y(x)$ and the activation functions on the same graph, the former is normalized according to the latter by scaling its function values to have maximum 1. The green asterisks are produced by $\mathfrak{N}$. The weights and biases of $\mathfrak{N}$ are randomly initialized by the uniform distribution $U(-1, 1)$ and are optimized by the back propagation algorithm with the learning rate $c = 0.05$ and the number of training iterations $N = 5000$. Set $\gamma_1 = \gamma_2 = \gamma_3 = 0.01$.

If a unit is a local one, a black line segment will be placed at its zero-error point (such as the examples of Figure \ref{Fig.6}b) and the corresponding activation-function curve is depicted by red colors; otherwise it is a global unit represented by blue curves. The number nearby a line or curve is the index of the unit. In Figure \ref{Fig.6}d, the nearly identical units $u_2$ and $u_5$ are too close to $u_3$ such that only $u_3$ is labeled; the zero-error points of $u_2$ and $u_5$ are also almost the same. Except for the fitted functions and possibly different parameter settings, the above descriptions are applicable to the remaining subfigures.

Note that there exist almost identical activation functions but with different zero-error points; the reason is that their output weights are different, while the zero-error point is determined by both the activation function and its output weight. The following items are the explanation of experimental results.

\begin{itemize}
\item[\rm{\Romannum{1}}.] \textbf{Local approximation}. Since there's no local unit in Figure \ref{Fig.6}a, it is a local approximation, and similarly for Figure \ref{Fig.6}c for function $y = e^{2x}$.

\item[\rm{\Romannum{2}}.] \textbf{Global approximation}. In Figure \ref{Fig.6}b the function is $y = 32x^3 + 3$ and the parameter settings are the same as those of Figure \ref{Fig.6}a except for $\gamma_3 = 0.05$. There are 6 global units and 3 local units, while $u_9$ is inactivated. The function of Figure \ref{Fig.6}d is $y = e^{8x}$ with parameters $c = 0.0001$,$N = 5000$, $\gamma_1 = \gamma_2 = 0.0001$ and $\gamma_3 = 0.01$. There's no global unit and the function values on the leftmost part of $[0, 1]$ are approximately zero compared to the larger ones on the right. Figure \ref{Fig.6}d clearly demonstrates the pattern of global approximations.

\item[\rm{\Romannum{3}}.] \textbf{Two-sided bases}. Figure \ref{Fig.6}e is of a two-sided solution that can be explained by proposition 2, in which $y = 30(\sin15x + 1)$, $\Theta = 10$, $c = 0.001$, $N = 10000$, $\gamma_1 = 0.01, \gamma_2 = 0.001$, and $\gamma_3 = 0.05$. There exist 7 local units, 2 global units and 1 inactivated unit. Compared with Figure \ref{Fig.3} of proposition 2, Figure \ref{Fig.6}e has the similar pattern that positive unit $u_1$ and negative unit $u_4$ share the same zero-error point $x = 0.32$. In Figure \ref{Fig.6}f, $y = 30\sin(6x + 3) + 3$, $c = 0.01$, $N = 5000$ and other parameters are set as Figure \ref{Fig.6}e. We didn't find a theoretical solution for the pattern of Figure \ref{Fig.6}f, but it can be included in the framework of theorem 8 and would have a solution provided the condition of theorem 8 is satisfied.

\end{itemize}

\subsection{Two-Dimensional Input}
\begin{figure}[htp]
\captionsetup{justification=centering}
\centering
\subfloat[Example 1: data-fitting effect.]{\includegraphics[width=2.8in, trim = {1.2cm 0.7cm 0.9cm 0.1cm}, clip]{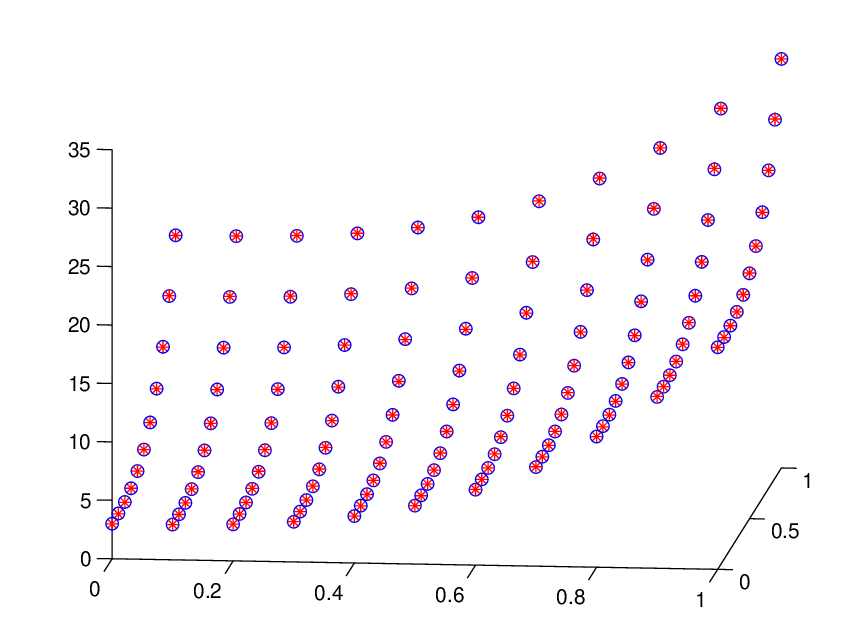}} \ \
\subfloat[Example 1: solution of (a).]{\includegraphics[width=2.8in, trim = {1.2cm 0.7cm 1.0cm 0.1cm}, clip]{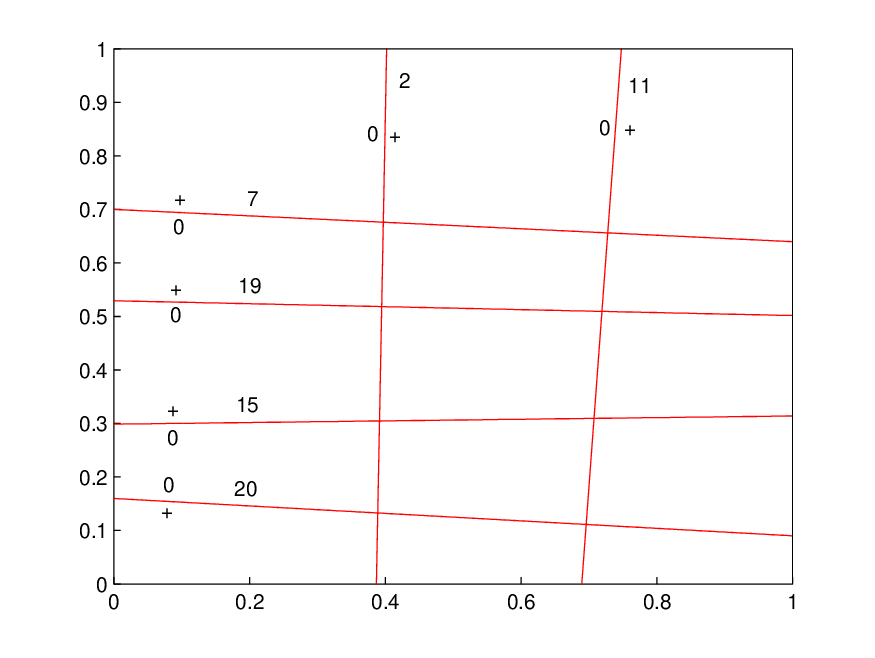}} \\
\subfloat[Example 2: data-fitting effect.]{\includegraphics[width=2.8in, trim = {1.1cm 0.7cm 0.9cm 0.1cm}, clip]{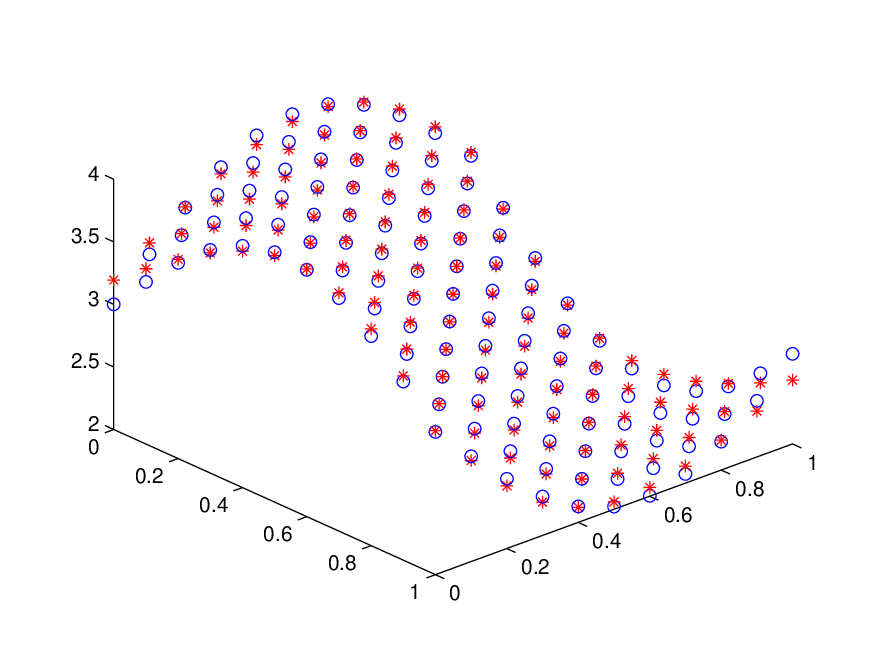}}  \ \
\subfloat[Example 2: solution of (c).]{\includegraphics[width=2.8in, trim = {1.2cm 0.7cm 1.0cm 0.1cm}, clip]{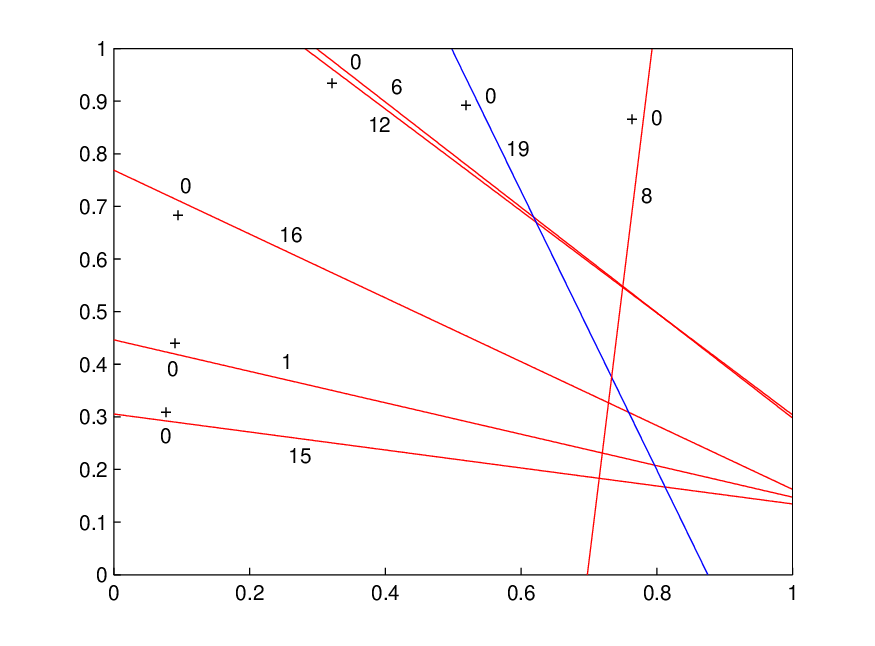}} \\
\subfloat[Example 3: data-fitting effect.]{\includegraphics[width=2.8in, trim = {1.1cm 0.7cm 0.9cm 0.1cm}, clip]{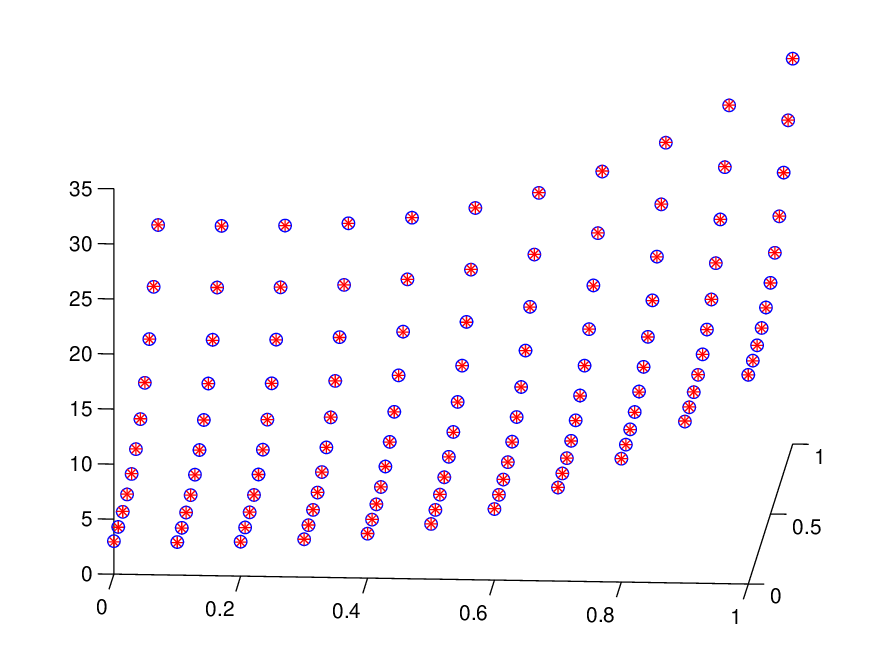}}  \ \
\subfloat[Example 3: solution of (e).]{\includegraphics[width=2.8in, trim = {1.2cm 0.7cm 1.0cm 0.1cm}, clip]{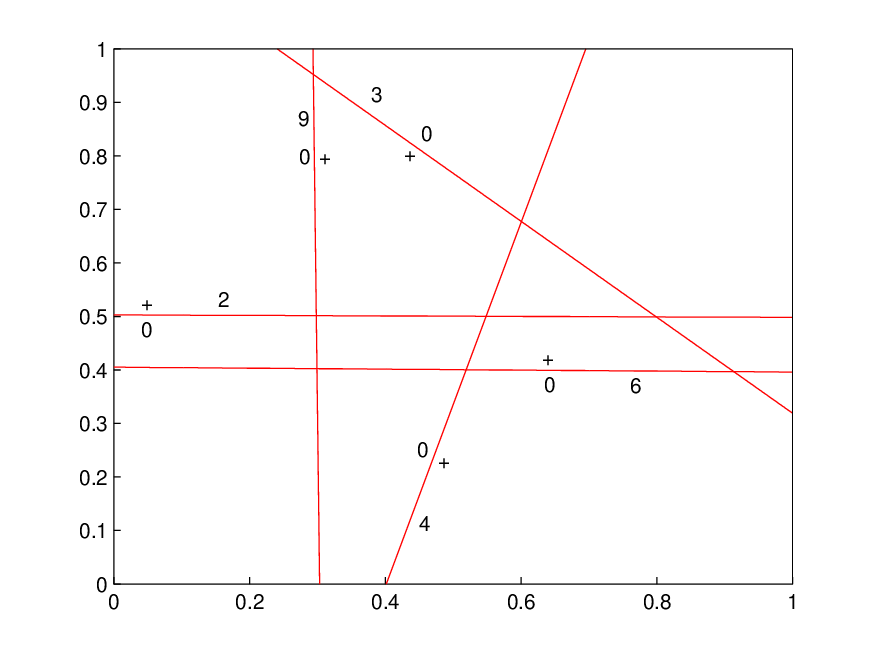}}
\caption{Solution explanation for two-dimensional input.}
\label{Fig.7}
\end{figure}

Similarly to the one-dimensional case, by discretizing $b_{\nu}'$, equation 7.12 becomes $b_{0\nu} = \max\{b_{\nu}': |\varepsilon' - \varepsilon| < \gamma_1\varepsilon\}$. The circle points of Figure \ref{Fig.7}a are from the discretization of $z = 16(x^3 + y^3)+3$ on $U = [0, 1]^2$ with step length 0.1 in both $x$ and $y$ dimensions. The asterisks are the outputs of a two-layer logistical sigmoidal neural network $\mathfrak{N}_2$ with $\Theta = 20$ units in the hidden layer. The parameters of $\mathfrak{N}_2$ are initialized by uniform distribution $U(-1, 1)$ and are trained by the back propagation algorithm with learning rate $c = 0.01$ and learning step $N = 5000$. Set $\gamma_1 = \gamma_2 = \gamma_3 = 0.01$.

Figure \ref{Fig.7}b is the solution of Figure \ref{Fig.7}a expressed by the zero-error lines (red ones in the figure) of local units; global units whose corresponding line is out of $[0,1]^2$ as well as inactivated units are not depicted in the figure. There are 6 local units, 10 global units and 4 inactivated units. The number of the global units is $\binom{5}{3} = 10$, equal to the number of the coefficients of a 2-variate polynomial of degree 3; thus any initial polynomial piece of this type could be produced, if the rank of the associated generalized Wronskian matrix satisfies the condition of theorem 9.

We see that the partition of Figure \ref{Fig.7}b resembles a standard partition of theorem 17 (e.g., Figure \ref{Fig.5}); for simplicity, we also call Figure \ref{Fig.7}b a standard partition in this section. Notice that both sigmoidal and ReLU (Figure 11b of \citet*{Huang2024}) two-layer neural networks have a standard-partition solution, providing an experimental evidence that the two types of neural networks obey a similar rule related to continuity restriction, as indicated in this paper's theorem 15 and \citet*{Huang2024}'s theorem 9.

To the two-sided bases, a strict partial order $-l_{20} \prec l_{15} \prec l_{9} \prec l_{7}$ can be obtained by changing $l_{20}$ into its negative form $-l_{20}$; so the original four lines yield two-sided bases if the associated matrix $\mathscr{A}$ of 6.55 satisfy the condition of equation 6.56. The change from $-l_{20}$ to $l_{20}$ does not influence the existence of solutions over the other strict partial order $l_2 \prec l_{11}$, as explained in proposition 8.

In Figure \ref{Fig.7}c, the function is $z = \sin{3(x + y + 1)} + 3$ and the parameter setting is the same as that of Figure \ref{Fig.7}a. Figure \ref{Fig.7}d is the solution of Figure \ref{Fig.7}c and can be nearly regarded as a standard partition. The numbers of local, global and inactivated units are 6, 7 and 7, respectively. We depict one global unit $u_{19}$ (blue line) because its line lies in $U=[0,1]^2$. The seven global units can form any linear function (the least number required is 4), provided the condition of theorem 9 is fulfilled; and they can also produce some polynomials of degree $m\ge2$.

We use the example of Figures \ref{Fig.7}e and \ref{Fig.7}f to show how the principle of smooth continuity restriction of theorem 15 applies to non-standard partition case. The data points of Figure \ref{Fig.7}e are from Figure \ref{Fig.7}a and the parameters are also identical except for the number of units $\Theta = 10$. The solution of Figure \ref{Fig.7}f includes 5 local units, 4 global units and 1 inactivated unit and is a combination of a standard partition $\mathcal{P}$ on $U \cap l_3^+$ with five regions in $U \cap l_3^0$. The polynomials on $\mathcal{P}$ can be obtained by the method of Figure \ref{Fig.7}b or \ref{Fig.7}d; after that, suppose that the output weight of $u_3$ is set to yield the polynomial on $l_9^+l_3^0l_4^0 \cap U$. Then the polynomial on $l_4^+l_3^0l_2^+ \cap U$ is determined by smooth-continuity restriction, because all the polynomials on the adjacent regions have already been previously set; the remaining three regions can be similarly dealt with.

\subsection{Tanh-Unit Case}
\begin{figure}[htp]
\captionsetup{justification=centering}
\centering
\subfloat[Example 1]{\includegraphics[width=2.9in, trim = {1.1cm 0.7cm 0.9cm 0.1cm}, clip]{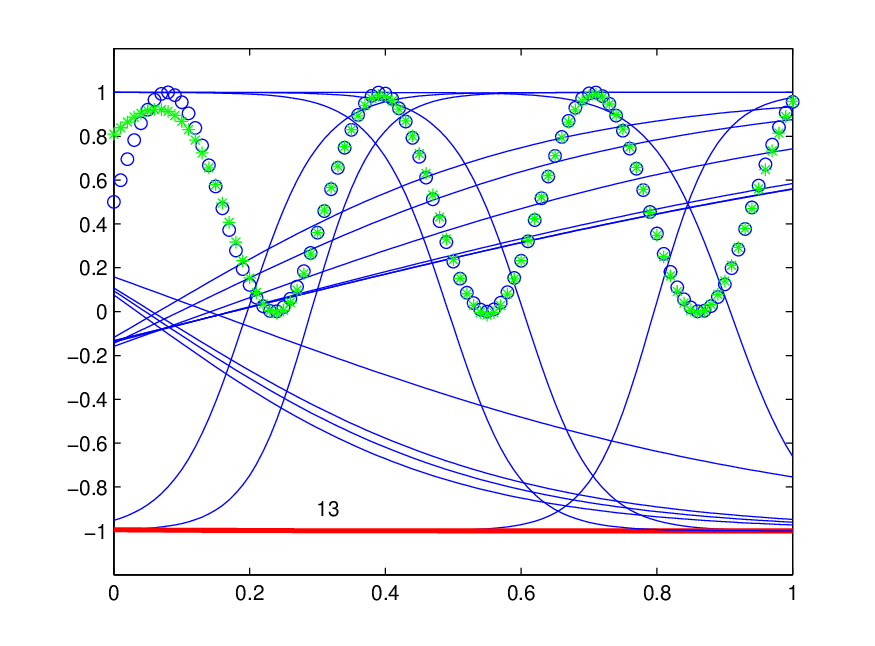}}
\subfloat[Example 2]{\includegraphics[width=2.9in, trim = {1.1cm 0.7cm 0.9cm 0.1cm}, clip]{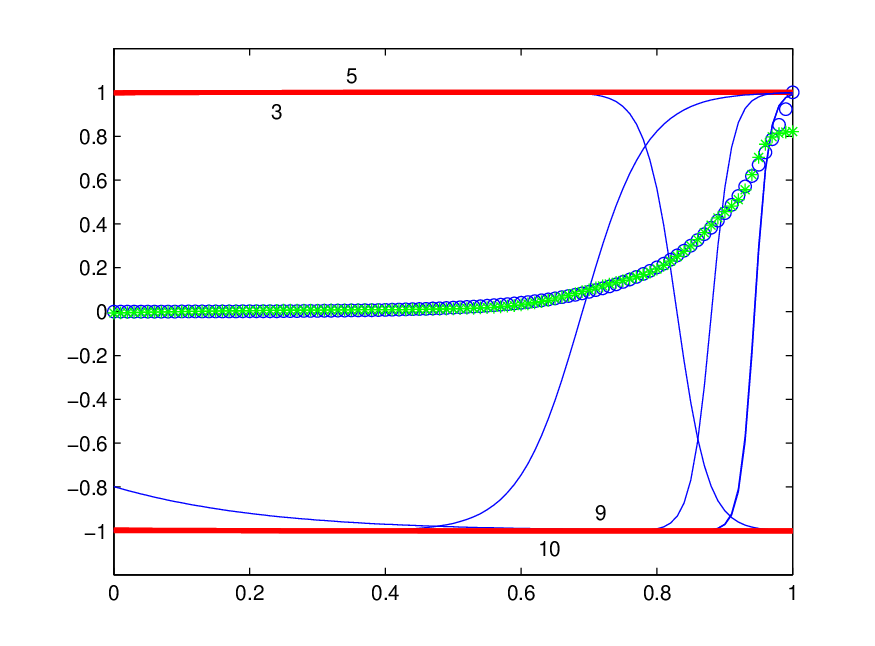}}
\caption{Units for constant term predicted by corollary 1.}
\label{Fig.8}
\end{figure}

To the solution of two-layer neural networks composed of generalized tanh units, corollary 1 indicates that there exist units exclusively for producing constant term $\mathfrak{C}$ of equation 3.84. We now use experiments to verify that.

In Figure \ref{Fig.8}a, similarly to the figures of section 7.2, the blue circles are from function $y = \sin{20x}$; the function values shown in the figure are normalized into $[0, 1]$ for visualization convenience. The hidden-layer units of network $\mathfrak{N}$ are of the tanh type whose activation function is
\begin{equation}
\sigma(x)=(e^{x}-e^{-x})/(e^{x}+e^{-x});
\end{equation}
the parameter settings of $\mathfrak{N}$ include unit number $\Theta = 20$, learning rate $c = 0.05$ and learning steps $N = 5000$; the green asterisks are produced by $\mathfrak{N}$. Use the method of section 7.1 to find out the inactivated units and do not depict them.

To judge whether a unit $u_i$ is for constant-term production, the criterion is
\begin{equation}
|(\phi_i(B) - M_i)/M_i)|<\gamma_4,
\end{equation}
where $\phi_i(x)=\sigma(w_ix+b_i)$ is the activation function of $u_i$, $M_i=\max_j{\phi_i(x_j)}$, $\gamma_4$ is the threshold and $B=0, 1$ are the two endpoints of $[0, 1]$. If equation 7.17 holds and if the unit is activated, $u_i$ is for producing a constant term. We set $\gamma_4 = 0.05$. Under this criterion, in Figure \ref{Fig.8}a, $u_{13}$ is the unit predicted by corollary 1, whose activation function is a bold red line. The blue curves correspond to other units.

Note that from equation 7.16, $\lim_{x\to +\infty}\sigma(x)=1$ is also a constant. Thus, a tanh unit can generate a constant in two ways including $\sigma(-\infty)=-1$ and $\sigma(+\infty)=1$; the associated experimental phenomenon is shown in Figure \ref{Fig.8}b. The function for Figure \ref{Fig.8}b is $y=e^{8x}$ whose discrete values are also normalized when depicted. The parameters of network $\mathfrak{N}$ are $\Theta=10$, $c=0.00001$ and $N=5000$. The threshold $\gamma_4=0.05$. As can be seen in Figure \ref{Fig.8}b, units $u_3$ and $u_5$ are of nearly constant by $\sigma(+\infty)=1$, while $u_9$ and $u_{10}$ by $\sigma(-\infty)=-1$. Although one unit is enough for a constant as in Figure \ref{Fig.8}a, redundant ones can also do that job.

\section{Summary for Black Box}
We summarize the main thoughts of the theory as follows. Let $\mathfrak{N}$ be a two-layer neural network with generalized sigmoidal or tanh units and $f(\boldsymbol{x}) \in C^m([0,1]^n)$ for $n\ge1$ be a smooth function. The hidden-layer units of $\mathfrak{N}$ are denoted by $u_i$ for $i=1,2,\dots,\Theta$, whose activation function is $\phi_i(\boldsymbol{x})=\sigma(\boldsymbol{w}_i^T\boldsymbol{x}+b_i)$.
\begin{itemize}
\item[\rm{\Romannum{1}}.] \textbf{Local approximation}. Network $\mathfrak{N}$ can approximate $f(\boldsymbol{x})$ by realizing its Taylor series expansion. This type of solution not only appears in training solutions but also is a necessary part of global solutions associated with splines.
\item[\rm{\Romannum{2}}.] \textbf{Global approximation} Over a single strict partial order, network $\mathfrak{N}$ can approximate $f(\boldsymbol{x})$ via smooth splines constructed from a piecewise linear approximation to $f(\boldsymbol{x})$'s directional-derivative hypersurface. The hidden-layer units of $\mathfrak{N}$ can be classified into two categories. One is of global units that approximate $f(\boldsymbol{x})$ through Taylor series expansions; the other is of local units, implementing polynomial pieces by the principle of smooth splines.
\item[\rm{\Romannum{3}}.] \textbf{Meaning of the parameters}. The weight vector $\boldsymbol{w}_i$ and bias $b_i$ of a local unit $u_i$ are used to generate a knot of splines, while the output weight $\lambda_i$ produces the polynomial piece corresponding to that knot. The geometric meaning of $\lambda_i$ can be reduced to the case of two-layer ReLU networks by directional-derivative operations, which is related to the two angles derived from adjacent linear pieces.
\item[\rm{\Romannum{4}}.] \textbf{Smooth-continuity restriction}. This principle mostly distinguishes network $\mathfrak{N}$ from other types of multivariate-function approximation. It provides a new representation of smooth functions that is globally correlated and contributes to a fundamental property called ``boundary-determination principle'' analogous to the boundary-value problem of differential equations.
\item[\rm{\Romannum{5}}.] \textbf{Mechanism of realizing splines}. Over a single strict partial order, each activation function $\phi_i(\boldsymbol{x})$ is approximated by a smooth spline $s_i(\boldsymbol{x})$ and the linear combination of $s_i(\boldsymbol{x})$'s leads to a desired spline $\mathcal{S}(\boldsymbol{x})$ approximating $f(\boldsymbol{x})$. To obtain the weights of $s_i(\boldsymbol{x})$'s, the recurrence relation of the polynomial pieces of a spline and the zero-error part of $\phi_i(\boldsymbol{x})$ play a central role.
\end{itemize}

\section{Discussion}
To a two-layer neural network $\mathfrak{N}$ composed of units with smooth activation functions, the conventional method for universal approximation usually resorts to Taylor series expansions, Fourier analysis or step functions. Our investigation of training solutions contributed to the spline solution. Furthermore, for higher-dimensional input, the underlying principle is more than a spline, but also includes smooth-continuity restriction, a significant feature distinguishing $\mathfrak{N}$ from other types of multivariate-function approximation.

Network $\mathfrak{N}$ shares some identical or similar basic principles with a two-layer ReLU network, such as multiple strict partial orders, (smooth) continuity restriction, zero-error hyperplanes, polynomial construction and spline implementation. Thus, the future-research problems proposed for ReLU networks in \citet*{Huang2024} are also applicable to network $\mathfrak{N}$ with smooth activation functions.

Besides the ones constructed in this paper, more solutions to be discovered are contained in the matrix-form models, including the generalized Wronskian matrix (equations 2.13 and 5.10) and spline matrix (equations 4.20 and 6.54). The study of these matrices would yield more concrete solutions that may explain engineering applications.

Our logically established theory successfully explained the experimental solution obtained by the back-propagation algorithm; and the examples of section 7 showed that the theory can manually construct training solutions through a deterministic way, instead of the usual non-deterministic gradient-descent method. Those achievements demonstrate that the research methodology of theoretical physics can be no doubt applied in the realm of artificial neural networks.

\end{document}